\newcommand*{\poww}{\mathrm{MultPow}}
\newcommand*{\probm}{\mathcal{P}}
\newcommand*{\prob}{\mathcal{P}}
\newcommand{\cpl}{\mathcal{C}}
\newcommand{\dGW}{d_\mathrm{GW}}
\newcommand{\dW}{d_\mathrm{W}}
\newcommand{\mcms}{\mathcal{M}^\mathrm{MS}}
\newcommand{\dWLk}{d_{\mathrm{WL}}^{\scriptscriptstyle{(k)}}}
\newcommand{\dWL}{d_{\mathrm{WL}}}
\newcommand{\invs}{\mathcal{L}}
\newcommand{\diam}{\mathrm{diam}}
\newcommand{\R}{\mathbb{R}}
\newcommand*{\mbl}{\left\{\mskip-5mu\left\{}
\newcommand*{\mbr}{\right\}\mskip-5mu\right\}}
\newcommand{\lc}{\left(}
\newcommand{\rc}{\right)}
\newcommand*{\mX}{\mathcal{X}}
\newcommand*{\mY}{\mathcal{Y}}
\newcommand*{\mZ}{\mathcal{Z}}
\newcommand{\N}{\mathbb{N}}
\newcommand{\WLh}[2]{\mathfrak{l}^{\scriptscriptstyle{(#1)}}_{\scriptscriptstyle{#2}}}
\newcommand{\WLhor}[2]{\ell^{\scriptscriptstyle{(#1)}}_{\scriptscriptstyle{#2}}}
\newtheorem{theorem}{Theorem}[section]
\newtheorem{proposition}[theorem]{Proposition}
\newtheorem{corollary}[theorem]{Corollary}
\newtheorem{lemma}[theorem]{Lemma}
\newtheorem{remark}[theorem]{Remark}
\newtheorem{claim}{Claim}
\newtheorem{definition}{Definition}
\newtheorem{example}{Example}
\newcommand{\LMMCNN}[1] {{\text{\rm{MCNN}}_{#1}}}
\begin{document}
\title{Weisfeiler-Lehman meets Gromov-Wasserstein}

\author[1]{Samantha Chen\thanks{\url{sac003@ucsd.edu}}}
\author[2]{Sunhyuk Lim\thanks{\url{sulim@mis.mpg.de}}}
\author[3]{Facundo Mémoli\thanks{\url{memoli@math.osu.edu}}}
\author[4]{Zhengchao Wan\thanks{\url{zcwan@ucsd.edu}}}
\author[1,4]{Yusu Wang\thanks{\url{yusuwang@ucsd.edu}}}

\affil[1]{Department of Computer Science and Engineering, University of California San Diego}
\affil[2]{Max Planck Institute for Mathematics in the Sciences, Leipzig}
\affil[3]{Department of Mathematics and Department of Computer Science and Engineering, The Ohio State University}
\affil[4]{Hal{\i}c{\i}o\u{g}lu Data Science Institute, University of California San Diego}

\date{\today}
\maketitle

\begin{abstract}
The Weisfeiler-Lehman (WL) test is a classical procedure for graph isomorphism testing.  
The WL test has also been widely used both for designing graph kernels and for analyzing graph neural networks. In this paper, we propose the \emph{Weisfeiler-Lehman (WL) distance}, a notion of distance between \emph{labeled measure Markov chains} (LMMCs), of which labeled graphs are special cases. The WL distance is polynomial time computable and is also compatible with the WL test in the sense that the former is positive if and only if the WL test can distinguish the two involved graphs. 
The WL distance captures and compares subtle structures of the underlying LMMCs and, as a consequence of this, it is more discriminating than the distance between graphs used for defining the state-of-the-art Wasserstein Weisfeiler-Lehman graph kernel. Inspired by the structure of the WL distance we identify a neural network architecture on LMMCs which turns out to be universal w.r.t. continuous functions defined on the space of all LMMCs (which includes all graphs) endowed with the WL distance. Finally, the WL distance turns out to be stable w.r.t. a natural variant of the Gromov-Wasserstein (GW) distance for comparing metric Markov chains that we identify. Hence, the WL distance can also be construed as a polynomial time lower bound for the GW distance which is in general NP-hard to compute.
\end{abstract}

\section{Introduction}\label{section:introduction}
The Weisfeiler-Lehman (WL) test \cite{leman1968reduction} is a classical procedure which provides a polynomial time proxy for testing graph isomorphism.
It is efficient and can distinguish most pairs of graphs in linear time \cite{babai1979canonical, babai1983canonical}. 
The WL test has a close relationship with graph neural networks (GNNs), both in the design of GNN architectures and in terms of characterizing their expressive power.
For example,  \cite{xu2018powerful} showed that graph isomorphism networks (GINs) have the same discriminative power as the WL test in distinguishing whether two graphs are isomorphic or not. 
Recently, \cite{azizian2020expressive} showed that message passing graph neural networks (MPNNs) are universal with respect to the continuous functions defined on the set of graphs (with the topology induced by a specific variant of the graph edit distance) that have equivalent to or less discriminative power than the WL test.

However, the WL test 
is only suitable for testing graph isomorphism  and cannot directly \emph{quantitatively} compare graphs. This state of affairs naturally suggests identifying a distance function between graphs so that two graphs have positive distance iff they can be distinguished by the WL test.
We note that there have been WL-inspired graph kernels which can quantitatively compare graphs \cite{shervashidze2011weisfeiler,togninalli2019wasserstein}. However, these either cannot handle continuous node features naturally or they do not have the same discriminative power as the WL test.

\paragraph{New work and connections to related work.} Our work provides novel connections between the WL test, GNNs and the Gromov-Wasserstein distance. The central object we define in this paper is a distance between graphs \emph{which has the same discriminative power as the WL test}. We do this by combining ideas inherent to the WL test with optimal transport (OT) \cite{villani2009optimal}. We call this distance the \emph{Weisfeiler-Lehman (WL) distance}. We show that two graphs are at zero WL distance if and only if they cannot be distinguished by the WL test.
Moreover, the WL distance can be computed in polynomial time.
Furthermore, we define our WL distance for a more general and flexible type of objects called the \emph{labeled measure Markov chains} (LMMCs), of which labeled graphs (i.e, graph with node features) are special cases. Besides graphs, the LMMC framework also encompasses continuous objects such as Riemannian manifolds and graphons.

Our definition of the WL distance is able to capture and compare subtle geometric and combinatorial structures from the underlying LMMCs. This allows us to establish various lower bounds for the WL distance which are not just useful in practical computations but also clarify its discriminating power relative to existing approaches. Furthermore, based on the hierarchy inherent to the WL distance, we are able to identify a neural network architecture on the collection of all LMMCs, which we call $\LMMCNN{}$s (for Markov chain NNs). We show that $\LMMCNN{}$s have the same discriminative power as the WL test when applied to graphs; while at the same time, they have the desired universal approximation property w.r.t. continuous functions defined on the space of all LMMCs (including the space of graphs) equipped with the WL distance.
It turns out that from $\LMMCNN{}$s, one can recover  Weisfeiler-Lehman graph kernels \cite{togninalli2019wasserstein} and in particular, we show that a slight variant of a key pseudo-distance between graphs defined in \cite{togninalli2019wasserstein} serves as a lower bound for our WL distance. This indicates that the WL distance has a stronger discriminating ability  than WWL graph kernels (see \Cref{sec:WWL} for an infinite family of examples).

Finally, we observe that our formulation of the WL distance resembles the Gromov-Wasserstein (GW) distance \cite{memoli2007on,memoli2011gromov,peyre2016gromov,sturm2012space,vayer2020fused,chowdhury2019gromov} which is a OT-based distance between \emph{metric measure spaces} and has been recently widely used in shape matching and machine learning. We hence identify a special variant of the GW distance between \emph{Markov chain metric spaces} (MCMSs) (including all graphs). Our version of the GW distance implements a certain \emph{multiscale comparison} of MCMSs, it vanishes only when the two MCMSs are isomorphic, but leads to NP-hard problems. Interestingly, it turns out that the poly-time computable WL distance is not only stable w.r.t. (i.e., upper bounded by) this variant of the GW distance, but can also be construed as a variant of the \emph{third lower bound} (TLB) of this GW distance, as in \cite{memoli2011gromov}.

Proofs of results and details can be found in the Appendix.

\section{Preliminaries}\label{sec:pre}
\subsection{The Weisfeiler-Lehman test}\label{sec:WL test}
A labeled graph is a graph $G=(V_{G},E_{G})$ endowed with a \emph{label function} $\ell_{G}:V_{G}\rightarrow Z$, where the labels (i.e., node features) are taken from some set $Z$.
Common label functions include the degree label (i.e., $\ell_G:V_G\rightarrow\N$ sends each $v\in V_G$ to its degree, denoted by $\deg_G(v)$) and the constant label (assigning a constant to all vertices).  For a node $v\in V_{G}$, let $N_{G}(v)$ denote the set of neighbors of $v$ in $G$. Below, we describe the \emph{Weisfeiler-Lehman hierarchy} for a given labeled graph $(G,\ell_{G})$.

\begin{definition}[Weisfeiler-Lehman hierarchy]\label{def:hierarchy set}
Given any labeled graph $(G,\ell_G)$, we consider the following hierarchy of multisets, which we call the Weisfeiler-Lehman hierarchy:
\begin{description}

\item[Step $1$] For each $v\in V_{G}$ we compute the pair
    \[\WLhor{1}{(G,\ell_G)}(v)\coloneqq\lc \ell_{G}(v),\mbl\ell_{G}(v'):\,v'\in N_{G}(v)\mbr\rc.\]
\item[$\cdots$]
\item[Step $k$] For each $v\in V$ we compute the pair
    \begin{align*} 
    \WLhor{k}{(G,\ell_G)}(v)\coloneqq
       \lc \WLhor{k-1}{(G,\ell_G)}(v),\mbl\WLhor{k-1}{(G,\ell_G)}(v'):\,v'\in N_{G}(v)\mbr\rc.
    \end{align*}
\end{description}
\end{definition}
Here, $\mbl\cdot \mbr$ denotes multisets. In the  literature, $\WLhor{k}{(G,\ell_G)}(v)$ is usually often mapped to a common space of labels such as $\N$ through a hash function, a step which we do not require in this paper. 
We  induce, at each step $k$, a multiset
\[L_k\!\lc(G,\ell_{G})\rc\coloneqq\mbl \WLhor{k}{(G,\ell_{G})}(v):\,v\in V_{G}\mbr.\]

\begin{definition}[Weisfeiler-Lehman test]\label{def:WL test}
For each integer $k\geq 0$, we compare $L_k\!\lc(G_1,\ell_{G_1})\rc$ with $L_k\!\lc(G_2,\ell_{G_2})\rc$. If $\exists k\geq 0$ so that $L_k\!\lc(G_1,\ell_{G_1})\rc\neq L_k\!\lc(G_2,\ell_{G_2})\rc$ then we conclude that the two label graphs are non-isomorphic; otherwise we say that the two labeled graphs pass the WL test and that the two graphs are ``possibly isomorphic''.
\end{definition}

\subsection{Probability measures and optimal transport}\label{sec:coupling and dW}
For any measurable space $Z$, we will denote by $\prob(Z)$ the collection of all probability measures on $Z$. When $Z$ is a metric space $(Z,d_Z)$, we further require that every $\alpha\in\prob(Z)$ has finite 1-moment, i.e., $\int_Zd_Z(z,z_0)\alpha(dz)<\infty$ for any $\alpha\in\probm(Z)$ and any fixed $z_0\in Z$.

\paragraph{Pushforward maps.} Given two measurable spaces $X$ and $Y$ and a measurable map $\psi:X\rightarrow Y$, the \emph{pushforward} map induced by $\psi$ is the map
$\psi_\# :\prob(X)\rightarrow \prob(Y)$
sending $\alpha$ to $\psi_\#\alpha$ where for any measurable $B\subseteq Y$, $\psi_\#\alpha (B)\coloneqq\alpha\lc \psi^{-1}(B)\rc.$ In the case when $X$ is finite and $Y$ is a metric space, $\psi_\#\alpha$ obviously has finite 1-moment and is thus an element of $\probm(Y)$.

\paragraph{Couplings and the Wasserstein distance.}
For measurable spaces $X$ and $Y$, given $\alpha\in \prob(X)$ and $\beta\in \prob(Y)$, $\gamma\in\prob(X\times Y)$ is called a \emph{coupling} between $\alpha$ and $\beta$ if $(p_X)_\#\gamma=\alpha\text{ and }(p_Y)_\#\gamma=\beta,$
where $p_X:X\times Y\rightarrow X$ and $p_Y:X\times Y\rightarrow Y$ are the canonical projections, e.g., the product measure $\alpha\otimes\beta$ is one such coupling.
Let $\mathcal{C}(\alpha,\beta)$ denote the set of all couplings between $\alpha$ and $\beta$.

Given a metric space $(Z,d_Z)$, for $\alpha,\beta\in\probm(Z)$, we define the ($\ell^1$-)Wasserstein distance between them as follows:
\[\dW(\alpha,\beta)\coloneqq\inf_{\gamma\in\mathcal{C}(\alpha,\beta)}\int_{Z\times Z}d_Z(z,z')\gamma(dz\times dz').\]
By \cite[Proposition 2.1]{villani2009optimal}, the infimum above is always achieved by some $\gamma\in\mathcal{C}(\alpha,\beta)$ which we call an \emph{optimal coupling} between $\alpha$ and $\beta$.

\paragraph{Hierarchy of probability measures.}
An important ingredient in this paper is the following construction: Given a \emph{finite} set $X$ and a \emph{metric space} $Z$, a map of the form $\psi:X\rightarrow\probm(Z)$ induces $\psi_\#:\prob(X)\rightarrow\probm(\probm(Z))$ which involves the space of probability measures over probability measures, i.e., $\probm(\probm(Z))$. Inductively, we define the family of spaces $\probm^{\circ k}(Z)$, called the \emph{hierarchy of probability measures}:
\begin{enumerate}
    \item $\probm^{\circ 1}(Z)\coloneqq\prob(Z)$;
    \item $\probm^{\circ (k+1)}(Z)\coloneqq\probm\lc\probm^{\circ k}(Z)\rc$ for $k\geq 1$.
\end{enumerate}
If $Z$ is complete and separable then, when endowed with $\dW$, $\probm(Z)$ is also complete and separable (\cite[Theorem 6.18]{villani2009optimal}). 

By induction, for each $k\in \N$, $\probm^{\circ k}(Z)$ is also a complete and separable metric space. This hierarchy will be critical  in our development of the WL distance.

\paragraph{The Gromov-Wasserstein distance.} We call a triple $\mathbf{X}=(X,d_X,\mu_X)$ a \emph{metric measure space} (MMS) if $(X,d_X)$ is a metric space and $\mu_X$ is a (Borel) probability measure on $X$ with full support. Given any $\mathbf{X}=(X,d_X,\mu_X)$ and $\mathbf{Y}=(Y,d_Y,\mu_Y)$, for any coupling $\gamma\in\cpl(\mu_X,\mu_Y)$, we define its distortion by
    \[\mathrm{dis}(\gamma)\coloneqq \int\limits_{X\times Y}\int\limits_{X\times Y}|d_X(x,x')-d_Y(y,y')|\gamma(dx\times dy)\gamma(dx'\times dy').\]

Then, the ($\ell^1$-)Gromov-Wasserstein (GW) distance between $\mathbf{X}=(X,d_X,\mu_X)$ and $\mathbf{Y}=(Y,d_Y,\mu_Y)$ is defined as follows \cite{memoli2011gromov}
\begin{equation}\label{eq:GWdist}
    \dGW(\mathbf{X},\mathbf{Y})\coloneqq
 \inf_{\gamma\in \cpl(\mu_X,\mu_Y)}\mathrm{dis}(\gamma),
\end{equation}
where we omit the usual $\frac{1}{2}$ factor for simplicity.

\subsection{Markov chains} \label{sec:markov chains}
Given a \emph{finite} set $X$, we call any map ${m_\bullet^X:X\rightarrow \prob(X)}$ a \emph{Markov kernel} on $X$. Of course Markov kernels can be represented as transition matrices but we adopt this more flexible language. A probability measure $\mu_X\in\prob(X)$ is called a \emph{stationary distribution w.r.t. $m_\bullet^X$} if for every measurable subset $A\subseteq X$ we have:
$$\mu_X(A)=\int_{ X}\,m_{x}^X(A)\mu(dx).$$
The existence of stationary distributions is guaranteed by the Perron-Frobenius Theorem \cite{saloff1997lectures}.
A \emph{measure Markov chain (MMC)} is any tuple $\mX=(X,m_\bullet^X,\mu_X)$ where $X$ is a finite set, $m_\bullet^X$ is a Markov kernel on $X$ and $\mu_X$ is a fully supported stationary distribution w.r.t. $m_\bullet^X$.

\begin{definition}[Labeled measure Markov chain]
Given any metric space $Z$, which we refer to as the \emph{metric space of labels,} a \emph{$Z$-labeled measure Markov chain} (($Z$-)LMMC for short) is a tuple $(\mX,\ell_X)$ where $\mX$ is a MMC and $\ell_X:X\rightarrow Z$ is a continuous map. For technical reasons, throughout this paper, we assume that the metric space of labels $Z$ is \emph{complete} and \emph{separable}. We let $\mathcal{M}^{L}(Z)$ denote the collection of all $Z$-LMMCs.
\end{definition}

The following definition of isomorphism between LMMCs is similar to that of labeled graph isomorphism.

\begin{definition}
Two $Z$-LMMCs $(\mX,\ell_X)$ and $(\mY,\ell_Y)$ are said to be \emph{isomorphic} if there exists a bijective map $\psi:X\rightarrow Y$ such that $\ell_X(x)=\ell_Y(\psi(x))$ and $\psi_\# m_x^X = m_{\psi(x)}^Y$ for all $x\in X$ and $ \psi_\#\mu_X=\mu_Y$. 
\end{definition}

\paragraph{Labeled graphs as LMMCs.}
Any labeled graph induces a family of LMMCs which we explain as follows.

\begin{definition}[$q$-Markov chains on graphs]\label{def:markov chain on graphs}
For any graph $G$ and parameter $q\in[0,1)$, we define the $q$-Markov chain $m^{G,q}_\bullet$ associated to $G$ as follows:  for any $v\in V_G$,
\[m^{G,q}_v\coloneqq\begin{cases}q\,\delta_v+\frac{1-q}{\mathrm{deg}_G(v)}\sum_{v'\in N_G(v)}\delta_{v'}, &N_G(v)\neq\emptyset\\
\delta_v,&N_G(v)=\emptyset\end{cases}.\]
\end{definition}

We further let $\overline{\deg}_G(v)\coloneqq\deg_G(v)$ if $N_G(v)\neq\emptyset$ and $\overline{\deg}_G(v)\coloneqq 1$ otherwise. Then, it is easy to see that
$$\mu_G\coloneqq\sum_{v\in V_G}\frac{\overline{\deg}_G(v)}{\sum_{v'\in V_G}\overline{\deg}_G(v')}\delta_v$$
is a stationary distribution for $m_\bullet^{G,q}$ for all $q\in[0,1]$.

For any $q\in[0,1)$, we let $\mX_q(G):=\lc V_G,m_\bullet^{G,q},\mu_G\rc$ and call $\lc\mX_q(G),\ell_G\rc$ a \emph{graph induced LMMC}. When $q=0$, we also let $m_\bullet^{G}\coloneqq m_\bullet^{G,q}$ and let $\mX(G)\coloneqq\mX_0(G)$.
One has the following desirable property for graph induced LMMCs.

\begin{proposition}\label{prop:iso of graphs}
For any $q\in[0,1)$,$(G_1,\ell_{G_1})$ is isomorphic to $(G_2,\ell_{G_1})$ as labeled graphs iff $\lc\mX_q(G_1),\ell_{G_1}\rc$ is isomorphic to $\lc\mX_q(G_2),\ell_{G_2}\rc$ as LMMCs.
\end{proposition}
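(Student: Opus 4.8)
The plan is to prove both implications by tracking how a single bijection $\psi\colon V_{G_1}\to V_{G_2}$ interacts with the three pieces of data that define a graph-induced LMMC: the label map, the $q$-Markov kernel, and the stationary measure $\mu_G$. The label clause $\ell_{G_1}(v)=\ell_{G_2}(\psi(v))$ is verbatim the same requirement for ``labeled-graph isomorphism'' and for ``LMMC isomorphism'', so the entire content of the proposition concerns the kernels and the stationary measures, and the proof reduces to showing that the data $\big(m_\bullet^{G,q},\mu_G\big)$ ``remembers'' exactly the edge set of $G$.

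For the forward direction, I would assume $\psi$ is a labeled-graph isomorphism. Then $\psi$ restricts to a bijection $N_{G_1}(v)\to N_{G_2}(\psi(v))$ for every $v$, hence $\deg_{G_1}(v)=\deg_{G_2}(\psi(v))$, $\overline{\deg}_{G_1}(v)=\overline{\deg}_{G_2}(\psi(v))$, and $v$ is isolated iff $\psi(v)$ is. Since $\psi_\#\delta_x=\delta_{\psi(x)}$ and pushforward is linear on finite measures, applying $\psi_\#$ to the defining formula of $m_v^{G_1,q}$ in \Cref{def:markov chain on graphs} — splitting into the cases $N_{G_1}(v)=\emptyset$ and $N_{G_1}(v)\neq\emptyset$ — yields precisely $m_{\psi(v)}^{G_2,q}$. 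For the stationary measure, $\sum_{v'\in V_{G_1}}\overline{\deg}_{G_1}(v')=\sum_{w\in V_{G_2}}\overline{\deg}_{G_2}(w)$ because $\psi$ is a bijection, so $\psi_\#\mu_{G_1}=\mu_{G_2}$ term by term. Thus $\psi$ is an LMMC isomorphism; this direction is routine bookkeeping.

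For the converse, I would assume $\psi$ is an LMMC isomorphism, so $\psi_\# m_v^{G_1,q}=m_{\psi(v)}^{G_2,q}$ for all $v$, and read off supports: for $q\in(0,1)$ one has $\mathrm{supp}\big(m_v^{G,q}\big)=\{v\}\cup N_G(v)$, while for $q=0$ one has $\mathrm{supp}\big(m_v^{G}\big)=N_G(v)$ if $N_G(v)\neq\emptyset$ and $\{v\}$ otherwise. Since $\psi$ is a bijection of finite sets, $\mathrm{supp}(\psi_\#\alpha)=\psi(\mathrm{supp}(\alpha))$, so equating supports and using the absence of self-loops ($v\notin N_{G_1}(v)$, $\psi(v)\notin N_{G_2}(\psi(v))$) forces $\psi(N_{G_1}(v))=N_{G_2}(\psi(v))$ for every $v$ — which is exactly the condition that $\psi$ is a graph isomorphism, and combined with $\ell_{G_1}=\ell_{G_2}\circ\psi$ gives a labeled-graph isomorphism. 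The one mildly delicate point, which I expect to be the only real obstacle, is the boundary case $q=0$: there the self-loop weight vanishes, so an isolated vertex need not be visibly ``fixed'' by its kernel, and one must argue separately that an isolated vertex of $G_1$ can only be matched to an isolated vertex of $G_2$ (otherwise one support would be the singleton $\{\psi(v)\}$ and the other would be $\psi(N_{G_1}(v))$ with $|N_{G_1}(v)|\geq1$ and $\psi(v)\notin\psi(N_{G_1}(v))$, a contradiction). Everything else is a straightforward support/linearity computation.
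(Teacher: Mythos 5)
Your proposal is correct and follows essentially the same route as the paper: the forward direction is the same routine pushforward bookkeeping, and your converse recovers adjacency from where the kernel places positive mass, which is exactly the paper's criterion $m_v^{G_1,q}(v')=m_{\psi(v)}^{G_2,q}(\psi(v'))>0$. The only cosmetic difference is that the paper detects isolated vertices uniformly in $q$ via the test $m_v^{G,q}(v)=1$, whereas you phrase things in terms of supports and therefore handle $q=0$ as a separate (correctly resolved) case.
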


\section{The WL distance}\label{sec:WL distance}
A non-empty finite multiset $M$ of elements from a given set $S$ encodes information about the \emph{multiplicity} of each $s\in S$ in $M$. This suggests that one might consider the \emph{probability measure} $\mu_M$ on $S$ induced by $M$:
$$\mu_M(s)\coloneqq\frac{m(s)}{\sum_{t\in S}m(t)},\,\,\forall s\in S$$
where $m(s)$ denotes the multiplicity of $s$ in $S$.
This point of view permits reinterpreting the multisets appearing in the WL hierarchy (cf. \Cref{def:hierarchy set}) through the language of probability measures, which will eventually lead us to a distance between graphs. 

\begin{definition}[Weisfeiler-Lehman measure hierarchy]\label{def:hierarchy prob}
Given any $Z$-LMMC $(\mX,\ell_X)$, we let $\WLh{0}{(\mX,\ell_X)}\coloneqq\ell_X$ and produce the following label functions whose codomains span a certain hierarchy of probability measures:
\begin{description}
\itemsep-0.2em
\item[Step $1$] For each $x\in X$, we have $(\WLh{0}{(\mX,\ell_X)})_\# m_x^X \in \probm(Z)$. Hence we in fact have the function
$$\WLh{1}{(\mX,\ell_X)}\coloneqq\lc\WLh{0}{(\mX,\ell_X)}\rc_\# m_\bullet^X:X\rightarrow \probm(Z).$$ \item[$\cdots$]
\item[Step $k$] For each integer $k\geq 2$, we inductively define
$$\WLh{k}{(\mX,\ell_X)}\coloneqq\lc\WLh{k-1}{(\mX,\ell_X)}\rc_\# m_\bullet^X:X\rightarrow \probm^{\circ k}(Z).$$
\end{description}
\end{definition}
We then induce at each step $k$ a probability measure
$$\mathfrak{L}_k\!\lc(\mX,\ell_X)\rc\coloneqq \lc \WLh{k}{(\mX,\ell_X)}\rc_\#\mu_X\in\prob^{\circ(k+1)}(Z).$$
$\WLh{k}{(\mX,\ell_X)}$ should be compared to $\WLhor{k}{(G,\ell_G)}$ and $\mathfrak{L}_k\!\lc(\mX,\ell_X)\rc$ should be compared to $L_k\!\lc(G,\ell_G)\rc$ from the WL hierarchy (cf. \Cref{def:hierarchy set}). See \Cref{fig:example_1} for an illustration of the WL measure hierarchy of a graph induced LMMC and its comparison with the corresponding WL hierarchy. We will show later that, up to certain change of labels, the WL measure hierarchy for a graph induced LMMC captures all the information contained in the WL hierarchy of the original graph (cf. \Cref{prop:WL vs dwl}).

\begin{figure*}
    \centering
    \includegraphics[scale=0.8]{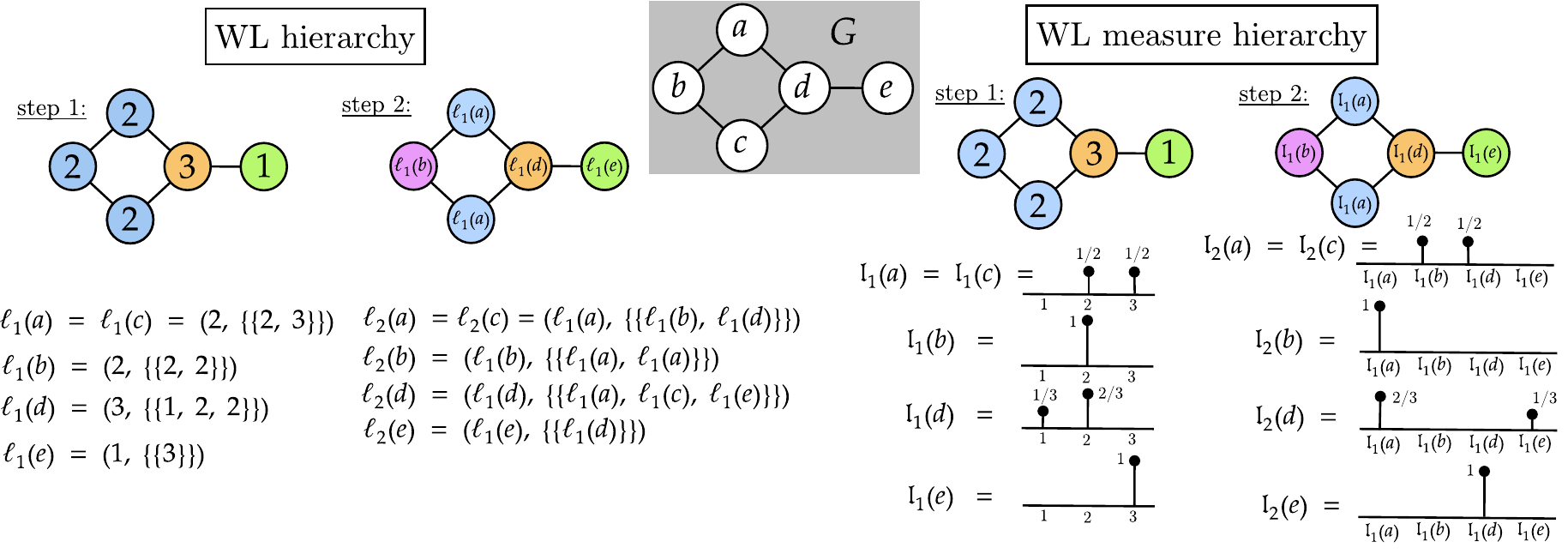}
    \vskip -0.1in
    \caption{\textbf{Illustration of the WL (measure) hierarchy.} The graph $G$ shown in the middle of the figure is assigned the degree label $\ell_G$. We explicitly present two steps of the WL hierarchy of $(G,\ell_G)$ (on the left) and of the WL measure hierarchy of $(\mX(G),\ell_G)$ (on the right). Every probability measure is represented as a histogram. For $i=1,2$, $\ell_i$ and $\mathfrak{l}_i$ are abbreviations for $\WLhor{i}{(G,\ell_G)}$ and $\WLh{i}{(\mX(G),\ell_G)}$, respectively. Notice how the WL measure hierarchy interprets the multisets from the WL hierarchy as probability measures.}
    \label{fig:example_1}
\end{figure*}

We now define the Weisfeiler-Lehman distance based on the WL measure hierarchy.

\begin{definition}[Weisfeiler-Lehman distance]\label{def:WL metric}
For each integer $k\geq 0$ and any metric space of labels $Z$ we define the \emph{Weisfeiler-Lehman (WL) distance of depth $k$} between the $Z$-LMMCs $(\mX,\ell_X)$ and $(\mY,\ell_Y)$ as 
\begin{equation}\label{eq:definition of dwlk}
    \dWLk\!\lc(\mX,\ell_X),(\mY,\ell_Y)\rc\coloneqq\dW\!\lc \mathfrak{L}_k\!\lc(\mX,\ell_X)\rc,\mathfrak{L}_k\!\lc(\mY,\ell_Y)\rc\rc
\end{equation}
where $\dW$ above takes place in $\prob^{\circ k}(Z)$. We also define the (absolute) \textbf{Weisfeiler-Lehman distance} by
$$\dWL\!\lc(\mX,\ell_X),(\mY,\ell_Y)\rc \coloneqq\sup_{k\geq 0} \dWLk\!\lc(\mX,\ell_X),(\mY,\ell_Y)\rc.$$
\end{definition}

\begin{example}\label{ex:k=0 and 1}
We write down explicit formulas for $\dWLk$ when $k=0$ and $1$.
When $k=0$, it is easy to see that 
\[\dWL^{\scriptscriptstyle{(0)}}\!\lc(\mX,\ell_X),(\mY,\ell_Y)\rc=\dW\!\lc (\ell_X)_\#\mu_X,(\ell_Y)_\#\mu_Y\rc,\]
which agrees with the Wasserstein distance between the \emph{global} label distributions $(\ell_X)_\#\mu_X$ and $(\ell_Y)_\#\mu_Y$ (cf. a similar concept for MMSs \cite{memoli2011gromov}).

When $k=1$, we have that
\[\dWL^{\scriptscriptstyle{(1)}}((\mX,\ell_X),(\mY,\ell_Y))=\inf_{\gamma\in \cpl(\mu_X,\mu_Y)} \int\limits_{X\times Y}\dW\!\lc (\ell_X)_\#m_x^{X},(\ell_Y)_\#m_y^{Y}\rc \,\gamma(dx\times dy),\]
 implementing the comparison of \emph{local} label distributions.
\end{example}

The following proposition states that the WL distance becomes more discriminating as the depth increases.

\begin{proposition}\label{coro:hierarchy dwlk}
Let $k\geq 0$ be any integer. Given any two $Z$-LMMCs $(\mX,\ell_X)$ and $(\mY,\ell_Y)$, we have that
$d_{\mathrm{WL}}^{\scriptscriptstyle{(k)}}\!\lc(\mX,\ell_X),(\mY,\ell_Y)\rc\leq d_{\mathrm{WL}}^{\scriptscriptstyle{(k+1)}}\!\lc(\mX,\ell_X),(\mY,\ell_Y)\rc$.
\end{proposition}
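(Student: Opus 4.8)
The plan is to show that $\dWLk$ is monotone in $k$ by exhibiting, for every $k$, a $1$-Lipschitz ``forgetful'' map that sends the depth-$(k+1)$ hierarchy data back onto the depth-$k$ data, and then invoking the fact that pushforward by a $1$-Lipschitz map is $1$-Lipschitz for the Wasserstein distance. Concretely, recall that $\WLh{k+1}{(\mX,\ell_X)} = \lc \WLh{k}{(\mX,\ell_X)}\rc_\# m_\bullet^X$, so the value $\WLh{k+1}{(\mX,\ell_X)}(x)$ is a probability measure on $\probm^{\circ k}(Z)$ whose ``barycenter''—in the sense of the pushforward structure—remembers $\WLh{k}{(\mX,\ell_X)}(x)$ only after one more averaging step. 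The key observation is that there is a natural map $\probm^{\circ (k+1)}(Z) = \probm(\probm^{\circ k}(Z)) \to \probm^{\circ k}(Z)$, and the right one to use here is \emph{not} a barycenter but rather the map induced by the structure of the hierarchy itself. Let me instead set it up via couplings directly, which I expect to be cleaner.

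First I would take an optimal coupling $\gamma \in \cpl(\mu_X,\mu_Y)$ for $\dWL^{\scriptscriptstyle{(k+1)}}$, i.e. one achieving the infimum in the Wasserstein distance between $\mathfrak{L}_{k+1}\lc(\mX,\ell_X)\rc$ and $\mathfrak{L}_{k+1}\lc(\mY,\ell_Y)\rc$; here I will actually want to express $\dWLk$ and $\dWL^{\scriptscriptstyle{(k+1)}}$ recursively. The cleanest route: prove by induction on $k$ the stronger statement that for \emph{every} pair of points $x \in X$, $y \in Y$,
\[
\dW\lc \WLh{k}{(\mX,\ell_X)}(x), \WLh{k}{(\mY,\ell_Y)}(y)\rc \le \dW\lc \WLh{k+1}{(\mX,\ell_X)}(x), \WLh{k+1}{(\mY,\ell_Y)}(y)\rc,
\]
where on the left the Wasserstein distance is in $\probm^{\circ k}(Z)$ and on the right in $\probm^{\circ(k+1)}(Z)$. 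Wait—this is false as stated at the base case, so the induction must instead be threaded through the averaging against $m_\bullet^X$: the correct inductive claim is that the map $\Phi_k \colon \probm^{\circ k}(Z) \to \probm^{\circ(k-1)}(Z)$ obtained by ``pushing forward along $\WLh{k-1}{}$'' is a well-defined $1$-Lipschitz map identifying the two hierarchies, and then $\mathfrak{L}_k = (\Phi_{k+1})_\# \mathfrak{L}_{k+1}$ up to the relabeling. I would make this precise by noting $\WLh{k}{} = (\WLh{k-1}{})_\# \circ m_\bullet^X$ and that $(\WLh{k-1}{})_\#$ is $1$-Lipschitz (pushforward along a $1$-Lipschitz—indeed along the identity after iterating—map does not increase $\dW$; more carefully, for any measurable $\psi$, $\dW(\psi_\#\alpha,\psi_\#\beta) \le \mathrm{Lip}(\psi)\,\dW(\alpha,\beta)$, and here iterating the pushforward structure gives constant $1$). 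Then $\mathfrak{L}_{k}\lc(\mX,\ell_X)\rc = \lc\WLh{k}{}\rc_\#\mu_X = \lc(\WLh{k-1}{})_\#\rc_\# \lc \lc\WLh{k}{}\rc_\#\mu_X\rc$... which is exactly $\lc(\WLh{k-1}{})_\#\rc_\#\mathfrak{L}_{k+1}$, so $\mathfrak{L}_k = \Psi_\# \mathfrak{L}_{k+1}$ for a $1$-Lipschitz $\Psi$, and $\dWLk = \dW(\Psi_\#\mathfrak{L}_{k+1}^X, \Psi_\#\mathfrak{L}_{k+1}^Y) \le \dW(\mathfrak{L}_{k+1}^X,\mathfrak{L}_{k+1}^Y) = \dWL^{\scriptscriptstyle{(k+1)}}$.

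The main obstacle I anticipate is verifying the identity $\mathfrak{L}_k = \Psi_\#\mathfrak{L}_{k+1}$ together with the $1$-Lipschitz bound on $\Psi$ simultaneously and at the right level of the hierarchy: one must check that the relevant pushforward operator $(\WLh{k-1}{})_\#\colon \probm^{\circ k}(Z) \to \probm^{\circ(k-1)}(Z)$ really is $1$-Lipschitz with respect to the nested Wasserstein metrics, which reduces to the general lemma that for a $1$-Lipschitz map $\psi\colon(W,d_W)\to(W',d_{W'})$ the induced $\psi_\#\colon(\probm(W),\dW)\to(\probm(W'),\dW)$ is $1$-Lipschitz—apply it with $\psi = \WLh{k-1}{}$, which however need not literally be $1$-Lipschitz as a map out of $X$ (a finite set), so one should instead carry the argument at the level of $\probm^{\circ k}(Z) \to \probm^{\circ(k-1)}(Z)$ where the relevant map is genuinely an iterated pushforward and the Lipschitz constant telescopes to $1$. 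I would prove this auxiliary pushforward lemma first (it is a one-line coupling argument: push forward an optimal coupling), then chain it through the definition of the hierarchy. A secondary subtlety is bookkeeping the ``relabeling'' identification between $\probm^{\circ k}(Z)$ viewed as a subspace of $\probm^{\circ (k+1)}(Z)$ versus as the codomain of $\WLh{k}{}$; I would handle this by working throughout with the explicit pushforward formulas rather than abstract identifications.
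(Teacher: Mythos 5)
There is a genuine gap: the $1$-Lipschitz ``forgetful'' map $\Psi\colon\probm^{\circ(k+1)}(Z)\to\probm^{\circ k}(Z)$ with $\Psi_\#\mathfrak{L}_{k+1}=\mathfrak{L}_k$, on which your whole argument rests, does not exist. For such a $\Psi$ to exist it must send $\WLh{k+1}{(\mX,\ell_X)}(x)$ to $\WLh{k}{(\mX,\ell_X)}(x)$ on the support of $\mathfrak{L}_{k+1}$, and for it to be $1$-Lipschitz there you need exactly the pointwise inequality $\dW\!\lc\WLh{k}{}(x),\WLh{k}{}(x')\rc\leq\dW\!\lc\WLh{k+1}{}(x),\WLh{k+1}{}(x')\rc$ --- the claim you yourself flagged as false at the base case; the proposed ``fix'' via $\Phi_k$ circles back to that same false claim. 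Concretely, take the path $u-v-w$ with $q=0$ and labels $\ell(u)=\ell(v)=0$, $\ell(w)=100$. Then $\WLh{1}{}(u)=\WLh{1}{}(w)=\delta_0$ while $\WLh{0}{}(u)=0\neq 100=\WLh{0}{}(w)$, because $\WLh{k+1}{}(x)=\lc\WLh{k}{}\rc_\#m_x^X$ records only the neighbours' depth-$k$ labels and, unlike $\WLhor{k+1}{}$ in the combinatorial hierarchy, does \emph{not} retain $\WLh{k}{}(x)$ itself. In this example $\mathfrak{L}_0=\frac34\delta_0+\frac14\delta_{100}$ while $\mathfrak{L}_1=\frac12\delta_{\delta_0}+\frac12\delta_{\frac12\delta_0+\frac12\delta_{100}}$, so $\mathfrak{L}_0$ is not the pushforward of $\mathfrak{L}_1$ under \emph{any} map, Lipschitz or not (the atom masses $\frac34,\frac14$ cannot arise from two atoms of mass $\frac12$). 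Relatedly, your identity $\mathfrak{L}_k=\lc(\WLh{k-1}{})_\#\rc_\#\!\lc(\WLh{k}{})_\#\mu_X\rc$ does not typecheck: $(\WLh{k-1}{})_\#$ has domain $\probm(X)$, whereas $(\WLh{k}{})_\#\mu_X$ lives on $\probm^{\circ k}(Z)$.

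A decisive symptom is that your argument never uses stationarity of $\mu_X$ and $\mu_Y$, yet the proposition fails without it: take a one-point self-loop with label $0$ versus $Y=\{y_1,y_2\}$ with $m_{y_1}=m_{y_2}=\delta_{y_2}$, $\ell_Y(y_1)=5$, $\ell_Y(y_2)=0$ and the non-stationary reference measure $\frac12\delta_{y_1}+\frac12\delta_{y_2}$; then $\dWL^{\scriptscriptstyle{(0)}}=\frac52$ but $\dWL^{\scriptscriptstyle{(1)}}=0$. The inequality is a statement about the transport problems, not about the label maps. The working route (the paper's) is: by \Cref{lm:push forward of coupling dW}, $\dWL^{\scriptscriptstyle{(k+1)}}$ is an infimum over $\gamma\in\cpl(\mu_X,\mu_Y)$ and fiberwise couplings $\nu_{x,y}\in\cpl(m_x^X,m_y^Y)$ of $\int\int\dW\!\lc\WLh{k}{}(x'),\WLh{k}{}(y')\rc\nu_{x,y}(dx'\times dy')\gamma(dx\times dy)$; the key point is that the averaged measure $\nu_{\bullet,\bullet}\odot\gamma$ is again a coupling of $\mu_X$ and $\mu_Y$ \emph{precisely because the reference measures are stationary}, and testing the depth-$k$ cost against it gives $\dWLk\leq\dWL^{\scriptscriptstyle{(k+1)}}$. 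This is what the paper packages as the nesting $\cpl^{\scriptscriptstyle{(k+1)}}(\mu_X,\mu_Y)\subseteq\cpl^{\scriptscriptstyle{(k)}}(\mu_X,\mu_Y)$ of $k$-step couplings (\Cref{lm:hierarchy of k-fold couplings}) combined with the characterization of $\dWLk$ in \Cref{thm:dwl= dwk}.
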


As a first step towards understanding the WL distance, we show 
that $\dWL$ is a pseudo-distance. We discuss its relationship with the WL test in \Cref{prop:WL vs dwl} below.  
\begin{proposition}\label{prop:dwl is a pseudometric}
$\dWL$ (resp. $\dWLk$ for $k\geq 0$) defines a pseudo-distance\footnote{By pseudo-distance, we mean that $\dWL$ (resp. $\dWLk$) is symmetric and satisfies the triangle inequality, but non-isomorphic LMMCs can have zero $\dWL$ (resp. $\dWLk$) distance.} on the collection $\mathcal{M}^{L}(Z)$.
\end{proposition}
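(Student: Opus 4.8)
The plan is to establish symmetry and the triangle inequality for each $\dWLk$, and then deduce the corresponding properties for $\dWL = \sup_k \dWLk$. Non-negativity and symmetry are immediate: by construction $\dWLk\!\lc(\mX,\ell_X),(\mY,\ell_Y)\rc = \dW\!\lc \mathfrak{L}_k(\mX,\ell_X),\mathfrak{L}_k(\mY,\ell_Y)\rc$ is a Wasserstein distance in the complete separable metric space $\prob^{\circ k}(Z)$ (separability and completeness of this hierarchy were noted in \Cref{sec:coupling and dW}), and the Wasserstein distance is itself a genuine metric on $\prob(\cdot)$, hence symmetric and non-negative. So the whole content is the triangle inequality and the clarification that $\dWLk$ can vanish on non-isomorphic LMMCs.

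The triangle inequality for fixed $k$ reduces to the triangle inequality for $\dW$ on $\prob^{\circ k}(Z)$: given a third $Z$-LMMC $(\mZ,\ell_Z)$, we simply write
\[
\dWLk\!\lc(\mX,\ell_X),(\mY,\ell_Y)\rc = \dW\!\lc \mathfrak{L}_k(\mX,\ell_X),\mathfrak{L}_k(\mY,\ell_Y)\rc \le \dW\!\lc \mathfrak{L}_k(\mX,\ell_X),\mathfrak{L}_k(\mZ,\ell_Z)\rc + \dW\!\lc \mathfrak{L}_k(\mZ,\ell_Z),\mathfrak{L}_k(\mY,\ell_Y)\rc,
\]
which is exactly $\dWLk\!\lc(\mX,\ell_X),(\mZ,\ell_Z)\rc + \dWLk\!\lc(\mZ,\ell_Z),(\mY,\ell_Y)\rc$. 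The triangle inequality for $\dW$ on a general complete separable metric space is standard (e.g. \cite[Chapter 6]{villani2009optimal}), obtained by the classical gluing lemma for couplings; I would cite it rather than reprove it. For the absolute distance, taking the supremum over $k$ of both sides of the inequality above and using $\sup_k (a_k + b_k) \le \sup_k a_k + \sup_k b_k$ gives the triangle inequality for $\dWL$; symmetry passes to the supremum trivially.

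It remains to observe that these are only pseudo-distances. For $\dWLk$ this is already visible at $k=0$: by \Cref{ex:k=0 and 1}, $\dWL^{\scriptscriptstyle{(0)}}$ only compares the global label distributions $(\ell_X)_\#\mu_X$ and $(\ell_Y)_\#\mu_Y$, so any two LMMCs with the same pushed-forward label measure (but different Markov kernels) are at zero $\dWL^{\scriptscriptstyle{(0)}}$ distance while being non-isomorphic. For the absolute $\dWL$, one can point to the fact — elaborated in the WL-test comparison results that follow (\Cref{prop:WL vs dwl})— that $\dWL$ vanishes precisely when the two LMMCs are WL-indistinguishable, and there exist non-isomorphic graphs that the WL test cannot separate; a concrete small pair of such graphs, promoted to LMMCs via \Cref{def:markov chain on graphs}, furnishes the witness. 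I do not anticipate a genuine obstacle here: the only mild subtlety is making sure the Wasserstein triangle inequality is invoked in the right space $\prob^{\circ k}(Z)$, which is legitimate precisely because the hierarchy $\prob^{\circ k}(Z)$ was set up to be complete and separable, so that $\dW$ is a bona fide metric there.
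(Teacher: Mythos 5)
Your proposal is correct and follows essentially the same route as the paper: the triangle inequality for each $\dWLk$ is read off directly from its definition as a Wasserstein distance on the complete separable space $\prob^{\circ k}(Z)$, and the property passes to $\dWL$ by taking the supremum over $k$. Your added witnesses for non-definiteness (matching global label distributions at $k=0$, and WL-indistinguishable non-isomorphic graphs for $\dWL$) are consistent with the examples the paper defers to its appendix.
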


\subsection{Comparison with the WL test}\label{sec:comparison with WL test}
Given the apparent similarity between the WL hierarchy and the WL measure hierarchy, it should not be surprising that (as we will show later in \Cref{prop:WL vs dwl}), on graphs, $\dWL$ essentially has the same discriminative power as the WL test, i.e., those pairs of graphs which can be distinguished by the WL test are the same as those with $\dWL>0$. The WL distance therefore should be interpreted as a quantification of the \emph{degree} to which the graphs fail to pass the WL test.

However, there is an apparent loss of information in the WL measure hierarchy due to the normalization inherent in probability measures. This could result in certain cases when $\dWL((\mX_q(G_1),\ell_{G_1}),(\mX_q(G_2),\ell_{G_2}))=0$ but $(G_1,\ell_{G_1})$ and $(G_2,\ell_{G_2})$ are distinguished by the WL test. See the examples in \Cref{sec:examples}. However, as we explain next, with appropriate label transformations, the discriminative power of $\dWL$ is the same as that of the WL test.

For any metric space of labels $Z$, consider any \emph{injective} map $g:Z\times \mathbb{N}\times \mathbb{N}\rightarrow Z_1$ where $Z_1$ is another metric space of labels.
A trivial example of such injective $g$ is given by letting $Z_1\coloneqq Z\times \mathbb{N}\times \mathbb{N}$ and letting $g$ be the identity map.

Now, given any labeled graph $(G,\ell_G:V_G\rightarrow Z)$, we generate a new label function $\ell_G^g\coloneqq g(\ell_G,\deg_G(\bullet),|V_G|):V_G\rightarrow Z_1$.
Intuitively, this is understood as relabeling $G$ via the map $g$.
Modulo this change of label function, we establish that $\dWL$ has the \emph{same} discriminative power as the WL test.

\begin{proposition}\label{prop:WL vs dwl}
For any $q\in(\frac{1}{2},1)$, the WL test distinguishes two labeled graphs $(G_1,\ell_{G_1})$ and $(G_2,\ell_{G_2})$ iff $\dWL\!\lc \lc\mX_q(G_1),\ell_{G_1}^g\rc,\lc\mX_q(G_2),\ell_{G_2}^g\rc\rc > 0$.
\end{proposition}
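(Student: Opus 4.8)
The plan is to prove the equivalent statement that $\dWL\lc\lc\mX_q(G_1),\ell_{G_1}^g\rc,\lc\mX_q(G_2),\ell_{G_2}^g\rc\rc=0$ if and only if the WL test fails to distinguish $(G_1,\ell_{G_1})$ and $(G_2,\ell_{G_2})$, and I would organize it around three reductions. \textbf{(i)} Since $Z_1$ is complete and separable, each $\prob^{\circ(k+1)}(Z_1)$ is a genuine metric space under the iterated Wasserstein distance, so $\dW$ vanishes exactly between equal measures; hence $\dWL=\sup_{k}\dWLk$ vanishes between the two graph induced LMMCs iff $\mathfrak{L}_k\lc\mX_q(G_1),\ell_{G_1}^g\rc=\mathfrak{L}_k\lc\mX_q(G_2),\ell_{G_2}^g\rc$ for every $k\geq 0$. \textbf{(ii)} There is a depth-by-depth bijective correspondence between the WL \emph{measure} hierarchy of $\lc\mX_q(G),\ell_G^g\rc$ and the ordinary (multiset) WL hierarchy of the \emph{relabeled} graph $(G,\ell_G^g)$; in particular the measures $\mathfrak{L}_k$ coincide for all $k$ iff the multisets $L_k\lc(G,\ell_G^g)\rc$ coincide for all $k$, i.e.\ iff the WL test does not distinguish $(G_1,\ell_{G_1}^g)$ and $(G_2,\ell_{G_2}^g)$. \textbf{(iii)} Relabeling a graph via $\ell_G^g=g(\ell_G,\deg_G(\bullet),|V_G|)$ with $g$ injective does not change which pairs of graphs the WL test distinguishes; hence the previous condition is equivalent to: the WL test does not distinguish $(G_1,\ell_{G_1})$ and $(G_2,\ell_{G_2})$.

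The crux is \textbf{(ii)}. I would prove by induction on $k$ that there is an \emph{injective} map $\Phi_k$, depending only on $k$ and $q$, with $\WLh{k}{(\mX_q(G),\ell_G^g)}(v)=\Phi_k\lc\WLhor{k}{(G,\ell_G^g)}(v)\rc$ for every graph $G$ and every $v\in V_G$. For $k=0$ this is the identity, and one observes that by injectivity of $g$ the label $\ell_G^g(v)=g(\ell_G(v),\deg_G(v),|V_G|)$ --- which sits as the innermost coordinate of $\WLhor{k}{(G,\ell_G^g)}(v)$ for every $k$ --- determines $\deg_G(v)$ and $|V_G|$. For the inductive step, $\WLh{k}{}(v)=\lc\WLh{k-1}{}\rc_\# m_v^{G,q}=q\,\delta_{\WLh{k-1}{}(v)}+\frac{1-q}{\deg_G(v)}\sum_{v'\in N_G(v)}\delta_{\WLh{k-1}{}(v')}$ (and $=\delta_{\WLh{k-1}{}(v)}$ when $N_G(v)=\emptyset$), which by the inductive hypothesis is a function of the pair $\WLhor{k}{}(v)=\lc\WLhor{k-1}{}(v),\mbl\WLhor{k-1}{}(v'):v'\in N_G(v)\mbr\rc$; this defines $\Phi_k$. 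Injectivity is the point where $q\in(\frac{1}{2},1)$ is needed: in $\nu=\WLh{k}{}(v)$ the atom at $\WLh{k-1}{}(v)$ carries mass at least $q>\frac{1}{2}$, while every other atom carries mass at most $\frac{1-q}{\deg_G(v)}\cdot\deg_G(v)=1-q<\frac{1}{2}$, so $\WLh{k-1}{}(v)$ is recovered as the unique atom of $\nu$ of mass exceeding $\frac{1}{2}$; applying $\Phi_{k-1}^{-1}$ then recovers $\WLhor{k-1}{}(v)$ and hence $\deg_G(v)$, and $\frac{\deg_G(v)}{1-q}\lc\nu-q\,\delta_{\WLh{k-1}{}(v)}\rc=\sum_{v'\in N_G(v)}\delta_{\WLh{k-1}{}(v')}$ is exactly the multiset $\mbl\WLh{k-1}{}(v'):v'\in N_G(v)\mbr$, which $\Phi_{k-1}^{-1}$ converts elementwise to $\mbl\WLhor{k-1}{}(v'):v'\in N_G(v)\mbr$; together with $\WLhor{k-1}{}(v)$ this reconstructs $\WLhor{k}{}(v)$. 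Finally, $\mathfrak{L}_k\lc\mX_q(G),\ell_G^g\rc=(\Phi_k)_\#\lc(\WLhor{k}{(G,\ell_G^g)})_\#\mu_G\rc$, and since all vertices sharing a value of $\WLhor{k}{(G,\ell_G^g)}$ share the same $\overline{\deg}_G$ while $|V_G|$ is readable off any atom, the degree-weighted pushforward $\mathfrak{L}_k$ and the plain multiset $L_k\lc(G,\ell_G^g)\rc$ determine one another; combined with (i), this yields (ii).

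For \textbf{(iii)} I would invoke standard facts about color refinement: the stable coloring of $(G,\ell_G)$ is the coarsest equitable partition refining the level sets of $\ell_G$, every equitable partition is degree-homogeneous, and a class's degree is a function of its row in the equitable profile. Hence the stable partition of $(G,\ell_G)$ agrees, as a set-partition with identical profiles, with that of $(G,\ell_G^g)$ --- the extra coordinates $\deg_G$ and the per-graph constant $|V_G|$ add no refinement beyond the equitable structure --- so the stable data of $(G,\ell_G)$ and of $(G,\ell_G^g)$ differ only through the injective relabeling $g$ of the base colors. Therefore any profile- and size-preserving bijection between the stable classes of $G_1$ and $G_2$ automatically matches degrees and matches $|V_{G_1}|$ with $|V_{G_2}|$ (each is the total of its class sizes), so by injectivity of $g$ it preserves the $\ell^g$-labels iff it preserves the $\ell$-labels; since the WL test identifies two labeled graphs precisely when such a label-preserving bijection exists, (iii) follows. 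Chaining (i)--(iii) completes the proof.

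The step I expect to be the main obstacle is the injectivity half of (ii). It is exactly here that the hypothesis $q>\frac{1}{2}$ is indispensable --- it makes the laziness atom $q\,\delta_{\WLh{k-1}{}(v)}$ strictly dominate, so that the previous level can be peeled off a measure that a priori carries many atoms --- and it is here that the two auxiliary coordinates of $g$ earn their keep: $\deg_G(v)$ is needed to undo the $1/\deg_G(v)$ normalization inside the $q$-Markov kernel, and $|V_G|$ is needed to undo the normalization hidden in passing from the multiset $L_k$ to the probability measure $\mathfrak{L}_k$. The degenerate cases ($N_G(v)=\emptyset$, or neighbors whose $(k-1)$-level value equals that of $v$, which can collapse $\nu$ to a single atom) also require a moment's care but are harmless once $\deg_G(v)$ is available from $\ell_G^g(v)$.
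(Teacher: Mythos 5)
Your proposal is correct and follows essentially the same route as the paper: your translation map $\Phi_k$ is the paper's $\varphi_q^k$ (Lemma \ref{lm:mathfrak l = varphi l}), your injectivity argument via the dominant laziness atom of mass $q>\tfrac12$ is exactly the content of Lemma \ref{lm:ptwise equivalence}, and your observation that the degree-weighted measure $\mathfrak{L}_k$ and the multiset $L_k$ determine one another (using that $\ell^g$ encodes $\deg_G$ and $|V_G|$) is the paper's concluding computation. The only divergence is step (iii), where you invoke standard equitable-partition/color-refinement facts while the paper proves Lemma \ref{lm:two labels are the same} by a direct induction showing $\WLhor{k+1}{(G,\ell_G)}(v)$ determines $\WLhor{k}{(G,\ell_G^g)}(v)$; both are valid.
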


Although it may seem that we are injecting more information into labels, this extra relabeling, 
in fact, does not affect the outcome of the WL test:
\begin{lemma}\label{lm:two labels are the same}
The WL test distinguishes $(G_1,\ell_{G_1})$ and $(G_2,\ell_{G_2})$ iff it distinguishes $(G_1,\ell_{G_1}^g)$ and $(G_2,\ell_{G_2}^g)$.
\end{lemma}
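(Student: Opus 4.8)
The plan is to show that the relabeling $\ell_G\mapsto\ell_G^g=g(\ell_G,\deg_G(\bullet),|V_G|)$ adds no discriminative power to the WL test, and conversely, that it cannot remove any either. The key structural fact is that the extra coordinates $\deg_G(\bullet)$ and $|V_G|$ are \emph{already recoverable} from the WL hierarchy of the original labeled graph: the degree of $v$ is determined by $\WLhor{1}{(G,\ell_G)}(v)$ (it is the cardinality of the multiset $\mbl\ell_G(v'):v'\in N_G(v)\mbr$), and $|V_G|$ is determined by $L_0((G,\ell_G))$ (it is the cardinality of that multiset). Since the $g$-coordinate $|V_G|$ is constant over $V_G$, it is a global quantity visible already at step $0$.

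First I would prove the ``easy'' direction: if the WL test distinguishes $(G_1,\ell_{G_1}^g)$ and $(G_2,\ell_{G_2}^g)$, then it distinguishes $(G_1,\ell_{G_1})$ and $(G_2,\ell_{G_2})$. This follows because $\ell_G$ is a function of $\ell_G^g$ — namely $\ell_G=\pi_Z\circ g^{-1}\circ\ell_G^g$ where $\pi_Z$ projects $Z\times\mathbb N\times\mathbb N$ onto $Z$ (here we use injectivity of $g$). A standard induction on $k$ then shows that if $L_k((G_1,\ell_{G_1}))=L_k((G_2,\ell_{G_2}))$ for all $k$, the same holds for the $\ell^g$-labels is false in general, so I phrase it contrapositively: more carefully, one shows by induction on $k$ that $\WLhor{k}{(G,\ell_G^g)}(v)$ determines $\WLhor{k}{(G,\ell_G)}(v)$ via a map that depends only on $k$ and not on $G$ (apply $g^{-1}$ and $\pi_Z$ coordinate-wise at the leaves, pushing through the multiset operations). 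Hence equality of $L_k$-multisets for the $\ell^g$-labels forces equality of $L_k$-multisets for the original labels, giving the contrapositive of this direction.

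Second, the ``hard'' direction: if the WL test distinguishes $(G_1,\ell_{G_1}^g)$, then it already distinguished $(G_1,\ell_{G_1})$ and $(G_2,\ell_{G_2})$. The cleanest route is to show, by induction on $k$, that $\WLhor{k}{(G,\ell_G^g)}(v)$ is a function — again depending only on $k$, not on $G$ — of the pair $\bigl(\WLhor{k+1}{(G,\ell_G)}(v),\,|V_G|\bigr)$. The base case $k=0$: $\ell_G^g(v)=g(\ell_G(v),\deg_G(v),|V_G|)$, and $\deg_G(v)$ is read off from $\WLhor{1}{(G,\ell_G)}(v)$ as noted above, while $\ell_G(v)=\WLhor{0}{(G,\ell_G)}(v)$ is read off directly; $g$ is a fixed injection, so apply it. The inductive step is the routine push-through: $\WLhor{k}{(G,\ell_G^g)}(v)$ is built from $\WLhor{k-1}{(G,\ell_G^g)}(v)$ and the multiset of those values over $N_G(v)$, each of which by hypothesis is a fixed function of $\WLhor{k}{(G,\ell_G)}(\cdot)$ and $|V_G|$; assembling these reconstructs $\WLhor{k}{(G,\ell_G^g)}(v)$ from $\WLhor{k+1}{(G,\ell_G)}(v)$ and $|V_G|$. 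Taking the induced multiset over all $v$ and using that $|V_{G_1}|$ and $|V_{G_2}|$ are themselves determined by $L_0$ (so if they differ, the WL test on the original labels already distinguishes at step $0$, and if they agree we may treat $|V_G|=n$ as a shared constant), we conclude that $L_{k+1}((G_1,\ell_{G_1}))=L_{k+1}((G_2,\ell_{G_2}))$ for all $k$ forces $L_k((G_1,\ell_{G_1}^g))=L_k((G_2,\ell_{G_2}^g))$ for all $k$. Combining the two directions yields the claimed equivalence.

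The main obstacle is purely bookkeeping: making precise the claim that the reconstruction maps ``depend only on $k$ and not on $G$,'' so that they can be applied uniformly to $G_1$ and $G_2$ and commute with the multiset-formation that defines $L_k$. The subtlety to handle with care is the degree-versus-depth shift — $\WLhor{k}{(G,\ell_G^g)}$ corresponds to $\WLhor{k+1}{(G,\ell_G)}$, not $\WLhor{k}{(G,\ell_G)}$, because recovering a node's degree already costs one WL step — but since $\dWL$ (and the WL test) quantifies over \emph{all} $k\geq0$, this one-step offset is harmless. No genuinely hard estimate is involved; the content is entirely in the combinatorial invariance of the WL refinement under label functions that factor through WL-recoverable data.
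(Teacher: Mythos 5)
Your proposal is correct in substance and follows essentially the same route as the paper: both directions are handled by pointwise inductions relating the hierarchy of the relabeled graph to that of the original, with the same one-step depth shift in one direction (recovering $\WLhor{k}{(G,\ell_G^g)}(v)$ from $\WLhor{k+1}{(G,\ell_G)}(v)$ and $|V_G|$, since reading off the degree costs one WL step) and injectivity of $g$ in the other. One thing to fix in the write-up: both of your paragraphs announce the same implication, ``WL distinguishes the $g$-relabeled graphs $\Rightarrow$ WL distinguishes the originals,'' yet the content of the first paragraph (that $\WLhor{k}{(G,\ell_G^g)}(v)$ determines $\WLhor{k}{(G,\ell_G)}(v)$, so $L_k$-equality for the $\ell^g$-labels forces $L_k$-equality for the original labels) actually establishes the \emph{other} implication, namely that distinguishing with $\ell_G$ implies distinguishing with $\ell_G^g$. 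Since the two inductions you describe do cover the two distinct implications, the argument is complete once the directions are labeled consistently.
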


By \Cref{prop:WL vs dwl} and a convergence result pertaining to the WL test \cite{krebs2015universal}, one has the following convergence result for $\dWLk$ (which implies that to determine whether two graph induced LMMCs satisfy $\dWL=0$ one only needs to inspect $\dWLk$ for finitely many $k$.)

\begin{corollary}\label{cor:dWL convergence}
For any $q\in(\frac{1}{2},1)$ and any two labeled graphs $(G_1,\ell_{G_1})$ and $(G_2,\ell_{G_2})$, if 
    $\dWLk\!\lc \lc\mX_q(G_1),\ell_{G_1}^g\rc,\lc\mX_q(G_2),\ell_{G_2}^g\rc\rc = 0$
holds for each $k=0,\ldots, \left|V_{G_1}\right|+\left|V_{G_2}\right|$, we then have that 
$\dWL\!\lc \lc\mX_q(G_1),\ell_{G_1}^g\rc,\lc\mX_q(G_2),\ell_{G_2}^g\rc\rc = 0$.
\end{corollary}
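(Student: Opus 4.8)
The plan is to reduce the claimed statement to the combination of two facts already available: (i) Proposition \ref{prop:WL vs dwl}, which says that for $q\in(\frac12,1)$ the WL test distinguishes $(G_1,\ell_{G_1})$ from $(G_2,\ell_{G_2})$ if and only if $\dWL\lc(\mX_q(G_1),\ell_{G_1}^g),(\mX_q(G_2),\ell_{G_2}^g)\rc>0$, and (ii) the classical stabilization result for the WL test (as in \cite{krebs2015universal}), which states that the WL coloring of a graph stabilizes after a number of rounds bounded by the number of vertices — so that if two graphs are distinguished by the WL test at all, they are already distinguished at step $k$ for some $k\le |V_{G_1}|+|V_{G_2}|$. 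Concretely, the argument runs by contraposition: assume $\dWL\lc(\mX_q(G_1),\ell_{G_1}^g),(\mX_q(G_2),\ell_{G_2}^g)\rc>0$; by Proposition \ref{prop:WL vs dwl} the WL test distinguishes $(G_1,\ell_{G_1})$ and $(G_2,\ell_{G_2})$, hence by Lemma \ref{lm:two labels are the same} it also distinguishes $(G_1,\ell_{G_1}^g)$ and $(G_2,\ell_{G_2}^g)$; by the stabilization bound it must therefore distinguish them at some step $k^\ast\le |V_{G_1}|+|V_{G_2}|$, i.e. $L_{k^\ast}\lc(G_1,\ell_{G_1}^g)\rc\neq L_{k^\ast}\lc(G_2,\ell_{G_2}^g)\rc$.

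The remaining — and genuinely load-bearing — step is to convert "the WL hierarchy separates at step $k^\ast$" into "$\dWLk$ at depth $k^\ast$ is strictly positive." This is exactly the content that underlies Proposition \ref{prop:WL vs dwl}, but I need the quantitative refinement that tracks the depth $k$. The key sublemma to isolate is: for graph induced LMMCs with relabeling $\ell^g$ and $q\in(\frac12,1)$, one has $\dWLk\lc(\mX_q(G_1),\ell_{G_1}^g),(\mX_q(G_2),\ell_{G_2}^g)\rc=0$ if and only if $L_k\lc(G_1,\ell_{G_1}^g)\rc=L_k\lc(G_2,\ell_{G_2}^g)\rc$ (possibly after a harmless reindexing between the WL hierarchy step and the WL measure hierarchy step — one should check whether the relabeling $g$ absorbs the off-by-one, which is presumably why $g$ records the degree and $|V_G|$). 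Granting this sublemma, the step $k^\ast$ separation in the WL hierarchy yields $\dWL^{\scriptscriptstyle(k^\ast)}>0$, contradicting the hypothesis that $\dWLk=0$ for all $k=0,\dots,|V_{G_1}|+|V_{G_2}|$. Hence $\dWL=0$ must hold, which is the contrapositive of the corollary.

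The main obstacle I anticipate is precisely the faithfulness sublemma connecting vanishing of $\dWLk$ to equality of the depth-$k$ WL multisets on relabeled graphs. One direction (equal multisets $\Rightarrow$ zero distance) is immediate because isomorphic distributions have zero Wasserstein distance; the subtle direction is that $\dWLk=0$ forces the multisets to agree, which requires that the pushforward construction $(\WLh{k-1}{\cdot})_\# m_\bullet^X$ does not collapse distinct WL colors — this is where $q>\frac12$ enters, guaranteeing that the self-loop mass $q\delta_v$ keeps the current color recoverable from the next-step measure, so no information is lost when passing from the combinatorial multiset to the nested probability measure. I would extract this as a standalone lemma (it is essentially a step in the proof of Proposition \ref{prop:WL vs dwl}) and then the corollary follows by the short contrapositive chaining described above, invoking the $|V_{G_1}|+|V_{G_2}|$ bound on WL stabilization from \cite{krebs2015universal}.
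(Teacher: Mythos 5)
Your proposal is correct and follows essentially the same route as the paper: the paper proves the corollary by combining the WL stabilization bound from \cite{krebs2015universal} with \Cref{prop:WL vs dwl}, whose proof in the appendix establishes precisely the depth-tracking sublemma you isolate, namely that for $q\in(\tfrac12,1)$ one has $L_k\!\lc(G_1,\ell_{G_1}^g)\rc=L_k\!\lc(G_2,\ell_{G_2}^g)\rc$ iff $\mathfrak{L}_k$ of the two relabeled LMMCs agree, i.e.\ iff $\dWLk=0$ at that depth. Your diagnosis of where $q>\tfrac12$ enters (the self-loop mass making the previous color recoverable) and of the off-by-one absorbed by the relabeling $g$ both match the paper's argument.
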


Results from \cite{babai1979canonical} imply that the WL test can certify isomorphism of random graphs (with degree labels) with high probability.
Then, immediately by \Cref{prop:WL vs dwl}, we have that with high probability $\dWL$ generates positive distance for non-isomorphic random-graph-induced LMMCs. 

\subsection{A lower bound for $\dWLk$}\label{sec:1 lower bound for dwlk}
The WL measure hierarchy, defined through consecutive steps of pushforward maps, can be related to a certain sequence of Markov kernels which we explain next. Given a MMC $\mX=(X,m_\bullet^X,\mu_X)$ and any $k\in\mathbb{N}$, the $k$-step Markov kernel, denoted by $m_\bullet^{X,\otimes k}$, is defined inductively as follows: for any $x \in X$, when $k = 1$, $m_x^{X \otimes 1} \coloneqq m_x^{X}$ and when $k \geq 2$, for $A \subseteq X$,
\[m_x^{X,\otimes k}(A)\coloneqq\int_Xm_{x'}^{X,\otimes(k-1)}(A)\,m_x^X(dx').
\]
If we represent $m_\bullet^X$ by a transition matrix $M_{\mX }$, then the  matrix corresponding to $m_\bullet^{X,\otimes k}$ is the $k$-power of $M_{\mX }$.

Recall the formula for $\dWL^{\scriptscriptstyle{(1)}}$ from \Cref{ex:k=0 and 1}. We define a certain quantity (for each $k\in\N$) which arises by replacing the Markov kernels in that formula with $k$-step Markov kernels:
\[d_{\mathrm{WLLB}}^{\scriptscriptstyle{(k)}}((\mX,\ell_X),(\mY,\ell_Y))\coloneqq\inf_{\gamma\in \cpl(\mu_X,\mu_Y)} \int\limits_{X\times Y}\dW\!\lc (\ell_X)_\#m_x^{X,\otimes k},(\ell_Y)_\#m_y^{Y,\otimes k}\rc \gamma(dx\times dy).\]
Notice that $\dW$ above takes place in $\prob(Z)$ whereas $\dW$ in \Cref{eq:definition of dwlk} for defining $\dWLk$ takes place in $\prob^{\circ k}(Z)$.
Of course, we have that
$d_{\mathrm{WLLB}}^{\scriptscriptstyle{(1)}}=\dWL^{\scriptscriptstyle{(1)}}$.  
It turns out that for each $k\geq1$, $d_{\mathrm{WLLB}}^{\scriptscriptstyle{(k)}}$ is a lower bound for $\dWLk$.

\begin{proposition}\label{prop:lower bound}
For any $(\mX,\ell_X),(\mY,\ell_Y) \in \mathcal{M}^{L}(Z)$ and any integer $k\geq 1$ we have that
$ d_{\mathrm{WLLB}}^{\scriptscriptstyle{(k)}}\!\lc(\mX,\ell_X),(\mY,\ell_Y)\rc\leq\dWLk\!\lc(\mX,\ell_X),(\mY,\ell_Y)\rc.$
\end{proposition}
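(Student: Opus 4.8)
The plan is to construct, for every integer $k\ge 1$, a $1$-Lipschitz \emph{flattening} map $\Phi_k\colon\probm^{\circ k}(Z)\to\probm(Z)$ — where $\probm^{\circ k}(Z)$ carries its iterated Wasserstein metric and $\probm(Z)$ carries $\dW$ — which converts the WL measure hierarchy into the pushforward of the label map by the $k$-step Markov kernel, in the sense that
\[
\Phi_k\circ\WLh{k}{(\mX,\ell_X)}=(\ell_X)_\#\,m_\bullet^{X,\otimes k}\colon X\to\probm(Z).
\]
Intuitively, an element of $\probm^{\circ k}(Z)$ is a $k$-fold nested random measure and $\Phi_k$ ``integrates out'' all the intermediate levels; when the nested object is $\WLh{k}{(\mX,\ell_X)}(x)$, this collapses precisely to the distribution of the label of the endpoint of the $k$-step random walk started at $x$, i.e.\ to $(\ell_X)_\#m_x^{X,\otimes k}$. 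Granting the two displayed properties, the proposition then follows by transporting an optimal coupling for $\dWLk$ down through $\Phi_k$.

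To build $\Phi_k$, the key case is the innermost flattening, namely the expectation map $\mathrm E\colon\probm(\probm(Z))\to\probm(Z)$ given by $\mathrm E(\Lambda)(B)\coloneqq\int\alpha(B)\,\Lambda(d\alpha)$. Given $\Lambda,\Lambda'$ with optimal coupling $\Pi$ and optimal couplings $\gamma_{\alpha,\beta}\in\cpl(\alpha,\beta)$, the measure $\int\gamma_{\alpha,\beta}\,\Pi(d\alpha\times d\beta)$ is a coupling of $\mathrm E(\Lambda)$ and $\mathrm E(\Lambda')$ whose cost equals $\int\dW(\alpha,\beta)\,\Pi(d\alpha\times d\beta)=\dW(\Lambda,\Lambda')$, so $\mathrm E$ is $1$-Lipschitz; because the state space of any MMC is finite, every measure appearing in our application is finitely supported, so this is a finite convex combination and no measurable-selection argument is required. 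Combined with the standard fact that the pushforward by a $1$-Lipschitz map is $1$-Lipschitz between Wasserstein spaces, the recursive definition $\Phi_1\coloneqq\mathrm{id}_{\probm(Z)}$, $\Phi_k\coloneqq\mathrm E\circ(\Phi_{k-1})_\#$ yields, by induction on $k$, maps that are $1$-Lipschitz. The displayed identity is then proved by induction on $k$ as well: for $k=1$ it is immediate from $\WLh{1}{(\mX,\ell_X)}=(\ell_X)_\#m_\bullet^X$, $m_\bullet^{X,\otimes 1}=m_\bullet^X$ and $\Phi_1=\mathrm{id}$; for the step, $\WLh{k}{(\mX,\ell_X)}=(\WLh{k-1}{(\mX,\ell_X)})_\#m_\bullet^X$ together with the inductive hypothesis gives $(\Phi_{k-1})_\#\bigl(\WLh{k}{(\mX,\ell_X)}(x)\bigr)=\int_X\delta_{(\ell_X)_\#m_{x'}^{X,\otimes(k-1)}}\,m_x^X(dx')$ (a finite sum, as $X$ is finite), and applying $\mathrm E$ and unwinding $m_x^{X,\otimes k}(A)=\int_Xm_{x'}^{X,\otimes(k-1)}(A)\,m_x^X(dx')$ produces exactly $(\ell_X)_\#m_x^{X,\otimes k}$.

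It remains to assemble the pieces. Since $\mathfrak L_k((\mX,\ell_X))=(\WLh{k}{(\mX,\ell_X)})_\#\mu_X$ and its $\mY$-analogue are finitely supported on $\probm^{\circ k}(Z)$, an optimal coupling $\Pi^\star$ realizing $\dWLk=\dW(\mathfrak L_k((\mX,\ell_X)),\mathfrak L_k((\mY,\ell_Y)))$ (the $\dW$ in $\probm^{\circ k}(Z)$) can be disintegrated along the fibres of $\WLh{k}{(\mX,\ell_X)}$ and $\WLh{k}{(\mY,\ell_Y)}$, splitting mass proportionally to $\mu_X$ and $\mu_Y$, to produce a coupling $\gamma^\star\in\cpl(\mu_X,\mu_Y)$ with $\int_{X\times Y}\dW\bigl(\WLh{k}{(\mX,\ell_X)}(x),\WLh{k}{(\mY,\ell_Y)}(y)\bigr)\gamma^\star(dx\times dy)=\dWLk$; this is the same mechanism behind the reformulation of $\dWL^{\scriptscriptstyle{(1)}}$ in \Cref{ex:k=0 and 1}, and is legitimate because $X$ and $Y$ are finite. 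Using $\gamma^\star$ as a test coupling in the infimum defining $d_{\mathrm{WLLB}}^{\scriptscriptstyle{(k)}}$, then the displayed identity $\Phi_k\circ\WLh{k}{(\mX,\ell_X)}=(\ell_X)_\#m_\bullet^{X,\otimes k}$, and finally that $\Phi_k$ is $1$-Lipschitz, we obtain
\[
d_{\mathrm{WLLB}}^{\scriptscriptstyle{(k)}}\!\bigl((\mX,\ell_X),(\mY,\ell_Y)\bigr)\le\int_{X\times Y}\dW\bigl(\Phi_k(\WLh{k}{(\mX,\ell_X)}(x)),\Phi_k(\WLh{k}{(\mY,\ell_Y)}(y))\bigr)\,\gamma^\star(dx\times dy)\le\dWLk.
\]

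The only genuinely delicate steps are the $1$-Lipschitz property of $\mathrm E$ — which is clean here only because finiteness of the state spaces bypasses the measurable-selection subtleties one would otherwise face over a general Polish $Z$ — and the coupling-lifting, which is a routine gluing argument requiring care only because $\WLh{k}{(\mX,\ell_X)}$ need not be injective. Everything else is bookkeeping with pushforwards and the recursive definition of the $k$-step kernel.
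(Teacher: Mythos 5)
Your proof is correct, but it takes a genuinely different route from the one the paper (implicitly) relies on. The paper's argument for \Cref{prop:lower bound} goes through \Cref{thm:dwl= dwk}: $\dWLk$ equals an infimum over $k$-step couplings $\gamma^{\scriptscriptstyle{(k)}}=\nu^{\scriptscriptstyle{(k)}}_{\bullet,\bullet}\odot\gamma$ of the ground-level cost $\int d_Z(\ell_X(x),\ell_Y(y))\,\gamma^{\scriptscriptstyle{(k)}}(dx\times dy)$, and since by \Cref{lm:k-fold coupling well defined} each $\nu^{\scriptscriptstyle{(k)}}_{x,y}$ is a (generally suboptimal) coupling of $m_x^{X,\otimes k}$ and $m_y^{Y,\otimes k}$, the inner integral dominates $\dW\!\lc(\ell_X)_\#m_x^{X,\otimes k},(\ell_Y)_\#m_y^{Y,\otimes k}\rc$, which after infimizing over $\gamma$ yields $d_{\mathrm{WLLB}}^{\scriptscriptstyle{(k)}}$. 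You instead build a $1$-Lipschitz flattening $\Phi_k\colon\probm^{\circ k}(Z)\to\probm(Z)$ from the barycenter map $\mathrm{E}$ and show it intertwines $\WLh{k}{(\mX,\ell_X)}$ with $(\ell_X)_\#m_\bullet^{X,\otimes k}$; all the verifications (the coupling $\int\gamma_{\alpha,\beta}\,\Pi(d\alpha\times d\beta)$ witnessing that $\mathrm{E}$ is $1$-Lipschitz, the inductive identity using linearity of pushforward and the recursion for $m_x^{X,\otimes k}$, and the lifting of an optimal coupling on $\probm^{\circ k}(Z)\times\probm^{\circ k}(Z)$ to some $\gamma^\star\in\cpl(\mu_X,\mu_Y)$, which is exactly \Cref{lm:push forward of coupling dW}) are sound, and finiteness of $X$ and $Y$ does dispose of the measurable-selection issues as you note. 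What each approach buys: yours is self-contained and conceptually crisp --- the inequality is literally "a $1$-Lipschitz map cannot increase Wasserstein distance" --- and it isolates a reusable structural fact about the hierarchy; the paper's route requires the heavier characterization theorem, but that theorem is amortized across several other results (\Cref{coro:hierarchy dwlk}, \Cref{prop:stable WL}, the TLB discussion), so within the paper it makes the proposition a two-line corollary.
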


For any fixed $k\in\N$ and any two finite $\R$-LMMCs, $(\mX, \ell_X)$ and $(\mY, \ell_Y)$, if we let $n \coloneqq \max(|X|, |Y|)$, then computing $\dWLk((\mX, \ell_X),(\mY, \ell_Y))$ can be done in $O(n^5 \log(n)\,k)$ time whereas the total time complexity for computing $d_{\mathrm{WLLB}}^{\scriptscriptstyle{(k)}}((\mX, \ell_X),(\mY, \ell_Y))$ is $O(n^3 \log(nk))$. Hence it is far more efficient to compute $d_{\mathrm{WLLB}}^{\scriptscriptstyle{(k)}}$ than $\dWLk$. Details can be found in \Cref{sec:algorithm and analysis}.

\section{WL distance inspired neural networks}\label{sec:NN}
We now focus on the case when the metric space of labels $Z$ is Euclidean, i.e., $Z=\R^d$ and define a family of real functions on $\mathcal{M}^{L}(\R^d)$ called \emph{Markov chain neural networks ($\LMMCNN{}$s)}. We both study the discriminative power and establish a universality result for this family of functions.

For any \emph{Lipschitz} function $\varphi:\R^i\rightarrow\R^j$, we define the map 
$q_\varphi:\probm(\R^i)\rightarrow\R^j$
sending $\alpha\in \probm(\R^i)$ to the average $\int_{\R^i}\varphi(x)\alpha(dx)$. Based on $q_\varphi$, we define two types of maps:

(1) $F_\varphi:\mathcal{M}^{L}(\R^i)\rightarrow \mathcal{M}^{L}(\R^j)$
sending $(\mX,\ell_X)$ to $(\mX,\ell_X^\varphi)$, where $\ell_X^\varphi:X\rightarrow \R^j$ is defined by $x\mapsto q_\varphi((\ell_X)_\#m_x^X)$.

(2) $S_\varphi:\mathcal{M}^{L}(\R^i)\rightarrow\R^j$ sending $(\mX,\ell_X)$ to $q_\varphi((\ell_X)_\#\mu_X)$. 

Then, for any sequence of \emph{Lipschitz} maps $\varphi_i:\R^{d_{i-1}}\rightarrow\R^{d_{i}}$ for $i=1,\ldots,k+1$, and any \emph{continuous} map $\psi:\R^{d_{k+1}}\rightarrow \R$, we define a map of the following form, which we call a $k$-layer \emph{Markov chain neural network ($\LMMCNN{k}$)}: 
\begin{equation}\label{eq:LMMCNN definition}
   \psi\circ S_{\varphi_{k+1}}\circ F_{\varphi_k}\circ \cdots\circ F_{\varphi_1}:\mathcal{M}^{L}(\R^d)\rightarrow \mathbb{R}.
\end{equation}

Note the resemblance between our MCNNs and message passing neural networks (MPNNs) for graphs \cite{gilmer2017neural}:  Specifically, $(\ell_X)_\#m_x^X$ is analogous to the {\sc Aggregation} operation, $q_\phi$ is analogous to the {\sc Update} operation, and $\psi\circ S_\phi$ corresponds to the readout function that appear in the context of MPNN. 

\begin{example}[Relation with WWL graph kernels]\label{ex:relation with wwl}
MCNNs recover the framework of  Wasserstein Weisfeiler-Lehman (WWL) graph kernels w.r.t. continuous attributes \cite{togninalli2019wasserstein}:
Consider any labeled graph $(G,\ell_G:V_G\rightarrow\R^d)$ and any $q\in[0,1)$. Let $\varphi:\R^d\rightarrow\R^d$ be any continuous map. Applying $q_{\varphi}$ to $(\mX_q(G),\ell_{G})$ (cf. \Cref{def:markov chain on graphs}), then for any $v\in V_G$ such that $N_G(v)\neq\emptyset$, we have
\begin{align*}
    &\ell_G^{\varphi}(v)=\int_{\R^d}\varphi(t)\,(\ell_G)_\#m_v^{G,q}(dt)\\
    &=q\,\varphi(\ell_G(v))+\frac{1-q}{\deg_G(v)}\sum_{v'\in N_G(v)}\!\!\varphi(\ell_G(v')).
\end{align*}

Notice that if we further let $q=\frac{1}{2}$ and $\varphi:\R^d\rightarrow\R^d$ be the identity map $\mathrm{id}$, then we have
\begin{equation}\label{eq:WWL}
   \ell_G^{\mathrm{id}}(v)=\frac{1}{2}\!\lc\,\ell_G(v)+\frac{1}{\deg_G(v)}\sum_{v'\in N_G(v)}\!\!\ell_G(v')\rc. 
\end{equation}
This is exactly how labels are updated in the WWL graph kernel framework. A slight modification of the ground distance computation in the WWL framework generates a lower bound for $\dWLk$, which implies that the WL distance is more capable at discriminating labeled graphs than WWL graph kernels. This is confirmed by the examples in \Cref{sec:WWL}. 
\end{example}

We let $\mathcal{N\!N}_k(\R^d)$ denote the collection of all $\LMMCNN{k}$ (cf. \Cref{eq:LMMCNN definition}). Below, we show that $\mathcal{N\!N}_k(\R^d)$ has the same discriminative power as the WL distance. 

\begin{proposition}\label{prop:zero set of NN}
Given any $(\mX,\ell_X),(\mY,\ell_Y)\in \mathcal{M}^{L}(\R^d)$, 
\begin{enumerate}
    \item if $\dWLk\!\lc(\mX,\ell_X),(\mY,\ell_Y)\rc =0$, then for every $h\in \mathcal{N\!N}_k(\R^d)$ one has that $h((\mX,\ell_X))=h((\mY,\ell_Y))$;
    \item if $\dWLk\!\lc(\mX,\ell_X),(\mY,\ell_Y)\rc >0$, then there exists $h\in \mathcal{N\!N}_k(\R^d)$ such that $h((\mX,\ell_X))\neq h((\mY,\ell_Y))$.
\end{enumerate}
\end{proposition}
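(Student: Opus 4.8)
The plan is to prove the two implications separately, both by analyzing how an $\LMMCNN{k}$ interacts with the WL measure hierarchy. The key observation driving everything is that a single layer $F_\varphi$ with $\varphi$ Lipschitz transforms the label function $\ell_X$ into $\ell_X^\varphi = q_\varphi \circ (\WLh{1}{(\mX,\ell_X)})$, so that after $k$ layers $F_{\varphi_k}\circ\cdots\circ F_{\varphi_1}$ the resulting label function factors through $\WLh{k}{(\mX,\ell_X)}$; then $S_{\varphi_{k+1}}$ composed with $\psi$ reads this off against $\mu_X$, i.e. against $\mathfrak{L}_k\!\lc(\mX,\ell_X)\rc$. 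Concretely, I would first establish by induction on $j=1,\ldots,k$ that the composite $F_{\varphi_j}\circ\cdots\circ F_{\varphi_1}$ sends $(\mX,\ell_X)$ to $(\mX, T_j\circ \WLh{j}{(\mX,\ell_X)})$ for a suitable continuous (in fact Lipschitz-on-bounded-sets) map $T_j$ built from $q_{\varphi_1},\ldots,q_{\varphi_j}$ acting level-by-level on the probability-measure hierarchy $\probm^{\circ j}(\R^{d_0})$. The inductive step just unwinds the definition of $F_\varphi$ and the pushforward, using that pushing forward commutes with composition.

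For part (1), assume $\dWLk = 0$, i.e. $\dW\!\lc\mathfrak{L}_k\!\lc(\mX,\ell_X)\rc,\mathfrak{L}_k\!\lc(\mY,\ell_Y)\rc\rc=0$, hence $\mathfrak{L}_k\!\lc(\mX,\ell_X)\rc = \mathfrak{L}_k\!\lc(\mY,\ell_Y)\rc$ as elements of $\prob^{\circ(k+1)}(\R^{d_0})$. Given any $h = \psi\circ S_{\varphi_{k+1}}\circ F_{\varphi_k}\circ\cdots\circ F_{\varphi_1}$, I would show $h((\mX,\ell_X))$ depends on $(\mX,\ell_X)$ only through $\mathfrak{L}_k\!\lc(\mX,\ell_X)\rc$: by the factorization above, $S_{\varphi_{k+1}}\circ F_{\varphi_k}\circ\cdots\circ F_{\varphi_1}$ applied to $(\mX,\ell_X)$ equals $\int \varphi_{k+1}(T_k(\omega))\,\mathfrak{L}_k\!\lc(\mX,\ell_X)\rc(d\omega)$ (after identifying $\mathfrak{L}_k$ with the pushforward of $\mu_X$ along $\WLh{k}{}$). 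Since the two hierarchy measures coincide, the two integrals coincide, hence $h((\mX,\ell_X))=h((\mY,\ell_Y))$. The only care needed is integrability of $\varphi_{k+1}\circ T_k$ against these measures, which follows from the Lipschitz hypotheses plus the finite-$1$-moment convention for $\probm(Z)$ used throughout.

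For part (2), suppose $\dWLk > 0$, so $\mathfrak{L}_k\!\lc(\mX,\ell_X)\rc \neq \mathfrak{L}_k\!\lc(\mY,\ell_Y)\rc$. I would invoke the Kantorovich--Rubinstein duality for $\dW$ on $\prob^{\circ k}(\R^{d_0})$: since the two measures are distinct and the metric space $\prob^{\circ k}(\R^{d_0})$ is complete and separable (as noted in the preliminaries), there is a $1$-Lipschitz function $f:\prob^{\circ k}(\R^{d_0})\to\R$ with $\int f\,d\mathfrak{L}_k\!\lc(\mX,\ell_X)\rc \neq \int f\,d\mathfrak{L}_k\!\lc(\mY,\ell_Y)\rc$. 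The hard part — and the step I expect to be the main obstacle — is to realize this abstract Lipschitz test function as (a limit of, or exactly) the function $\omega\mapsto \psi(\varphi_{k+1}(T_k(\omega)))$ coming from an actual $\LMMCNN{k}$, i.e. to show that the maps $q_\varphi$ on each level of the hierarchy are rich enough to separate any two distinct points of $\probm^{\circ j}$ and, chained together, to reproduce (approximately) an arbitrary Lipschitz functional. I would handle this by a downward induction on the levels: using that finitely many moment functionals $\alpha\mapsto\int\varphi\,d\alpha$ with $\varphi$ ranging over a separating Lipschitz family determine $\alpha\in\probm(\cdot)$, and that such functionals are themselves Lipschitz in $\dW$, one can build $\varphi_1,\ldots,\varphi_{k+1}$ so that $T_k$ is injective enough, and then choose the final continuous $\psi$ to extract the sign of the discrepancy. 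Since $\psi$ is only required to be continuous (not Lipschitz), there is ample freedom in the last step; the real work is the level-by-level separation argument for the pushforward maps, which is where I would spend the bulk of the proof. It may be cleanest to first prove the statement with $\dWLk$ replaced by equality of hierarchy measures and a "separates points" conclusion, then note $\dWLk>0 \iff$ the measures differ.
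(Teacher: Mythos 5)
Your proposal follows essentially the same route as the paper. For part (1) the paper likewise shows that the post-layer labels factor through the WL measure hierarchy, proving by induction the quantitative version $\bigl\|\ell_X^{\scriptscriptstyle{(\varphi,j)}}(x)-\ell_Y^{\scriptscriptstyle{(\varphi,j)}}(y)\bigr\|\leq \prod_{i\le j} C_i\cdot \dW\!\lc \WLh{j}{(\mX,\ell_X)}(x),\WLh{j}{(\mY,\ell_Y)}(y)\rc$, which gives both the conclusion of item (1) and the Lipschitz bound reused later for universality; your factorization map $T_j$ is an equivalent packaging. For part (2), the step you flag as ``the main obstacle'' --- realizing an abstract $1$-Lipschitz functional on $\probm^{\circ k}(\R^{d})$ produced by Kantorovich--Rubinstein duality at the top of the hierarchy --- is a detour the paper avoids entirely: it works bottom-up, exactly as in your closing fallback. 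Concretely, by induction on $i$ it constructs Lipschitz $\varphi_i$ so that $\ell_X^{\scriptscriptstyle{(\varphi,i)}}(x)=\ell_Y^{\scriptscriptstyle{(\varphi,i)}}(y)$ iff $\WLh{i}{(\mX,\ell_X)}(x)=\WLh{i}{(\mY,\ell_Y)}(y)$, and then applies duality once more on $\R^{d_k}$ to get $\varphi_{k+1}$, with $\psi=\mathrm{id}$. One caution: your justification ``finitely many moment functionals $\alpha\mapsto\int\varphi\,d\alpha$ determine $\alpha$'' is false for general measures; what makes the construction work is that $X$ and $Y$ are \emph{finite}, so at each level only finitely many pairs of distinct pushforward measures must be separated, and each such pair yields one Lipschitz test function by duality on $\R^{d_{i-1}}$ (the measures live in a Euclidean space after the previous layer, not in an abstract $\probm^{\circ i}$), which are then stacked into a single vector-valued $\varphi_i$. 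With that correction your argument closes, and it coincides with the paper's proof.
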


Recall from \Cref{cor:dWL convergence} that given any $q\in(\frac{1}{2},1)$, any injective map $g$ and any labeled graphs $(G_1,\ell_{G_1})$ and $(G_2,\ell_{G_2})$, we need at most $2n$ steps to determine whether $\dWL(\lc\mX_q(G_1),\ell_G^g\rc, \lc\mX_q(G_2),\ell_{G_2}^g\rc) = 0$, where $n=\max\lc\left|V_{G_1}\right|,\left|V_{G_2}\right|\rc$. Consequently,  $\LMMCNN{}$s have the same discriminative power as the WL test:
\begin{corollary}\label{cor:separationgraphs}
For any $\frac{1}{2}<q<1$, the WL test distinguishes the labeled graphs $(G_1, \ell_{G_1})$ and $(G_2, \ell_{G_2})$ iff there exists $h \in \mathcal{N\!N}_{2n}(\R^d)$ for which $h\!\lc\lc\mX_q(G_1),\ell_{G_1}^g\rc\rc \neq h\!\lc\lc\mX_q(G_2),\ell_{G_2}^g\rc\rc$. 
\end{corollary}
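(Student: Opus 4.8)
The plan is to derive Corollary~\ref{cor:separationgraphs} by chaining together \Cref{prop:zero set of NN}, \Cref{cor:dWL convergence}, and \Cref{prop:WL vs dwl}, all three of which do the real work. The statement to prove is a biconditional involving the WL test on $(G_1,\ell_{G_1})$ and $(G_2,\ell_{G_2})$ on one side, and the existence of a separating $\LMMCNN{2n}$ applied to the relabeled graph-induced LMMCs $\lc\mX_q(G_i),\ell_{G_i}^g\rc$ on the other. So the skeleton is: WL test distinguishes $\iff$ $\dWL>0$ on the relabeled LMMCs (by \Cref{prop:WL vs dwl}) $\iff$ $\dWLk>0$ for some $k$ $\iff$ $\dWL^{\scriptscriptstyle{(2n)}}>0$ (using the convergence \Cref{cor:dWL convergence} to cap $k$ at $2n$, together with the monotonicity \Cref{coro:hierarchy dwlk}) $\iff$ there is a separating $h\in\mathcal{N\!N}_{2n}(\R^d)$ (by \Cref{prop:zero set of NN} applied at depth $k=2n$).

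More concretely, I would proceed as follows. First, write $n=\max(|V_{G_1}|,|V_{G_2}|)$ and abbreviate $\mathbf{A}_i\coloneqq\lc\mX_q(G_i),\ell_{G_i}^g\rc$. For the forward direction, assume the WL test distinguishes $(G_1,\ell_{G_1})$ and $(G_2,\ell_{G_2})$. By \Cref{lm:two labels are the same} the WL test also distinguishes $(G_1,\ell_{G_1}^g)$ and $(G_2,\ell_{G_2}^g)$, and then \Cref{prop:WL vs dwl} gives $\dWL(\mathbf{A}_1,\mathbf{A}_2)>0$. By definition of $\dWL$ as a supremum, there is some $k_0\geq 0$ with $\dWL^{\scriptscriptstyle{(k_0)}}(\mathbf{A}_1,\mathbf{A}_2)>0$. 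The contrapositive of \Cref{cor:dWL convergence} tells us that if $\dWL^{\scriptscriptstyle{(k)}}(\mathbf{A}_1,\mathbf{A}_2)=0$ for all $k=0,\ldots,|V_{G_1}|+|V_{G_2}|$, then $\dWL(\mathbf{A}_1,\mathbf{A}_2)=0$; since the latter fails, there must exist some $k_1\in\{0,\ldots,|V_{G_1}|+|V_{G_2}|\}$ with $\dWL^{\scriptscriptstyle{(k_1)}}(\mathbf{A}_1,\mathbf{A}_2)>0$. Because $|V_{G_1}|+|V_{G_2}|\leq 2n$ and $\dWL^{\scriptscriptstyle{(k)}}$ is nondecreasing in $k$ (\Cref{coro:hierarchy dwlk}), we get $\dWL^{\scriptscriptstyle{(2n)}}(\mathbf{A}_1,\mathbf{A}_2)\geq \dWL^{\scriptscriptstyle{(k_1)}}(\mathbf{A}_1,\mathbf{A}_2)>0$. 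Now apply part (2) of \Cref{prop:zero set of NN} with $k=2n$, $d$ the label dimension: there exists $h\in\mathcal{N\!N}_{2n}(\R^d)$ with $h(\mathbf{A}_1)\neq h(\mathbf{A}_2)$, as required.

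For the reverse direction, suppose $h(\mathbf{A}_1)\neq h(\mathbf{A}_2)$ for some $h\in\mathcal{N\!N}_{2n}(\R^d)$. By the contrapositive of part (1) of \Cref{prop:zero set of NN} (again at depth $k=2n$), this forces $\dWL^{\scriptscriptstyle{(2n)}}(\mathbf{A}_1,\mathbf{A}_2)>0$, hence $\dWL(\mathbf{A}_1,\mathbf{A}_2)\geq\dWL^{\scriptscriptstyle{(2n)}}(\mathbf{A}_1,\mathbf{A}_2)>0$. Then \Cref{prop:WL vs dwl} gives that the WL test distinguishes $(G_1,\ell_{G_1}^g)$ and $(G_2,\ell_{G_2}^g)$, and one more application of \Cref{lm:two labels are the same} gives that the WL test distinguishes the original $(G_1,\ell_{G_1})$ and $(G_2,\ell_{G_2})$. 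This closes the loop.

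I do not anticipate a genuine obstacle here — the corollary is a bookkeeping consequence of results already established — but the one point that needs care is the interplay between the three different ``depth'' bounds: \Cref{cor:dWL convergence} certifies that $\dWL=0$ can be detected by inspecting $\dWL^{\scriptscriptstyle{(k)}}$ only up to $k=|V_{G_1}|+|V_{G_2}|$, whereas the corollary is phrased with the uniform bound $2n$; bridging these requires invoking the monotonicity of \Cref{coro:hierarchy dwlk} and the elementary inequality $|V_{G_1}|+|V_{G_2}|\leq 2n$, and noting that \Cref{prop:zero set of NN} is stated per fixed depth so it must be applied at exactly $k=2n$ rather than at the smaller $k_1$. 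One should also double-check that the hypothesis $q\in(\tfrac12,1)$ is exactly the range demanded by both \Cref{prop:WL vs dwl} and \Cref{cor:dWL convergence}, which it is, so no extra constraints on $q$ are introduced.
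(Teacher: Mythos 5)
Your proof is correct and follows exactly the route the paper intends: the corollary is presented as an immediate consequence of \Cref{prop:WL vs dwl}, \Cref{cor:dWL convergence} (capping the depth at $2n$), the monotonicity of \Cref{coro:hierarchy dwlk}, and \Cref{prop:zero set of NN} applied at depth $k=2n$, which is precisely the chain you spell out. The care you take with the bound $|V_{G_1}|+|V_{G_2}|\leq 2n$ and with applying \Cref{prop:zero set of NN} at the fixed depth $2n$ rather than at the smaller $k_1$ is exactly the right bookkeeping.
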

Since MPNNs also have the same discriminative power as the WL test \cite{xu2018powerful}, we know that our $\LMMCNN{}$s can separate all pairs of graphs that MPNNs can separate.

In general, a pseudometric space canonically induces a metric space by identifying points at 0 distance (cf. \cite[Proposition 1.1.5]{burago2001course}). We let $\mathcal{M}^{L}_k(\R^d)$ denote the metric space induced by the pseudometric space $(\mathcal{M}^{L}(\R^d),\dWLk)$. As a direct consequence of \Cref{prop:zero set of NN}, every $h\in\mathcal{N\!N}_k(\R^d)$ induces a real function (which we still denote by $h$) in $\mathcal{M}^{L}_k(\R^d)$. Then, our $\LMMCNN{}$s are actually universal w.r.t. continuous functions defined on $\mathcal{M}^{L}_k(\R^d)$ (see the proof in \Cref{app:proof universal}). 
\begin{theorem}\label{thm:universality}
For any $k\in\N$, let $\mathcal{K}\subseteq\mathcal{M}^{L}_k(\R^d)$ be any compact subspace. Then\footnote{For simplicity of notation, we still use $\mathcal{N\!N}_k(\R^d)$ to denote induced functions on $\mathcal{M}^{L}_k(\R^d)$ with domain restricted to $\mathcal{K}$.},
$\overline{\mathcal{N\!N}_k(\R^d)}=C(\mathcal{K},\mathbb{R})$.
\end{theorem}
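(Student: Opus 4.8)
The plan is to invoke the Stone–Weierstrass theorem. Since $\mathcal K$ is compact and $C(\mathcal K,\R)$ is a Banach algebra under the sup norm, it suffices to check that the closure $\overline{\mathcal{N\!N}_k(\R^d)}$ contains the constants, separates points of $\mathcal K$, and is a subalgebra of $C(\mathcal K,\R)$ (or at least a sublattice, but the algebra route is cleaner here). Actually, a mild obstacle is that $\mathcal{N\!N}_k(\R^d)$ is almost certainly \emph{not} closed under products or sums as written, because the composition $\psi\circ S_{\varphi_{k+1}}\circ F_{\varphi_k}\circ\cdots\circ F_{\varphi_1}$ has a rigid layered form and the outer function $\psi$ is a single continuous map $\R^{d_{k+1}}\to\R$. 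So I would not try to show $\mathcal{N\!N}_k(\R^d)$ is a subalgebra directly; instead I would fix the ``feature extractor'' part and vary only $\psi$.

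First I would observe that, for a fixed choice of Lipschitz maps $\varphi_1,\dots,\varphi_{k}$ and $\varphi_{k+1}=\mathrm{id}$ (so that the last step just integrates against $\mu_X$), the map $\Phi\coloneqq S_{\mathrm{id}}\circ F_{\varphi_k}\circ\cdots\circ F_{\varphi_1}:\mathcal{M}^L(\R^d)\to\R^{d_k}$ is well-defined, and every $h=\psi\circ\Phi$ with $\psi\in C(\R^{d_k},\R)$ lies in $\mathcal{N\!N}_k(\R^d)$. The key point is that $\Phi$ descends to a \emph{continuous} map on $\mathcal{M}^L_k(\R^d)$: this needs a stability/Lipschitz-type estimate showing that each $F_{\varphi_i}$ and the final averaging step are controlled by $\dWLk$ on their respective levels of the hierarchy — essentially the same computation underlying Proposition~\ref{prop:zero set of NN}, upgraded from ``equal on the zero set'' to ``Lipschitz in $\dWLk$''. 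Granting this, $\Phi$ restricts to a continuous map $\mathcal K\to\R^{d_k}$, its image $\Phi(\mathcal K)$ is compact, and $\{\psi\circ\Phi:\psi\in C(\R^{d_k},\R)\}$ is a closed subalgebra of $C(\mathcal K,\R)$ containing the constants. It separates exactly those pairs of points of $\mathcal K$ that $\Phi$ separates.

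So the crux is: the union over all admissible choices of $(\varphi_1,\dots,\varphi_k)$ of the corresponding feature maps $\Phi$ must \emph{jointly} separate the points of $\mathcal K$ — equivalently, separate all pairs at positive $\dWLk$ distance. This is precisely the content of part~(2) of Proposition~\ref{prop:zero set of NN}: if $\dWLk((\mX,\ell_X),(\mY,\ell_Y))>0$ then some $h\in\mathcal{N\!N}_k(\R^d)$ distinguishes them, and by the layered structure that $h$ factors through some $\Phi$ of the above form. Hence for every pair of distinct points of $\mathcal K$ there is a single $h=\psi\circ\Phi\in\mathcal{N\!N}_k(\R^d)$ separating them. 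To conclude I would apply a ``local'' version of Stone–Weierstrass: the family $\mathcal{N\!N}_k(\R^d)$ is not itself an algebra, but it is a \emph{union of closed subalgebras} $\{\psi\circ\Phi:\psi\}$ each containing the constants, and this union is point-separating; a standard argument (cover $\mathcal K\times\mathcal K\setminus\Delta$, or the diagonal trick: given a target $f$ and $\varepsilon>0$, for each pair $x,y$ pick a separating $h_{x,y}$ in some subalgebra, approximate $f$ on $\{x,y\}$ inside that subalgebra, then patch with a compactness/partition argument) shows density of the union in $C(\mathcal K,\R)$. Alternatively, and more slickly, note that finitely many $\Phi$'s can be concatenated into one larger feature map $\Phi'=(\Phi^{(1)},\dots,\Phi^{(m)})$ of the same layered type (running the $\varphi_i$'s in parallel across coordinates), so it suffices to find, for a given compact $\mathcal K$, a \emph{single} $\Phi'$ that is injective on $\mathcal K$; then $\{\psi\circ\Phi':\psi\in C(\mathrm{image},\R)\}$ is already all of $C(\mathcal K,\R)$ by the ordinary Stone–Weierstrass theorem.

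\textbf{The main obstacle} I anticipate is the continuity of $\Phi$ with respect to $\dWLk$ — i.e., producing the quantitative stability bound for the maps $F_{\varphi_i}$ acting between consecutive levels $\probm^{\circ i}(\R^{d_{i-1}})$ and verifying it is compatible with how $\dWLk$ is built by nesting Wasserstein distances; one must track Lipschitz constants of the $\varphi_i$ through the pushforward-and-integrate operations and confirm that the 1-moment conditions survive. The Stone–Weierstrass step itself, and the reduction of $\mathcal{N\!N}_k$'s separation property to Proposition~\ref{prop:zero set of NN}, are essentially bookkeeping once that stability estimate is in hand. I would also need to double-check the edge case $\varphi_{k+1}=\mathrm{id}$ is genuinely allowed (it is Lipschitz, so yes) and that restricting $\psi$ to be merely continuous rather than Lipschitz causes no trouble — it does not, since $\Phi(\mathcal K)$ is compact and continuous functions on a compact set are uniform limits of Lipschitz ones.
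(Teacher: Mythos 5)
Your overall strategy (Stone--Weierstrass plus the trick of running the $\varphi_i$'s in parallel across coordinates) is the paper's strategy, but you part ways on one structural point. You assume $\mathcal{N\!N}_k(\R^d)$ is ``almost certainly not'' a subalgebra because of its rigid layered form, and therefore route the argument through a union of subalgebras $\{\psi\circ\Phi:\psi\}$ plus a local/patching version of Stone--Weierstrass. In fact the concatenation device you describe at the end is exactly what the paper uses to show that $\mathcal{N\!N}_k(\R^d)$ \emph{is} a subalgebra: given $h_1=\psi\circ S_{\varphi_{k+1}}\circ F_{\varphi_k}\circ\cdots\circ F_{\varphi_1}$ and $h_2=\tilde\psi\circ S_{\tilde\varphi_{k+1}}\circ F_{\tilde\varphi_k}\circ\cdots\circ F_{\tilde\varphi_1}$, one sets $\Phi_i\coloneqq\varphi_i\times\tilde\varphi_i$, checks that $h_1=\psi\circ P\circ S_{\Phi_{k+1}}\circ F_{\Phi_k}\circ\cdots\circ F_{\Phi_1}$ and $h_2=\tilde\psi\circ\tilde P\circ S_{\Phi_{k+1}}\circ\cdots$ for the coordinate projections $P,\tilde P$, and then absorbs the sum or product into the (merely continuous) outer map, e.g.\ $h_1+h_2=(\psi\circ P+\tilde\psi\circ\tilde P)\circ S_{\Phi_{k+1}}\circ F_{\Phi_k}\circ\cdots\circ F_{\Phi_1}$. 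So the ``local'' Stone--Weierstrass machinery is unnecessary; ordinary Stone--Weierstrass applies directly once one notes constants are realized by constant $\psi$ and separation of points is item~2 of \Cref{prop:zero set of NN}. Your identification of the continuity issue is also on target, and it is already resolved in the paper: the proof of item~1 of \Cref{prop:zero set of NN} establishes the quantitative bound $\dW\!\lc(\ell_X^{\scriptscriptstyle{(\varphi,k)}})_\#\mu_X,(\ell_Y^{\scriptscriptstyle{(\varphi,k)}})_\#\mu_Y\rc\leq\prod_i C_i\cdot\dWLk$, i.e.\ the Lipschitz upgrade you ask for.

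One concrete warning about your ``more slickly'' alternative: reducing to a \emph{single} feature map $\Phi'$ that is injective on $\mathcal{K}$ does not work in general. Item~2 of \Cref{prop:zero set of NN} gives, for each pair of distinct points of $\mathcal{K}$, some finite-dimensional $\Phi$ separating them, but $\mathcal{K}\times\mathcal{K}\setminus\Delta$ is not compact, so no finite concatenation need separate all pairs simultaneously; more fundamentally, a compact subspace of $\mathcal{M}^{L}_k(\R^d)$ can be infinite-dimensional (it contains LMMCs of unbounded size) and hence need not admit any continuous injection into a finite-dimensional Euclidean space. If you want to avoid that dead end, either verify the subalgebra property as above, or make the union-of-subalgebras argument rigorous by observing that the concatenation trick makes the family directed under inclusion (hence the union is itself an algebra), rather than appealing to an injective global feature map.
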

Our universality result resembles the one established in \cite{azizian2020expressive} for message passing neural networks (MPNNs) which proves that MPNNs can universally approximate continuous functions on graphs with bounded size which are less or equally as discriminative as the WL test. Compared with their result, we remark that our universality result applies to the collection of all LMMCs (and hence all graphs) \emph{with no restriction on their size}. 
Moreover, although for simplicity LMMCs are restricted to finite spaces  throughout the paper, our MCNNs and universality result can potentially be extended to more general LMMCs including continuous objects such as manifolds and graphons.

\section{Relationship with the GW distance}\label{sec:dGW}
Given a LMMC $(\mX,\ell_X)$, the label $\ell_X$ induces the following pseudo-distance on $X$:
$d_X(x,x')\coloneqq d_Z(\ell_X(x),\ell_X(x'))$ for $x,x'\in X.$
This suggests  structures which are closely related to LMMCs: \emph{Markov chain metric spaces} (MCMSs for short). A MCMS is any tuple $(\mX,d_X)$ where $\mX=(X,m_\bullet^X,\mu_X)$ is a finite MMC and $d_X$ is a proper distance on $X$.
Obviously, endowing a MCMS $(\mX,d_X)$ with a label function $\ell_X$ and forgetting $d_X$ produces a LMMC $(\mX,\ell_X)$.
We  let $\mcms$ denote the collection of all MCMSs. We now construct a Gromov-Wasserstein type distance between MCMSs and study its relationship with the WL distance.

\subsection{The GW distance between MCMSs}
Recall from \Cref{eq:GWdist} the definition of the standard GW distance between MMSs.
Intuitively, in order to identify a suitable GW-like distance between MCMSs, we would like to incorporate a comparison between Markov kernels into \Cref{eq:GWdist}. Towards this goal, for each $k\in\N$, we consider a special type of maps $\nu^{\scriptscriptstyle{(k)}}_{\bullet,\bullet}:X\times Y\rightarrow\prob(X\times Y)$ which are defined similarly to how we define $k$-step Markov kernels and satisfy that for any $x\in X$ and $y\in Y$, $\nu^{\scriptscriptstyle{(k)}}_{x,y}$ is a coupling between the $k$-step Markov kernels $m_x^{X,\otimes k}$ and $m_y^{Y,\otimes k}$;  see \Cref{sec: k fold dW} for the precise definition. We refer to $\nu^{\scriptscriptstyle{(k)}}_{\bullet,\bullet}$ as a ``$k$-step coupling'' between $k$-step Markov kernels. We let $\cpl^{\scriptscriptstyle{(k)}}\!\lc m_\bullet^X,m_\bullet^Y\rc$ denote the collection of all such $k$-step couplings $\nu^{\scriptscriptstyle{(k)}}_{\bullet,\bullet}$.

\begin{definition}\label{def:MCMsGW}
For any $k\geq 1$ and any MCMSs $(\mX,d_X)$ and $(\mY,d_Y)$, we define the $k$-distortion of any pair $(\gamma,\nu^{\scriptscriptstyle{(k)}}_{\bullet,\bullet})$ where $\gamma\in \cpl(\mu_X,\mu_Y)$ and $\nu^{\scriptscriptstyle{(k)}}_{\bullet,\bullet}\in\cpl^{\scriptscriptstyle{(k)}}\!\lc m_\bullet^X,m_\bullet^Y\rc$ as:
\begin{multline*}
    \mathrm{dis}^{\scriptscriptstyle{(k)}}\!\lc\gamma,\nu_{\bullet,\bullet}^{\scriptscriptstyle{(k)}}\rc\coloneqq\int\limits_{X\times Y}\int\limits_{X\times Y}\int\limits_{X\times Y}\!\!|d_X(x,x')-d_Y(y,y')|\\
 \nu_{x'',y''}^{\scriptscriptstyle{(k)}}(dx'\times dy')\,\gamma(dx''\times dy'')\,\gamma(dx\times dy).
\end{multline*}
This notion of distortion implements a multiscale reweighting of the coupling $\gamma$ through the $k$-step coupling $\nu_{\bullet,\bullet}^{\scriptscriptstyle{(k)}}$.
Then, the \emph{$k$-Gromov-Wasserstein distance} between the MCMSs $(\mX,d_X)$ and $(\mY,d_Y)$ is defined by
\[\dGW^{\scriptscriptstyle{(k)}}\!\lc(\mX,d_X),(\mY,d_Y)\rc \coloneqq
\!\!\!\!\!\!\inf_{\substack{\gamma\in \cpl(\mu_X,\mu_Y)\\\nu^{\scriptscriptstyle{(k)}}_{\bullet,\bullet}\in\cpl^{\scriptscriptstyle{(k)}}\!\lc m_\bullet^X,m_\bullet^Y\rc}}\!\!\!\!\mathrm{dis}^{\scriptscriptstyle{(k)}}\!\lc\gamma,\nu^{\scriptscriptstyle{(k)}}_{\bullet,\bullet}\rc.\]

We then define the (absolute) \textbf{Gromov-Wasserstein distance} between MCMSs by
\[\dGW^\mathrm{MCMS}((\mX,d_X),(\mY,d_Y))\!\!\coloneqq\sup_k\dGW^{\scriptscriptstyle{(k)}}\!\lc(\mX,d_X),(\mY,d_Y)\rc\!.\]
\end{definition}

\begin{proposition}\label{prop:kGWmetric}
$\dGW^\mathrm{MCMS}$ defines a proper\footnote{Unlike the case for $\dWL$, two MCMSs have zero $\dGW^\mathrm{MCMS}$ distance iff they are isomorphic. The precise definition of isomorphism between MCMSs is postponed to \Cref{def:MCMS isomorphism} in \Cref{sec:proof iso}.} distance on the collection $\mcms$ modulo isomorphism of MCMSs.
\end{proposition}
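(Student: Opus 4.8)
The plan is to verify the three requirements for $\dGW^\mathrm{MCMS}$ to be a (finite) metric on $\mcms$ modulo isomorphism: symmetry together with finiteness (both immediate), the triangle inequality (via gluing), and the identity of indiscernibles $\dGW^\mathrm{MCMS}=0\iff$ isomorphic. Symmetry holds because interchanging the two MCMSs and replacing every coupling $\gamma$ and every $k$-step coupling $\nu^{\scriptscriptstyle{(k)}}_{\bullet,\bullet}$ by its transpose leaves each $k$-distortion unchanged, so $\dGW^{\scriptscriptstyle{(k)}}$ and hence $\dGW^\mathrm{MCMS}$ are symmetric; finiteness holds because $X,Y$ are finite, so $\mathrm{dis}^{\scriptscriptstyle{(k)}}$ integrates the function $|d_X(x,x')-d_Y(y,y')|$, bounded by $\diam(X,d_X)+\diam(Y,d_Y)$, against probability measures, giving a bound uniform in $k$. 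The vanishing $\dGW^\mathrm{MCMS}((\mX,d_X),(\mX,d_X))=0$ will follow from the easy half of non-degeneracy.

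For the triangle inequality I would fix three MCMSs $(\mX,d_X),(\mY,d_Y),(\mZ,d_Z)$, an integer $k\ge1$ and $\eps>0$, and pick $(\gamma_{XY},\nu^{\scriptscriptstyle{(k)}}_{\bullet,\bullet})$ $\eps$-optimal for $\dGW^{\scriptscriptstyle{(k)}}((\mX,d_X),(\mY,d_Y))$ and $(\gamma_{YZ},\lambda^{\scriptscriptstyle{(k)}}_{\bullet,\bullet})$ $\eps$-optimal for $\dGW^{\scriptscriptstyle{(k)}}((\mY,d_Y),(\mZ,d_Z))$. Using the Gluing Lemma \cite{villani2009optimal} I glue $\gamma_{XY}$ and $\gamma_{YZ}$ over their common $Y$-marginal $\mu_Y$ to a measure on $X\times Y\times Z$ and let $\gamma_{XZ}$ be its $X\times Z$-marginal, so $\gamma_{XZ}\in\cpl(\mu_X,\mu_Z)$. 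In parallel I would build a $k$-step coupling $\xi^{\scriptscriptstyle{(k)}}_{\bullet,\bullet}\in\cpl^{\scriptscriptstyle{(k)}}(m_\bullet^X,m_\bullet^Z)$ by gluing $\nu_{\bullet,\bullet}$ and $\lambda_{\bullet,\bullet}$ along the common intermediate space $Y$ at each level of the $k$-step construction, averaging out the $Y$-coordinate against the appropriate conditional of the iterated glued measure; the structural point is that all of these glued objects fit together into a single measure on the triple product whose three ``bipartite'' marginals reproduce exactly $\mathrm{dis}^{\scriptscriptstyle{(k)}}(\gamma_{XY},\nu^{\scriptscriptstyle{(k)}})$, $\mathrm{dis}^{\scriptscriptstyle{(k)}}(\gamma_{YZ},\lambda^{\scriptscriptstyle{(k)}})$ and $\mathrm{dis}^{\scriptscriptstyle{(k)}}(\gamma_{XZ},\xi^{\scriptscriptstyle{(k)}})$. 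Integrating the pointwise inequality $|d_X(x,x')-d_Z(z,z')|\le|d_X(x,x')-d_Y(y,y')|+|d_Y(y,y')-d_Z(z,z')|$ against that measure gives $\dGW^{\scriptscriptstyle{(k)}}((\mX,d_X),(\mZ,d_Z))\le\mathrm{dis}^{\scriptscriptstyle{(k)}}(\gamma_{XZ},\xi^{\scriptscriptstyle{(k)}})\le\dGW^{\scriptscriptstyle{(k)}}((\mX,d_X),(\mY,d_Y))+\dGW^{\scriptscriptstyle{(k)}}((\mY,d_Y),(\mZ,d_Z))+2\eps$; letting $\eps\to0$ and then taking $\sup_k$ yields the triangle inequality for $\dGW^\mathrm{MCMS}$.

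For non-degeneracy, the ``if'' direction is explicit: given an isomorphism $\psi\colon X\to Y$ (in the sense of \Cref{def:MCMS isomorphism}), the coupling $\gamma=(\mathrm{id}_X\times\psi)_\#\mu_X$ together with the $1$-step coupling $\eta_{x,\psi(x)}=(\mathrm{id}_X\times\psi)_\#m_x^X$ (defined arbitrarily off the graph of $\psi$, which $\gamma$ does not charge) satisfies, by an easy induction, $\eta^{\scriptscriptstyle{(k)}}_{x,\psi(x)}=(\mathrm{id}_X\times\psi)_\#m_x^{X,\otimes k}$, so every $k$-distortion vanishes because $\psi$ is a $d$-isometry; hence $\dGW^\mathrm{MCMS}=0$. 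For the ``only if'' direction I claim it suffices to use $\dGW^{\scriptscriptstyle{(1)}}=0$, which is implied by $\dGW^\mathrm{MCMS}=0$. As the spaces are finite the infimum defining $\dGW^{\scriptscriptstyle{(1)}}$ is attained; fix an optimal pair $(\gamma,\nu^{\scriptscriptstyle{(1)}}_{\bullet,\bullet})$, so $d_X(x,x')=d_Y(y,y')$ holds whenever $(x,y),(x'',y'')\in\mathrm{supp}\,\gamma$ and $(x',y')\in\mathrm{supp}\,\nu^{\scriptscriptstyle{(1)}}_{x'',y''}$. Using that $d_X,d_Y$ separate points, that $\mathrm{supp}\,\gamma$ projects onto all of $X$ and all of $Y$ (full support of $\mu_X,\mu_Y$), and that stationarity with full support forces every state of $X$ (resp.\ $Y$) to lie in $\mathrm{supp}\,m_{x''}^X$ (resp.\ $\mathrm{supp}\,m_{y''}^Y$) for some $x''$ (resp.\ $y''$), one deduces in turn: $\mathrm{supp}\,\gamma$ is the graph of a bijection $\psi\colon X\to Y$; $\psi_\#\mu_X=\mu_Y$; $\psi$ is a $d$-isometry; and — specializing the first point $(x,y)$ to $(x',\psi(x'))$ — that $\mathrm{supp}\,\nu^{\scriptscriptstyle{(1)}}_{x'',\psi(x'')}\subseteq\mathrm{graph}(\psi)$ for all $x''$, whence $\nu^{\scriptscriptstyle{(1)}}_{x'',\psi(x'')}=(\mathrm{id}_X\times\psi)_\#m_{x''}^X$ and therefore $\psi_\#m_{x''}^X=m_{\psi(x'')}^Y$. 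Thus $\psi$ is an MCMS isomorphism.

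I expect the triangle inequality to be the main obstacle, specifically ensuring that the coupling of the $k$-step Markov kernels $m_\bullet^{X,\otimes k}$ and $m_\bullet^{Z,\otimes k}$ obtained by gluing through the intermediate chain is itself an admissible $k$-step coupling in the sense of $\cpl^{\scriptscriptstyle{(k)}}$; this is what dictates performing the gluing level by level rather than all at once, and it requires care with the disintegrations of the iterated glued measures. By contrast, the non-degeneracy argument, though it needs a careful support analysis, is comparatively routine once one observes that scale $1$ already suffices.
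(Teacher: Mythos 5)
Your proposal is correct and follows essentially the same route as the paper: symmetry is immediate, the triangle inequality is obtained by gluing the couplings of the stationary measures and gluing the $k$-step couplings level by level through the intermediate space (this is exactly the paper's Lemma~\ref{lemma:kfoldglueing}, proved by induction on $k$ as you anticipate), and non-degeneracy is settled by the graph/diagonal coupling in one direction and, in the other, by passing to an optimal pair for $\dGW^{\scriptscriptstyle{(1)}}$ and running the support analysis that forces $\mathrm{supp}(\gamma)$ and each $\mathrm{supp}(\nu^{\scriptscriptstyle{(1)}}_{x,\psi(x)})$ onto the graph of a measure- and kernel-preserving isometry. The only cosmetic difference is that the paper packages your support argument via the auxiliary coupling $\gamma'=\nu_{\bullet,\bullet}\odot\gamma$ and a claim borrowed from its treatment of the decoupled GW distance, and it establishes attainment of the infimum by a general weak-compactness argument rather than the finite-dimensional one you invoke for $k=1$.
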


\begin{example}[MMS induced MCMS]\label{prop:Haibin}
Given a metric measure space $\mathbf{X}=(X,d_X,\mu_X)$, we produce a MCMS $\mathcal{M}(\mathbf{X})\coloneqq(\mX,d_X)$ where $\mX\coloneqq (X,m_\bullet^X,\mu_X)$ by letting $m_\bullet^X\coloneqq\mu_X$ be the constant Markov kernel. It is easy to check that $\mu_X$ is a stationary distribution w.r.t. $m_\bullet^X$. Then, for any two metric measure spaces $\mathbf{X}=(X,d_X,\mu_X)$, $\mathbf{Y}=(Y,d_Y,\mu_Y)$, and $k\geq 1$, we have that $$\dGW^{\scriptscriptstyle{(k)}}(\mathcal{M}(\mathbf{X}),\mathcal{M}(\mathbf{Y}))=\dGW^{\mathrm{bi}}(\mathbf{X},\mathbf{Y})$$
where $\dGW^{\mathrm{bi}}$ denotes a ``decoupled" version of the Gromov-Wasserstein distance between metric measure spaces (see \Cref{sec:decGW}) which is of course independent of $k$.
\end{example}

$\dGW^{\mathrm{bi}}$ is in general NP-hard to compute \cite{scetbon2021linear}, which (via \Cref{prop:Haibin}) implies that $\dGW^\mathrm{MCMS}$ is also NP-hard to compute.
See \Cref{sec:basic lower bound} for  a basic computable lower bound estimate of $\dGW^\mathrm{MCMS}$. In the next section, we establish more sophisticated lower bounds for $\dGW^\mathrm{MCMS}$ involving the WL distance.

\begin{table}[htb!]
\caption{1-Nearest Neighbor classification accuracy.} 
\label{tab:nn experiments}
\vskip 0.15in
\begin{center}
\resizebox{\columnwidth}{!}{\begin{tabular}{lccccccc}
\toprule
Method & MUTAG  & PROTEINS & PTC-FM & PTC-MR & IMDB-B & IMDB-M & COX2\\
\midrule 
$\dWLk$ & \textbf{92.1 $\pm$ 6.3} & 63.0 $\pm$ 3.5 & \textbf{62.2 $\pm$ 8.5} & 56.2 $\pm$ 6.3 & 70.0 $\pm$ 4.3 & \textbf{41.3 $\pm$ 4.8} & 76.1 $\pm$ 5.5\\
\midrule
$d_{\mathrm{WLLB}}^{\scriptscriptstyle{(k)}}$ & 87.3 $\pm$ 1.9 & \textbf{66.2 $\pm$ 2.2} & 62.5 $\pm$ 8.5 & \textbf{57.8 $\pm$ 6.8} & \textbf{69.9 $\pm$ 2.5} & 40.6 $\pm$ 3.8 & \textbf{81.2 $\pm$ 5.3}\\
\bottomrule
WWL & 85.1 $\pm$ 6.5 & 64.7 $\pm$ 2.8 & 58.2 $\pm$ 8.5 & 54.3 $\pm$ 7.9 & 65.0 $\pm$ 3.3 & 40.0 $\pm$ 3.3 &  76.1 $\pm$ 5.6 \\
\bottomrule
\end{tabular}}
\end{center}
\vskip -0.1in
\end{table}

\begin{table*}[htb!]
\caption{SVM classification accuracy.}
\label{tab:svm experiments}
\vskip 0.15in
\begin{center}
\resizebox{\columnwidth}{!}{\begin{tabular}{lcccccccr}
\toprule
Method & MUTAG  & PROTEINS & PTC-FM  & PTC-MR & IMDB-B & IMDB-M & COX2\\
\midrule
$\dWLk$ & 89.9 $\pm$ 6.4 & 72.6 $\pm$ 3.1 & 62.1 $\pm$ 3.9 & 57.9 $\pm$ 7.9 & \textbf{75.9 $\pm$ 2.7} & 51.6 $\pm$ 4.0 & 78.1 $\pm$ 0.8 \\
\midrule
$d_{\mathrm{WLLB}}^{\scriptscriptstyle{(k)}}$ & \textbf{90.0 $\pm$ 5.6} & 68.9 $\pm$ 1.9 & 59.6 $\pm$ 6.4 & 59.0 $\pm$ 8.3 & 75.1 $\pm$ 2.2 & \textbf{52.0 $\pm$ 1.8} & 78.1 $\pm$ 0.8\\
\bottomrule
WWL & 85.3 $\pm$ 7.3 &\textbf{ 72.9 $\pm$ 3.6 }& \textbf{62.2 $\pm$ 6.1} & \textbf{63.0 $\pm$ 7.4} & 70.8 $\pm$ 5.4& 50.0 $\pm$ 5.3 & 78.2 $\pm$ 0.8\\
WL & 85.5$\pm$ 1.6 & 71.6 $\pm$ 0.6 & 56.6 $\pm$ 2.1 & 56.2 $\pm$ 2.0 & 72.4 $\pm$ 0.7 & 50.9 $\pm$ 0.4 & 78.4 $\pm$ 1.1\\
WL-OA & 86.3 $\pm$ 2.1 & 72.6 $\pm$ 0.7 & 58.4 $\pm$ 2.0 & 54.2 $\pm$ 1.6 & 73.0 $\pm$ 1.1 & 50.2 $\pm$ 1.1 & \textbf{78.8 $\pm$ 1.3}\\
\bottomrule
\end{tabular}}
\end{center}
\vskip -0.1in
\end{table*}

\subsection{The WL distance v.s. the GW distance}
Given a MCMS $(\mX,d_X)$, fix any $x\in X$. Then, one can endow $(\mX,d_X)$ with the label function $ d_X(x,\bullet):X\rightarrow \R$. This gives rise to the LMMC
$(\mX,d_X(x,\bullet)).$
Then, we have the following lower bound of $\dGW^{\scriptscriptstyle{(k)}}$ in terms of $\dWLk$:

\begin{proposition}\label{prop:lower bound for dGW}
For each $k\geq 1$ and for any MCMSs $(\mX,d_X)$ and $(\mY,d_Y)$, one has that
\[  \dGW^{\scriptscriptstyle{(k)}}((\mX,d_X),(\mY,d_Y))\geq\inf_{\gamma\in\cpl(\mu_X,\mu_Y)}
    \int\limits_{X\times Y}\dWLk\!\big((\mX,d_X(x,\bullet)),(\mY,d_Y(y,\bullet))\big)\gamma(dx\times dy).\]
\end{proposition}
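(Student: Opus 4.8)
The plan is to lower-bound the $k$-distortion $\mathrm{dis}^{\scriptscriptstyle{(k)}}\!\lc\gamma,\nu^{\scriptscriptstyle{(k)}}_{\bullet,\bullet}\rc$ of an \emph{arbitrary} admissible pair, pointwise in the outermost coupling variable, by the sliced WL distance, and then integrate and take infima. So fix $\gamma\in\cpl(\mu_X,\mu_Y)$ and a $k$-step coupling $\nu^{\scriptscriptstyle{(k)}}_{\bullet,\bullet}\in\cpl^{\scriptscriptstyle{(k)}}\!\lc m_\bullet^X,m_\bullet^Y\rc$; by the definition in \Cref{sec: k fold dW} there is a $1$-step family $\nu^{\scriptscriptstyle{(1)}}_{\bullet,\bullet}$ with $\nu^{\scriptscriptstyle{(1)}}_{x,y}\in\cpl\!\lc m_x^X,m_y^Y\rc$ for all $(x,y)$, out of which $\nu^{\scriptscriptstyle{(j)}}_{x,y}(\cdot)=\int_{X\times Y}\nu^{\scriptscriptstyle{(j-1)}}_{x',y'}(\cdot)\,\nu^{\scriptscriptstyle{(1)}}_{x,y}(dx'\times dy')$ for $2\le j\le k$ (equivalently, by associativity of composition, with $\nu^{\scriptscriptstyle{(1)}}$ and $\nu^{\scriptscriptstyle{(j-1)}}$ interchanged). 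Also fix the outermost pair $(x,y)$ and write $\ell_X\coloneqq d_X(x,\bullet):X\to\R$, $\ell_Y\coloneqq d_Y(y,\bullet):Y\to\R$, so that the innermost integrand of $\mathrm{dis}^{\scriptscriptstyle{(k)}}$ equals $|d_X(x,x')-d_Y(y,y')|=|\ell_X(x')-\ell_Y(y')|$. The goal is to show that, for this fixed $(x,y)$, the inner double integral of $\mathrm{dis}^{\scriptscriptstyle{(k)}}$ (over $\gamma(dx''\times dy'')$ and $\nu^{\scriptscriptstyle{(k)}}_{x'',y''}(dx'\times dy')$) is at least $\dWLk\!\lc(\mX,\ell_X),(\mY,\ell_Y)\rc$.

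The core is the following claim (with $\ell_X,\ell_Y$ as above), proved by induction on $j$: for each $1\le j\le k$ and each $(x'',y'')\in X\times Y$,
\[\dW\!\lc\WLh{j}{(\mX,\ell_X)}(x''),\WLh{j}{(\mY,\ell_Y)}(y'')\rc\le\Phi_j(x'',y'')\coloneqq\int_{X\times Y}|\ell_X(x')-\ell_Y(y')|\,\nu^{\scriptscriptstyle{(j)}}_{x'',y''}(dx'\times dy'),\]
where the $\dW$ on the left is that of $\probm^{\circ j}(\R)$. For $j=1$: since $\nu^{\scriptscriptstyle{(1)}}_{x'',y''}\in\cpl\!\lc m_{x''}^X,m_{y''}^Y\rc$, its pushforward under $(x',y')\mapsto(\ell_X(x'),\ell_Y(y'))$ is a coupling between $(\ell_X)_\# m_{x''}^X=\WLh{1}{(\mX,\ell_X)}(x'')$ and $\WLh{1}{(\mY,\ell_Y)}(y'')$ of transport cost exactly $\Phi_1(x'',y'')$. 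For the step $j\to j+1$: recall $\WLh{j+1}{(\mX,\ell_X)}(x'')=\lc\WLh{j}{(\mX,\ell_X)}\rc_\# m_{x''}^X$; the pushforward of $\nu^{\scriptscriptstyle{(1)}}_{x'',y''}$ under $(x',y')\mapsto\lc\WLh{j}{(\mX,\ell_X)}(x'),\WLh{j}{(\mY,\ell_Y)}(y')\rc$ is a coupling in $\probm^{\circ(j+1)}(\R)$ between $\WLh{j+1}{(\mX,\ell_X)}(x'')$ and $\WLh{j+1}{(\mY,\ell_Y)}(y'')$ (the marginals are right because $\nu^{\scriptscriptstyle{(1)}}_{x'',y''}$ has marginals $m_{x''}^X,m_{y''}^Y$), and its transport cost equals $\int_{X\times Y}\dW\!\lc\WLh{j}{(\mX,\ell_X)}(x'),\WLh{j}{(\mY,\ell_Y)}(y')\rc\nu^{\scriptscriptstyle{(1)}}_{x'',y''}(dx'\times dy')$, which by the inductive hypothesis is at most $\int_{X\times Y}\Phi_j(x',y')\,\nu^{\scriptscriptstyle{(1)}}_{x'',y''}(dx'\times dy')=\Phi_{j+1}(x'',y'')$; the last equality is Fubini together with $\nu^{\scriptscriptstyle{(j+1)}}_{x'',y''}(\cdot)=\int_{X\times Y}\nu^{\scriptscriptstyle{(j)}}_{x',y'}(\cdot)\,\nu^{\scriptscriptstyle{(1)}}_{x'',y''}(dx'\times dy')$.

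To conclude, apply the claim at $j=k$ and integrate against $\gamma$: the inner double integral of $\mathrm{dis}^{\scriptscriptstyle{(k)}}$ equals $\int_{X\times Y}\Phi_k(x'',y'')\,\gamma(dx''\times dy'')$, which is at least $\int_{X\times Y}\dW\!\lc\WLh{k}{(\mX,\ell_X)}(x''),\WLh{k}{(\mY,\ell_Y)}(y'')\rc\gamma(dx''\times dy'')$. Since $\gamma\in\cpl(\mu_X,\mu_Y)$, its pushforward under $\lc\WLh{k}{(\mX,\ell_X)},\WLh{k}{(\mY,\ell_Y)}\rc$ is a coupling between $\mathfrak{L}_k\!\lc(\mX,\ell_X)\rc$ and $\mathfrak{L}_k\!\lc(\mY,\ell_Y)\rc$, so this last integral is at least $\dW\!\lc\mathfrak{L}_k\!\lc(\mX,\ell_X)\rc,\mathfrak{L}_k\!\lc(\mY,\ell_Y)\rc\rc=\dWLk\!\lc(\mX,d_X(x,\bullet)),(\mY,d_Y(y,\bullet))\rc$. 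Substituting back and integrating over the outermost $\gamma(dx\times dy)$ yields $\mathrm{dis}^{\scriptscriptstyle{(k)}}\!\lc\gamma,\nu^{\scriptscriptstyle{(k)}}_{\bullet,\bullet}\rc\ge\int_{X\times Y}\dWLk\!\big((\mX,d_X(x,\bullet)),(\mY,d_Y(y,\bullet))\big)\,\gamma(dx\times dy)$, which is $\ge$ the infimum of the same integral over $\gamma'\in\cpl(\mu_X,\mu_Y)$; taking the infimum over $\gamma$ and $\nu^{\scriptscriptstyle{(k)}}_{\bullet,\bullet}$ on the left gives the assertion.

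The proof uses only the elementary inequality $\dW(\varphi_\#\alpha,\psi_\#\beta)\le\int d(\varphi(\cdot),\psi(\cdot))\,\pi$, valid for every coupling $\pi$ of $\alpha$ and $\beta$, so that no optimal coupling ever has to be produced. The only delicate point is bookkeeping: one must use the precise recursive shape of a $k$-step coupling from \Cref{sec: k fold dW} so that the identity $\Phi_{j+1}(x'',y'')=\int_{X\times Y}\Phi_j(x',y')\,\nu^{\scriptscriptstyle{(1)}}_{x'',y''}(dx'\times dy')$ holds; measurability and integrability are vacuous since $X$ and $Y$ are finite. I expect the induction of the second paragraph, where the multiscale structure of $\nu^{\scriptscriptstyle{(k)}}_{\bullet,\bullet}$ must be matched level by level against the WL measure hierarchy of $(\mX,\ell_X)$ and $(\mY,\ell_Y)$, to be the part that most needs care.
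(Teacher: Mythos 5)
Your proof is correct and takes essentially the same route as the paper's: both match the recursive structure of the $k$-step coupling against the WL measure hierarchy level by level, using only the trivial direction $\dW(\varphi_\#\alpha,\psi_\#\beta)\le\int d_Z(\varphi(\cdot),\psi(\cdot))\,d\pi$ — the paper writes this peeling as a chain of displayed inequalities after fully unrolling $\nu^{\scriptscriptstyle{(k)}}_{\bullet,\bullet}$ into $k$ one-step couplings, whereas you package it as an induction on $j$ via $\Phi_j$. (One cosmetic point: the definition of a $k$-step coupling permits a \emph{different} one-step family at each level of the recursion rather than the single $\nu^{\scriptscriptstyle{(1)}}_{\bullet,\bullet}$ your notation suggests, but your induction only ever invokes the top-level decomposition of the current $\nu^{\scriptscriptstyle{(j+1)}}_{\bullet,\bullet}$, so nothing is affected.)
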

\begin{remark}\label{rmk:TLB}
When $(\mX,d_X)$ and $(\mY,d_Y)$ are induced from MMSs $\mathbf{X}$ and $\mathbf{Y}$, as shown in \Cref{prop:Haibin}, the left-hand side of the above inequality coincides with the decoupled GW distance. We point out that the right-hand side is also independent of $k$ and actually coincides with the \emph{third lower bound} (TLB) for the GW distance as defined in \cite{memoli2011gromov}. See \Cref{sec: rmk tlb} for more details. Hence, the proposition above can be viewed as a generalization of the TLB to the setting of MCMS.
\end{remark}
In general, a MCMS $(\mX,d_X)$ endowed with any label function $\ell_X:X\rightarrow Z$ induces a LMMC $(X,m_\bullet^X,\mu_X,\ell_X)$ which we denote by $(\mX,\ell_X)$.
If we assign label functions to all MCMSs in a suitable coherent way, we obtain that the WL distance between the induced LMMCs is stable w.r.t. the GW distance between corresponding MCMSs.

\begin{definition}[Label invariant of MCMCs]
Given any metric space of labels $Z$, a $Z$-valued \textbf{label invariant} of MCMSs is a map $\ell_\bullet:\mcms\rightarrow Z^\bullet$ which by definition sends each MCMS $(\mX,d_X)$ into a label function $\ell_X:X\rightarrow Z.$ One such label invariant $\ell_\bullet$ will be said to be \emph{stable} if for all $(\mX,d_X),(\mY,d_Y)\in\mcms,\,k\in\mathbb{N}$, and for any $\gamma\in \cpl(\mu_X,\mu_Y),\nu^{\scriptscriptstyle{(k)}}_{\bullet,\bullet}\in\cpl^{\scriptscriptstyle{(k)}}\!\lc m_\bullet^X,m_\bullet^Y\rc$ we have
\[ \int\limits_{X\times Y}\int\limits_{X\times Y}d_Z(\ell_X(x'),\ell_Y(y'))\,\nu_{x,y}^{\scriptscriptstyle{(k)}}(dx'\times dy')\gamma(dx\times dy)
\leq \mathrm{dis}^{\scriptscriptstyle{(k)}}\!\lc\gamma,\nu^{\scriptscriptstyle{(k)}}_{\bullet,\bullet}\rc.\]
By $\invs(Z)$ we will denote the collection of all stable $Z$-valued label invariants. 
\end{definition}

\begin{example}\label{ex:eccentricity}
One immediate example of the stable label invariant is the \emph{eccentricity function} $\mathrm{ecc}_\bullet$ (see \Cref{sec:proof ecc} for a proof): for any MCMS $(\mX,d_X)$,  $\mathrm{ecc}_X(x)\coloneqq\int_Xd_X(x,x')\,\mu_X(dx')$ for  $x\in X$.
\end{example}
Then, if one assigns stable labels to MCMSs, one has that the WL distance between the induced LMMCs is stable w.r.t. the GW distance.
\begin{proposition}\label{prop:stable WL}
For every stable label invariant $\ell_\bullet \in \invs(Z)$, $k\in\N$ and $(\mX,d_X),(\mY,d_Y)\in \mcms$ we have that
$\dWLk\!\lc(\mX,\ell_X),(\mY,\ell_Y)\rc \leq \dGW^{\scriptscriptstyle{(k)}}\!\lc(\mX,d_X),(\mY,d_Y)\rc $.
\end{proposition}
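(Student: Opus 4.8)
The plan is to prove the inequality $\dWLk((\mX,\ell_X),(\mY,\ell_Y)) \le \dGW^{\scriptscriptstyle{(k)}}((\mX,d_X),(\mY,d_Y))$ by exhibiting, for every feasible pair $(\gamma,\nu^{\scriptscriptstyle{(k)}}_{\bullet,\bullet})$ in the GW minimization problem, a coupling between $\mathfrak{L}_k((\mX,\ell_X))$ and $\mathfrak{L}_k((\mY,\ell_Y))$ whose expected ground distance (in $\prob^{\circ k}(Z)$) is bounded above by $\mathrm{dis}^{\scriptscriptstyle{(k)}}(\gamma,\nu^{\scriptscriptstyle{(k)}}_{\bullet,\bullet})$. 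Taking the infimum on the right over all such pairs then yields the claim. The natural candidate coupling is built from $\gamma$ together with the $k$-step coupling $\nu^{\scriptscriptstyle{(k)}}_{\bullet,\bullet}$: intuitively, $\gamma$ couples the base measures $\mu_X$ and $\mu_Y$, and at each point $(x,y)$ in the support, the iterated pushforward structure of $\WLh{k}{(\mX,\ell_X)}$ means we should transport $\WLh{k}{(\mX,\ell_X)}(x)$ to $\WLh{k}{(\mY,\ell_Y)}(y)$ using a coupling assembled inductively from the single-step couplings that $\nu^{\scriptscriptstyle{(k)}}_{\bullet,\bullet}$ is built from.

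The key steps, in order, would be: (1) set up the induction on $k$, proving simultaneously a pointwise statement — for each $(x,y)$, a bound on $\dW(\WLh{k}{(\mX,\ell_X)}(x),\WLh{k}{(\mY,\ell_Y)}(y))$ in terms of the iterated integral of $|d_X - d_Y| = |d_Z(\ell_X(\cdot),\ell_X(\cdot)) - d_Z(\ell_Y(\cdot),\ell_Y(\cdot))|$ against the relevant sub-couplings — and then integrate against $\gamma$ at the end. Actually, since $d_X(x,x') = d_Z(\ell_X(x),\ell_X(x'))$ and the distortion term involves $|d_X(x,x')-d_Y(y,y')|$, the stability hypothesis on $\ell_\bullet$ is exactly what lets us compare the label distances: for a stable $\ell_\bullet$, the inner double integral $\int\int d_Z(\ell_X(x'),\ell_Y(y'))\nu^{\scriptscriptstyle{(k)}}_{x,y}(dx'\times dy')\gamma(dx\times dy)$ is dominated by $\mathrm{dis}^{\scriptscriptstyle{(k)}}(\gamma,\nu^{\scriptscriptstyle{(k)}})$. (2) For the base case, check that $\dWL^{\scriptscriptstyle{(0)}}$ (or $k=1$) comparison reduces to the definition of the stable label invariant. (3) For the inductive step, use that $\WLh{k}{(\mX,\ell_X)} = (\WLh{k-1}{(\mX,\ell_X)})_\# m_\bullet^X$, so a coupling of $\WLh{k}{(\mX,\ell_X)}(x)$ and $\WLh{k}{(\mY,\ell_Y)}(y)$ can be obtained by pushing forward the single-step coupling (the $k=1$ slice of $\nu$, i.e. the coupling of $m_x^X$ and $m_y^Y$) under the pair map $(\WLh{k-1}{(\mX,\ell_X)},\WLh{k-1}{(\mY,\ell_Y)})$, invoking the inductive hypothesis to control the Wasserstein cost of that pushforward in $\prob^{\circ(k-1)}(Z)$. (4) Finally, the $\nu^{\scriptscriptstyle{(k)}}_{\bullet,\bullet}$ — being a $k$-step coupling of the $k$-step kernels — records precisely the right multiscale reweighting; unwinding its definition (from \Cref{sec: k fold dW}) and matching it against the triple integral in $\mathrm{dis}^{\scriptscriptstyle{(k)}}$ should close the argument. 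I would make careful use of the gluing lemma for couplings and the fact that $\dW$ of pushforwards is bounded by the transport cost under the pushforward map.

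The main obstacle I anticipate is the bookkeeping in the inductive step: reconciling the fact that $\nu^{\scriptscriptstyle{(k)}}_{\bullet,\bullet}$ is defined as a coupling of the $k$-fold *compositions* of kernels with the fact that $\WLh{k}{}$ is a $k$-fold *stacking* of pushforwards into the hierarchy $\prob^{\circ k}(Z)$ — these are two different ways of iterating, and the precise definition of $\cpl^{\scriptscriptstyle{(k)}}$ in \Cref{sec: k fold dW} (which the excerpt defers) is what makes them match. One must verify that the "$k$-step coupling" is built by composing single-step couplings in a way that is coherent with the hierarchy, so that the distortion's triple integral — outer $\gamma$, middle $\gamma$, inner $\nu^{\scriptscriptstyle{(k)}}$ — telescopes correctly against the recursive Wasserstein bound. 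A secondary subtlety is ensuring all the intermediate measures have finite first moment (guaranteed since $X,Y$ are finite) so that $\dW$ is well-defined at each level of the hierarchy, and that the pushforward-under-pair-map construction indeed produces a valid coupling at each stage, for which the finiteness of $X$ and $Y$ and measurability of the label and kernel maps suffice.
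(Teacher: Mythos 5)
Your proposal is correct and follows essentially the same route as the paper: the two ingredients are (a) the bound $\dWLk\!\lc(\mX,\ell_X),(\mY,\ell_Y)\rc \leq \int\int d_Z(\ell_X(x'),\ell_Y(y'))\,\nu^{\scriptscriptstyle{(k)}}_{x,y}(dx'\times dy')\,\gamma(dx\times dy)$, which the paper obtains by citing its characterization of $\dWLk$ via $k$-step couplings (\Cref{thm:dwl= dwk}) rather than re-running the induction inline as you sketch, and (b) the stability inequality, which dominates that double integral by $\mathrm{dis}^{\scriptscriptstyle{(k)}}\!\lc\gamma,\nu^{\scriptscriptstyle{(k)}}_{\bullet,\bullet}\rc$; taking infima finishes the argument. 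The only slip is your passing identification $|d_X-d_Y|=|d_Z(\ell_X,\ell_X)-d_Z(\ell_Y,\ell_Y)|$, which does not hold for a general stable label invariant on a MCMS, but your subsequent correction (using stability, not an identity, to pass from label distances to the distortion) is the right fix.
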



\section{Experimental Results}\label{sec:experiment}
We provide some 
results showing the effectiveness of our WL distance in terms of comparing graphs. 
We conduct both 1-NN and SVM graph classification experiments and evaluate the performance of both our lower bound, $d_{\mathrm{WLLB}}^{\scriptscriptstyle{(k)}}$, and our WL distance, $\dWLk$, against the WWL kernel/distance \cite{togninalli2019wasserstein}, the WL kernel and the WL optimal assignment (WL-OA) \cite{kriege2016valid} kernel. 
We note that the WWL kernel of \cite{togninalli2019wasserstein} is a state-of-the-art graph kernel. 
We use the degree label for both $\dWLk$ and $d_{\mathrm{WLLB}}^{\scriptscriptstyle{(k)}}$. More details on the experimental setup and extra experiments can be found in \Cref{appendix:experiments}.

\paragraph{1-NN classification.}
In this case, both $d_{\mathrm{WLLB}}^{\scriptscriptstyle{(k)}}$ and $\dWLk$ slightly outperform the WWL distance on all datasets we tested; see \Cref{tab:nn experiments}. Overall, the classification accuracies of $d_{\mathrm{WLLB}}^{\scriptscriptstyle{(k)}}$ and $\dWLk$ were close to those of the WWL distance. These results illustrate the close relationship between $\dWLk$ and the WWL distance that was outlined in  \Cref{sec:NN}. 

\paragraph{SVM classification.}
See \Cref{tab:svm experiments}. First, we observe that our lower bound kernel slightly outperforms the WWL kernel for MUTAG, IMDB-B, and IMDB-M. For the other datasets, $d_{\mathrm{WLLB}}^{\scriptscriptstyle{(k)}}$ had comparable classification accuracy with the other methods, coming within one to two percent of WWL and WWL-OA. The $\dWLk$ kernel had similar classification accuracy to $d_{\mathrm{WLLB}}^{\scriptscriptstyle{(k)}}$ but only outperformed the $d_{\mathrm{WLLB}}^{\scriptscriptstyle{(k)}}$ kernel on PTC-FM, PROTEINS, and IMDB-B.

Note that our lower bound distance $d_{\mathrm{WLLB}}$ performs similarly to our WL distance $\dWL$, but is more efficient to compute. See \Cref{subsec:time comparison} for the runtime comparison.

\section{Conclusion and future directions}
In this paper, we proposed the WL distance -- a quantitative extension of the WL test -- for measuring the dissimilarity between objects in a fairly general family called LMMCs. The WL distance  possesses interesting connections with graph kernels, GNNs and the GW distance.
In order to more directly compare the WL test with the WL distance without resorting to relabeling, one future direction is to redefine the WL distance via positive measures and  ``unbalanced" (Gromov-)Wasserstein distances \cite{liero2018optimal,sejourne2020unbalanced,de2020entropy}.
Whereas our  paper focuses on the application of our WL distance to the graph setting, LMMCs can be used to model not just graphs but also far more general objects such as Riemannian manifolds (equipped with heat kernels) or graphons. Then, our neural network architecture (MCNN) has the potential to be applied to point sets sampled from manifolds too, as well as serving as the limiting object when studying the convergence of GNNs.
It is also interesting to extend our WL distance to a higher order version that is analogous to the high order $k$-WL test and $k$-GNNs. We conjecture that a suitable notion of $k$-WL distance will converge to the Gromov-Wasserstein distance for MCMSs as $k$ tends to infinity. 

\paragraph{Acknowledgements.} This work is partially supported by  NSF-DMS-1723003, NSF-CCF-1740761, NSF-RI-1901360, NSF-CCF-1839356, NSF-IIS-2050360, NSF-2112665 and BSF-2020124.

\bibliography{dwl}

\newpage
\appendix
\onecolumn

\section{Extra details}
\subsection{Useful facts about couplings}
Here we collect some useful facts about couplings  which will be used in subsequent proofs.

\begin{lemma}\label{lm:push forward of coupling dW}
Let $X,Y$ be finite metric spaces and let $Z$ be a complete and separable metric space. Let $\varphi_X:X\rightarrow Z$ and $\varphi_Y:Y\rightarrow Z$ be measurable maps. Consider any $\mu_X\in\prob(X)$ and $\mu_Y\in\prob(Y)$. Then, we have that
\[\dW((\varphi_X)_\#\mu_X,(\varphi_Y)_\#\mu_Y)=\inf_{\gamma\in\cpl\lc \mu_X,\mu_Y\rc }\int\limits_{X\times Y}d_Z(\varphi_X(x),\varphi_Y(y))\gamma(dx\times dy).\]
\end{lemma}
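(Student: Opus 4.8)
The plan is to prove both inequalities between the two quantities. Write $A \coloneqq \dW\big((\varphi_X)_\#\mu_X,(\varphi_Y)_\#\mu_Y\big)$ for the left-hand side (Wasserstein distance taken in $\prob(Z)$) and $B \coloneqq \inf_{\gamma\in\cpl(\mu_X,\mu_Y)}\int_{X\times Y}d_Z(\varphi_X(x),\varphi_Y(y))\,\gamma(dx\times dy)$ for the right-hand side.

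First I would establish $A \leq B$. Given any coupling $\gamma\in\cpl(\mu_X,\mu_Y)$, consider the product map $\varphi_X\times\varphi_Y : X\times Y \to Z\times Z$ and push $\gamma$ forward to obtain $\eta \coloneqq (\varphi_X\times\varphi_Y)_\#\gamma \in \prob(Z\times Z)$. A direct check of the marginals shows that $\eta \in \cpl\big((\varphi_X)_\#\mu_X, (\varphi_Y)_\#\mu_Y\big)$: indeed $(p_Z)_\#\eta = (p_Z)_\#(\varphi_X\times\varphi_Y)_\#\gamma = (\varphi_X)_\#(p_X)_\#\gamma = (\varphi_X)_\#\mu_X$, and similarly for the second marginal. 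Then by the change-of-variables formula for pushforwards,
\[
\int_{Z\times Z} d_Z(z,z')\,\eta(dz\times dz') = \int_{X\times Y} d_Z(\varphi_X(x),\varphi_Y(y))\,\gamma(dx\times dy).
\]
Since $\eta$ is a valid coupling, the left-hand side is $\geq A$; taking the infimum over $\gamma$ gives $A \leq B$.

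Next I would establish $B \leq A$. This is the direction requiring a little more care: starting from an optimal (or near-optimal) coupling $\eta \in \cpl\big((\varphi_X)_\#\mu_X,(\varphi_Y)_\#\mu_Y\big)$ on $Z\times Z$, I need to ``lift'' it back to a coupling $\gamma$ of $\mu_X$ and $\mu_Y$ on $X\times Y$ without increasing the transport cost. Since $X$ and $Y$ are finite, this can be done concretely by a gluing/disintegration argument: for each $(x,y)\in X\times Y$ with $\mu_X(x)>0$ and $\mu_Y(y)>0$, the pushforward $(\varphi_X)_\#\mu_X$ assigns mass to the point $\varphi_X(x)$ coming precisely from the fiber $\varphi_X^{-1}(\varphi_X(x))$, and similarly for $Y$; one distributes the $\eta$-mass at $(\varphi_X(x),\varphi_Y(y))$ among the preimage pairs proportionally (e.g. set $\gamma(x,y) \coloneqq \eta(\varphi_X(x),\varphi_Y(y))\cdot \frac{\mu_X(x)}{(\varphi_X)_\#\mu_X(\varphi_X(x))}\cdot\frac{\mu_Y(y)}{(\varphi_Y)_\#\mu_Y(\varphi_Y(y))}$). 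One then verifies that $\gamma\in\cpl(\mu_X,\mu_Y)$ and that, because $d_Z(\varphi_X(x),\varphi_Y(y))$ depends on $(x,y)$ only through $(\varphi_X(x),\varphi_Y(y))$, the cost of $\gamma$ equals the cost of $\eta$. Hence $B \leq A$. (Alternatively, in the general measurable setting one appeals to the Gluing Lemma / existence of disintegrations for couplings, as in \cite{villani2009optimal}, but the finite case suffices here and is the intended application.)

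The main obstacle is the $B\leq A$ direction, specifically making the lifting construction clean: one must be careful about points of zero mass (just ignore them) and verify the marginal conditions by summing over fibers, using that $(\varphi_X)_\#\mu_X(\varphi_X(x)) = \sum_{\tilde x : \varphi_X(\tilde x)=\varphi_X(x)}\mu_X(\tilde x)$. Everything else is a routine change-of-variables computation. Since the ambient spaces $Z$ (complete, separable) and the infimum in $\dW$ being attained (by \cite[Proposition 2.1]{villani2009optimal}) are already available, no additional topological subtleties arise.
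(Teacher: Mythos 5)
Your proof is correct and follows essentially the same route as the paper's: both reduce the claim to the fact that $(\varphi_X\times\varphi_Y)_\#$ maps $\cpl(\mu_X,\mu_Y)$ onto $\cpl\lc(\varphi_X)_\#\mu_X,(\varphi_Y)_\#\mu_Y\rc$, with the forward inclusion giving $A\leq B$ and the lifting direction giving $B\leq A$. The only difference is that the paper delegates the surjectivity (lifting) step to a cited result of Schmitzer--Schn\"orr, whereas you construct the lift explicitly by the proportional-splitting formula on fibers; your construction is valid (the marginal and cost verifications go through exactly as you sketch, since $\eta$ is supported on the finite set $\varphi_X(X)\times\varphi_Y(Y)$) and makes the argument self-contained.
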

\begin{proof}
The proof is based on the following result.
\begin{lemma}\label{lm:push forward of coupling}
Let $X,Y$ be finite metric spaces and let $Z$ be a metric space of labels. Let $\varphi_X:X\rightarrow Z$ and $\varphi_Y:Y\rightarrow Z$ be measurable maps. Consider any $\mu_X\in\prob(X)$ and $\mu_Y\in\prob(Y)$. If we let $\varphi\coloneqq\varphi_X\times \varphi_Y$, then we have that
\[\varphi_\#\mathcal{C}(\mu_X,\mu_Y)=\cpl\lc (\varphi_X)_\#\mu_X,(\varphi_Y)_\#\mu_Y\rc \]
\end{lemma}
\begin{proof}[Proof of \Cref{lm:push forward of coupling}]
Since $X$ and $Y$ are finite, $\varphi_X(X)$ and $\varphi_Y(Y)$ are discrete sets. Then, the lemma follows directly from Proposition 4.5 in \cite{schmitzer2013modelling}.
\end{proof}
Hence, 
\begin{align*}
    \dW((\varphi_X)_\#\mu_X,(\varphi_Y)_\#\mu_Y)&=\inf_{\gamma\in\cpl\lc (\varphi_X)_\#\mu_X,(\varphi_Y)_\#\mu_Y\rc }\int\limits_{X\times Y}d_Z(z_1,z_2)\gamma(dz_1\times dz_2)\\
    &=\inf_{\gamma\in\cpl\lc \mu_X,\mu_Y\rc }\int\limits_{X\times Y}d_Z(z_1,z_2)\,\varphi_\#\gamma(dz_1\times dz_2)\\
    &=\inf_{\gamma\in\cpl\lc \mu_X,\mu_Y\rc }\int\limits_{X\times Y}d_Z(\varphi_X(x),\varphi_Y(y))\gamma(dx\times dy).
\end{align*}
\end{proof}

The following lemma is a direct consequence of  \cite[Corollary 5.22]{villani2009optimal}
\begin{lemma}\label{lm:mble optimal coupling}
For any complete and separable metric space $Z$, there exists a measurable map $\varphi:\probm(Z)\times \probm(Z)\rightarrow\prob(Z\times Z)$ so that for every $\alpha,\beta\in\probm(Z)$, $\varphi(\alpha,\beta)$ is an optimal coupling between $\alpha$ and $\beta$.
\end{lemma}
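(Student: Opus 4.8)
The plan is to obtain the map $\varphi$ as a measurable selection of optimal couplings, which is precisely the content of the measurable-selection corollary \cite[Corollary 5.22]{villani2009optimal}; the only real task is to arrange the hypotheses. For the $\ell^1$-Wasserstein problem at hand, the relevant transport cost is $c(z,z')\coloneqq d_Z(z,z')$, which is continuous and nonnegative on $Z\times Z$. Since every element of $\probm(Z)$ has finite first moment (our standing convention), for any $\alpha,\beta\in\probm(Z)$ and any $\gamma\in\cpl(\alpha,\beta)$ the quantity $\int d_Z\,d\gamma$ is finite, so by \cite[Proposition 2.1]{villani2009optimal} the set $\mathrm{Opt}(\alpha,\beta)\subseteq\prob(Z\times Z)$ of optimal couplings between $\alpha$ and $\beta$ is nonempty. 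This defines a set-valued map $(\alpha,\beta)\mapsto\mathrm{Opt}(\alpha,\beta)$, and the goal is to produce a measurable selection of it.

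I would then apply \cite[Corollary 5.22]{villani2009optimal} with the parameter space taken to be $\probm(Z)\times\probm(Z)$ itself, equipped with its Borel $\sigma$-algebra, and with the two measurably (in fact continuously) varying families of marginals given by the coordinate projections onto the first and second factors. Because $Z$ is complete and separable, $\probm(Z)$ endowed with $\dW$ is a Polish space (\cite[Theorem 6.18]{villani2009optimal}), hence so is the product. The corollary then provides, on the one hand, that $(\alpha,\beta)\mapsto\mathrm{Opt}(\alpha,\beta)$ is a measurable closed-valued multifunction — this ultimately rests on the stability of optimal plans under weak convergence of the marginals — and, on the other hand, a measurable selection $\varphi:\probm(Z)\times\probm(Z)\rightarrow\prob(Z\times Z)$ via the Kuratowski--Ryll-Nardzewski theorem, so that $\varphi(\alpha,\beta)\in\mathrm{Opt}(\alpha,\beta)$ for every pair. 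This $\varphi$ is exactly the asserted map.

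The one bookkeeping subtlety — and the place I would be most careful — is the compatibility of measurable structures: our $\probm(Z)$ carries the topology of $\dW$, whereas classical selection results are usually stated for the weak topology on probability measures. These are reconciled by noting that $\dW$-convergence implies weak convergence, so the identity map is a continuous injection between the Polish space $(\probm(Z),\dW)$ and $\probm(Z)$ with the weak topology, hence (by Lusin--Souslin) a Borel isomorphism; therefore the two Borel $\sigma$-algebras on $\probm(Z)$ coincide, and Corollary 5.22 applies verbatim. Apart from this, there is no genuine obstacle: the mathematical substance is entirely contained in \cite[Corollary 5.22]{villani2009optimal}, and all that remains is to instantiate the abstract parameter-dependent statement by letting the parameter be the pair of marginals itself.
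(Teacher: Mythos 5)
Your proposal is correct and follows exactly the route the paper intends: the paper offers no proof beyond declaring the lemma ``a direct consequence of \cite[Corollary 5.22]{villani2009optimal}'', and you instantiate that corollary precisely as required, taking the parameter space to be $\probm(Z)\times\probm(Z)$ with the identity as the measurable family of marginals. Your extra care about the existence of optimal plans and the agreement of the Borel $\sigma$-algebras for the $\dW$ and weak topologies is sound and only makes the citation-level argument more complete.
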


\subsection{A characterization of the WL distance}\label{sec: k fold dW}

Recall from \Cref{def:WL metric} that the WL distance of depth $k$ is the Wasserstein distance between the local distributions of labels generated at the $k$th step of the WL hierarchy. In this section, we prove that in fact $\dWLk$ can be characterized through a {novel variant} of the Wasserstein distance between the distributions of initial labels (i.e., $(\ell_X)_\#\mu_X$ and $(\ell_Y)_\#\mu_Y$).

\subsubsection{$k$-step couplings}

We introduce a convenient notation which will be used in the sequel.

\begin{definition}
Suppose a measurable space $Z$, a probability measure $\gamma\in\prob(Z)$, and a measurable map $\nu_{\bullet}:Z\longrightarrow\prob(Z)$ are given. Then, we define the average of $\nu_\bullet$ under $\gamma$, denoted by $\nu_{\bullet}\odot\gamma$, which is still a probability measure on $Z$:
$$\mbox{For any measurable }A\subseteq Z,\,\,\nu_{\bullet}\odot\gamma(A)\coloneqq\int_{Z}\nu_{z}(A)\,\gamma(dz).$$
\end{definition}

The operation $\odot$ will be useful for constructing a special type of couplings between Markov kernels.

Given two MMCs $\mX$ and $\mY$, we introduce the following notion of \textbf{$k$-step coupling between Markov kernels $m_\bullet^{X}$ and $m_\bullet^{Y}$}:

\begin{itemize}
\item{$k=1$:} A 1-step coupling between $m_\bullet^{X}$ and $m_\bullet^{Y}$ is any measurable map
\[\nu_{\bullet,\bullet}^{\scriptscriptstyle{(1)}}:X\times Y\rightarrow\prob(X\times Y)\]
such that $\nu_{x,y}^{\scriptscriptstyle{(1)}}\in\cpl(m_x^X,m_y^Y)$ for any $x\in X$ and $y\in Y$.
\item{$k\geq 2$:} We say a map
\[\nu_{\bullet,\bullet}^{\scriptscriptstyle{(k)}}:X\times Y\rightarrow\prob(X\times Y)\]
is a \textbf{$k$-step coupling} between the Markov kernels $m_\bullet^{X}$ and $m_\bullet^{Y}$ if there exist a $(k-1)$-step coupling $\nu^{\scriptscriptstyle{(k-1)}}_{\bullet,\bullet}$ and a 1-step coupling $\nu^{\scriptscriptstyle{(1)}}_{\bullet,\bullet}$ such that
\[\nu_{x,y}^{\scriptscriptstyle{(k)}}=\nu_{\bullet,\bullet}^{\scriptscriptstyle{(k-1)}}\odot\nu_{x,y}^{\scriptscriptstyle{(1)}},\,\,\forall x\in X,\,\,y\in Y.\]
\end{itemize}

\begin{lemma}\label{lm:k-fold coupling well defined}
For any $k$-step coupling $\nu^{\scriptscriptstyle{(k)}}_{\bullet,\bullet}$, one has that  $\nu^{\scriptscriptstyle{(k)}}_{x,y}\in\cpl\!\left(m_x^{X,\otimes k},m_y^{Y,\otimes k}\right)$ for every $x\in X$ and $y\in Y$.
\end{lemma}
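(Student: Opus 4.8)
The statement to prove is \Cref{lm:k-fold coupling well defined}: for any $k$-step coupling $\nu^{\scriptscriptstyle{(k)}}_{\bullet,\bullet}$, we have $\nu^{\scriptscriptstyle{(k)}}_{x,y}\in\cpl\!\left(m_x^{X,\otimes k},m_y^{Y,\otimes k}\right)$ for every $x\in X$ and $y\in Y$. The plan is to proceed by induction on $k$, matching the recursive structure in the definitions of both $k$-step couplings and $k$-step Markov kernels.

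First I would handle the base case $k=1$, which is immediate: by definition a $1$-step coupling satisfies $\nu^{\scriptscriptstyle{(1)}}_{x,y}\in\cpl(m_x^X,m_y^Y)=\cpl(m_x^{X,\otimes 1},m_y^{Y,\otimes 1})$. For the inductive step, assume the claim holds for $k-1$. Given a $k$-step coupling, unwind its definition: there is a $(k-1)$-step coupling $\nu^{\scriptscriptstyle{(k-1)}}_{\bullet,\bullet}$ and a $1$-step coupling $\nu^{\scriptscriptstyle{(1)}}_{\bullet,\bullet}$ with $\nu^{\scriptscriptstyle{(k)}}_{x,y}=\nu^{\scriptscriptstyle{(k-1)}}_{\bullet,\bullet}\odot\nu^{\scriptscriptstyle{(1)}}_{x,y}$. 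I need to check that the two marginals of $\nu^{\scriptscriptstyle{(k)}}_{x,y}$ on $X$ and $Y$ are $m_x^{X,\otimes k}$ and $m_y^{Y,\otimes k}$ respectively. By symmetry it suffices to verify the $X$-marginal. For a measurable $A\subseteq X$, unfolding the definition of $\odot$ gives
\[
\nu^{\scriptscriptstyle{(k)}}_{x,y}(A\times Y)=\int_{X\times Y}\nu^{\scriptscriptstyle{(k-1)}}_{x',y'}(A\times Y)\,\nu^{\scriptscriptstyle{(1)}}_{x,y}(dx'\times dy').
\]
By the inductive hypothesis, $\nu^{\scriptscriptstyle{(k-1)}}_{x',y'}(A\times Y)=m_{x'}^{X,\otimes(k-1)}(A)$, which depends only on $x'$; and since $\nu^{\scriptscriptstyle{(1)}}_{x,y}\in\cpl(m_x^X,m_y^Y)$, integrating out $y'$ leaves $m_x^X$ as the $X$-marginal. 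Hence the right-hand side equals $\int_X m_{x'}^{X,\otimes(k-1)}(A)\,m_x^X(dx')$, which is exactly $m_x^{X,\otimes k}(A)$ by the inductive definition of the $k$-step Markov kernel. The $Y$-marginal computation is identical with the roles of $X$ and $Y$ swapped.

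I do not anticipate a serious obstacle here; the argument is essentially a bookkeeping exercise in Fubini-type manipulations, and finiteness of $X$ and $Y$ sidesteps any measure-theoretic subtleties. The one point requiring mild care is making sure the order of the two couplings in $\nu^{\scriptscriptstyle{(k-1)}}_{\bullet,\bullet}\odot\nu^{\scriptscriptstyle{(1)}}_{x,y}$ matches the order in the recursion $m_x^{X,\otimes k}(A)=\int_X m_{x'}^{X,\otimes(k-1)}(A)\,m_x^X(dx')$ — that is, the ``outer'' $1$-step coupling plays the role of $m_x^X$ and the ``inner'' $(k-1)$-step coupling plays the role of $m_{x'}^{X,\otimes(k-1)}$ — but this lines up directly with the stated definitions.
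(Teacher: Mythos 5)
Your proposal is correct and follows essentially the same argument as the paper: induction on $k$, with the inductive step unwinding the $\odot$ operation, applying the inductive hypothesis to reduce $\nu^{\scriptscriptstyle{(k-1)}}_{x',y'}(A\times Y)$ to $m_{x'}^{X,\otimes(k-1)}(A)$, and then using the marginal property of the $1$-step coupling to recover the recursion defining $m_x^{X,\otimes k}$. The only cosmetic difference is that the paper indexes the induction as $k\to k+1$ rather than $k-1\to k$.
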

\begin{proof}
We prove the statement by induction on $k$. When $k=1$, the statement is trivially true. Assume that the statement is true for some $k\geq 1$. Then, for any measurable $A\subseteq X$, we have that
\begin{align*}
    \nu_{x,y}^{\scriptscriptstyle{(k+1)}}(A\times Y)&=\int\limits_{X\times Y}\nu_{x',y'}^{\scriptscriptstyle{(k)}}(A\times Y)\,\nu_{x,y}^{\scriptscriptstyle{(1)}}(dx'\times dy')\\
    &=\int\limits_{X\times Y}m_{x'}^{X,\otimes k}(A)\,\nu_{x,y}^{\scriptscriptstyle{(1)}}(dx'\times dy')\\
    &=\int_{X}m_{x'}^{X,\otimes k}(A)\,m_{x}^X(dx')\\
    &=m_{x}^{X,\otimes (k+1)}(A).
\end{align*}
Similarly, for any measurable $B\subseteq Y$, we have that 
\[\nu_{x,y}^{\scriptscriptstyle{(k+1)}}(X\times B)=m_{y}^{Y,\otimes (k+1)}(B).\]
Hence $\nu_{x,y}^{\scriptscriptstyle{(k+1)}}\in\cpl\lc m_{x}^{X,\otimes (k+1)},m_{y}^{Y,\otimes (k+1)}\rc $ and thus we conclude the proof.
\end{proof}

Henceforth, we denote by $\cpl^{\scriptscriptstyle{(k)}}\!\lc m_\bullet^X,m_\bullet^Y\rc$ the collection of all $k$-step couplings between Markov kernels $m_\bullet^X$ and $m_\bullet^Y$.
\begin{definition}
Given any $k\in\N$, any coupling $\gamma\in\cpl(\mu_X,\mu_Y)$ and any $k$-step coupling $\nu_{\bullet,\bullet}^{\scriptscriptstyle{(k)}}\in\cpl^{\scriptscriptstyle{(k)}}\!\lc m_\bullet^{X},m_\bullet^Y\rc$, we define a probability measure on $X\times Y$ as follows:
\begin{equation}\label{eq:k-fold stationary}
\mu^{\scriptscriptstyle{(k)}}\coloneqq\nu_{\bullet,\bullet}^{\scriptscriptstyle{(k)}}\odot\gamma.
\end{equation}
We call $\mu^{\scriptscriptstyle{(k)}}$ defined as above a \textbf{$k$-step coupling between $\mu_X$ and $\mu_Y$}. We let $\cpl^{\scriptscriptstyle{(k)}}(\mu_X,\mu_Y)$ denote the collection of all $k$-step couplings between $\mu_X$ and $\mu_Y$ and we also let $\cpl^{\scriptscriptstyle{(0)}}(\mu_X,\mu_Y)\coloneqq\cpl(\mu_X,\mu_Y)$.
\end{definition}

As defined above,  $k$-step couplings are indeed couplings.

\begin{lemma}
Any $\mu^{\scriptscriptstyle{(k)}}\in\cpl^{\scriptscriptstyle{(k)}}(\mu_X,\mu_Y)$ is a coupling between $\mu_X$ and $\mu_Y$.
\end{lemma}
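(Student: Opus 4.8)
The plan is to show that $(p_X)_\#\mu^{\scriptscriptstyle{(k)}} = \mu_X$ and $(p_Y)_\#\mu^{\scriptscriptstyle{(k)}} = \mu_Y$, where $p_X, p_Y$ are the canonical projections from $X\times Y$. By symmetry it suffices to verify the first identity; the second follows by an identical argument with the roles of $X$ and $Y$ swapped. Fix any measurable $A\subseteq X$. Unwinding the definition $\mu^{\scriptscriptstyle{(k)}} = \nu_{\bullet,\bullet}^{\scriptscriptstyle{(k)}}\odot\gamma$, we have
\[
\mu^{\scriptscriptstyle{(k)}}(A\times Y) = \int_{X\times Y}\nu_{x,y}^{\scriptscriptstyle{(k)}}(A\times Y)\,\gamma(dx\times dy).
\]

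The key input is \Cref{lm:k-fold coupling well defined}, which tells us that for every $x\in X$ and $y\in Y$, the measure $\nu_{x,y}^{\scriptscriptstyle{(k)}}$ is a coupling between $m_x^{X,\otimes k}$ and $m_y^{Y,\otimes k}$; in particular $\nu_{x,y}^{\scriptscriptstyle{(k)}}(A\times Y) = m_x^{X,\otimes k}(A)$. Substituting this into the integral above gives
\[
\mu^{\scriptscriptstyle{(k)}}(A\times Y) = \int_{X\times Y} m_x^{X,\otimes k}(A)\,\gamma(dx\times dy) = \int_X m_x^{X,\otimes k}(A)\,\mu_X(dx),
\]
where the second equality uses that $\gamma\in\cpl(\mu_X,\mu_Y)$, so its $X$-marginal is $\mu_X$ and the integrand depends only on $x$.

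Finally, I would invoke the fact that $\mu_X$ is a stationary distribution for $m_\bullet^X$, together with a short induction on $k$ showing that stationarity is preserved by the $k$-step Markov kernel: if $\mu_X(A) = \int_X m_x^X(A)\,\mu_X(dx)$ for all measurable $A$, then $\int_X m_x^{X,\otimes k}(A)\,\mu_X(dx) = \mu_X(A)$ for all $k\in\N$ (the inductive step uses the defining recursion $m_x^{X,\otimes k}(A) = \int_X m_{x'}^{X,\otimes(k-1)}(A)\,m_x^X(dx')$ and Fubini). Hence $\mu^{\scriptscriptstyle{(k)}}(A\times Y) = \mu_X(A)$, so $(p_X)_\#\mu^{\scriptscriptstyle{(k)}} = \mu_X$, and symmetrically $(p_Y)_\#\mu^{\scriptscriptstyle{(k)}} = \mu_Y$, which is exactly the claim. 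There is no real obstacle here — the only mild subtlety is making sure the measurability of $(x,y)\mapsto\nu_{x,y}^{\scriptscriptstyle{(k)}}(A\times Y)$ is in place so the integrals are well-defined, but this is guaranteed by the measurability hypothesis built into the definition of $k$-step couplings (and ultimately by \Cref{lm:mble optimal coupling} when such couplings are constructed).
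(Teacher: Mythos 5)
Your proof is correct and follows essentially the same route as the paper's: compute $\mu^{\scriptscriptstyle{(k)}}(A\times Y)$ via the $\odot$ definition, apply \Cref{lm:k-fold coupling well defined} to replace $\nu_{x,y}^{\scriptscriptstyle{(k)}}(A\times Y)$ by $m_x^{X,\otimes k}(A)$, marginalize $\gamma$ to $\mu_X$, and conclude by stationarity. The only difference is that you explicitly spell out the induction showing that $\mu_X$ remains stationary for the $k$-step kernel, a step the paper leaves implicit; this is a welcome addition rather than a deviation.
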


\begin{proof}
For any measurable $A\subseteq X$, we have that
\begin{align*}
    \mu^{\scriptscriptstyle{(k)}}(A\times Y)&=\int\limits_{X\times Y}\nu_{x,y}^{\scriptscriptstyle{(k)}}(A\times Y)\,\gamma(dx\times dy)\\
    &=\int\limits_{X\times Y}m_{x}^{X,\otimes k}(A)\,\gamma(dx\times dy)\\
    &=\int_{X}m_{x}^{X,\otimes k}(A)\,\mu_X(dx)\\
    &=\mu_X(A).
\end{align*}
Similarly, for any measurable $B\subseteq Y$, we have that $\mu^{\scriptscriptstyle{(k)}}(X\times B)=\mu_Y(B)$. Therefore, $\mu^{\scriptscriptstyle{(k)}}\in\cpl(\mu_X,\mu_Y)$.
\end{proof}

\begin{lemma}\label{lm:hierarchy of k-fold couplings}
We have the following hierarchy of $k$-step couplings:
\[\cpl^{\scriptscriptstyle{(0)}}(\mu_X,\mu_Y)\supseteq\cpl^{\scriptscriptstyle{(1)}}(\mu_X,\mu_Y)\supseteq\cpl^{\scriptscriptstyle{(2)}}(\mu_X,\mu_Y)\supseteq\cdots\]
\end{lemma}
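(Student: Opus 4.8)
The plan is to reduce the claim to the single-step inclusions $\cpl^{\scriptscriptstyle{(k+1)}}(\mu_X,\mu_Y)\subseteq\cpl^{\scriptscriptstyle{(k)}}(\mu_X,\mu_Y)$ for every $k\geq 0$, which chain together to give the full descending hierarchy. The base case $k=0$ needs no new work: it is exactly the content of the lemma just proved, namely that every element of $\cpl^{\scriptscriptstyle{(1)}}(\mu_X,\mu_Y)$ is a coupling between $\mu_X$ and $\mu_Y$, hence lies in $\cpl^{\scriptscriptstyle{(0)}}(\mu_X,\mu_Y)=\cpl(\mu_X,\mu_Y)$.

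For the step with $k\geq 1$, I would take an arbitrary $\mu^{\scriptscriptstyle{(k+1)}}\in\cpl^{\scriptscriptstyle{(k+1)}}(\mu_X,\mu_Y)$ and unpack the definitions twice. First write $\mu^{\scriptscriptstyle{(k+1)}}=\nu_{\bullet,\bullet}^{\scriptscriptstyle{(k+1)}}\odot\gamma$ with $\gamma\in\cpl(\mu_X,\mu_Y)$ and $\nu_{\bullet,\bullet}^{\scriptscriptstyle{(k+1)}}\in\cpl^{\scriptscriptstyle{(k+1)}}\!\lc m_\bullet^X,m_\bullet^Y\rc$; then, by the inductive definition of a $(k+1)$-step coupling of Markov kernels, pick a $k$-step coupling $\nu_{\bullet,\bullet}^{\scriptscriptstyle{(k)}}\in\cpl^{\scriptscriptstyle{(k)}}\!\lc m_\bullet^X,m_\bullet^Y\rc$ and a $1$-step coupling $\nu_{\bullet,\bullet}^{\scriptscriptstyle{(1)}}$ with $\nu_{x,y}^{\scriptscriptstyle{(k+1)}}=\nu_{\bullet,\bullet}^{\scriptscriptstyle{(k)}}\odot\nu_{x,y}^{\scriptscriptstyle{(1)}}$ for all $x,y$. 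The natural witness that $\mu^{\scriptscriptstyle{(k+1)}}\in\cpl^{\scriptscriptstyle{(k)}}(\mu_X,\mu_Y)$ is the pair consisting of the very same $k$-step coupling $\nu_{\bullet,\bullet}^{\scriptscriptstyle{(k)}}$ together with the new base coupling $\gamma'\coloneqq\nu_{\bullet,\bullet}^{\scriptscriptstyle{(1)}}\odot\gamma$; note that $\gamma'\in\cpl^{\scriptscriptstyle{(1)}}(\mu_X,\mu_Y)$ by construction, hence $\gamma'\in\cpl(\mu_X,\mu_Y)$ by the preceding lemma. So the only thing left to check is the identity $\mu^{\scriptscriptstyle{(k+1)}}=\nu_{\bullet,\bullet}^{\scriptscriptstyle{(k)}}\odot\gamma'$.

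That identity is an ``associativity of $\odot$'' statement: for any measurable $A\subseteq X\times Y$ one computes
\[\mu^{\scriptscriptstyle{(k+1)}}(A)=\int_{X\times Y}\!\lc\int_{X\times Y}\!\nu_{x',y'}^{\scriptscriptstyle{(k)}}(A)\,\nu_{x,y}^{\scriptscriptstyle{(1)}}(dx'\times dy')\rc\gamma(dx\times dy)=\int_{X\times Y}\!\nu_{x',y'}^{\scriptscriptstyle{(k)}}(A)\,\gamma'(dx'\times dy')=\lc\nu_{\bullet,\bullet}^{\scriptscriptstyle{(k)}}\odot\gamma'\rc(A),\]
where the middle equality is the defining property of $\odot$ applied to the bounded measurable integrand $(x',y')\mapsto\nu_{x',y'}^{\scriptscriptstyle{(k)}}(A)$ — valid for indicator integrands by definition and for general bounded measurable ones by the standard monotone-class extension. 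Since $X$ and $Y$ are finite, this is just an interchange of two finite sums, so I do not expect any real obstacle; the one (very mild) point to be careful about is the bookkeeping of the order in which the one-step and $k$-step couplings are composed, which is pinned down by the Chapman--Kolmogorov-type recursion in the definition of $k$-step couplings, exactly as in the recursion defining the $k$-step Markov kernels $m_\bullet^{X,\otimes k}$. Chaining the inclusions over $k$ then yields the asserted hierarchy.
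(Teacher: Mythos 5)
Your proposal is correct and follows essentially the same route as the paper: both reduce to the single-step inclusion, handle the base case via the preceding lemma that $1$-step couplings are couplings, and establish the inductive step by rewriting $\nu^{\scriptscriptstyle{(k+1)}}_{\bullet,\bullet}\odot\gamma$ as $\nu^{\scriptscriptstyle{(k)}}_{\bullet,\bullet}\odot\gamma'$ with $\gamma'=\nu^{\scriptscriptstyle{(1)}}_{\bullet,\bullet}\odot\gamma\in\cpl(\mu_X,\mu_Y)$, exactly the interchange-of-integrals computation the paper performs. Your explicit framing of this as associativity of $\odot$ is a clean way to state what the paper's displayed manipulation does implicitly.
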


\begin{proof}
We prove the following inclusion by induction on $k=0,1,\ldots$ :
\begin{equation}\label{eq:inclusion}
  \cpl^{\scriptscriptstyle{(k)}}(\mu_X,\mu_Y)\supseteq\cpl^{\scriptscriptstyle{(k+1)}}(\mu_X,\mu_Y)  
\end{equation}
When $k=0$, we only need to check that any $\gamma^{\scriptscriptstyle{(1)}}\in\cpl^{\scriptscriptstyle{(1)}}(\mu_X,\mu_Y)$ is a coupling between $\mu_X$ and $\mu_Y$. Assume that
\[\gamma^{\scriptscriptstyle{(1)}}=\nu_{\bullet,\bullet}^{\scriptscriptstyle{(1)}}\odot\gamma\]
for some $\nu_{\bullet,\bullet}^{\scriptscriptstyle{(1)}}\in\cpl^{\scriptscriptstyle{(1)}}\!\lc m_\bullet^X,m_\bullet^Y\rc$ and $\gamma\in\cpl(\mu_X,\mu_Y)$.
For any measurable set $A\subseteq X$, we have that
\begin{align*}
    \gamma^{\scriptscriptstyle{(1)}}(A\times Y)&=\int\limits_{X\times Y}\nu_{x,y}^{\scriptscriptstyle{(1)}}(A\times Y)\,\gamma(dx\times dy)\\
    &=\int\limits_{X\times Y}m_x^X(A)\,\gamma(dx\times dy)\\
    &=\int_{X}m_x^X(A)\,\mu_X(dx)\\
    &=\mu_X(A).
\end{align*}
Similarly, for any measurable $B\subseteq Y$ we have that
\[\gamma^{\scriptscriptstyle{(1)}}(X\times B)=\mu_Y(B).\]
Hence, $\gamma^{\scriptscriptstyle{(1)}}\in\cpl(\mu_X,\mu_Y)$.

Now, assume that Equation \eqref{eq:inclusion} holds for some $k\geq 0$. For any $\gamma^{\scriptscriptstyle{(k+2)}}\in\cpl^{\scriptscriptstyle{(k+2)}}(\mu_X,\mu_Y)$, we assume that
\[\gamma^{\scriptscriptstyle{(k+2)}}=\nu_{\bullet,\bullet}^{\scriptscriptstyle{(k+2)}}\odot\gamma,\]
where $\nu_{\bullet,\bullet}^{\scriptscriptstyle{(k+2)}}\in\cpl^{\scriptscriptstyle{(k+2)}}\!\lc m_\bullet^X,m_\bullet^Y\rc$ and $\gamma\in\cpl(\mu_X,\mu_Y)$.
Then, there exist $\nu_{\bullet,\bullet}^{\scriptscriptstyle{(k+1)}}\in \cpl^{\scriptscriptstyle{(k+1)}}\!\lc m_\bullet^X,m_\bullet^Y\rc$ and $\nu_{\bullet,\bullet}^{\scriptscriptstyle{(1)}}\in \cpl^{\scriptscriptstyle{(1)}}\!\lc m_\bullet^X,m_\bullet^Y\rc$ such that
\[\nu_{x,y}^{\scriptscriptstyle{(k+2)}}=\nu_{\bullet,\bullet}^{\scriptscriptstyle{(k+1)}}\odot\nu_{x,y}^{\scriptscriptstyle{(1)}},\,\,\forall x\in X,\,\,y\in Y.\]
Hence, 
\begin{align*}
    \gamma^{\scriptscriptstyle{(k+2)}}&=\int\limits_{X\times Y}\int\limits_{X\times Y}\nu_{x',y'}^{\scriptscriptstyle{(k+1)}}\nu_{x,y}^{\scriptscriptstyle{(1)}}(dx'\times dy')\,\gamma(dx\times dy)\\
    &=\int\limits_{X\times Y}\nu_{x',y'}^{\scriptscriptstyle{(k+1)}}\int\limits_{X\times Y}\nu_{x,y}^{\scriptscriptstyle{(1)}}(dx'\times dy')\,\gamma(dx\times dy)\\
    &=\int\limits_{X\times Y}\nu_{x',y'}^{\scriptscriptstyle{(k+1)}}\,\gamma^{\scriptscriptstyle{(1)}}(dx'\times dy').
\end{align*}
Here that $\gamma^{\scriptscriptstyle{(1)}}\coloneqq\nu_{\bullet,\bullet}^{\scriptscriptstyle{(1)}}\odot\gamma$ belongs to $\cpl(\mu_X,\mu_Y)$ follows from the case $k=0$. Hence, by the induction assumption, $\gamma^{\scriptscriptstyle{(k+2)}}\in\cpl^{\scriptscriptstyle{(k+1)}}(\mu_X,\mu_Y)$ which concludes the proof.
\end{proof}

\subsubsection{A characterization of $\dWLk$ via $k$-step couplings}
Now, we characterize $\dWLk$ via $k$-step couplings defined in the previous section.

\begin{theorem}\label{thm:dwl= dwk}
Given any integer $k\geq 0$ and any two $Z$-LMMC $(\mX,\ell_X)$ and $(\mY,\ell_Y)$, we have that
\[\dWLk\!\lc(\mX,\ell_X),(\mY,\ell_Y)\rc =\inf_{\gamma^{\scriptscriptstyle{(k)}}\in \cpl^{\scriptscriptstyle{(k)}}(\mu_X,\mu_Y)} \int\limits_{X\times Y}d_Z(\ell_X(x),\ell_Y(y))\gamma^{\scriptscriptstyle{(k)}}(dx\times dy).\]
\end{theorem}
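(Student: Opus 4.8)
The plan is to prove the identity by induction on $k$, peeling off one step of the WL measure hierarchy at a time and matching it with one factor of the $k$-step coupling. For the base case $k=0$, both sides reduce to $\dW((\ell_X)_\#\mu_X,(\ell_Y)_\#\mu_Y)$ with couplings ranging over $\cpl(\mu_X,\mu_Y)$; the left side is this by definition (cf. \Cref{ex:k=0 and 1}), and the right side is this by \Cref{lm:push forward of coupling dW} applied with $\varphi_X=\ell_X$, $\varphi_Y=\ell_Y$. The case $k=1$ should be checked directly as a sanity step: $\dWL^{\scriptscriptstyle{(1)}}$ equals $\inf_{\gamma\in\cpl(\mu_X,\mu_Y)}\int \dW((\ell_X)_\#m_x^X,(\ell_Y)_\#m_y^Y)\,\gamma(dx\times dy)$, and applying \Cref{lm:push forward of coupling dW} inside the integral to rewrite each $\dW$ as an infimum over $\cpl(m_x^X,m_y^Y)$, then using a measurable selection of optimal couplings (\Cref{lm:mble optimal coupling}) to assemble a $1$-step coupling $\nu^{\scriptscriptstyle{(1)}}_{\bullet,\bullet}$, identifies the right side with $\inf\int d_Z(\ell_X(x),\ell_Y(y))\,(\nu^{\scriptscriptstyle{(1)}}_{\bullet,\bullet}\odot\gamma)(dx\times dy)$.

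For the inductive step, suppose the claim holds for $k-1$ for all pairs of LMMCs. The key observation is that $\WLh{k}{(\mX,\ell_X)} = (\WLh{k-1}{(\mX,\ell_X)})_\# m_\bullet^X$, so $\mathfrak{L}_k((\mX,\ell_X)) = (\WLh{k}{(\mX,\ell_X)})_\#\mu_X$ is itself the depth-$(k-1)$ invariant $\mathfrak{L}_{k-1}$ of a modified LMMC — namely, one whose Markov structure is unchanged but whose label function is pushed forward once. Concretely, I would introduce the auxiliary $\probm^{\circ 1}(Z)$-labeled LMMCs $(\mX,\tilde\ell_X)$ and $(\mY,\tilde\ell_Y)$ with $\tilde\ell_X \coloneqq \WLh{1}{(\mX,\ell_X)} = (\ell_X)_\# m_\bullet^X : X \to \probm(Z)$, and observe that $\WLh{k}{(\mX,\ell_X)} = \WLh{k-1}{(\mX,\tilde\ell_X)}$ for $k\geq 1$, hence $\dWLk((\mX,\ell_X),(\mY,\ell_Y)) = \dWL^{\scriptscriptstyle{(k-1)}}((\mX,\tilde\ell_X),(\mY,\tilde\ell_Y))$. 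Applying the induction hypothesis to the tilded pair (with metric space of labels $\probm(Z)$ equipped with $\dW$, which is complete and separable) gives
\[
\dWLk((\mX,\ell_X),(\mY,\ell_Y)) = \inf_{\gamma^{\scriptscriptstyle{(k-1)}}\in\cpl^{\scriptscriptstyle{(k-1)}}(\mu_X,\mu_Y)} \int_{X\times Y} \dW\!\lc(\ell_X)_\# m_x^X,(\ell_Y)_\# m_y^Y\rc\,\gamma^{\scriptscriptstyle{(k-1)}}(dx\times dy).
\]
Now I apply \Cref{lm:push forward of coupling dW} to the integrand to replace $\dW((\ell_X)_\#m_x^X,(\ell_Y)_\#m_y^Y)$ by $\inf_{\nu\in\cpl(m_x^X,m_y^Y)}\int d_Z(\ell_X(x'),\ell_Y(y'))\,\nu(dx'\times dy')$. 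The two directions of the resulting equality are handled separately: for ``$\geq$'', given $\gamma^{\scriptscriptstyle{(k-1)}}$ and a measurable selection $\nu^{\scriptscriptstyle{(1)}}_{x,y}$ of optimal couplings (via \Cref{lm:mble optimal coupling} composed with the measurable map $(x,y)\mapsto((\ell_X)\ldots)$, or more directly optimal couplings between $m_x^X$ and $m_y^Y$ chosen measurably), one forms $\gamma^{\scriptscriptstyle{(k)}} \coloneqq \nu^{\scriptscriptstyle{(1)}}_{\bullet,\bullet}\odot\gamma^{\scriptscriptstyle{(k-1)}}$, which lies in $\cpl^{\scriptscriptstyle{(k)}}(\mu_X,\mu_Y)$ — here I must be slightly careful, since the definition of a $k$-step coupling writes $\nu^{\scriptscriptstyle{(k)}}_{x,y} = \nu^{\scriptscriptstyle{(k-1)}}_{\bullet,\bullet}\odot\nu^{\scriptscriptstyle{(1)}}_{x,y}$ (the $1$-step factor applied last, innermost), whereas $\gamma^{\scriptscriptstyle{(k-1)}}$ is already a $(k-1)$-step coupling of $\mu$'s, not of kernels; so I will instead use the $\odot$-associativity computation in the proof of \Cref{lm:hierarchy of k-fold couplings} to see that $\nu^{\scriptscriptstyle{(1)}}_{\bullet,\bullet}\odot\gamma^{\scriptscriptstyle{(k-1)}}$ has the required form, by unwinding $\gamma^{\scriptscriptstyle{(k-1)}} = \nu^{\scriptscriptstyle{(k-1)}}_{\bullet,\bullet}\odot\gamma$ and reassociating. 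For ``$\leq$'', given any $\gamma^{\scriptscriptstyle{(k)}} = \nu^{\scriptscriptstyle{(k)}}_{\bullet,\bullet}\odot\gamma \in \cpl^{\scriptscriptstyle{(k)}}(\mu_X,\mu_Y)$, decompose $\nu^{\scriptscriptstyle{(k)}}_{\bullet,\bullet} = \nu^{\scriptscriptstyle{(k-1)}}_{\bullet,\bullet}\odot\nu^{\scriptscriptstyle{(1)}}_{\bullet,\bullet}$, set $\gamma^{\scriptscriptstyle{(k-1)}} \coloneqq \nu^{\scriptscriptstyle{(k-1)}}_{\bullet,\bullet}\odot\gamma \in \cpl^{\scriptscriptstyle{(k-1)}}(\mu_X,\mu_Y)$ (again using the reassociation identity so that $\gamma^{\scriptscriptstyle{(k)}}$ integrates $\nu^{\scriptscriptstyle{(1)}}$ against $\gamma^{\scriptscriptstyle{(k-1)}}$), and bound each inner Wasserstein term by the specific coupling $\nu^{\scriptscriptstyle{(1)}}_{x,y}$; Fubini then recombines the integrals.

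The main obstacle I anticipate is bookkeeping the order of $\odot$-compositions: the definition of $k$-step coupling nests the fresh $1$-step factor innermost, while the natural induction (pushing the label forward one extra time) wants to peel off the \emph{outermost} step, so I need the associativity/commutativity property of $\odot$ — essentially the computation already carried out in the proof of \Cref{lm:hierarchy of k-fold couplings} — to reconcile the two, and I should state it as a small standalone sublemma: for a $(k-1)$-step kernel coupling and a $1$-step kernel coupling, $(\nu^{\scriptscriptstyle{(k-1)}}_{\bullet,\bullet}\odot\nu^{\scriptscriptstyle{(1)}}_{\bullet,\bullet})\odot\gamma = \nu^{\scriptscriptstyle{(k-1)}}_{\bullet,\bullet}\odot(\nu^{\scriptscriptstyle{(1)}}_{\bullet,\bullet}\odot\gamma)$. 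A secondary technical point is ensuring the selection $\nu^{\scriptscriptstyle{(1)}}_{\bullet,\bullet}$ is genuinely measurable as a map $X\times Y\to\prob(X\times Y)$; since $X,Y$ are finite this is automatic, but I will invoke \Cref{lm:mble optimal coupling} (or just finiteness) to be safe. Everything else is an application of \Cref{lm:push forward of coupling dW} and Fubini.
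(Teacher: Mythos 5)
Your proposal is correct, but it is organized differently from the paper's proof, and the comparison is instructive. The paper does not induct on $k$: it unrolls the entire hierarchy in one pass, repeatedly applying \Cref{lm:push forward of coupling dW} together with the measurable selection of optimal couplings (\Cref{lm:mble optimal coupling}) to rewrite $\dW\!\lc\WLh{k}{(\mX,\ell_X)}(x),\WLh{k}{(\mY,\ell_Y)}(y)\rc$ as a $k$-fold nested integral of $d_Z(\ell_X(x_k),\ell_Y(y_k))$ against optimal one-step couplings $(\nu_k)\circ\cdots\circ(\nu_1)$ and an optimal base coupling; crucially, the composition produced this way has the fresh one-step factor \emph{innermost}, which is exactly the definitional form of a $k$-step coupling, so no reassociation is ever needed. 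The converse direction is then a chain of $k$ term-by-term lower bounds collapsing the innermost integral first. Your induction instead peels off the \emph{outermost} layer, via the genuinely nice observation that relabeling by $\tilde\ell_X=\WLh{1}{(\mX,\ell_X)}$ turns $\dWLk$ into $\dWL^{\scriptscriptstyle{(k-1)}}$ of the relabeled chain over the label space $\probm(Z)$ (which is complete and separable, so the inductive hypothesis applies). What this buys is a shorter, more conceptual argument with only one level of nesting visible at a time; what it costs is precisely the bookkeeping you identify. Be aware that the sublemma you state, $(\nu^{\scriptscriptstyle{(k-1)}}_{\bullet,\bullet}\odot\nu^{\scriptscriptstyle{(1)}}_{\bullet,\bullet})\odot\gamma=\nu^{\scriptscriptstyle{(k-1)}}_{\bullet,\bullet}\odot(\nu^{\scriptscriptstyle{(1)}}_{\bullet,\bullet}\odot\gamma)$, is not quite enough on its own: for your ``$\geq$'' direction you must show that the kernel $(x,y)\mapsto\nu^{\scriptscriptstyle{(1)}}_{\bullet,\bullet}\odot\nu^{\scriptscriptstyle{(k-1)}}_{x,y}$ (one-step factor \emph{outermost}) is itself a legitimate $k$-step kernel coupling, and for ``$\leq$'' you must re-nest an arbitrary $\nu^{\scriptscriptstyle{(k)}}_{\bullet,\bullet}$ so that its last factor sits outermost. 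Both statements follow by a separate induction on $k$ using associativity of $\odot$ (trivial here since $X,Y$ are finite, so all Fubini exchanges are free), in the same spirit as the computation in the proof of \Cref{lm:hierarchy of k-fold couplings}; you should record that induction explicitly as the standalone sublemma rather than the single associativity identity. With that addition your argument is complete and yields the same equality as the paper's.
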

\begin{proof}[Proof of \Cref{thm:dwl= dwk}]
The case $k=0$ holds trivially. Now, for any $k\geq 1$ and for any $x\in X$ and $y\in Y$, by \Cref{lm:push forward of coupling dW} we have that

\[\dW\!\lc \WLh{k}{(\mX,\ell_X)}(x),\WLh{k}{(\mY,\ell_Y)}(y)\rc=\inf_{\nu_{x,y}\in\cpl(m_x^X,m_y^Y)}\int\limits_{X\times Y}\dW\!\lc \WLh{k-1}{(\mX,\ell_X)}(x'),\WLh{k-1}{(\mY,\ell_Y)}(y')\rc \nu_{x,y}(dx'\times dy').\]

Since $(x,y)\mapsto (m_x^X,m_y^Y)$ is measurable by definition of  Markov kernels, by \Cref{lm:mble optimal coupling} we have that there exists a measurable map $\nu_{\bullet,\bullet}:X\times Y\rightarrow\prob(X\times Y)$ such that for every $x\in X$ and $y\in Y$, $\nu_{x,y}$ is optimal, i.e., 
\begin{equation}\label{eq:optimal nu}
    \dW\!\lc \WLh{k}{(\mX,\ell_X)}(x),\WLh{k}{(\mY,\ell_Y)}(y)\rc=\int\limits_{X\times Y}\dW\!\lc \WLh{k-1}{(\mX,\ell_X)}(x'),\WLh{k-1}{(\mY,\ell_Y)}(y')\rc \nu_{x,y}(dx'\times dy').
\end{equation}

Hence, we have the following formulas.
\begin{align*}
&\dWLk\!\lc(\mX,\ell_X),(\mY,\ell_Y)\rc\\
    =&\dW\!\lc \lc \WLh{k}{(\mX,\ell_X)}\rc_\#\mu_X,\lc \WLh{k}{(\mY,\ell_Y)}\rc_\#\mu_Y\rc \\
    =&\int\limits_{X\times Y}\dW\!\lc \WLh{k}{(\mX,\ell_X)}(x),\WLh{k}{(\mY,\ell_Y)}(y)\rc \gamma(dx\times dy),\quad \text{here }\gamma\in\mathcal{C}(\mu_X,\mu_Y)\text{ is chosen to be optimal}\\
    =&\int\limits_{X\times Y}\int\limits_{X\times Y}\dW\!\lc \WLh{k-1}{(\mX,\ell_X)}(x_1),\WLh{k-1}{(\mY,\ell_Y)}(y_1)\rc (\nu_1)_{x,y}(dx_1\times dy_1)\gamma(dx\times dy),\\
    =&\int\limits_{X\times Y}\cdots\int\limits_{X\times Y}\dW\!\lc \WLh{0}{(\mX,\ell_X)}(x_k),\WLh{0}{(\mY,\ell_Y)}(y_k)\rc (\nu_k)_{x_{k-1},y_{k-1}}(dx_k\times dy_k)\cdots(\nu_1)_{x,y}(dx_1\times dy_1)\gamma(dx\times dy).
\end{align*}
Here, each $(\nu_i)_{\bullet,\bullet}\in\cpl^{\scriptscriptstyle{(1)}}(m_\bullet^X,m_\bullet^Y)$ for $i=1,\ldots,k$ is optimal in the sense of \Cref{eq:optimal nu}.

From the above equations, we identify a probability measure on $X\times Y$ for every $x\in X$ and $y\in Y$ as follows
\[\nu_{x,y}^{\scriptscriptstyle{(k)}}\coloneqq\int\limits_{X\times Y}\cdots\int\limits_{X\times Y}(\nu_k)_{x_{k-1},y_{k-1}}\,(\nu_{k-1})_{x_{k-2},y_{k-2}}(dx_{k-1}\times dy_{k-1})\cdots(\nu_1)_{x,y}(dx_1\times dy_1).\]

It is obvious that $\nu_{\bullet,\bullet}^{\scriptscriptstyle{(k)}}$ is a $k$-step coupling and thus
\begin{align*}
    \dWLk\!\lc(\mX,\ell_X),(\mY,\ell_Y)\rc  =&\dW\!\lc \lc \WLh{k}{(\mX,\ell_X)}\rc_\#\mu_X,\lc \WLh{k}{(\mY,\ell_Y)}\rc_\#\mu_Y\rc \\
    =&\int\limits_{X\times Y}\int\limits_{X\times Y}\dW\!\lc \WLh{0}{(\mX,\ell_X)}(x'),\WLh{0}{(\mY,\ell_Y)}(y')\rc \nu_{x,y}^{\scriptscriptstyle{(k)}}(dx'\times dy')\gamma(dx\times dy)\\
    \geq&\inf_{\gamma^{\scriptscriptstyle{(k)}}\in\cpl^{\scriptscriptstyle{(k)}}(\mu_X,\mu_Y)}\int\limits_{X\times Y}\dW\!\lc \ell_X(x),\ell_Y(y)\rc\gamma^{\scriptscriptstyle{(k)}}(dx\times dy),
\end{align*}

Conversely, we have that given any $\gamma^{\scriptscriptstyle{(k)}}\coloneqq\nu_{\bullet,\bullet}^{\scriptscriptstyle{(k)}}\odot\gamma\in \cpl^{\scriptscriptstyle{(k)}}(\mu_X,\mu_Y)$, where $\gamma\in\cpl(\mu_X,\mu_Y)$ and  $\nu_{\bullet,\bullet}^{\scriptscriptstyle{(k)}}\in\cpl^{\scriptscriptstyle{(k)}}\!\lc m_\bullet^X,m_\bullet^Y\rc$ can be written for every $x\in X$ and $y\in Y$ as follows
\[\nu_{x,y}^{\scriptscriptstyle{(k)}}\coloneqq\int\limits_{X\times Y}\cdots\int\limits_{X\times Y}(\nu_k)_{x_{k-1},y_{k-1}}\,(\nu_{k-1})_{x_{k-2},y_{k-2}}(dx_{k-1}\times dy_{k-1})\cdots(\nu_1)_{x,y}(dx_1\times dy_1),\]
the following inequalities hold:
\begin{align*}
&\int\limits_{X\times Y}\dW\!\lc \ell_X(x),\ell_Y(y)\rc\gamma^{\scriptscriptstyle{(k)}}(dx\times dy)\\
    =&\int\limits_{X\times Y}\int\limits_{X\times Y}\dW\!\lc \WLh{0}{(\mX,\ell_X)}(x_k),\WLh{0}{(\mY,\ell_Y)}(y_k)\rc \nu_{x,y}^{\scriptscriptstyle{(k)}}(dx_k\times dy_k)\gamma(dx\times dy)\\
    =&\int\limits_{X\times Y}\cdots\int\limits_{X\times Y}\dW\!\lc \WLh{0}{(\mX,\ell_X)}(x_k),\WLh{0}{(\mY,\ell_Y)}(y_k)\rc (\nu_k)_{x_{k-1},y_{k-1}}(dx_k\times dy_k)\cdots(\nu_1)_{x,y}(dx_1\times dy_1)\gamma(dx\times dy)\\
    \geq & \int\limits_{X\times Y}\cdots\int\limits_{X\times Y}\dW\!\lc \WLh{1}{(\mX,\ell_X)}(x_{k-1}),\WLh{1}{(\mY,\ell_Y)}(y_{k-1})\rc (\nu_{k-1})_{x_{k-2},y_{k-2}}(dx_{k-1}\times dy_{k-1})\cdots\gamma(dx\times dy)\\
    \cdots&\\
    \geq &\int\limits_{X\times Y}\int\limits_{X\times Y}\dW\!\lc \WLh{k-1}{(\mX,\ell_X)}(x_1),\WLh{k-1}{(\mY,\ell_Y)}(y_1)\rc (\nu_1)_{x,y}(dx_1\times dy_1)\gamma(dx\times dy)\\
    \geq&\int\limits_{X\times Y}\dW\!\lc \WLh{k}{(\mX,\ell_X)}(x),\WLh{k}{(\mY,\ell_Y)}(y)\rc \gamma(dx\times dy)\\
    \geq &\dW\!\lc \lc \WLh{k}{(\mX,\ell_X)}\rc_\#\mu_X,\lc \WLh{k}{(\mY,\ell_Y)}\rc_\#\mu_Y\rc.
\end{align*}
Infimizing over all $\gamma$ and $\nu_{\bullet,\bullet}^{\scriptscriptstyle{(k)}}$, one concludes the proof.
\end{proof}

\subsection{The Wasserstein Weisfeiler-Lehman graph kernel and its relationship with $\dWL$}\label{sec:WWL}

The Wasserstein Weisfeiler-Lehman graph kernel deals with graphs with either categorical or `continuous' (i.e., Euclidean) labels \cite{togninalli2019wasserstein}. We describe their framework w.r.t. continuous labels as follows. For technical reasons, we assume that all graphs involved in this section are such that all their connected components have cardinality at least 2 (i.e. no graph contains an isolated vertex).

Given a labeled graph $(G,\ell_G:V_G\rightarrow\R^d)$, the label function is updated for a fixed number $k$ of iterations according to the equation below for $i=0,\ldots,k-1$, where $\ell^{0}_G\coloneqq\ell_G$.
\[\forall v\in V_G,\,\,\,\,\ell_G^{i+1}(v)\coloneqq\frac{1}{2}\!\lc\,\ell_G^{i}(v)+\frac{1}{\deg_G(v)}\sum_{v'\in N_G(v)}\ell_G^{i}(v')\rc. \]

Then, for each $i=0,\ldots,k$ there is a label function $\WLhor{i}{(G,\ell_G)}:V_G\rightarrow\R^d$. Define the stacked label function $L^k_G$ as follows:
$$L^k_G\coloneqq\lc\ell^{0}_{G},\ldots,\ell_G^{k}\rc:V_G\rightarrow \R^{d\times(k+1)}.$$ 
Now, given any two labeled graphs $(G_1,\ell_{G_1})$ and $(G_2,\ell_{G_2})$, \cite{togninalli2019wasserstein} first computed $L_{G_1}^k$ and $L_{G_2}^k$, then computed the Wasserstein distance between their induced distributions and finally, built a kernel upon this Wasserstein distance. If we let $\lambda_{G_i}$ denote the uniform measure on $V_{G_i}$, then we express their Wasserstein distance via pushforward of uniform measures as follows.
 \begin{equation}\label{eq:WWL distance}
     D^{\scriptscriptstyle{(k)}}\!\lc(G_1,\ell_{G_1}),(G_2,\ell_{G_2})\rc\coloneqq\dW\!\lc \lc L^k_{G_1}\rc_\#\lambda_{G_1},\lc L^k_{G_2}\rc_\#\lambda_{G_2} \rc.
 \end{equation}

Now, if we instead of uniform measures consider the stationary distributions $\mu_{G_1}$ and $\mu_{G_2}$ w.r.t. $m_\bullet^{G_1,\frac{1}{2}}$ and $m_\bullet^{G_2,\frac{1}{2}}$, respectively, we define the following variant of $D^{\scriptscriptstyle{(k)}}$:
\begin{equation}\label{eq:WWL distance variant}
    \hat{D}^{\scriptscriptstyle{(k)}}((G_1,\ell_{G_1}),(G_2,\ell_{G_2}))\coloneqq\dW\!\lc \lc L^k_{G_1}\rc_\#\mu_{G_1},\lc L^k_{G_2}\rc_\#\mu_{G_2} \rc,
\end{equation}
which is the distance which we will relate to our WL distance next. In fact, we then  prove that $\hat{D}^{\scriptscriptstyle{(k)}}((G_1,\ell_{G_1}),(G_2,\ell_{G_2}))$ actually provides a lower bound for $\dWLk((\mX_q(G_1),\ell_{G_1}),(\mX_q(G_2),\ell_{G_2}))$ when $q=\frac{1}{2}$.

\begin{proposition}\label{prop:wwl<dwlk}
For any two labeled graphs $(G_1,\ell_{G_1}:V_{G_1}\rightarrow\R^d)$ and $(G_2,\ell_{G_2}:V_{G_2}\rightarrow\R^d)$, one has that for $q=\frac{1}{2}$ and any $k\in\N$,
\[\hat{D}^{\scriptscriptstyle{(k)}}((G_1,\ell_{G_1}),(G_2,\ell_{G_2}))\leq k\cdot\dWLk((\mX_q(G_1),\ell_{G_1}),(\mX_q(G_2),\ell_{G_2})).\]
\end{proposition}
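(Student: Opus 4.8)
The plan is to turn $\hat D^{\scriptscriptstyle(k)}$ into a coupling problem over vertices (as in \Cref{lm:push forward of coupling dW}), then exploit that for $q=\tfrac12$ the WWL label update is exactly one averaging step of the $q$-Markov chain, and finally collapse the resulting telescoping sum using the $k$-step coupling machinery of \Cref{sec: k fold dW}. Concretely, by \Cref{lm:push forward of coupling dW} applied to the stacked maps $L^k_{G_1}$ and $L^k_{G_2}$,
\[
\hat D^{\scriptscriptstyle(k)}\!\lc(G_1,\ell_{G_1}),(G_2,\ell_{G_2})\rc=\inf_{\gamma\in\cpl(\mu_{G_1},\mu_{G_2})}\ \int\limits_{V_{G_1}\times V_{G_2}}\big\|L^k_{G_1}(v_1)-L^k_{G_2}(v_2)\big\|\ \gamma(dv_1\times dv_2),
\]
and since $L^k_G=(\ell^0_G,\dots,\ell^k_G)$ the ground cost is controlled, up to the normalization built into the stacked-label ground distance of \cite{togninalli2019wasserstein}, by $\sum_{i=0}^k\big\|\ell^i_{G_1}(v_1)-\ell^i_{G_2}(v_2)\big\|$. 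Hence it suffices to exhibit a single $\gamma\in\cpl(\mu_{G_1},\mu_{G_2})$ for which $\sum_{i=0}^k\int\|\ell^i_{G_1}-\ell^i_{G_2}\|\,\gamma$ is of order $k\cdot\dWLk$.

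Next I would use that, as observed in \Cref{ex:relation with wwl}, the $q=\tfrac12$ WWL update reads $\ell^i_{G_j}(v)=\int_{V_{G_j}}\ell^{i-1}_{G_j}(v')\,m_v^{G_j,1/2}(dv')$ for every $i\geq 1$. Therefore, for any $v_1\in V_{G_1},v_2\in V_{G_2}$ and any coupling $\pi_{v_1,v_2}\in\cpl\!\lc m_{v_1}^{G_1,1/2},m_{v_2}^{G_2,1/2}\rc$, convexity of the norm gives
\[
\big\|\ell^i_{G_1}(v_1)-\ell^i_{G_2}(v_2)\big\|\ \le\ \int\limits_{V_{G_1}\times V_{G_2}}\big\|\ell^{i-1}_{G_1}(v_1')-\ell^{i-1}_{G_2}(v_2')\big\|\,\pi_{v_1,v_2}(dv_1'\times dv_2').
\]
Summing this over $i=1,\dots,k$ with a \emph{common} one-step coupling $\nu^{\scriptscriptstyle(1)}_{\bullet,\bullet}\in\cpl^{\scriptscriptstyle(1)}\!\lc m_\bullet^{G_1,1/2},m_\bullet^{G_2,1/2}\rc$, integrating against $\gamma$, and iterating $k$ times (with a possibly different one-step coupling at each stage), I obtain a telescoping bound
\[
\sum_{i=0}^k\int\big\|\ell^i_{G_1}-\ell^i_{G_2}\big\|\,\gamma\ \le\ \sum_{r=0}^k\int\limits_{V_{G_1}\times V_{G_2}}\big\|\ell_{G_1}(v_1)-\ell_{G_2}(v_2)\big\|\,\gamma^{[r]}(dv_1\times dv_2),
\]
where $\gamma^{[0]}\coloneqq\gamma$ and $\gamma^{[r]}$ is obtained from $\gamma^{[r-1]}$ by composing (via $\odot$) with a one-step coupling of the Markov kernels; by construction each $\gamma^{[r]}$ is an $r$-step coupling between $\mu_{G_1}$ and $\mu_{G_2}$ (cf. \Cref{lm:hierarchy of k-fold couplings} and \Cref{sec: k fold dW}).

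It then remains to pick the base coupling $\gamma$ and the one-step couplings so that $\int\|\ell_{G_1}-\ell_{G_2}\|\,\gamma^{[r]}\leq\dWLk$ for every $r$. The natural choice is to take $\gamma$ together with the cascade $\gamma^{[1]},\dots,\gamma^{[k]}$ to be the one underlying an optimal $k$-step coupling realizing $\dWLk=\dWL^{\scriptscriptstyle(k)}$ in \Cref{thm:dwl= dwk}; using \Cref{thm:dwl= dwk} to lower-bound each $\gamma^{[r]}$ by $\dWL^{\scriptscriptstyle(r)}$ and the monotonicity $\dWL^{\scriptscriptstyle(r)}\leq\dWL^{\scriptscriptstyle(k)}$ of \Cref{coro:hierarchy dwlk}, one controls each term of the telescoping sum by $\dWLk$. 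Feeding this back through the first two steps and accounting for the normalization in the WWL ground distance yields $\hat D^{\scriptscriptstyle(k)}\leq k\cdot\dWLk$.

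The main obstacle is the last step. A priori, composing a coupling of $\mu_{G_1},\mu_{G_2}$ with an arbitrary one-step coupling of the Markov kernels need \emph{not} decrease the expected label discrepancy $\int\|\ell_{G_1}-\ell_{G_2}\|$ (e.g.\ at a vertex all of whose neighbors carry a common label far from that of a counterpart in the other graph), so one must show that the prefixes $\gamma^{[r]}$ of the \emph{optimal} depth-$k$ cascade are nonetheless bounded by $\dWLk$ at every intermediate depth. This is exactly where $q=\tfrac12$ is used: each vertex retains probability $\tfrac12$ at itself, so every averaging step is a genuine convex combination that keeps the current label in play, which together with $\dWLk=\sup_r\dWL^{\scriptscriptstyle(r)}$ lets the optimal depth-$k$ structure dominate all shallower terms simultaneously; the resulting slack is also what produces the factor $k$ in the statement. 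I would also need to record the precise ground metric/normalization used for $L^k_G$ in \cite{togninalli2019wasserstein} (and the standing assumption of no isolated vertices made in \Cref{sec:WWL}, which guarantees $m^{G,1/2}_\bullet$ is well defined on every vertex).
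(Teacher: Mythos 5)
Your reduction of the per-iteration comparison to Markov-kernel averaging at $q=\tfrac12$ (via convexity of the norm) is correct, and is essentially the paper's inequality \eqref{eq:lip ineq} specialized to $\varphi_j=\mathrm{id}$ with Lipschitz constants $C_j=1$. The gap is in how you assemble the per-depth terms. By applying \Cref{lm:push forward of coupling dW} to the stacked maps $L^k_{G_1},L^k_{G_2}$ you commit to a \emph{single} base coupling $\gamma$ of $\mu_{G_1},\mu_{G_2}$ that must simultaneously control $\int\|\ell^i_{G_1}-\ell^i_{G_2}\|\,d\gamma$ for every $i\le k$. You then need every prefix $\gamma^{[r]}$ of an optimal depth-$k$ cascade to satisfy $\int\|\ell_{G_1}-\ell_{G_2}\|\,d\gamma^{[r]}\le\dWLk$. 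But in the optimal cascade constructed in \Cref{thm:dwl= dwk}, the $r$-th one-step coupling $(\nu_r)_{\bullet,\bullet}$ is chosen optimally for the depth-$(k-r)$ Wasserstein cost, not for the depth-$0$ cost $\|\ell_{G_1}-\ell_{G_2}\|$, so there is no reason its length-$r$ prefix should be near-optimal at depth $r$; all \Cref{thm:dwl= dwk} gives you is the \emph{lower} bound $\int\|\ell_{G_1}-\ell_{G_2}\|\,d\gamma^{[r]}\ge\dWL^{\scriptscriptstyle{(r)}}$. The monotonicity of \Cref{coro:hierarchy dwlk} is a statement about infima over the nested classes $\cpl^{\scriptscriptstyle{(r)}}(\mu_X,\mu_Y)$, not a pointwise statement about a fixed cascade, so it cannot be ``fed back'' the way you propose; and the heuristic that $q=\tfrac12$ ``keeps the current label in play'' does no work here ($q=\tfrac12$ is only needed so that the WWL update coincides with $q_{\mathrm{id}}\lc(\ell^i_G)_\#m^{G,1/2}_\bullet\rc$, as in \Cref{ex:relation with wwl}).

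The fix is to decouple the depths \emph{before} optimizing. The paper's \Cref{lm:product dW} shows that a coupling of the stacked pushforwards can be assembled from \emph{independent} couplings of each coordinate's pushforwards, whence
\[
\hat D^{\scriptscriptstyle{(k)}}\!\lc(G_1,\ell_{G_1}),(G_2,\ell_{G_2})\rc\;\le\;\sum_{i}\dW\!\lc(\ell^i_{G_1})_\#\mu_{G_1},(\ell^i_{G_2})_\#\mu_{G_2}\rc,
\]
with a separate infimum inside each summand; each summand is then $\le\dWL^{\scriptscriptstyle{(i)}}$ by \eqref{eq:lip ineq} and $\le\dWLk$ by \Cref{coro:hierarchy dwlk}. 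Your single-coupling formulation bounds $\inf_\gamma\sum_i(\cdots)$, which dominates $\sum_i\inf_{\gamma_i}(\cdots)$, so you have made the problem strictly harder than necessary, and the step you flag as the ``main obstacle'' is genuinely unproven rather than a technicality.
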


The proposition will be proved after we provide some examples and remarks.
\begin{figure}
    \centering
    \includegraphics[scale=0.7]{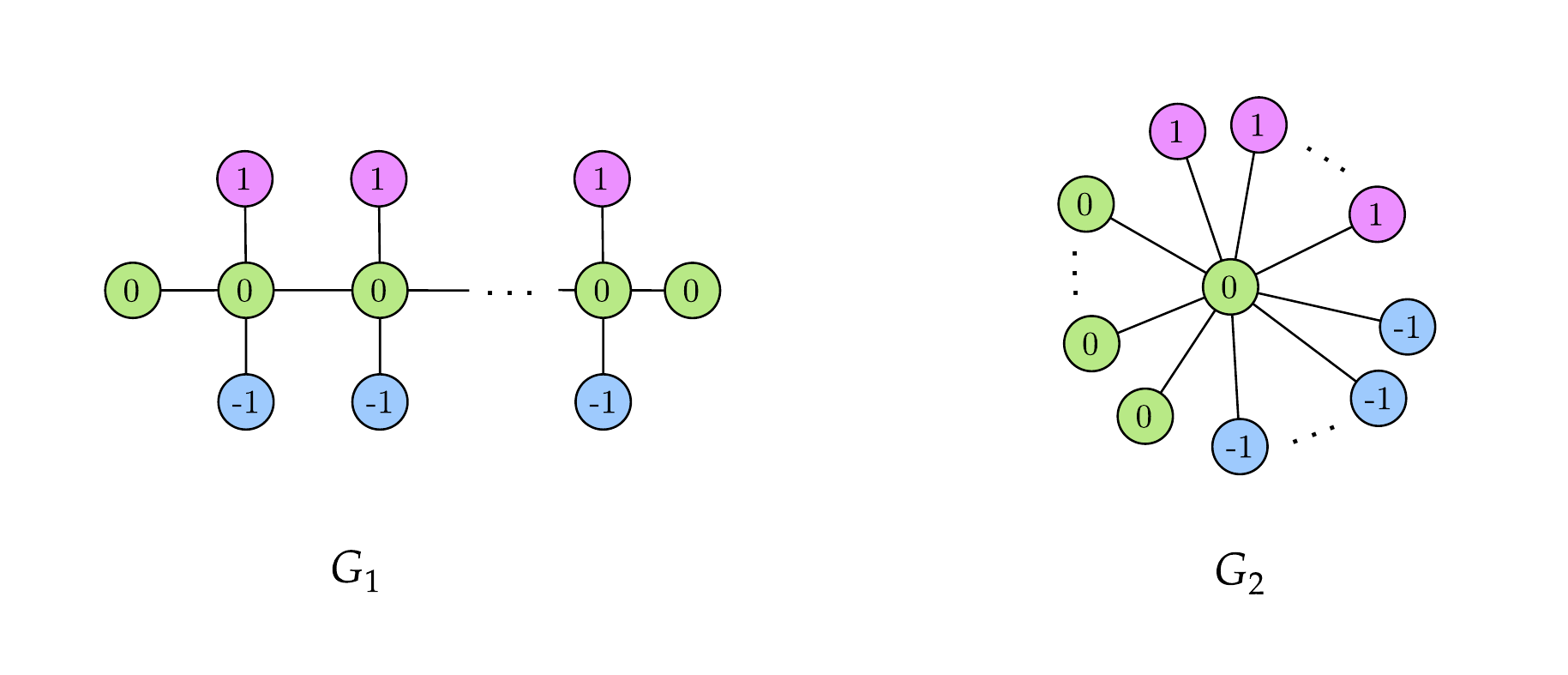}
    \caption{In this figure we show two labeled graphs and each of them has $3n+2$ vertices. Each of the graphs has $n$ vertices with label $1$, $n$ vertices with label $-1$, and $n+2$ vertices with label 0.}
    \label{fig:example 6}
\end{figure}

\begin{example}[$\dWLk$ is more discriminating than $\hat{D}^{\scriptscriptstyle{(k)}}$]\label{ex:better than WWL}
In this example, we construct a family of pairs of graphs so that $\hat{D}^{\scriptscriptstyle{(k)}}$ between the pairs is always zero but $\dWL$ between the pairs is positive.
For any $n\geq 2$, consider the two $(3n+2)$-point labeled graphs shown in \Cref{fig:example 6}. It is easy to see that for each $i=1,2$, and any $v\in V_{G_i}$, 
\begin{enumerate}
    \item if $\ell_{G_i}(v)=0$, then $\ell_{G_i}^{k}(v)=0$ for all $k=0,1,\ldots$;
    \item if $\ell_{G_i}(v)=\pm 1$, then $\ell_{G_i}^{k}(v)=\pm\frac{1}{2^k}$ for all $k=0,1,\ldots$.
\end{enumerate}
Hence, for any $k=0,\ldots$, we have that
\[\lc L_{G_1}^k\rc_\#\mu_{G_1}=\lc L_{G_2}^k\rc_\#\mu_{G_2}=\frac{4n+2}{6n+2}\delta_{(0,\cdots,0)}+\frac{n}{6n+2}\delta_{(1,\cdots,2^{-k})}+\frac{n}{6n+2}\delta_{(-1,\cdots,-2^{-k})}.\]
Therefore, $\hat{D}^{\scriptscriptstyle{(k)}}((G_1,\ell_{G_1}),(G_2,\ell_{G_2}))=0$ for all $k=0,\ldots$.

For proving that $\dWL>0$, we first analyze $\WLh{1}{(\mX(G_1),\ell_{G_1})}(v_1)$ for any $v_1\in V_{G_1}$.
\begin{enumerate}
    \item If $\ell_{G_1}(v_1)=\pm1$, then \[\WLh{1}{(\mX(G_1),\ell_{G_1})}(v_1)=\frac{1}{2}\delta_0+\frac{1}{2}\delta_{\pm1}.\]
    \item If $\ell_{G_1}(v_1)=0$ and $v_1$ is neither the leftmost nor the rightmost vertex, then
    \[\WLh{1}{(\mX(G_1),\ell_{G_1})}(v_1)=\frac{6}{8}\delta_0+\frac{1}{8}\delta_1+\frac{1}{8}\delta_{-1}.\]
    \item If $v_1$ is either the leftmost or the rightmost vertex, then
    \[\WLh{1}{(\mX(G_1),\ell_{G_1})}(v_1)=\delta_0.\]
\end{enumerate}
We then analyze $\WLh{1}{(\mX(G_2),\ell_{G_2})}(v_2)$ for any $v_2\in V_{G_2}$.
\begin{enumerate}
    \item If $\ell_{G_2}(v_2)=\pm1$, then \[\WLh{1}{(\mX(G_2),\ell_{G_2})}(v_2)=\frac{1}{2}\delta_0+\frac{1}{2}\delta_{\pm1}.\]
    \item If $v_2$ is the center vertex, then
    \[\WLh{1}{(\mX(G_2),\ell_{G_2})}(v_2)=\frac{2n+1}{3n+1}\delta_0+\frac{n}{2(3n+1)}\delta_1+\frac{n}{2(3n+1)}\delta_{-1}.\]
    \item If $\ell_{G_2}(v_2)=0$ and $v_2$ is not the center vertex, then
    \[\WLh{1}{(\mX(G_2),\ell_{G_2})}(v_2)=\delta_0.\]
\end{enumerate}

Hence, it is clear that when $n>1$
\begin{align*}
    \dWL^{\scriptscriptstyle{(1)}}((\mX_q(G_1),\ell_{G_1}),(\mX_q(G_2),\ell_{G_2}))=\dW\!\lc \lc \WLh{1}{(\mX(G_1),\ell_{G_1})}\rc_\#\mu_{G_1},\lc \WLh{1}{(\mX(G_2),\ell_{G_2})}\rc_\#\mu_{G_2}\rc>0.
\end{align*}
\end{example}

\begin{remark}
Our WL distance formulation is flexible and we can of course relax it by allowing the comparison of measure Markov chains with general reference probability measures which are not necessarily stationary. In that case, we can directly compare $D^{\scriptscriptstyle{(k)}}$ defined in \Cref{eq:WWL distance} with $\dWLk$. More precisely, for any graph $G$, we can replace the stationary distribution inherent to $\mX_q(G)$ with the uniform measure and hence obtain a new measure Markov chain $\mX_q^\mathrm{u}(G)$. Then, the same proof technique used for proving \Cref{prop:wwl<dwlk} can be used for proving that
\[{D}^{\scriptscriptstyle{(k)}}((G_1,\ell_{G_1}),(G_2,\ell_{G_2}))\leq k\cdot\dWLk((\mX_q^\mathrm{u}(G_1),\ell_{G_1}),(\mX_q^\mathrm{u}(G_2),\ell_{G_2})).\]
Moreover, we can show that $\dWLk$ is strictly more discriminating than $D^{\scriptscriptstyle{(k)}}$ via the same pairs of graphs as in \Cref{ex:better than WWL}.
\end{remark}

The proof of \Cref{prop:wwl<dwlk} is based on the following basic fact about the Wasserstein distance:

\begin{lemma}\label{lm:product dW}
Let $Z$ be a complete and separable metric space. Endow $Z\times Z$ with any product metric $d_{Z\times Z}$ such that
$$d_{Z\times Z}((z_1,z_2),(z_3,z_4))\leq{d_Z(z_1,z_3)+d_Z(z_2,z_4)}\,\,\,\forall z_1,z_2,z_3,z_4\in Z.$$
For example, one can let
$$d_{Z\times Z}((z_1,z_2),(z_3,z_4))\coloneqq\sqrt{\lc d_Z(z_1,z_3)\rc^2+\lc d_Z(z_2,z_4)\rc^2}.$$
Given any complete and separable metric space $X$, for any $i=1,2$ and any measurable maps $f_i,g_i:X\rightarrow Z$, if we let $h_i\coloneqq(f_i,g_i):X\rightarrow Z\times Z$, then for any $\mu_1,\mu_2\in\probm(X)$
\[\dW\!\lc (h_1)_\#\mu_1,(h_2)_\#\mu_2\rc\leq \dW\!\lc (f_1)_\#\mu_1,(f_2)_\#\mu_2\rc+\dW\!\lc (g_1)_\#\mu_1,(g_2)_\#\mu_2\rc.\]
\end{lemma}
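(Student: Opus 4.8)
The plan is to transfer the comparison from $\prob(Z\times Z)$ back to couplings of $\mu_1$ and $\mu_2$ on $X\times X$, where the sub-additivity hypothesis on $d_{Z\times Z}$ can be applied pointwise. First I would invoke \Cref{lm:push forward of coupling dW} for each of the three Wasserstein distances in the statement; for the left-hand side this yields
\[\dW\!\lc (h_1)_\#\mu_1,(h_2)_\#\mu_2\rc=\inf_{\gamma\in\cpl(\mu_1,\mu_2)}\int\limits_{X\times X}d_{Z\times Z}\!\lc h_1(x),h_2(x')\rc\,\gamma(dx\times dx'),\]
and likewise $\dW((f_1)_\#\mu_1,(f_2)_\#\mu_2)$ and $\dW((g_1)_\#\mu_1,(g_2)_\#\mu_2)$ become infima of the corresponding $d_Z$-integrals over the \emph{same} set $\cpl(\mu_1,\mu_2)$. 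Since $h_i=(f_i,g_i)$, the assumption $d_{Z\times Z}((z_1,z_2),(z_3,z_4))\le d_Z(z_1,z_3)+d_Z(z_2,z_4)$ gives, for every $\gamma\in\cpl(\mu_1,\mu_2)$,
\[\int\limits_{X\times X}d_{Z\times Z}\!\lc h_1(x),h_2(x')\rc\gamma(dx\times dx')\le\int\limits_{X\times X}d_Z\!\lc f_1(x),f_2(x')\rc\gamma(dx\times dx')+\int\limits_{X\times X}d_Z\!\lc g_1(x),g_2(x')\rc\gamma(dx\times dx').\]

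The remaining task — and the heart of the argument — is to upgrade this single-coupling estimate into a bound by the \emph{sum} of the two individual Wasserstein distances, which a priori are infima over \emph{different} optimal couplings. I would handle this through the triangle inequality for $\dW$ on $\prob(Z\times Z)$, inserting an intermediate measure $\xi$: disintegrate $(h_1)_\#\mu_1$ with respect to its second marginal $(g_1)_\#\mu_1$, transport only the first coordinate from $(f_1)_\#\mu_1$ to $(f_2)_\#\mu_2$ along an optimal coupling of these two measures, and re-glue. The resulting $\xi$ then has first marginal $(f_2)_\#\mu_2$ and second marginal $(g_1)_\#\mu_1$, and the transport plan underlying its construction (which keeps the second coordinate fixed) certifies $\dW((h_1)_\#\mu_1,\xi)\le\dW((f_1)_\#\mu_1,(f_2)_\#\mu_2)$ via the product-metric bound. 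A symmetric move on the second coordinate, transporting $(g_1)_\#\mu_1$ to $(g_2)_\#\mu_2$, should account for the second summand, and the triangle inequality closes the estimate.

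I expect the main obstacle to be precisely the bookkeeping in this last step: one must arrange the two coordinate-wise transports so that the endpoint of the interpolation is exactly $(h_2)_\#\mu_2$, rather than merely some measure sharing its two marginals, and one must check that the intermediate objects are genuine elements of $\prob(Z\times Z)$ with finite first moment. Here the finiteness of $X$ is the key simplification: by \Cref{lm:push forward of coupling} all the pushforwards are finitely supported, so every disintegration reduces to finitely many conditional distributions and the re-gluing becomes a finite matrix manipulation, which is what one would use to pin down $(h_2)_\#\mu_2$ on the nose. If matching the endpoint turns out to be delicate, an alternative route is to argue directly at the level of couplings of $\mu_1$ and $\mu_2$ on $X\times X$, combining an optimal coupling for the $f$-problem and one for the $g$-problem glued along the common $X$-coordinate, and tracking how the product-metric bound distributes over the two terms.
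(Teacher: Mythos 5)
You have correctly isolated the crux: the pointwise bound $d_{Z\times Z}(h_1(x),h_2(x'))\le d_Z(f_1(x),f_2(x'))+d_Z(g_1(x),g_2(x'))$ only yields, after infimizing, a bound by $\inf_{\gamma}\bigl[\int d_Z(f_1,f_2)\,d\gamma+\int d_Z(g_1,g_2)\,d\gamma\bigr]$ over a \emph{single} coupling $\gamma$, which in general exceeds the sum of the two separate infima. But the obstacle you flag in the two-stage interpolation --- forcing the endpoint to be exactly $(h_2)_\#\mu_2$ rather than merely a measure with the right two marginals --- is not a bookkeeping issue; it is fatal, because the stated inequality is false. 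Take $X=\{a,b\}$ with $\mu_1=\mu_2$ uniform, $Z=\R$, and (listing values at $a$ then $b$) $f_1=g_1=f_2=(0,1)$, $g_2=(1,0)$. Then $(f_1)_\#\mu_1=(f_2)_\#\mu_2$ and $(g_1)_\#\mu_1=(g_2)_\#\mu_2$, so the right-hand side is $0$, while $(h_1)_\#\mu_1=\frac12\delta_{(0,0)}+\frac12\delta_{(1,1)}$ and $(h_2)_\#\mu_2=\frac12\delta_{(0,1)}+\frac12\delta_{(1,0)}$ are at Wasserstein distance $1$ for the Euclidean product metric. The two individual Wasserstein distances see only the marginal label distributions and are blind to the correlation between $f_i$ and $g_i$ along $X$, whereas $(h_i)_\#\mu_i$ records that joint law; no gluing, disintegration, or interpolation can recover it, which is why both of your proposed routes stall at the same place.

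For what it is worth, the paper's own proof stumbles on precisely the point you were worried about: it forms $\nu$ as the product of $(f_1\times f_2)_\#\gamma_f$ and $(g_1\times g_2)_\#\gamma_g$ and asserts $\nu\in\cpl\lc(h_1)_\#\mu_1,(h_2)_\#\mu_2\rc$, but the marginal of $\nu$ on the first two factors is $(f_1)_\#\mu_1\otimes(g_1)_\#\mu_1$, which equals $(h_1)_\#\mu_1$ only when $f_1$ and $g_1$ are independent under $\mu_1$ (in the example above $f_1=g_1$, the opposite extreme). What your first two displays do establish, and what is actually true, is the single-coupling bound $\dW\!\lc(h_1)_\#\mu_1,(h_2)_\#\mu_2\rc\le\inf_{\gamma\in\cpl(\mu_1,\mu_2)}\int_{X\times X}\bigl(d_Z(f_1(x),f_2(x'))+d_Z(g_1(x),g_2(x'))\bigr)\gamma(dx\times dx')$; alternatively, the inequality in the lemma's original form does hold with the product measures $(f_i)_\#\mu_i\otimes(g_i)_\#\mu_i$ in place of $(h_i)_\#\mu_i$, which is what the paper's $\nu$ genuinely couples. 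I would abandon the attempt to close the two-stage argument and record one of these corrected statements instead.
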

\begin{proof}[Proof of \Cref{lm:product dW}]
For any $\gamma_f,\gamma_g\in\cpl(\mu_1,\mu_2)$, we define a probability measure $\nu\in\prob(Z\times Z\times Z\times Z)$ as follows: for any measurable $A,A',B,B'\subseteq Z$
\[\nu(A\times A'\times  B\times B')\coloneqq (f_1\times f_2)_\#\gamma_f(A\times B)\cdot (g_1\times g_2)_\#\gamma_g(A'\times B').\]
It is easy to show that $\nu\in\cpl\lc (h_1)_\#\mu_1,(h_2)_\#\mu_2\rc$. Then,
\begin{align*}
    &\dW\!\lc (h_1)_\#\mu_1,(h_2)_\#\mu_2\rc\\
    &\leq \int\limits_{Z\times Z}\int\limits_{Z\times Z}d_{Z\times Z}((z_1,z_2),(z_3,z_4))\,\nu(dz_1\times dz_2\times dz_3\times dz_4)\\
    &\leq  \int\limits_{Z\times Z}\int\limits_{Z\times Z}\lc d_{Z}(z_1,z_3)+d_{Z}(z_2,z_4)\rc\,\nu(dz_1\times dz_2\times dz_3\times dz_4)\\
    &\leq  \int\limits_{Z\times Z}d_{Z}(z_1,z_3)\,(f_1\times f_2)_\#\gamma_f(dz_1\times dz_3)+\int\limits_{Z\times Z}d_{Z}(z_2,z_4)\,(g_1\times g_2)_\#\gamma_g(dz_2\times dz_4)
\end{align*}
By \Cref{lm:push forward of coupling dW}, infimizing over all $\gamma_f,\gamma_g\in\cpl(\mu_1,\mu_2)$, we obtain the conclusion.
\end{proof}

\begin{proof}[Proof of \Cref{prop:wwl<dwlk}]
Since $L^k_G\coloneqq(\ell^{0}_{G},\ldots,\ell_{G}^{k})$, by inductively applying \Cref{lm:product dW}, we have that
\[\hat{D}^{\scriptscriptstyle{(k)}}(G_1,G_2)=\dW\!\lc \lc L^k_{G_1}\rc_\#\mu_{G_1},\lc L^k_{G_2}\rc_\#\mu_{G_2} \rc\leq \sum_{i=1}^k\dW\!\lc \lc \ell_{G_1}^{i}\rc_\#\mu_{G_1},\lc \ell_{G_2}^{i}\rc_\#\mu_{G_2} \rc.\]
Choose $\varphi_j\coloneqq\mathrm{id}:\R^d\rightarrow\R^d$ to be the identity map for each $j=1,\ldots,k$, then using notation from \Cref{sec:proof zero set}, we have that
\[\ell_{G_i}^j=\ell_{G_i}^{\scriptscriptstyle{(\varphi,j)}},\,\,\,\,\forall i=1,2\mbox{ and }\forall j=1,\ldots,k.\]
Then, by \Cref{eq:lip ineq}, we conclude that
\begin{align*}
    \hat{D}^{\scriptscriptstyle{(k)}}(G_1,G_2)&\leq \sum_{i=1}^k\dW\!\lc \lc \WLh{i}{(\mX_q(G_1),\ell_{G_1})}\rc_\#\mu_{G_1},\lc\WLh{i}{(\mX_q(G_2),\ell_{G_2})}\rc_\#\mu_{G_2} \rc\\
    &=\sum_{i=1}^k\dWL^{\scriptscriptstyle{(i)}}\!\lc {(\mX_q(G_1),\ell_{G_1})},{(\mX_q(G_2),\ell_{G_2})} \rc.
\end{align*}
By \Cref{coro:hierarchy dwlk}, we have that
\[\hat{D}^{\scriptscriptstyle{(k)}}(G_1,G_2)\leq k\cdot \dWLk\!\lc {(\mX_q(G_1),\ell_{G_1})},{(\mX_q(G_2),\ell_{G_2})} \rc.\]
\end{proof}
\subsection{A decoupled version of the Gromov-Wasserstein distance}\label{sec:decGW}

For simplicity, in this section we will assume that the cardinality of all underlying spaces to always be finite.

The Gromov-Wasserstein distance $d_{\mathrm{GW}}$ was proposed as a measure of dissimilarity between two metric measure spaces; see \Cref{sec:coupling and dW} for its definition and also  \cite{memoli2011gromov} for its more general version involving a parameter $p\in[1,\infty]$. Note that one can define a variant of the standard GW distance by considering two coupling measures $\gamma,\gamma'$ \emph{independently}, and use $\gamma\otimes\gamma'$ instead of $\gamma\otimes\gamma$ in \Cref{eq:GWdist}. This version of the GW distance was implicit in the optimization procedure followed in \cite{memoli2011gromov} and has been explicitly considered in \cite{sejourne2020unbalanced,redko2020co}, and this is closely connected to our GW distance between MCMSs (see \Cref{def:MCMsGW}) as shown in \Cref{prop:Haibin}. 

Here we give the definition of this ``decoupled" variant of the GW distance.

\begin{definition}[Decoupled Gromov-Wasserstein distance]
Suppose two metric measure spaces $\mathbf{X}=(X,d_X,\mu_X)$ , $\mathbf{Y}=(Y,d_Y,\mu_Y)$ are given. We define the decoupled Gromov-Wasserstein distance $\dGW^{\mathrm{bi}}(\mathbf{X},\mathbf{Y})$ in the following way:
$$\dGW^{\mathrm{bi}}(\mathbf{X},\mathbf{Y})\coloneqq\inf_{\gamma,\gamma'\in\cpl(\mu_X,\mu_Y)}\int\limits_{X\times Y}\int\limits_{X\times Y}|d_X(x,x')-d_Y(y,y')|\gamma'(dx'\times dy')\gamma(dx\times dy).$$
\end{definition}

Obviously, $\dGW^{\mathrm{bi}}(\mathbf{X},\mathbf{Y})\leq \dGW(\mathbf{X},\mathbf{Y})$ in general. Furthermore, this inequality is actually tight as one can see in the following remark.

\begin{remark}
We let $\Gamma_{X,Y}(x,y,x',y')\coloneqq|d_X(x,x')-d_Y(y,y')|$ for any $x,x'\in X$ and $y,y'\in Y$.
If the kernel $\Gamma_{X,Y}:X\times Y\times X\times Y\rightarrow\R$ is negative semi-definite, then one can show that $\dGW^{\mathrm{bi}}(\mathbf{X},\mathbf{Y})=\dGW(\mathbf{X},\mathbf{Y})$ by invoking  \cite[Theorem 4]{sejourne2020unbalanced}. More precisely, if $\gamma,\gamma'$ are the optimal coupling measures achieving the infimum in the definition of $\dGW^{\mathrm{bi}}(\mathbf{X},\mathbf{Y})$, then both $\gamma$ and $\gamma'$ are optimal for $\dGW$, i.e.,
\begin{align*}
    \dGW^{\mathrm{bi}}(\mathbf{X},\mathbf{Y})=\Vert \Gamma_{X,Y}\Vert_{L^1(\gamma\otimes\gamma')}=\Vert \Gamma_{X,Y}\Vert_{L^1(\gamma\otimes\gamma)}=\Vert \Gamma_{X,Y}\Vert_{L^1(\gamma'\otimes\gamma')}=\dGW(\mathbf{X},\mathbf{Y}).
\end{align*}

\end{remark}

Just like the original version, $\dGW^{\mathrm{bi}}$ also becomes a legitimate metric on the collection of metric measure spaces. This is another contribution of our work.
\begin{proposition}
The decoupled Gromov Wasserstein distance $\dGW^{\mathrm{bi}}$ is a legitimate metric on $\mcms$.
\end{proposition}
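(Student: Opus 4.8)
The plan is to check the defining properties one by one: nonnegativity and finiteness, symmetry, the vanishing on the diagonal, the triangle inequality, and finally that $\dGW^{\mathrm{bi}}(\mathbf{X},\mathbf{Y})=0$ forces $\mathbf{X}\cong\mathbf{Y}$. Nonnegativity is immediate since the integrand $\Gamma_{X,Y}$ is nonnegative, and finiteness holds because all spaces are finite with finite diameter. Symmetry follows by swapping $\mathbf{X}\leftrightarrow\mathbf{Y}$ and transposing the couplings $\gamma,\gamma'$. That $\dGW^{\mathrm{bi}}(\mathbf{X},\mathbf{X})=0$ follows by taking both $\gamma$ and $\gamma'$ to be the diagonal coupling $(\mathrm{id}\times\mathrm{id})_\#\mu_X$, which makes the integrand $|d_X(x,x')-d_X(x,x')|$ vanish identically.

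For the triangle inequality, fix finite metric measure spaces $\mathbf{X},\mathbf{Y},\mathbf{Z}$. By finiteness we may pick \emph{optimal} couplings $\gamma_{XY},\gamma'_{XY}\in\cpl(\mu_X,\mu_Y)$ realizing $\dGW^{\mathrm{bi}}(\mathbf{X},\mathbf{Y})$ and $\gamma_{YZ},\gamma'_{YZ}\in\cpl(\mu_Y,\mu_Z)$ realizing $\dGW^{\mathrm{bi}}(\mathbf{Y},\mathbf{Z})$. Using the standard gluing construction along the full-support common marginal $\mu_Y$ (explicitly $\mu_{XYZ}(x,y,z)=\gamma_{XY}(x,y)\gamma_{YZ}(y,z)/\mu_Y(y)$ in the finite case), build $\mu_{XYZ}\in\prob(X\times Y\times Z)$ whose $(X,Y)$-marginal is $\gamma_{XY}$ and whose $(Y,Z)$-marginal is $\gamma_{YZ}$, and analogously $\mu'_{XYZ}$ from $\gamma'_{XY}$ and $\gamma'_{YZ}$; let $\gamma_{XZ},\gamma'_{XZ}\in\cpl(\mu_X,\mu_Z)$ be the respective $(X,Z)$-projections. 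Integrating the pointwise estimate $|d_X(x,x')-d_Z(z,z')|\le|d_X(x,x')-d_Y(y,y')|+|d_Y(y,y')-d_Z(z,z')|$ against $\mu'_{XYZ}(dx'\,dy'\,dz')\,\mu_{XYZ}(dx\,dy\,dz)$, and observing that each summand on the right depends only on a pair of variables whose independent joint law is the appropriate two-factor marginal, yields
\[
\Vert\Gamma_{X,Z}\Vert_{L^1(\gamma_{XZ}\otimes\gamma'_{XZ})}\ \le\ \Vert\Gamma_{X,Y}\Vert_{L^1(\gamma_{XY}\otimes\gamma'_{XY})}+\Vert\Gamma_{Y,Z}\Vert_{L^1(\gamma_{YZ}\otimes\gamma'_{YZ})}=\dGW^{\mathrm{bi}}(\mathbf{X},\mathbf{Y})+\dGW^{\mathrm{bi}}(\mathbf{Y},\mathbf{Z}).
\]
Since $\gamma_{XZ},\gamma'_{XZ}\in\cpl(\mu_X,\mu_Z)$, the left-hand side is $\ge\dGW^{\mathrm{bi}}(\mathbf{X},\mathbf{Z})$, which gives the triangle inequality.

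It remains to show $\dGW^{\mathrm{bi}}(\mathbf{X},\mathbf{Y})=0\Rightarrow\mathbf{X}\cong\mathbf{Y}$ (the converse is clear). By finiteness the infimum is attained, so there exist $\gamma,\gamma'\in\cpl(\mu_X,\mu_Y)$ with $\int\!\int|d_X(x,x')-d_Y(y,y')|\,\gamma'(dx'\,dy')\,\gamma(dx\,dy)=0$; hence $d_X(x,x')=d_Y(y,y')$ for every $(x,y)\in S:=\mathrm{supp}(\gamma)$ and $(x',y')\in S':=\mathrm{supp}(\gamma')$. From here I plan to extract the isomorphism: (i) taking $x'=x$ shows that $(x,y)\in S$ and $(x,b)\in S'$ force $d_Y(y,b)=0$, i.e.\ $b=y$; since $p_X(S')=X$ (full marginal) this gives $S\subseteq S'$, and symmetrically $S'\subseteq S$, so $S=S'$; (ii) applying the relation within $S$ itself shows $(x,y),(x,b)\in S\Rightarrow y=b$, so $S$ is the graph of a map $\phi:X\to Y$, and by the symmetric argument the graph of a map $Y\to X$, making $\phi$ a bijection; (iii) the relation then reads $d_X(x,x')=d_Y(\phi(x),\phi(x'))$, so $\phi$ is an isometry, and because $\gamma=(\mathrm{id}\times\phi)_\#\mu_X$ has second marginal $\mu_Y$, we get $\phi_\#\mu_X=\mu_Y$, so $\phi$ is an isomorphism of metric measure spaces. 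The main obstacle is precisely this last part: the decoupled formulation only forces distances to agree \emph{across} $\mathrm{supp}(\gamma)$ and $\mathrm{supp}(\gamma')$ rather than within one coupling, so establishing $S=S'$ and that it is the graph of a bijection is the delicate step, and finiteness — guaranteeing the optimum is attained and supports behave well — is what makes it work.
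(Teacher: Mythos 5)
Your proposal is correct and follows essentially the same route as the paper: the triangle inequality via gluing two independent pairs of couplings along $\mu_Y$ and integrating the pointwise estimate $|d_X-d_Z|\le|d_X-d_Y|+|d_Y-d_Z|$, and the identity of indiscernibles by extracting from optimal $\gamma,\gamma'$ the relation $d_X(x,x')=d_Y(y,y')$ on $\mathrm{supp}(\gamma)\times\mathrm{supp}(\gamma')$ and showing the two supports coincide and form the graph of a measure-preserving isometry. The only cosmetic differences are that you invoke attainment of the infimum directly from finiteness (the paper routes this through weak compactness of the coupling polytope) and that you prove $\mathrm{supp}(\gamma)=\mathrm{supp}(\gamma')$ before uniqueness rather than simultaneously, neither of which changes the substance.
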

\begin{proof}
Symmetry is obvious. We need to prove the triangle inequality plus the fact that $\dGW^{\mathrm{bi}}(\mathbf{X},\mathbf{Y})=0$ happens if and only if $\mathbf{X}$ and $\mathbf{Y}$ are isomorphic. The ``if" part is trivial. For the other direction we proceed as follows. Suppose that $\dGW^{\mathrm{bi}}(\mathbf{X},\mathbf{Y})=0$. By \Cref{lemma:Facundolemma} and the compactness of $\cpl(\mu_X,\mu_Y)$ for the weak topology (see \cite[p.49]{villani2021topics}), there must be  optimal couplings $\gamma,\gamma'\in\cpl(\mu_X,\mu_Y)$ such that
\begin{equation}\label{eq:0}
    \sum_{(x,y)\in X\times Y}\sum_{(x',y')\in X\times Y}\vert d_X(x,x')-d_Y(y,y')\vert\,\gamma'(x',y')\,\gamma(x,y)=0.
\end{equation}
\begin{claim}\label{claim:isometry exist}
There exists an isometry $\phi:X\rightarrow Y$ such that
$$\{(x,\phi(x)):x\in X\}=\mathrm{supp}(\gamma)=\mathrm{supp}(\gamma').$$
\end{claim}
\begin{proof}[Proof of \Cref{claim:isometry exist}]
By \Cref{eq:0}, we have that 
\begin{equation}\label{eq:dbilemma}
    d_X(x,x')=d_Y(y,y')
\end{equation}
for any $(x,y)\in\mathrm{supp}(\gamma)$ and $(x',y')\in\mathrm{supp}(\gamma')$.

Fix an arbitrary $x\in X$. Then, since both $\mu_X$ and $\mu_Y$ are fully supported and $X,Y$ are finite, there must exist $y,y'\in Y$ such that $(x,y)\in\mathrm{supp}(\gamma)$ and $(x,y')\in\mathrm{supp}(\gamma')$. Then, $y=y'$ by \Cref{eq:dbilemma}. Now, if there exists $y''\in Y$ such that $(x,y'')\in\mathrm{supp}(\gamma)$, then similarly, we have that $y''=y'$ and thus $y''=y$. In other words, for each $x\in X$, there exists a unique $y\in Y$ such that $(x,y)\in\mathrm{supp}(\gamma)$. Similarly, this same $y\in Y$ is unique such that $(x,y)\in\mathrm{supp}(\gamma')$. Hence, we define $\phi:X\rightarrow Y$ by letting $\phi(x)$ be the unique $y\in Y$ such that $(x,\phi(x))\in\mathrm{supp}(\gamma)$. It is obvious that $\phi$ is bijective and satisfies that $\{(x,\phi(x)):x\in X\}=\mathrm{supp}(\gamma)=\mathrm{supp}(\gamma')$. By \Cref{eq:dbilemma}, we conclude that $\phi$ is an isometry.
\end{proof}

Based on the claim above, consider an arbitrary Borel subset $A\subseteq X$. Then,
$$\mu_X(A)=\gamma(A\times Y)=\gamma(A\times Y\cap A\times\phi(A))=\gamma(A\times\phi(A))=\mu_Y(\phi(A)).$$
Hence, $\phi$ is a isomorphism between $\mathbf{X}$ and $\mathbf{Y}$.

Finally, for the triangle inequality, fix finite metric measure space $\mathbf{X}$,$\mathbf{Y}$, and $\mathbf{Z}$. Notice first that for all $x,x'\in X$, $y,y'\in Y$, and $z,z'\in Z$,

$$\Gamma_{X,Y}(x,y,x',y')\leq\Gamma_{X,Z}(z,x,z',x')+\Gamma_{Z,Y}(y,z,y',z').$$

Next, fix arbitrary coupling measures $\gamma_1,\gamma_1'\in\cpl(\mu_X,\mu_Z)$ and $\gamma_2,\gamma_2'\in\cpl(\mu_Z,\mu_Y)$. By the Gluing Lemma (see \cite[Lemma 7.6]{villani2021topics}), there exist probability measures $\pi,\pi'\in\prob(X\times Y\times Z)$ with marginals $\gamma_1,\gamma_1'$ on $X\times Z$ and $\gamma_2,\gamma_2'$ on $Z\times Y$. Let $\gamma_3,\gamma_3'$ be the marginal of $\pi,\pi'$ on $X\times Y$. Then, by the triangle inequality of $L^1$ norm,
\begin{align*}
\dGW^{\mathrm{bi}}(\mathbf{X},\mathbf{Y})&\leq\Vert \Gamma_{X,Y}\Vert_{L^1(\gamma_3\otimes\gamma_3')}\\
&=\Vert \Gamma_{X,Y}\Vert_{L^1(\pi\otimes\pi')}\\
&\leq\Vert \Gamma_{X,Z}\Vert_{L^1(\pi\otimes\pi')}+\Vert \Gamma_{Z,Y}\Vert_{L^1(\pi\otimes\pi')}\\
&=\Vert \Gamma_{X,Z}\Vert_{L^1(\gamma_1\otimes\gamma_1')}+\Vert \Gamma_{Z,Y}\Vert_{L^1(\gamma_2\otimes\gamma_2')}.
\end{align*}
Since the choice of $\gamma_1,\gamma_1',\gamma_2,\gamma_2'$ are arbitrary, by taking the infimum one can conclude
$$\dGW^{\mathrm{bi}}(\mathbf{X},\mathbf{Y})\leq \dGW^{\mathrm{bi}}(\mathbf{X},\mathbf{Z})+\dGW^{\mathrm{bi}}(\mathbf{Z},\mathbf{Y}).$$
\end{proof}
\subsection{Examples when $\dWL$ fails to separate graphs}\label{sec:examples}
\begin{example}[Constant labels]\label{ex:constant label}
Let $G_1$ be a claw and $G_2$ be a path with four nodes; see \Cref{fig:example2}. Let the label functions $\ell_{G_i}$ for $i=1,2$ be constant and equal to $1$ for both graphs. 

In the first step of the WL test, we find
\[ L_1((G_1,\ell_{G_1})) = 
    \mbl (1, \mbl 1, 1, 1\mbr), (1, \mbl 1\mbr), 
    (1, \mbl 1\mbr),
    (1, \mbl 1\mbr)\mbr\]
and 
\[   L_1((G_2,\ell_{G_2})) = 
    \mbl (1, \mbl 1\mbr), (1, \mbl 1\mbr), (1, \mbl 1, 1\mbr), (1, \mbl 1, 1\mbr)\mbr.\]
 Since $L_1((G_1,\ell_{G_1})) \neq L_1((G_2,\ell_{G_2}))$, $(G_1,\ell_{G_1})$ and $(G_2,\ell_{G_2})$ are recognized as non-isomorphic by the WL test. Notice that within the first step, the WL test collects degree information and comparing $L_1((G_1,\ell_{G_1}))$ and $L_1((G_2,\ell_{G_2}))$ is equivalent to comparing the multisets of degrees w.r.t. $G_1$ and $G_2$. However, for $\dWL\!\lc (\mX(G_1),\ell_{G_1}),(\mX(G_2),\ell_{G_2})\rc $ (abbreviated to $\dWL\!\lc \mX(G_1),\mX(G_2)\rc $ in \Cref{fig:example2}), because of the normalization inherent to the Markov chains $m_\bullet^{G_1}$ and $m_\bullet^{G_2}$, each step inside the hierarchy pertaining to the WL distance cannot collect degree information when the labels are constant.
\end{example}

\begin{example}[Degree label]\label{ex:degree label}
Let $G_1$ be a two-point graph consisting of a single edge with the vertex set $\{v_1, v_2\}$. Let $G_2$ be a four-point graph consisting of two disjoint edges denoted by $\{u_1, u_2\}$ and $\{u_3, u_4\}$; see \Cref{fig:example3}. For each $i=1,2$, let $\ell_{G_i}$ be the degree label function for both graphs.  

In the first step of the WL test, 
$$L_1((G_1,\ell_{G_1})) = \mbl (1, \mbl 1\mbr), (1, \mbl 1\mbr)\mbr$$ 
and 
\[  L_1((G_2,\ell_{G_2})) = 
    \mbl(1, \mbl 1\mbr), (1, \mbl 1\mbr), (1, \mbl 1\mbr), (1, \mbl 1\mbr)\mbr.\]
Then in the first step of the WL test, the two labeled graphs are already distinguished as non-isomorphic. 

In the case of $\dWL\!\lc (\mX(G_1),\ell_{G_1}),(\mX(G_2),\ell_{G_2})\rc $, notice that  $(\ell_{G_1})_\# m_x^{G_1}(z) = 1$ if $z = 1$ and $0$ otherwise for both $x = v_1$ and $x= v_2$. Hence, $\WLh{1}{(\mX(G_1),\ell_{G_1})}(v_1) = \WLh{1}{(\mX(G_1),\ell_{G_1})}(v_2)$. Similarly, for $G_2$, 
$$\WLh{1}{(\mX(G_2),\ell_{G_2})}(u_1) = \cdots =  \WLh{1}{(\mX(G_2),\ell_{G_2})}(u_4) = \WLh{1}{(\mX(G_1),\ell_{G_1})}(v_1).$$

It is not hard to show inductively that for each $k\in\N$,
\[
    \WLh{k}{(\mX(G_1),\ell_{G_1})} (v_1) = \WLh{k}{(\mX(G_1),\ell_{G_1})} (v_2)=
    \WLh{k}{(\mX(G_2),\ell_{G_2})} (u_1) = \cdots = \WLh{k}{(\mX(G_2),\ell_{G_2})} (u_4).
\] 
Then, for each $k \in\N$, $$\mathfrak{L}_1((\mX(G_1),\ell_{G_1}))=\lc\WLh{k}{(\mX(G_1),\ell_{G_1})}\rc_\# \mu_{G_1} = \lc\WLh{k}{(\mX(G_2),\ell_{G_2})}\rc_\# \mu_{G_2}=\mathfrak{L}_1((\mX(G_2),\ell_{G_2}))$$
and thus $\dWLk((\mX(G_1),\ell_{G_1}),(\mX(G_2),\ell_{G_2})) = 0$ which implies that $\dWL((\mX(G_1),\ell_{G_1}), (\mX(G_2),\ell_{G_2}) )= 0$. 

Notice that the standard WL test with degree labels is able to capture (and therefore compare) information about the number of nodes in the graph. On the other hand, $(\mX(G_1),\ell_{G_1})$ and $(\mX(G_2),\ell_{G_2})$ cannot be distinguished by the WL distance because of the normalization of the reference measures, $\mu_{G_1}$ and $\mu_{G_2}$.
\end{example}
\begin{figure*}
\centering
  \includegraphics[width=0.8\linewidth]{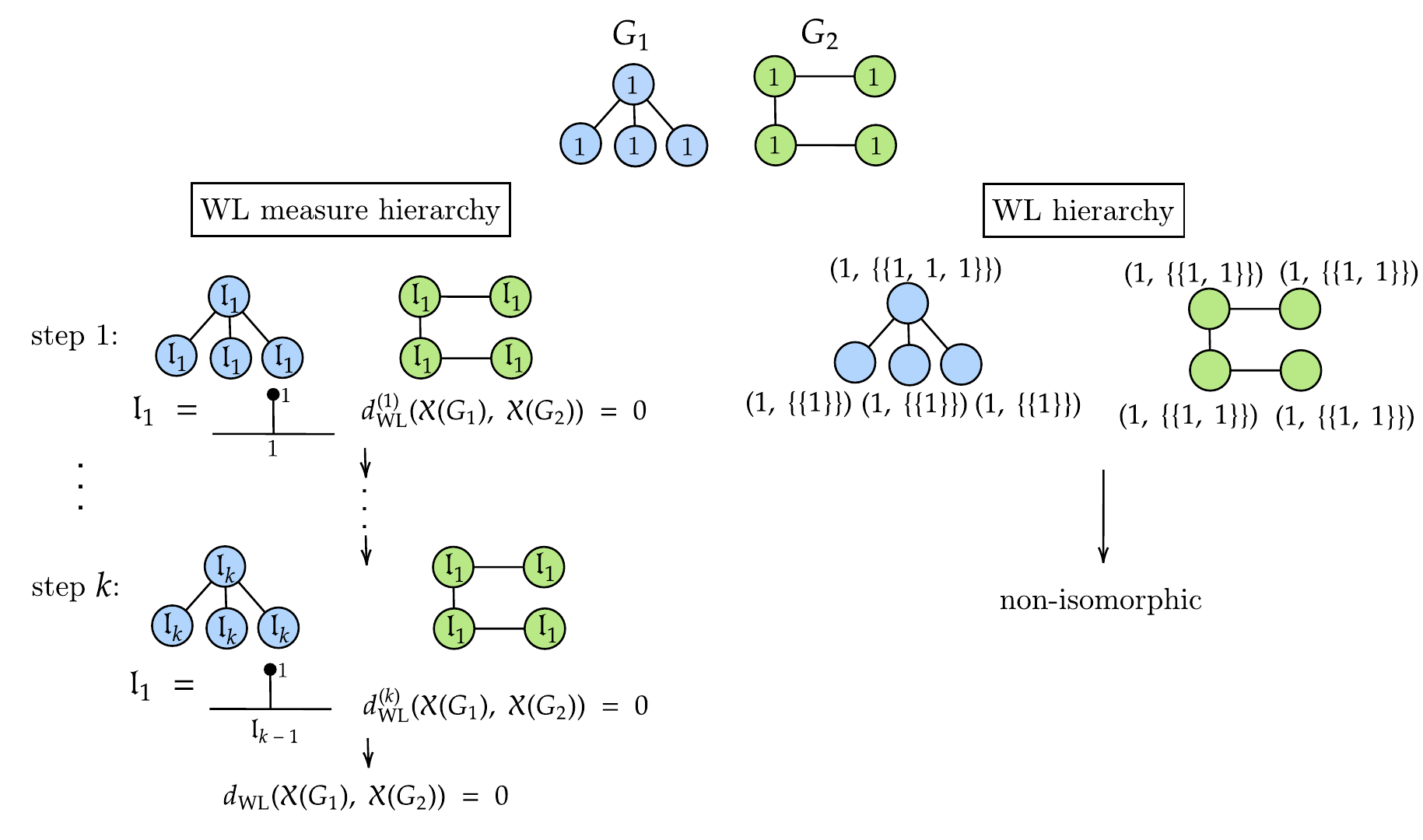}
  \caption{\textbf{Illustration of \Cref{ex:constant label}}. Notice that if we start from constant labels, one step of the WL hierarchy will collect degree information for the vertices. In contrast, because of the normalization of the Markov kernel, a single step of the WL measure hierarchy with constant labels will not be able to accumulate the same information.}
  \label{fig:example2}
\end{figure*}

\begin{figure*}
\centering
  \includegraphics[width=0.8\linewidth]{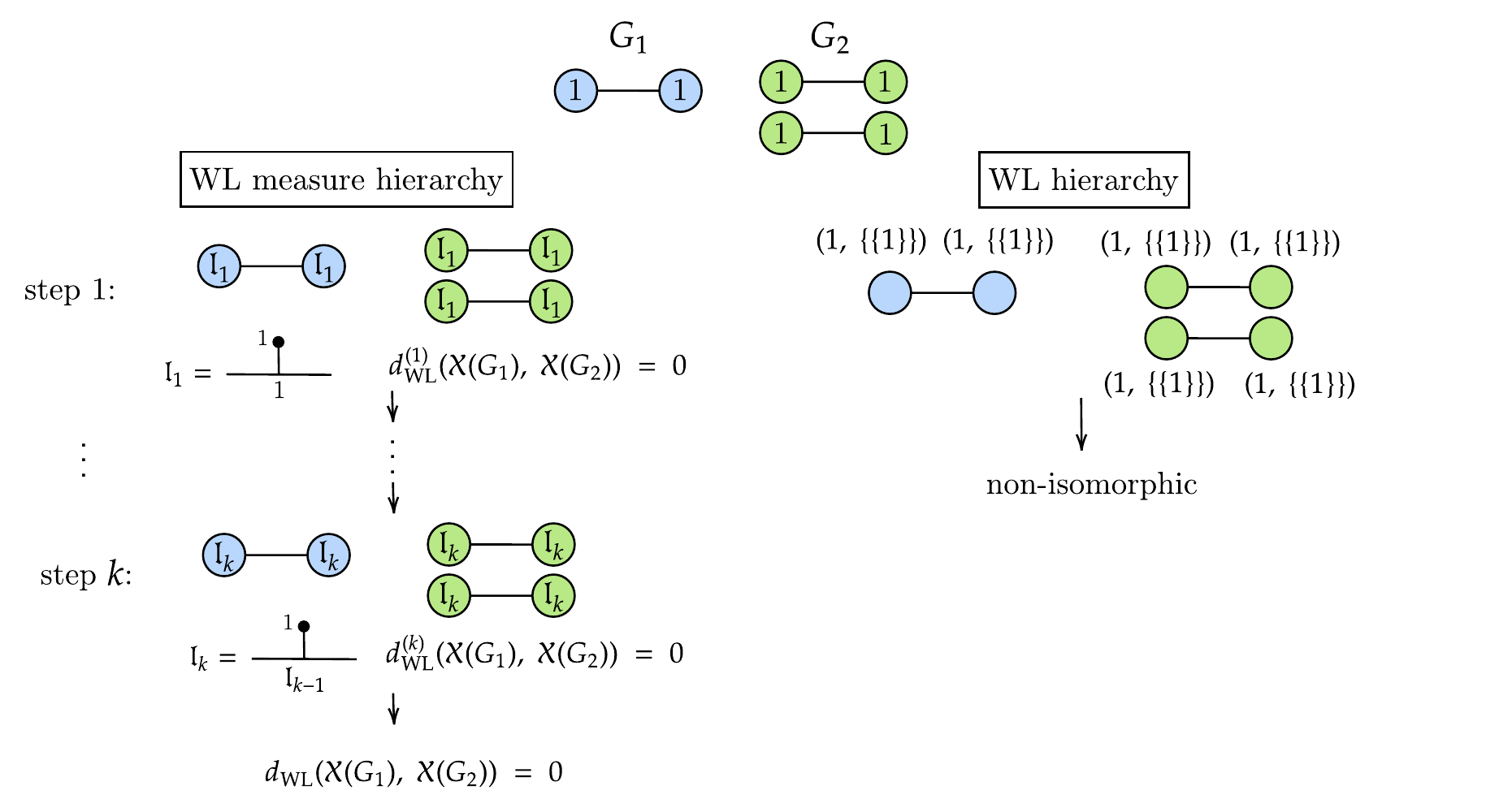}
  \caption{\textbf{Illustration of \Cref{ex:degree label}}. One step of the WL hierarchy with degree labels can distinguish graphs of different sizes whereas the normalization of $\mu_{G_1}$ and $\mu_{G_2}$ does not allow graph size to be distinguished when the label function is degree. }
  \label{fig:example3}
\end{figure*}
\subsection{A basic lower bound for $\dGW^\mathrm{MCMS}$}\label{sec:basic lower bound}
One can produce some basic lower bounds for $\dGW^\mathrm{MCMS}$ by invoking the notion of \emph{diameter} for MCMSs which we define below. We first introduce the one point MCMS.
\begin{example}
The one point MCMS is the tuple $*\coloneqq(\{*\},(0),\delta_\ast,\delta_*)$.
\end{example}

\begin{definition}[MCMS diameter]
For each $k\geq 1$ and a MCMS $(\mX,d_X)$, we define
\begin{align*}
    \diam_{\mathrm{MCMS}}^{\scriptscriptstyle{(k)}}((\mX,d_X))&\coloneqq\dGW^{\scriptscriptstyle{(k)}}((\mX,\ell_X),\ast),
\end{align*}
and
\begin{align*}
    \diam_{\mathrm{MCMS}}((\mX,d_X))&\coloneqq\dGW^\mathrm{MCMS}((\mX,d_X),\ast).
\end{align*}
\end{definition}

Notice that $\cpl(\mu_X,\mu_*)=\{\mu_X\otimes\delta_*\}$ and $\cpl^{\scriptscriptstyle{(k)}}\!\left(m_\bullet^X,\delta_*\right)=\left\{m_\bullet^{X,\otimes k}\otimes\delta_*\right\}$. Then, it turns out that the diameter of $\mX$ is independent of $k$:
\begin{align*}
    \diam_{\mathrm{MCMS}}^{\scriptscriptstyle{(k)}}((\mX,d_X))&=\dGW^{\scriptscriptstyle{(k)}}((\mX,d_X),\ast)=\int_X\int_X\int_X d_X(x,x')\,m_{x''}^{X,\otimes k}(dx')\mu_X(dx'')\mu_X(dx)\\
    &=\int_X\int_X d_X(x,x')\mu_X(dx')\mu_X(dx)
\end{align*}

By the triangle inequality, one can prove the following result.

\begin{proposition}
For all two MCMSs $(\mX,d_X)$,$(\mY,d_Y)$ and $k\geq 1$, we have
$$\left\vert\int_X\int_X d_X(x,x')\mu_X(dx')\mu_X(dx)-\int_Y\int_Y d_Y(y,y')\mu_Y(dy')\mu_Y(dy)\right\vert\leq\dGW^{\scriptscriptstyle{(k)}}\!\lc(\mX,d_X),(\mY,d_Y)\rc ,$$
and
$$\left\vert\int_X\int_X d_X(x,x')\mu_X(dx')\mu_X(dx)-\int_Y\int_Y d_Y(y,y')\mu_Y(dy')\mu_Y(dy)\right\vert\leq\dGW^\mathrm{MCMS}\!\lc(\mX,d_X),(\mY,d_Y)\rc.$$
\end{proposition}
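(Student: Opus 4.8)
The plan is to reduce both inequalities to the identity already derived in the excerpt,
\[\diam_{\mathrm{MCMS}}^{\scriptscriptstyle{(k)}}((\mX,d_X))=\dGW^{\scriptscriptstyle{(k)}}((\mX,d_X),\ast)=\int_X\int_X d_X(x,x')\,\mu_X(dx')\,\mu_X(dx),\]
together with the observation there that this quantity does not depend on $k$, so it also equals $\diam_{\mathrm{MCMS}}((\mX,d_X))=\dGW^\mathrm{MCMS}((\mX,d_X),\ast)$. Write $\overline{d}_X$ for this common value of $(\mX,d_X)$ and $\overline{d}_Y$ for the analogous quantity of $(\mY,d_Y)$; then the two claimed bounds read $|\overline{d}_X-\overline{d}_Y|\leq\dGW^{\scriptscriptstyle{(k)}}((\mX,d_X),(\mY,d_Y))$ and $|\overline{d}_X-\overline{d}_Y|\leq\dGW^\mathrm{MCMS}((\mX,d_X),(\mY,d_Y))$.

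For the first bound I would give a direct coupling argument (this also sidesteps having to separately invoke a triangle inequality for each fixed $\dGW^{\scriptscriptstyle{(k)}}$). Fix $\gamma\in\cpl(\mu_X,\mu_Y)$ and a $k$-step coupling $\nu^{\scriptscriptstyle{(k)}}_{\bullet,\bullet}\in\cpl^{\scriptscriptstyle{(k)}}\!\lc m_\bullet^X,m_\bullet^Y\rc$, and set $\mu^{\scriptscriptstyle{(k)}}\coloneqq\nu^{\scriptscriptstyle{(k)}}_{\bullet,\bullet}\odot\gamma$. By the lemma in \Cref{sec: k fold dW} stating that $k$-step couplings of measures are genuine couplings, $\mu^{\scriptscriptstyle{(k)}}\in\cpl(\mu_X,\mu_Y)$, and performing the two innermost integrations in the definition of $\mathrm{dis}^{\scriptscriptstyle{(k)}}$ (the integrand $|d_X(x,x')-d_Y(y,y')|$ does not involve the $(x'',y'')$ variables) shows
\[\mathrm{dis}^{\scriptscriptstyle{(k)}}\!\lc\gamma,\nu_{\bullet,\bullet}^{\scriptscriptstyle{(k)}}\rc=\int\limits_{X\times Y}\int\limits_{X\times Y}|d_X(x,x')-d_Y(y,y')|\,\mu^{\scriptscriptstyle{(k)}}(dx'\times dy')\,\gamma(dx\times dy).\]
Now consider the product measure $\gamma\otimes\mu^{\scriptscriptstyle{(k)}}$ on $(X\times Y)\times(X\times Y)$, read through the coordinates $(x,x')\in X\times X$ and $(y,y')\in Y\times Y$: since both $\gamma$ and $\mu^{\scriptscriptstyle{(k)}}$ have $X$-marginal $\mu_X$ and $Y$-marginal $\mu_Y$, this measure has $X\times X$-marginal $\mu_X\otimes\mu_X$ and $Y\times Y$-marginal $\mu_Y\otimes\mu_Y$. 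Hence integrating $d_X(x,x')$ against it gives $\overline{d}_X$ and integrating $d_Y(y,y')$ gives $\overline{d}_Y$, so by the triangle inequality for $|\cdot|$,
\[|\overline{d}_X-\overline{d}_Y|\leq\int\limits_{X\times Y}\int\limits_{X\times Y}|d_X(x,x')-d_Y(y,y')|\,\mu^{\scriptscriptstyle{(k)}}(dx'\times dy')\,\gamma(dx\times dy)=\mathrm{dis}^{\scriptscriptstyle{(k)}}\!\lc\gamma,\nu_{\bullet,\bullet}^{\scriptscriptstyle{(k)}}\rc.\]
Taking the infimum over $\gamma$ and $\nu^{\scriptscriptstyle{(k)}}_{\bullet,\bullet}$ yields the first inequality. (Alternatively, one could phrase this via the triangle inequality exactly as the hint suggests: $\dGW^{\scriptscriptstyle{(k)}}$ satisfies it by the gluing argument underlying \Cref{prop:kGWmetric}, and then combine $\dGW^{\scriptscriptstyle{(k)}}((\mX,d_X),\ast)\leq\dGW^{\scriptscriptstyle{(k)}}((\mX,d_X),(\mY,d_Y))+\dGW^{\scriptscriptstyle{(k)}}((\mY,d_Y),\ast)$ with its symmetric counterpart.)

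The second inequality is then immediate: since $\dGW^\mathrm{MCMS}((\mX,d_X),(\mY,d_Y))=\sup_k\dGW^{\scriptscriptstyle{(k)}}((\mX,d_X),(\mY,d_Y))\geq\dGW^{\scriptscriptstyle{(1)}}((\mX,d_X),(\mY,d_Y))$, it follows from the first inequality applied with $k=1$ (equivalently, from the triangle inequality for $\dGW^\mathrm{MCMS}$ in \Cref{prop:kGWmetric} together with $\diam_\mathrm{MCMS}((\mX,d_X))=\overline{d}_X$). I do not expect any real obstacle here: the statement is essentially an instance of the triangle inequality once the diameter is identified with the average distance. The only points needing care are the reduction of $\mathrm{dis}^{\scriptscriptstyle{(k)}}$ to the decoupled double integral over $\gamma$ and $\mu^{\scriptscriptstyle{(k)}}$, and the marginal computation for $\gamma\otimes\mu^{\scriptscriptstyle{(k)}}$ — both of which use only that $\mu^{\scriptscriptstyle{(k)}}\in\cpl(\mu_X,\mu_Y)$, which was established in \Cref{sec: k fold dW}.
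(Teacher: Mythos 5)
Your proof is correct. The paper itself gives essentially no argument here: having computed
$\diam_{\mathrm{MCMS}}^{\scriptscriptstyle{(k)}}((\mX,d_X))=\dGW^{\scriptscriptstyle{(k)}}((\mX,d_X),\ast)=\int_X\int_X d_X(x,x')\,\mu_X(dx')\mu_X(dx)$ independently of $k$, it simply says ``by the triangle inequality,'' i.e., it applies the reverse triangle inequality from \Cref{prop:kGWmetric} with the one-point MCMS $\ast$ as the intermediate space. Your primary argument takes a genuinely different (and more elementary) route: for arbitrary $\gamma$ and $\nu^{\scriptscriptstyle{(k)}}_{\bullet,\bullet}$ you rewrite
$\mathrm{dis}^{\scriptscriptstyle{(k)}}\!\lc\gamma,\nu^{\scriptscriptstyle{(k)}}_{\bullet,\bullet}\rc$ as the $L^1(\gamma\otimes\mu^{\scriptscriptstyle{(k)}})$-norm of $d_X-d_Y$ with $\mu^{\scriptscriptstyle{(k)}}=\nu^{\scriptscriptstyle{(k)}}_{\bullet,\bullet}\odot\gamma$ (valid, since the integrand is independent of $(x'',y'')$ and the inner integration against $\gamma$ is exactly the definition of $\odot$), observe that $\gamma\otimes\mu^{\scriptscriptstyle{(k)}}$ has marginals $\mu_X\otimes\mu_X$ and $\mu_Y\otimes\mu_Y$ because $\mu^{\scriptscriptstyle{(k)}}\in\cpl(\mu_X,\mu_Y)$ by the lemma in \Cref{sec: k fold dW}, and then use $|\int f-\int g|\leq\int|f-g|$. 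Taking infima gives the first bound, and the second follows since $\dGW^{\mathrm{MCMS}}\geq\dGW^{\scriptscriptstyle{(1)}}$. What your route buys is independence from \Cref{prop:kGWmetric}: the triangle inequality for $\dGW^{\scriptscriptstyle{(k)}}$ rests on the gluing lemma for $k$-step couplings, which your direct coupling argument never needs; the paper's route is a one-liner only once that machinery is in place. You also correctly flag the triangle-inequality route as the intended alternative, so nothing is missing.
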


\subsection{More details on the complexity of computing the WL distance}\label{sec:algorithm and analysis}

In the following subsections, we provide an algorithm for computing the WL distance and complexity analysis for both computing the WL distance and its lower bound defined in \Cref{sec:1 lower bound for dwlk}. 

\subsubsection{Computation of the WL distance}
In this section, we devise an algorithm (with pseudocode in \Cref{alg:dWL}) for computing $\dWLk$ and establish the following complexity analysis.

\begin{proposition}
For any fixed $k\in\N$, computing $\dWLk$ between any LMMCs $(\mX,\ell_X)$ and $(\mY,\ell_Y)$ can be achieved in time at most $O(k\,n^5 \log(n))$ where $n = \max(|X|, |Y|)$.
\end{proposition}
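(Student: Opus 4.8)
The plan is to analyze the natural recursive algorithm suggested by the characterization in \Cref{thm:dwl= dwk} (or directly by \Cref{def:WL metric}), and to bound the cost of each of the $k$ layers by $O(n^5\log n)$. Recall that $\dWLk$ between $(\mX,\ell_X)$ and $(\mY,\ell_Y)$ is a single Wasserstein distance $\dW$ computed in $\prob^{\circ k}(Z)$ between $\mathfrak{L}_k((\mX,\ell_X))$ and $\mathfrak{L}_k((\mY,\ell_Y))$. The key observation (already implicit in the proof of \Cref{thm:dwl= dwk}) is that this top-level $\dW$ can be unrolled: by \Cref{lm:push forward of coupling dW}, $\dW$ between two pushforward measures equals an infimum over couplings of the base measures of the expected ground distance, so it suffices to (i) compute, for every pair $(x,y)\in X\times Y$, the pairwise distance $D_k(x,y) \coloneqq \dW(\WLh{k}{(\mX,\ell_X)}(x), \WLh{k}{(\mY,\ell_Y)}(y))$ in the appropriate hierarchy level, and then (ii) solve one optimal-transport problem on the $|X|\times|Y|$ cost matrix $\big(D_k(x,y)\big)$ against the marginals $\mu_X,\mu_Y$.

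First I would set up the recursion for the matrices $\big(D_i(x,y)\big)_{x\in X,y\in Y}$, $i=0,1,\dots,k$. The base case $D_0(x,y)=d_Z(\ell_X(x),\ell_Y(y))=|\ell_X(x)-\ell_Y(y)|$ (for $Z=\R$) costs $O(n^2)$. For the inductive step, by \Cref{eq:optimal nu} in the proof of \Cref{thm:dwl= dwk}, $D_{i}(x,y)=\dW(\WLh{i}{\cdot}(x),\WLh{i}{\cdot}(y))$ equals the optimal transport cost between the measures $m_x^X$ on $X$ and $m_y^Y$ on $Y$ with ground cost given by the previously computed matrix $D_{i-1}$. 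Thus each entry $D_i(x,y)$ is obtained by solving a single OT problem on an $|X|\times|Y|$ ground-cost matrix; there are $O(n^2)$ such entries to fill per layer, and there are $k$ layers. Next I would invoke a standard bound on the cost of one optimal-transport / min-cost-flow computation on a bipartite instance with $O(n)$ nodes on each side: this can be done in $O(n^3\log n)$ time (e.g.\ via a network-simplex / shortest-augmenting-path min-cost-flow, or Orlin's algorithm, applied to the transportation polytope with $\le n$ sources and $\le n$ sinks). Multiplying: one layer costs $O(n^2)\cdot O(n^3\log n)=O(n^5\log n)$, and $k$ layers cost $O(k\,n^5\log n)$; the final top-level OT adds only another $O(n^3\log n)$, which is absorbed. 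This matches the claimed bound.

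I would also note the two reductions that make the recursion legitimate: (a) the measures $m_x^X$, $m_y^Y$, $\mu_X$, $\mu_Y$ are all finitely supported on $\le n$ points, so at no stage does one need to represent elements of the infinite-dimensional spaces $\prob^{\circ i}(Z)$ explicitly — one only ever manipulates the finite cost matrices $D_i$ and the finite transition matrices $M_\mX$, $M_\mY$; (b) \Cref{lm:mble optimal coupling} guarantees the optimal couplings used in \Cref{eq:optimal nu} exist and can be chosen measurably, so the recursion computes exactly $\dWLk$ and not merely an upper bound. The pseudocode (Algorithm~\ref{alg:dWL}, referenced in the excerpt) implements precisely this: initialize $D_0$, then for $i=1,\dots,k$ fill $D_i$ by $n^2$ calls to an OT solver with cost matrix $D_{i-1}$, and finally return one OT value on cost matrix $D_k$ with marginals $\mu_X,\mu_Y$.

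The main obstacle — and the only place requiring care rather than bookkeeping — is pinning down the exact per-OT-call complexity and making sure it is uniform across layers. The cost matrices $D_{i-1}$ have real (not integer) entries, so one must appeal to a strongly polynomial min-cost-flow algorithm rather than a scaling algorithm whose running time depends on bit-length; the $O(n^3\log n)$ figure should be justified by citing such an algorithm for the transportation problem with $O(n)$ supply/demand nodes and $O(n^2)$ edges. A secondary subtlety is that the supports of the measures $m_x^X$ can have size up to $n$ and vary with $x$, so each OT instance genuinely has $\Theta(n^2)$ edges in the worst case; one should confirm this does not inflate the bound beyond $O(n^3\log n)$ per call. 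Once these two points are settled, the remaining argument is a direct multiplication of counts ($k$ layers $\times$ $n^2$ entries $\times$ $n^3\log n$ per entry) and is routine.
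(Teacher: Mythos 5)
Your proposal is correct and follows essentially the same route as the paper's own argument: build the $|X|\times|Y|$ cost matrices $C_i(x,y)=\dW\!\lc\WLh{i}{(\mX,\ell_X)}(x),\WLh{i}{(\mY,\ell_Y)}(y)\rc$ recursively, filling each of the $O(n^2)$ entries per layer by one OT solve with cost matrix $C_{i-1}$ and marginals $m_x^X,m_y^Y$ at $O(n^3\log n)$ per solve, then finish with a single OT against $\mu_X,\mu_Y$. Your added remarks on strongly polynomial OT solvers and measurable selection of optimal couplings are sensible refinements but do not change the argument, which the paper supports by citing the $O(n^3\log n)$ transportation bound directly.
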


Recall from \Cref{eq:definition of dwlk} that the WL distance of depth $k$ is defined as 
\begin{align*}
    \dWLk\!\lc(\mX,\ell_X),(\mY,\ell_Y)\rc &\coloneqq\dW\!\lc \lc \WLh{k}{(\mX,\ell_X)}\rc_\#\mu_X,\lc \WLh{k}{(\mY,\ell_Y)}\rc_\#\mu_Y\rc\\
    &=\inf_{\gamma\in\cpl(\mu_X,\mu_Y)}\int\limits_{X\times Y}\dW\!\lc \WLh{k}{(\mX,\ell_X)}(x), \WLh{k}{(\mY,\ell_Y)}(y)\rc\gamma(dx\times dy).
\end{align*}
In order to compute $\dWLk\!\lc(\mX,\ell_X),(\mY,\ell_Y)\rc $, we must first compute $\dW(\WLh{k}{(\mX,\ell_X)}(x), \WLh{k}{(\mY,\ell_Y)}(y))$ for each $x \in X$ and $y \in Y$. To do this, we introduce some notation. For each $i=1,\ldots,k$, we let $C_i$ denote the $|X|\times |Y|$ matrix such that for each $x \in X$ and $y \in Y$, 
$$C_{i}(x,y)\coloneqq\dW\!\lc\WLh{i}{(\mX,\ell_X)}(x), \WLh{i}{(\mY,\ell_Y)}(y)\rc.$$
We also let $C_0$ denote the matrix such that $C_{0}(x,y)\coloneqq\left\|\ell_X(x)-\ell_Y(y)\right\|$ for each $x\in X$ and $y\in Y$.
Then, our task is to compute the matrix $C_k$. For this purpose, we consecutively compute the matrix $C_i$ for $i=1,\ldots,k$: Given matrix $C_{i-1}$, since $\WLh{i}{(\mX,\ell_X)}(x) = \lc\WLh{i-1}{(\mX,\ell_X)}\rc_\# m_x^X$ and $\WLh{i}{(\mY,\ell_Y)}(y) = \lc\WLh{i-1}{(\mY,\ell_Y)}\rc_\# m_y^Y$, computing $$\dW\!\lc\WLh{i}{(\mX,\ell_X)}(x), \WLh{i}{(\mY,\ell_Y)}(y)\rc=\inf_{\gamma\in\cpl(m_x^X,m_y^Y)}\int\limits_{X\times Y}\dW\!\lc \WLh{i-1}{(\mX,\ell_X)}(x), \WLh{i-1}{(\mY,\ell_Y)}(y)\rc\gamma(dx\times dy).$$ is reduced to solving the optimal transport problem with $C_{i - 1}$ as the cost matrix and $m_x^X$ and $m_y^Y$ as the source and target distributions, which can be done in $O(n^3\log(n))$ time \cite{pele2009fast}.  Thus, for each $i$, computing $C_i$ given that we know $C_{ i- 1}$, requires $O(n^2 \cdot n^3 \log(n))$. Finally, we need $O(n^3\log(n))$ time to compute $\dWLk\!\lc(\mX,\ell_X),(\mY,\ell_Y)\rc$ based on solving an optimal transport problem with cost matrix $C_k$ and with $\mu_X$ and $\mu_Y$ being the source and target distributions, respectively.

Therefore, the total time needed to compute $\dWLk\!\lc(\mX,\ell_X),(\mY,\ell_Y)\rc$ is 
$$k\cdot O(n^5 \log(n))+O(n^3\log(n))=O(k\,n^5 \log(n)).$$

For any $n\in \N$, $\dWL^{\scriptscriptstyle{(2n)}}$ generates a distance between graph induced LMMCs with size bounded by $n$. By \Cref{cor:dWL convergence}, $\dWL^{\scriptscriptstyle{(2n)}}$ has the same discriminating power as the WL test in separating graphs with size bounded by $n$. Now, given labeled graphs $(G_1,\ell_{G_1})$ and $(G_2,\ell_{G_2})$ so that $\max\lc|V_{G_1}|,|V_{G_2}|\rc\leq n$, computing $\dWL^{\scriptscriptstyle{(2n)}}\!\lc(\mX_q(G_1),\ell_{G_1}),(\mX_q(G_2),\ell_{G_2})\rc$ takes time at most $O(n^6 \log(n))$.

\begin{algorithm}[htb]
\caption{$\dWLk$ computation}
\begin{algorithmic}[1]
\STATE \textbf{Input:} The depth $k\in\N$, and two finite LMMCs $\lc X = \left\{ x_1, x_2, ..., x_n \right\},m_\bullet^X,\mu_X,\ell_X:X\rightarrow\R\rc$ and $\lc Y = \left\{ y_1, ..., y_m \right\},m_\bullet^Y,\mu_Y,\ell_Y:Y\rightarrow\R\rc$
\STATE \textbf{Initialization:} $P = C =\mathrm{zeros}(n,m)$
\FOR{$i\in[n],j\in[m]$}
\STATE $P(i,j) = |\ell_X(x_i)-\ell_Y(y_j)|$
\ENDFOR

\FOR{$l\in[k]$}
\FOR{$i\in[n],j\in[m]$}
\STATE $C(i,j)=\inf_{\gamma\in\cpl\lc m_{x_i}^X,m_{y_j}^Y\rc}\sum_{a\in[n],b\in[m]} P(a,b)\gamma(a,b)$
\ENDFOR

\STATE $P=C$
\ENDFOR

$D=\inf_{\gamma\in\cpl\lc \mu_X,\mu_Y\rc}\sum_{i\in[n],j\in[m]} C(i,j)\gamma(i,j)$

\STATE \textbf{Output:}  $D$
\end{algorithmic}
\label{alg:dWL}
\end{algorithm}

\subsubsection{Computation of the lower bound distance}

Recall from \Cref{sec:1 lower bound for dwlk} that the WL lower bound distance was defined as 
\begin{equation*}
    d_{\mathrm{WLLB}}^{\scriptscriptstyle{(k)}}\!\lc(\mX,\ell_X),(\mY,\ell_Y)\rc\coloneqq \!\inf_{\gamma\in \cpl(\mu_X,\mu_Y)}\!\!\! \int\limits_{X\times Y}\!\!\!\!\dW\!\lc (\ell_X)_\#m_x^{X,\otimes k},(\ell_Y)_\#m_y^{Y,\otimes k}\rc \!\gamma(dx\times dy).
\end{equation*}
Given two finite LMMCs,
$$\lc\mX,\ell_X:X\rightarrow\R\rc\text{ where }\mX=\lc X=\left\{ x_1, x_2, ..., x_n \right\},m_\bullet^X,\mu_X\rc$$
and
$$\lc\mY,\ell_Y:Y\rightarrow\R\rc\text{ where }\mY=\lc Y = \left\{ y_1,y_2, ..., y_m \right\},m_\bullet^Y,\mu_Y\rc$$
we represent their Markov kernels as two transition matrices, $M_{\mX }$ and $M_{\mY}$, respectively. Then, $k$-Markov kernels $m_\bullet^{X,\otimes k}$ and $m_\bullet^{Y,\otimes k}$ are expressed as matrices $M_{\mX }^k$ and $M_{\mY}^k$, respectively. Assume that $n \geq m$. Then computing the $k$-Markov kernels of $\mX$ and $ \mY$ will require $O(n^3\log(k))$ time where $O(n^3)$ is time needed for matrix multiplication. Then since $(\ell_X)_\# m_x^{X, \otimes k}$ and $(\ell_Y)_\# m_y^{Y, \otimes k}$ are both distributions in $\R$, by \cite{vallender1974calculation}, $\dW\!\lc (\ell_X)_\#m_x^{X,\otimes k},(\ell_Y)_\#m_y^{Y,\otimes k}\rc$ can be computed in $O(n)$ time for each $x \in X$ and $y \in Y$. Finally, computing $d_{\mathrm{WLLB}}^{\scriptscriptstyle{(k)}}$ can be formulated as finding the optimal transport cost where each entry of the cost matrix is defined as $\dW\!\lc (\ell_X)_\#m_x^{X,\otimes k},(\ell_Y)_\#m_y^{Y,\otimes k}\rc$ and the source and target distributions are $\mu_X, \mu_Y$ respectively. Recall from the previous section that $\mu_X$ and $\mu_Y$ are normalized degree distributions for $\mX$ and $\mY$. Therefore, the overall time complexity is 
$$O(n^3\log(k) ) + O(n^3 \log(n))=O(n^3 \log(kn)).$$

\subsection{Experiments} \label{appendix:experiments}

\subsubsection{Experimental setup}
We use several publicly available graph benchmark datasets 
from TUDatasets \cite{Morris+2020} and evaluate the performance of our WL distance $\dWLk$ distance as well as $d_{\mathrm{WLLB}}^{\scriptscriptstyle{(k)}}$ (lower bound of our $\dWL$ which is more efficient to compute) through two types of graph classification experiments compared with several representative methods. Note that for all of our experiments, we use $q = 0.6$ to transform every graph $G$ into the Markov chain $\mX_q(G)$.
We use 1-Nearest Neighbors classifier in the first graph classification experiment and for the second experiment, we use support vector machines (SVM). For the first experiment, we compare classification accuracies with the WWL distance \cite{togninalli2019wasserstein} (cf. \Cref{eq:WWL distance}). For the second graph classification task, we run an SVM using the indefinite kernel matrices $\exp\lc-\gamma d_{\mathrm{WLLB}}^{\scriptscriptstyle{(k)}}\rc$ and $\exp\lc-\gamma \dWLk\rc$, which are seen as noisy observations of the true positive semi-definite kernels \cite{luss2009support}. Additionally, for the SVM method, we cross validate the parameter $C \in \{10^{-3}, \dots, 10^3\}$ and the parameter $\gamma \in \{10^{-3}, \dots, 10^3\}$. We compare classification accuracies with the WWL kernel \cite{togninalli2019wasserstein}, the WL kernel \cite{shervashidze2011weisfeiler}, and the Weisfeiler-Lehman optimal assignment kernel (WL-OA) \cite{kriege2016valid}. Note that we only use WWL distance in the 1-NN graph classification experiment since the WL-OA and WL kernels are not defined in terms of a distance unlike the WWL kernel.

In addition to the full accuracies for $k \in \{1, 2, 3, 4\}$ for $\dWLk$ and $d_{\mathrm{WLLB}}^{\scriptscriptstyle{(k)}}$ with the degree label, call this $f_1$, we also evaluate $\dWLk$ and $d_{\mathrm{WLLB}}^{\scriptscriptstyle{(k)}}$ with the label function $f_2(G,v) = \frac{1}{|V_G|} + \deg_G(v)$ for any graph $G$ and vertex $v\in V_G$. Note that $f_2$ is a relabeling of any label function, assigning any constant $c$ to each vertex, via the injective map $g:\{c\}\times\N\times\N\rightarrow\R$ sending $(c,n_1,n_2)$ to $n_1+\frac{1}{n_2}$ as described in \Cref{sec:comparison with WL test}. So under $f_2$, $\dWLk$ is as discriminative as the $k$-step WL test. Thus, we also evaluate the performance of the WWL distance/kernel, WL, and WL-OA kernels using only degree label. Additionally, we report the best accuracies for WWL, WL, and WL-OA for iterations $1, \dots, 4$.

\subsubsection{Extra experimental results}
In \Cref{tab:full nn experiments} and \Cref{tab:full svm}, we have included the 1-NN and SVM classification accuracies for $k \in \{1, 2, 3, 4\}$, respectively. 
\begin{table*}[t]
\caption{1-Nearest Neighbor classification accuracy. Let  $f_1(G,v) = \deg_G(v)$, $f_2(G,v) = \frac{1}{|V_G|} + \deg_G(v)$}
\label{tab:full nn experiments}
\vskip 0.15in
\begin{center}
\resizebox{\columnwidth}{!}{\begin{tabular}{lccccccc}
\toprule
Method & MUTAG  & PROTEINS & PTC-FM & PTC-MR & IMDB-B & IMDB-M & COX2\\
\midrule
$d_{\mathrm{WL}}^{\scriptscriptstyle{(1)}}$, $f_1$ & 90.5 $\pm$ 6.5 & 61.8 $\pm$ 4.3 & 60.0 $\pm$ 8.5 & 53.9 $\pm$ 7.1 & 70.1 $\pm$ 4.7 & 41.1 $\pm$ 3.9 & 73.8 $\pm$ 3.6\\
$d_{\mathrm{WL}}^{\scriptscriptstyle{(2)}}$, $f_1$ & 92.1 $\pm$ 6.3 & 60.8 $\pm$ 4.4 & 62.2 $\pm$ 7.4 & 56.2 $\pm$ 6.3 & 69.9 $\pm$ 4.2 & 41.1 $\pm$ 4.7 & 74.2 $\pm$ 4.5\\
$d_{\mathrm{WL}}^{\scriptscriptstyle{(3)}}$, $f_1$ & 91.1 $\pm$ 4.3 & 60.8 $\pm$ 3.5 & 59.4 $\pm$ 8.2 & 54.0 $\pm$ 7.7 & 69.4 $\pm$ 3.9 & 41.0 $\pm$ 4.8 & 74.2 $\pm$ 3.9\\
$d_{\mathrm{WL}}^{\scriptscriptstyle{(4)}}$, $f_1$ & 90.1 $\pm$ 4.8 & 63.0 $\pm$ 3.8 & 59.1 $\pm$ 8.3 & 54.2 $\pm$ 6.8 & 70.2 $\pm$ 4.3 & 41.3 $\pm$ 4.8 & 76.1 $\pm$ 5.5 \\
\midrule
$d_{\mathrm{WL}}^{\scriptscriptstyle{(1)}}$, $f_2$ & 91.6 $\pm$ 7.1 & 63.3 $\pm$ 4.4 & 57.5 $\pm$ 6.0 & 51.9 $\pm$ 9.3 & \textbf{71.4 $\pm$ 4.5} & 40.6 $\pm$ 5.3 & 72.5 $\pm$ 4.5\\
$d_{\mathrm{WL}}^{\scriptscriptstyle{(2)}}$, $f_1$ & 91.1 $\pm$ 5.8 & 62.4 $\pm$ 3.4 & 58.2 $\pm$ 8.2 & 56.2 $\pm$ 7.6 & 70.4 $\pm$ 4.5 & \textbf{41.6 $\pm$ 4.3} & 74.0 $\pm$ 4.7\\
$d_{\mathrm{WL}}^{\scriptscriptstyle{(3)}}$, $f_1$ & 91.5 $\pm$ 5.8 & 63.4 $\pm$ 3.9 & 58.5 $\pm$ 7.9 & 53.4 $\pm$ 8.4 & \textbf{71.4 $\pm$ 5.9} & 40.6 $\pm$ 4.3 & 74.6 $\pm$ 4.4\\
$d_{\mathrm{WL}}^{\scriptscriptstyle{(4)}}$, $f_1$ & \textbf{92.6 $\pm$ 4.8} & 63.3 $\pm$ 4.9 & 58.5 $\pm$ 8.0 & 54.8 $\pm$ 7.9 & 71.2 $\pm$ 5.1 & 40.7 $\pm$ 4.8 & 75.9 $\pm$ 4.9 \\
\midrule
$d_{\mathrm{WLLB}}^{\scriptscriptstyle{(1)}}$, $f_1$ & 87.3 $\pm$ 1.9 & 64.0 $\pm$ 2.3 & \textbf{62.5 $\pm$ 8.5} & 57.4 $\pm$ 6.8 & 69.0 $\pm$ 3.9 & 40.6 $\pm$ 3.8 & 75.1 $\pm$ 3.8\\
$d_{\mathrm{WLLB}}^{\scriptscriptstyle{(2)}}$, $f_1$ & 86.8 $\pm$ 3.7 &  \textbf{66.2 $\pm$ 2.2} & 60.0 $\pm$ 8.1 & 53.4 $\pm$ 6.4 & 69.4 $\pm$ 3.2 & 40.1 $\pm$ 3.6 & 75.1 $\pm$ 3.8\\
$d_{\mathrm{WLLB}}^{\scriptscriptstyle{(3)}}$, $f_1$ & 85.2 $\pm$ 3.5 & 64.6 $\pm$ 2.2 & 58.0 $\pm$ 1.1 & 54.5 $\pm$ 9.1 & 69.8 $\pm$ 3.3 & 40.1 $\pm$3.9 & \textbf{81.2 $\pm$ 5.3}\\
$d_{\mathrm{WLLB}}^{\scriptscriptstyle{(4)}}$, $f_1$ & 84.7 $\pm$ 3.1 & 65.4 $\pm$ 2.3 & 58.0 $\pm$ 1.1 & 52.0 $\pm$ 9.1 & 69.9 $\pm$ 2.5 & 40.1 $\pm$3.6 & 80.4 $\pm$ 2.3 \\
\midrule
$d_{\mathrm{WLLB}}^{\scriptscriptstyle{(1)}}$, $f_2$ & 87.3 $\pm$ 2.5 & 64.7 $\pm$ 1.4 & \textbf{62.5 $\pm$ 7.4} & \textbf{57.8 $\pm$ 6.8} & 69.0 $\pm$ 3.9 & 40.4 $\pm$ 3.6 & 75.5 $\pm$ 3.7\\
$d_{\mathrm{WLLB}}^{\scriptscriptstyle{(2)}}$, $f_2$ & 86.3 $\pm$ 3.6 & 65.6 $\pm$ 2.2 & 60.0 $\pm$ 8.1 & 53.4 $\pm$ 6.4 & 69.2 $\pm$ 3.2 & 40.2 $\pm$ 3.6 & 77.0 $\pm$ 4.9\\
$d_{\mathrm{WLLB}}^{\scriptscriptstyle{(3)}}$, $f_2$ & 85.3 $\pm$ 3.6 & 64.3 $\pm$ 1.1 & 58.0 $\pm$ 10.8 & 54.7 $\pm$ 9.1 & 69.7 $\pm$ 3.1 & 40.1 $\pm$ 3.9 & 80.4 $\pm$ 4.4\\
$d_{\mathrm{WLLB}}^{\scriptscriptstyle{(4)}}$, $f_2$ & 84.7 $\pm$ 3.0 & 64.8 $\pm$ 1.8 & 58.0 $\pm$ 10.9 & 52.0 $\pm$ 9.2 & 69.4 $\pm$ 2.5 & 40.2 $\pm$ 3.8 & 80.4 $\pm$ 4.4\\
\bottomrule
WWL & 85.1 $\pm$ 6.5 & 64.7 $\pm$ 2.8 & 58.2 $\pm$ 8.5 & 54.3 $\pm$ 7.9 & 65.0 $\pm$ 3.3 & 40.0 $\pm$ 3.3 &  76.1 $\pm$ 5.6 \\
\bottomrule
\end{tabular}}
\end{center}
\vskip -0.1in
\end{table*}

\begin{table*}[t]
\caption{SVM classification accuracy. Let $f_1(G,v) = \deg_G(v)$, $f_2(G,v) = \frac{1}{|V_G|} + \deg_G(v)$}
\label{tab:full svm}
\vskip 0.15in
\begin{center}
\resizebox{\columnwidth}{!}{\begin{tabular}{lcccccccr}
\toprule
Method & MUTAG  & PROTEINS & PTC-FM  & PTC-MR & IMDB-B & IMDB-M & COX2\\
\midrule
$d_{\mathrm{WL}}^{\scriptscriptstyle{(1)}}$, $f_1$ & 87.7 $\pm$ 6.2 & 71.7 $\pm$ 3.3 & 57.6 $\pm$ 6.4 & 55.5 $\pm$ 4.5 & 74.5 $\pm$ 4.1 & 51.3 $\pm$ 3.0 & 78.1 $\pm$ 0.8\\
$d_{\mathrm{WL}}^{\scriptscriptstyle{(2)}}$, $f_1$ & 89.9 $\pm$ 6.4 & 71.3 $\pm$ 3.3 & 59.3 $\pm$ 3.7 & 54.9 $\pm$ 6.3 & 75.0 $\pm$ 3.0 & 51.4 $\pm$ 3.4 & 78.1 $\pm$ 0.8\\
$d_{\mathrm{WL}}^{\scriptscriptstyle{(3)}}$, $f_1$ & 87.6 $\pm$ 8.8 & 72.6 $\pm$ 3.1 & 59.0 $\pm$ 3.7 & 57.8 $\pm$ 7.9 & 74.9 $\pm$ 5.1 & 51.6 $\pm$ 4.0 & 78.1 $\pm$ 0.8\\
$d_{\mathrm{WL}}^{\scriptscriptstyle{(4)}}$, $f_1$ & 87.7 $\pm$ 4.1 & 72.4 $\pm$ 4.1 & 62.1 $\pm$ 3.9 & 56.7 $\pm$ 3.7 & 75.9 $\pm$ 2.7 & 51.4 $\pm$ 3.2 & 78.1 $\pm$ 0.8\\
\midrule
$d_{\mathrm{WL}}^{\scriptscriptstyle{(1)}}$, $f_3$ & 87.3 $\pm$ 8.2 & 71.1 $\pm$ 3.2 & 58.7 $\pm$ 7.6 & 55.2 $\pm$ 5.5 & 74.1 $\pm$ 4.1 & 50.5 $\pm$ 4.5 & 78.1 $\pm$ 0.8\\
$d_{\mathrm{WL}}^{\scriptscriptstyle{(2)}}$, $f_3$ & 86.2 $\pm$ 7.4 & 73.5 $\pm$ 2.8  & 60.2 $\pm$ 5.3 & 54.0 $\pm$ 6.4 & 75.0 $\pm$ 4.5 & 51.4 $\pm$ 3.9 & 78.1 $\pm$ 0.8\\
$d_{\mathrm{WL}}^{\scriptscriptstyle{(3)}}$, $f_3$ & 88.8 $\pm$ 5.4 & \textbf{74.5 $\pm$ 2.9} & 61.6 $\pm$ 5.3 & 59.2 $\pm$ 6.4 & 75.4 $\pm$ 5.4 & 50.8 $\pm$ 4.0 & 77.0 $\pm$ 1.5\\
$d_{\mathrm{WL}}^{\scriptscriptstyle{(4)}}$, $f_3$ & 87.2 $\pm$ 5.8 & 73.9 $\pm$ 3.5 & 60.4 $\pm$ 5.1 & 54.3 $\pm$ 7.7 & 75.7 $\pm$ 3.7 & 50.8 $\pm$ 3.2 & 78.1 $\pm$ 0.8\\
\midrule
$d_{\mathrm{WLLB}}^{\scriptscriptstyle{(1)}}$, $f_1$ & 87.9 $\pm$ 5.9 &  68.0 $\pm$ 1.3 & 59.6 $\pm$ 6.4 & 57.4 $\pm$ 8.1 & 74.7 $\pm$ 2.5 & \textbf{52.0 $\pm$ 1.8}  & 78.1 $\pm$ 0.8\\
$d_{\mathrm{WLLB}}^{\scriptscriptstyle{(2)}}$, $f_1$ & 89.4 $\pm$ 5.2 &  68.9 $\pm$ 1.9 & 58.6 $\pm$ 5.7 & 59.0 $\pm$ 8.3 & \textbf{75.1 $\pm$ 2.2} & 50.8 $\pm$ 1.6 & 78.1 $\pm$ 0.8\\
$d_{\mathrm{WLLB}}^{\scriptscriptstyle{(3)}}$, $f_1$ & \textbf{90.0 $\pm$ 5.6} &  68.6 $\pm$ 1.6 & 57.3 $\pm$ 6.2 & 58.7 $\pm$ 8.1 & 75.2 $\pm$ 2.1 & 51.0 $\pm$ 1.6 & 78.1 $\pm$ 0.8\\
$d_{\mathrm{WLLB}}^{\scriptscriptstyle{(4)}}$, $f_1$ & 89.4 $\pm$ 5.2 &  66.7 $\pm$ 2.0 & 58.2 $\pm$ 6.1 & 56.6 $\pm$ 7.2 & 74.5 $\pm$ 2.0 & 50.3 $\pm$ 1.4 & 77.5 $\pm$ 2.1\\
\midrule
$d_{\mathrm{WLLB}}^{\scriptscriptstyle{(1)}}$, $f_2$ & 88.9 $\pm$ 4.5 & 70.5 $\pm$ 1.0 & 60.5 $\pm$ 5.4 & 56.6 $\pm$ 8.0 & 75.0 $\pm$ 2.5 & \textbf{52.0 $\pm$ 1.8} & 78.1 $\pm$ 0.8\\
$d_{\mathrm{WLLB}}^{\scriptscriptstyle{(2)}}$, $f_2$ & \textbf{90.0 $\pm$ 4.2} & 70.0 $\pm$ 1.4 & 58.0 $\pm$ 5.6 & 58.5 $\pm$ 8.9 & 75.1 $\pm$ 2.2 & 50.8 $\pm$ 1.6 & 78.1 $\pm$ 0.8 \\
$d_{\mathrm{WLLB}}^{\scriptscriptstyle{(3)}}$, $f_2$ & \textbf{90.0 $\pm$ 4.2} & 70.3 $\pm$ 2.4 & 56.4 $\pm$ 6.2 & 58.8 $\pm$ 7.8 & \textbf{75.2 $\pm$ 2.1} & 51.0 $\pm$ 1.6 & 77.9 $\pm$ 1.3 \\
$d_{\mathrm{WLLB}}^{\scriptscriptstyle{(4)}}$, $f_2$ & \textbf{90.0 $\pm$ 4.2} & 70.2 $\pm$ 1.8 & 58.3 $\pm$ 5.6 & 58.2 $\pm$ 6.9 & 74.5 $\pm$ 2.0 & 50.3 $\pm$ 1.4 & 78.2 $\pm$ 0.8 \\
\bottomrule
WWL & 85.3 $\pm$ 7.3 & 72.9 $\pm$ 3.6 & \textbf{62.2 $\pm$ 6.1} & \textbf{63.0 $\pm$ 7.4} & 72.5 $\pm$ 3.7 & 50.0 $\pm$ 5.3 & 78.2 $\pm$ 0.8\\
WL & 85.5$\pm$ 1.6 & 71.6 $\pm$ 0.6 & 56.6 $\pm$ 2.1 & 56.2 $\pm$ 2.0 & 72.4 $\pm$ 0.7 & 50.9 $\pm$ 0.4 & 78.4 $\pm$ 1.1\\
WL-OA & 86.3 $\pm$ 2.1 & 72.6 $\pm$ 0.7 & 58.4 $\pm$ 2.0 & 54.2 $\pm$ 1.6 & 73.0 $\pm$ 1.1 & 50.2 $\pm$ 1.1 & 78.8 $\pm$ 1.3\\
\bottomrule
\end{tabular}}
\end{center}
\vskip -0.1in
\end{table*}

\subsubsection{Time comparison} \label{subsec:time comparison}
We compare the runtimes of $\dWLk$ and $d_{\mathrm{WLLB}}^{\scriptscriptstyle{(k)}}$ for $k = 1, 2$. For our runtime comparisons, we use LMMCs induced by Erd\"os-Renyi graphs of sizes varying from 5 nodes to 100 nodes (with the degree label function and $q=0.6$). Note that while the runtime for $d_{\mathrm{WLLB}}^{\scriptscriptstyle{(k)}}$ does not change much between $k = 1$ and $k = 2$, the $\dWLk$ distance shows a significant increase in the time needed to compute distance between two graphs from $k = 1$ to $k = 2$.

\begin{figure}
    \centering
    \includegraphics[scale=0.5]{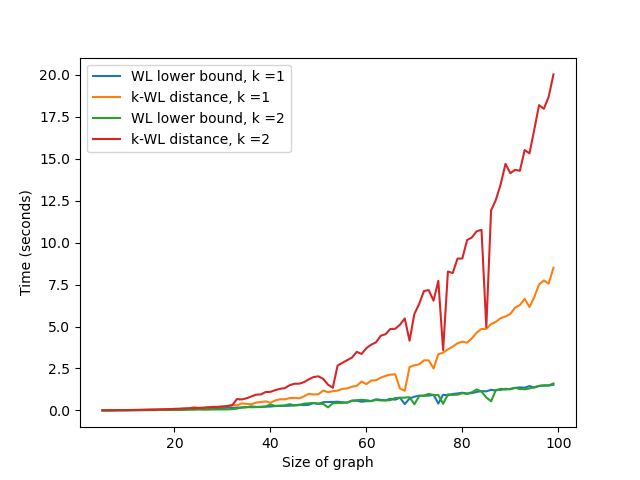}
    \caption{Comparison of runtime of WWL distance against $\dWLk$ and its lower bound $d_\mathrm{WLLB}^{\scriptscriptstyle{(k)}}$. }
    \label{fig:time comparison}
\end{figure}

\section{Proofs}
\subsection{Proofs from \Cref{sec:pre}}
\subsubsection{Proof of \Cref{prop:iso of graphs}}
The ``only if" part is obvious. To prove the ``if" part, we assume $(\mX_q(G_1),\ell_{G_1})$ is isomorphic to $(\mX_q(G_2),\ell_{G_2})$. Then, there exists a bijective map $\psi:V_{G_1}\rightarrow V_{G_2}$ such that $\psi_\# m_v^{G_1,q} = m_{\psi(v)}^{G_2,q},\, \psi_\#\mu_{G_1}=\mu_{G_2}$ and $\ell_{G_1}(v)=\ell_{G_2}(\psi(v))$ for all $v\in V_{G_1}$. Now, by the definition of $m_\bullet^{G_1,q}$ and $m_\bullet^{G_2,q}$ (cf. \Cref{def:markov chain on graphs}), one can easily check that
$$\deg_{G_1}(v)=0\Leftrightarrow m_v^{G_1,q}(v)=m_{\psi(v)}^{G_2,q}(\psi(v))=1\Leftrightarrow\deg_{G_2}(\psi(v))=0.$$

So, consider the case when $m_v^{G_1,q}(v)<1$. This implies $\deg_{G_1}(v)>0$ and $\deg_{G_2}(\psi(v))>0$. In this case, again by the definition of $m_\bullet^{G_1,q}$ and $m_\bullet^{G_2,q}$, one can show that
$$v,v'\in V_{G_1}\text{ are adjacent}\Leftrightarrow m_v^{G_1,q}(v')=m_{\psi(v)}^{G_2,q}(\psi(v'))>0\Leftrightarrow \psi(v),\psi(v')\in V_{G_2}\text{ are adjacent.}$$

Hence, $G_1$ and $G_2$ are isomorphic as we required.
\subsection{Proofs from \Cref{sec:WL distance}}

\subsubsection{Proof of the claim in \Cref{ex:k=0 and 1}}
By \Cref{lm:push forward of coupling dW} we have that
\begin{align*}
    \dWL^{\scriptscriptstyle{(1)}}((\mX,\ell_X),(\mY,\ell_Y))&=\dW\!\lc\mathfrak{L}_1((\mX,\ell_X)),\mathfrak{L}_1((\mY,\ell_Y))\rc\\
    &=\dW\!\lc\lc\WLh{1}{(\mX,\ell_X)}\rc_\#\mu_X,\lc\WLh{1}{(\mY,\ell_Y)}\rc_\#\mu_Y\rc\\
    &=\inf_{\gamma\in \cpl(\mu_X,\mu_Y)} \int\limits_{X\times Y}\dW\!\lc \WLh{1}{(\mX,\ell_X)}(x),\WLh{1}{(\mY,\ell_Y)}(y)\rc \,\gamma(dx\times dy)\\
    &=\inf_{\gamma\in \cpl(\mu_X,\mu_Y)} \int\limits_{X\times Y}\dW\!\lc (\ell_X)_\#m_x^X,(\ell_Y)_\#m_y^Y\rc \,\gamma(dx\times dy).
\end{align*}
\subsubsection{Proof of \Cref{coro:hierarchy dwlk}}
This proposition follows directly from \Cref{lm:hierarchy of k-fold couplings} and \Cref{thm:dwl= dwk}.

\subsubsection{Proof of \Cref{prop:dwl is a pseudometric}}
It is obvious that when $(\mX,\ell_X)$ is isomorphic to $(\mY,\ell_Y)$, $\dWLk\!\lc(\mX,\ell_X),(\mY,\ell_Y)\rc =0$ for all $k\in\mathbb{N}$ and thus $\dWL\!\lc(\mX,\ell_X),(\mY,\ell_Y)\rc =0$. It follows directly from Equation \eqref{eq:definition of dwlk} that $\dWLk$ satisfies the triangle inequality. Hence, $\dWL:=\sup_{k\geq 1}\dWLk$ also satisfies the triangle inequality.

\subsubsection{Proof of \Cref{lm:two labels are the same}}
We first assume that the WL test cannot distinguish $(G_1,\ell_{G_1})$ and $(G_2,\ell_{G_2})$, i.e., $L_k((G_1,\ell_{G_1}))=L_k((G_2,\ell_{G_2}))$ for all $k=0,1,\ldots$. 
We then prove that $L_k((G_1,\ell_{G_1}^g))=L_k((G_2,\ell_{G_2}^g))$ for all $k=0,1,\ldots$. The assumption $L_k((G_1,\ell_{G_1}))=L_k((G_2,\ell_{G_2}))$ for all $k=0,1,\ldots$ immediately implies that $|V_{G_1}|=|V_{G_2}|$. Then, it suffices to show that for any $v_1\in V_{G_1}$ and $v_2\in V_{G_2}$
\begin{equation}\label{eq:lkg=lkg}
\WLhor{k+1}{(G_1,\ell_{G_1})}(v_1)=\WLhor{k+1}{(G_2,\ell_{G_2})}(v_2)\Longrightarrow \WLhor{k}{(G_1,\ell_{G_1}^g)}(v_1)=\WLhor{k}{(G_2,\ell_{G_2}^g)}(v_2),\,\,\forall k=0,1,\ldots.
\end{equation}

We prove Equation \eqref{eq:lkg=lkg} by induction on $k$. When $k=0$, for any $v_1\in V_{G_1}$ and $v_2\in V_{G_2}$, if $\ell^1_{(G_1,\ell_{G_1})}(v_1)=\ell^1_{(G_2,\ell_{G_2})}(v_2)$, then
$$(\ell_{G_1}(v_1),\mbl \ell_{G_1}(v),\,v\in N_{G_1}(v_1)\mbr)=(\ell_{G_2}(v_2),\mbl \ell_{G_2}(v),\,v\in N_{G_2}(v_2)\mbr).$$
It follows that $\ell_{G_1}(v_1)=\ell_{G_2}(v_2)$ and $\deg_{G_1}(v_1)=\deg_{G_2}(v_2)$. Then, by injectivity of $g$, one has that $\ell_{G_1}^g(v_1)=\ell_{G_2}^g(v_2)$. 

Now, we assume that Equation \eqref{eq:lkg=lkg} holds for some $k\geq 0$. For the case of $k+1$, note that $\WLhor{k+2}{(G_1,\ell_{G_1})}(v_1)=\WLhor{k+2}{(G_2,\ell_{G_2})}(v_2)$ implies that
\[\lc \WLhor{k+1}{(G_1,\ell_{G_1})}(v_1),\mbl \WLhor{k+1}{(G_1,\ell_{G_1})}(v),\,v\in N_{G_1}(v_1)\mbr\rc =\lc \WLhor{k+1}{(G_2,\ell_{G_2})}(v_2),\mbl \WLhor{k+1}{(G_2,\ell_{G_2})}(v),\,v\in N_{G_2}(v_2)\mbr\rc.\]
Hence, $\WLhor{k+1}{(G_1,\ell_{G_1})}(v_1)=\WLhor{k+1}{(G_2,\ell_{G_2})}(v_2)$ and there exists a bijection $\psi:N_{G_1}(v_1)\rightarrow N_{G_2}(v_2)$ such that $\WLhor{k+1}{(G_1,\ell_{G_1})}(v)=\WLhor{k+1}{(G_2,\ell_{G_2})}(\psi(v))$ for any $v\in N_{G_1}(v_1)$. By the induction assumption, we then have that $\WLhor{k}{(G_1,\ell_{G_1}^g)}(v_1)=\WLhor{k}{(G_2,\ell_{G_2}^g)}(v_2)$ and  $\WLhor{k}{(G_1,\ell_{G_1}^g)}(v)=\WLhor{k}{(G_2,\ell_{G_2}^g)}(\psi(v))$ for any $v\in N_{G_1}(v_1)$. This implies that
\[\lc \WLhor{k}{(G_1,\ell_{G_1}^g)}(v_1),\mbl \WLhor{k}{(G_1,\ell_{G_1}^g)}(v),\,v\in N_{G_1}(v_1)\mbr\rc =\lc \WLhor{k}{(G_2,\ell_{G_2}^g)}(v_2),\mbl \WLhor{k}{(G_2,\ell^g_{G_2})}(v),\,v\in N_{G_2}(v_2)\mbr\rc \]
and thus $\WLhor{k+1}{(G_1,\ell_{G_1}^g)}(v_1)=\WLhor{k+1}{(G_2,\ell_{G_2}^g)}(v_2)$. Therefore, $L_k\!\lc(G_1,\ell_{G_1}^g)\rc=L_k\!\lc(G_2,\ell_{G_2}^g)\rc$ for all $k=0,1,\ldots$ and thus the WL test cannot distinguish  $(G_1,\ell_{G_1}^g)$ and $(G_2,\ell_{G_2}^g)$.

Conversely, we assume that the WL test cannot distinguish $(G_1,\ell_{G_1}^g)$ and $(G_2,\ell_{G_2}^g)$, i.e., $L_k\!\lc(G_1,\ell_{G_1}^g)\rc=L_k\!\lc(G_2,\ell_{G_2}^g)\rc$ for all $k=0,1,\ldots$. 
We then prove that $L_k\!\lc(G_1,\ell_{G_1})\rc=L_k\!\lc(G_2,\ell_{G_2})\rc$ for all $k=0,1,\ldots$. The proof is similar to the one for the other direction. First, the assumption $L_k\!\lc(G_1,\ell_{G_1}^g)\rc=L_k\!\lc(G_2,\ell_{G_2}^g)\rc$ for all $k=0,1,\ldots$ implies that $|V_{G_1}|=|V_{G_2}|$. Then, it suffices to show that for any $v_1\in V_{G_1}$ and $v_2\in V_{G_2}$
\begin{equation}\label{eq:converse lkg=lkg}
\WLhor{k}{(G_1,\ell_{G_1}^g)}(v_1)=\WLhor{k}{(G_2,\ell_{G_2}^g)}(v_2)\Longrightarrow \WLhor{k}{(G_1,\ell_{G_1})}(v_1)=\WLhor{k}{(G_2,\ell_{G_2})}(v_2),\,\,\forall k=0,1,\ldots.
\end{equation}

We prove Equation \eqref{eq:converse lkg=lkg} by induction on $k$. When $k=0$, for any $v_1\in V_{G_1}$ and $v_2\in V_{G_2}$, if $\ell_{G_1}^g(v_1)=\ell_{G_2}^g(v_2)$, then by injectivity of $g$, we have that $\ell_{G_1}(v_1)=\ell_{G_2}(v_2)$. 

Now, we assume that Equation \eqref{eq:converse lkg=lkg} holds for some $k\geq 0$. For the case of $k+1$, note that $\WLhor{k+1}{(G_1,\ell_{G_1}^g)}(v_1)=\WLhor{k+1}{(G_2,\ell_{G_2}^g)}(v_2)$ implies that
\[\lc \WLhor{k}{(G_1,\ell^g_{G_1})}(v_1),\mbl \WLhor{k}{(G_1,\ell^g_{G_1})}(v),\,v\in N_{G_1}(v_1)\mbr\rc =\lc \WLhor{k}{(G_2,\ell^g_{G_2})}(v_2),\mbl \WLhor{k}{(G_2,\ell^g_{G_2})}(v),\,v\in N_{G_2}(v_2)\mbr\rc.\]

By the induction assumption, it is easy to see that
\[\lc \WLhor{k}{(G_1,\ell_{G_1})}(v_1),\mbl \WLhor{k}{(G_1,\ell_{G_1})}(v),\,v\in N_{G_1}(v_1)\mbr\rc =\lc \WLhor{k}{(G_2,\ell_{G_2})}(v_2),\mbl \WLhor{k}{(G_2,\ell_{G_2})}(v),\,v\in N_{G_2}(v_2)\mbr\rc \]
and thus $\WLhor{k+1}{(G_1,\ell_{G_1})}(v_1)=\WLhor{k+1}{(G_2,\ell_{G_2})}(v_2)$. Therefore, $L_k\!\lc(G_1,\ell_{G_1})\rc=L_k\!\lc(G_2,\ell_{G_2})\rc$ for all $k=0,1,\ldots$ and thus the WL test cannot distinguish  $(G_1,\ell_{G_1})$ and $(G_2,\ell_{G_2})$.

\subsubsection{Proof of \Cref{prop:WL vs dwl}}\label{app:WL vs dwl}
By \Cref{lm:two labels are the same}, we only need to prove that the WL test cannot distinguish $(G_1,\ell_{G_1}^g)$ and $(G_2,\ell_{G_2}^g)$ iff $\dWL\!\lc \lc\mX_q(G_1),\ell_G^g\rc,\lc\mX_q(G_2),\ell_{G_2}^g\rc\rc = 0$. For this purpose, we need to introduce some new notions.

For any metric space $Z$, we let $\poww(Z)$ denote the collection of all finite multisets of $Z$ (including the empty set). We inductively define a family of sets $Z^k$ as follows:
\begin{enumerate}
    \item $Z^1\coloneqq Z\times \poww(Z)$;
    \item for $k\geq 1$, $Z^{k+1}\coloneqq Z^k\times \poww(Z^k)$.
\end{enumerate}

Then, we inductively define a family of maps $\varphi^k_q:Z^k\rightarrow\prob^{\circ k}(Z)$ as follows:
\begin{enumerate}
    \item define $\varphi^1_q:Z^1\rightarrow\prob(Z)$ by 
    \[(z,A)\in Z\times \poww(Z)\mapsto\begin{cases}q\delta_z+\frac{1-q}{|A|}\sum_{z'\in A}\delta_{z'},&A\neq\emptyset\\
    \delta_z,&A=\emptyset\end{cases} ;\]
    \item for $k\geq 1$, define $\varphi^{k+1}_q:Z^{k+1}\rightarrow\probm^{\circ (k+1)}(Z)$ by 
    \[(z,A)\in Z^k\times \poww(Z^k)\mapsto\begin{cases} q\delta_{\varphi_q^k(z)}+\frac{1-q}{|A|}\sum_{z'\in A}\delta_{\varphi_q^k(z')},&A\neq\emptyset\\
    \delta_{\varphi_q^k(z)},&A=\emptyset\end{cases}.\]
\end{enumerate}

\begin{lemma}\label{lm:mathfrak l = varphi l}
For any labeled graph $(G,\ell_G:V_G\rightarrow Z)$ and any $q\in[0,1]$, one has that for any $k\in\mathbb{N}$
\begin{equation}\label{eq:l as composition of l}
\WLh{k}{(\mX_q(G),\ell_{G})}=\varphi_q^{k}\circ\WLhor{k}{(G,\ell_G)}:V_G\rightarrow\prob^{\circ k}(Z).
\end{equation}
\end{lemma}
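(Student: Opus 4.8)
The plan is to prove the identity by induction on $k$, unwinding both sides of the equation through their respective inductive definitions. For the base case $k=1$, I would fix a vertex $v \in V_G$ and compute both sides. On the right-hand side, $\WLhor{1}{(G,\ell_G)}(v) = (\ell_G(v), \mbl \ell_G(v') : v' \in N_G(v)\mbr)$, and applying $\varphi_q^1$ to this pair gives, when $N_G(v) \neq \emptyset$, the measure $q\,\delta_{\ell_G(v)} + \frac{1-q}{\deg_G(v)}\sum_{v' \in N_G(v)} \delta_{\ell_G(v')}$; when $N_G(v) = \emptyset$ it gives $\delta_{\ell_G(v)}$. On the left-hand side, $\WLh{1}{(\mX_q(G),\ell_G)}(v) = (\ell_G)_\# m_v^{G,q}$, and by the definition of $m_v^{G,q}$ (cf. \Cref{def:markov chain on graphs}) pushing forward along $\ell_G$ yields exactly the same two cases. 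Thus the base case holds.

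For the inductive step, assume \Cref{eq:l as composition of l} holds for some $k \geq 1$, and fix $v \in V_G$. By \Cref{def:hierarchy prob}, $\WLh{k+1}{(\mX_q(G),\ell_G)}(v) = \lc\WLh{k}{(\mX_q(G),\ell_G)}\rc_\# m_v^{G,q}$. By the inductive hypothesis, $\WLh{k}{(\mX_q(G),\ell_G)} = \varphi_q^k \circ \WLhor{k}{(G,\ell_G)}$, so this equals $\lc\varphi_q^k \circ \WLhor{k}{(G,\ell_G)}\rc_\# m_v^{G,q} = (\varphi_q^k)_\# \lc \WLhor{k}{(G,\ell_G)}\rc_\# m_v^{G,q}$, using functoriality of pushforward. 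Now $\lc \WLhor{k}{(G,\ell_G)}\rc_\# m_v^{G,q}$ is, by the definition of $m_v^{G,q}$, the measure $q\,\delta_{\WLhor{k}{(G,\ell_G)}(v)} + \frac{1-q}{\deg_G(v)}\sum_{v' \in N_G(v)} \delta_{\WLhor{k}{(G,\ell_G)}(v')}$ (in the non-isolated case), and pushing this forward along $\varphi_q^k$ gives $q\,\delta_{\varphi_q^k(\WLhor{k}{(G,\ell_G)}(v))} + \frac{1-q}{\deg_G(v)}\sum_{v' \in N_G(v)} \delta_{\varphi_q^k(\WLhor{k}{(G,\ell_G)}(v'))}$. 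On the other hand, $\WLhor{k+1}{(G,\ell_G)}(v) = \lc\WLhor{k}{(G,\ell_G)}(v), \mbl \WLhor{k}{(G,\ell_G)}(v') : v' \in N_G(v)\mbr\rc$, and applying $\varphi_q^{k+1}$ to this element of $Z^k \times \poww(Z^k)$ produces exactly the same measure by the defining formula for $\varphi_q^{k+1}$. The isolated-vertex case ($N_G(v) = \emptyset$) is handled identically, both sides collapsing to $\delta_{\varphi_q^k(\WLhor{k}{(G,\ell_G)}(v))}$.

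I do not expect a serious obstacle here: the lemma is essentially a bookkeeping statement matching two parallel recursive constructions, and the only thing to be careful about is the correspondence between the multiset operation $\mbl\cdot\mbr$ underlying the WL hierarchy and the normalized empirical-measure operation underlying the WL measure hierarchy — which is precisely what the definition of $\varphi_q^k$ encodes (and which relies on the fact that all neighbors of $v$ receive equal weight $\frac{1-q}{\deg_G(v)}$, matching uniform multiplicity counting). One minor point worth stating explicitly is that $\varphi_q^k$ is well-defined on $Z^k$ and measurable, so that the pushforward $(\varphi_q^k)_\#$ makes sense; this is immediate from the finiteness of the multisets involved. The mild subtlety that $\varphi_q^k$ need not be injective (distinct elements of $Z^k$ may map to the same measure) does not affect the argument, since we only use $\varphi_q^k$ in the forward direction.
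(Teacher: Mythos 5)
Your proposal is correct and follows essentially the same route as the paper: induction on $k$, with the base case computed directly from the definition of $m_v^{G,q}$ and the inductive step unwinding $\lc\WLh{k}{(\mX_q(G),\ell_G)}\rc_\# m_v^{G,q}$ via the inductive hypothesis and matching it against the defining formula for $\varphi_q^{k+1}$, treating the isolated-vertex case separately. The only cosmetic difference is that you invoke functoriality of pushforward explicitly where the paper just writes out the resulting convex combination of Dirac masses.
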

\begin{proof}[Proof of \Cref{lm:mathfrak l = varphi l}]
We prove by induction on $k$.

When $k=1$, for any $v\in V$, if $N_G(v)\neq\emptyset$ we have that
\begin{align*}
    \WLh{1}{(\mX_q(G),\ell_G)}(v)&=(\ell_G)_\#m_v^{G,q}=q\,\delta_{\ell_G(v)}+\frac{1-q}{\mathrm{deg}(v)}\sum_{v'\in N_G(v)}\delta_{\ell_G(v')}\\
    &=\varphi_q^1\lc (\ell_G(v),\mbl \ell_G(v'):v'\in N_G(v)\mbr)\rc \\
    &=\varphi_q^1(\WLhor{1}{(G,\ell_G)}(v)).
\end{align*}
If $N_G(v)=\emptyset$ then we have that
\begin{align*}
    \WLh{1}{(\mX_q(G),\ell_G)}(v)&=(\ell_G)_\#m_v^{G,q}=\delta_{\ell_G(v)}=\varphi_q^1\lc (\ell_G(v),\emptyset)\rc =\varphi_q^1(\WLhor{1}{(G,\ell_G)}(v)).
\end{align*}

Now, we assume that \Cref{eq:l as composition of l} holds for some $k\geq 1$. Then, for $k+1$ and for any $v\in V$, if $N_G(v)\neq\emptyset$, we have that
\begin{align*}
    \WLh{k+1}{(\mX_q(G),\ell_G)}(v)&=\lc \WLh{k}{(\mX_q(G),\ell_G)}\rc_\#m_v^{G,q}=q\,\delta_{\WLh{k}{(\mX_q(G),\ell_G)}(v)}+\frac{1-q}{\mathrm{deg}(v)}\sum_{v'\in N_G(v)}\delta_{\WLh{k}{(\mX_q(G),\ell_G)}(v')}\\
    &=q\,\delta_{\varphi_q^k\lc \WLhor{k}{(G,\ell_G)}(v)\rc }+\frac{1-q}{\mathrm{deg}(v)}\sum_{v'\in N_G(v)}\delta_{\varphi_q^k\lc \WLhor{k}{(G,\ell_G)}(v')\rc }\\
    &=\varphi_q^{k+1}\!\lc \lc \WLhor{k}{(G,\ell_G)}(v),\mbl \WLhor{k}{(G,\ell_G)}(v'):v'\in N_G(v)\mbr\rc \rc \\
    &=\varphi_q^{k+1}\!\lc \WLhor{k+1}{(G,\ell_G)}(v)\rc.
\end{align*}

If $N_G(v)=\emptyset$ then we have that
\begin{align*}
    \WLh{k+1}{(\mX_q(G),\ell_G)}(v)&=\lc \WLh{k}{(\mX_q(G),\ell_G)}\rc_\#m_v^{G,q}=\delta_{\WLh{k}{(\mX_q(G),\ell_G)}(v)}=\delta_{\varphi_q^k\lc \WLhor{k}{(G,\ell_G)}(v)\rc }\\
    &=\varphi_q^{k+1}\!\lc \lc \WLhor{k}{(G,\ell_G)}(v),\emptyset\rc \rc =\varphi_q^{k+1}\!\lc \WLhor{k+1}{(G,\ell_G)}(v)\rc.
\end{align*}
This concludes the proof.
\end{proof}

\begin{lemma}\label{lm:ptwise equivalence}
Fix any $\frac{1}{2}<q<1$ and any labeled graphs $(G_1,\ell_{G_1})$ and $(G_2,\ell_{G_2})$. Assume that the labels satisfy that for any $v_1\in V_{G_1}$ and $v_2\in V_{G_2}$, we have that $\ell_{G_1}(v_1)=\ell_{G_2}(v_2)$ implies $\mathrm{deg}(v_1)=\mathrm{deg}(v_2)$. Then, one has that for any $v_1\in V_{G_1},v_2\in V_{G_2}$,
\begin{equation}\label{eq:l=l}
    \WLhor{k}{(G_1,\ell_{G_1})}(v_1)=\WLhor{k}{(G_2,\ell_{G_2})}(v_2) \,\,\text{ iff }\,\, \WLh{k}{(\mX_q(G_1),\ell_{G_1})}(v_1)=\WLh{k}{(\mX_q(G_2),\ell_{G_2})}(v_2).
\end{equation}
\end{lemma}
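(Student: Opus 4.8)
The plan is to reduce the claim to an injectivity-type property of the maps $\varphi_q^{k}$ by means of \Cref{lm:mathfrak l = varphi l}, which identifies $\WLh{k}{(\mX_q(G_i),\ell_{G_i})}$ with $\varphi_q^{k}\circ\WLhor{k}{(G_i,\ell_{G_i})}$. Under this identification the forward implication of \eqref{eq:l=l} is immediate: if $\WLhor{k}{(G_1,\ell_{G_1})}(v_1)=\WLhor{k}{(G_2,\ell_{G_2})}(v_2)$, apply $\varphi_q^{k}$ to both sides. So the whole content is the converse, namely that $\varphi_q^{k}$ cannot collapse two distinct values occurring in the WL hierarchies of $G_1$ and $G_2$. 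I would prove this by induction on $k$, simultaneously over all pairs $(v_1,v_2)\in V_{G_1}\times V_{G_2}$; keeping the statement \emph{cross-graph} is important because it will be reused for neighbours. (For $k=0$ the statement degenerates to a tautology about $\ell_{G_1}(v_1)$ and $\ell_{G_2}(v_2)$.)

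The engine of the argument — and the only place where $q>\frac12$ enters — is the observation that for any $(z,A)\in Z^{k+1}=Z^{k}\times\poww(Z^{k})$ the measure $\varphi_q^{k+1}(z,A)\in\prob^{\circ(k+1)}(Z)$ has a \emph{unique atom of mass exceeding $\frac12$}, namely $\varphi_q^{k}(z)$: its mass is at least $q$, while every other atom has mass at most $1-q<\frac12$ (if $A=\emptyset$ the measure is just $\delta_{\varphi_q^{k}(z)}$ and this is clear). Hence from $\varphi_q^{k+1}(\WLhor{k+1}{(G_1,\ell_{G_1})}(v_1))=\varphi_q^{k+1}(\WLhor{k+1}{(G_2,\ell_{G_2})}(v_2))$ one reads off $\varphi_q^{k}(\WLhor{k}{(G_1,\ell_{G_1})}(v_1))=\varphi_q^{k}(\WLhor{k}{(G_2,\ell_{G_2})}(v_2))$, i.e. $\WLh{k}{(\mX_q(G_1),\ell_{G_1})}(v_1)=\WLh{k}{(\mX_q(G_2),\ell_{G_2})}(v_2)$; the induction hypothesis then yields $\WLhor{k}{(G_1,\ell_{G_1})}(v_1)=\WLhor{k}{(G_2,\ell_{G_2})}(v_2)$, and in particular $\ell_{G_1}(v_1)=\ell_{G_2}(v_2)$, so the degree hypothesis forces $\deg(v_1)=\deg(v_2)=:d$. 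If $d=0$ both neighbourhoods are empty and we are done; if $d\ge1$, subtract the common atom $q\,\delta_{\varphi_q^{k}(\WLhor{k}{(G_1,\ell_{G_1})}(v_1))}$ from both sides and rescale by $d/(1-q)$ to obtain the equality of multisets
\[\mbl \varphi_q^{k}\!\lc\WLhor{k}{(G_1,\ell_{G_1})}(w)\rc : w\in N_{G_1}(v_1)\mbr=\mbl \varphi_q^{k}\!\lc\WLhor{k}{(G_2,\ell_{G_2})}(w)\rc : w\in N_{G_2}(v_2)\mbr\]
in $\prob^{\circ k}(Z)$. The base case $k=1$ is the same computation with $\varphi_q^1$, with $\ell_{G_i}(v_i)$ playing the role of $\varphi_q^{k}(\WLhor{k}{}(v_i))$; there the degree hypothesis is exactly what rules out the ambiguous identification of $(z,\emptyset)$ with $(z,\mbl z,\dots,z\mbr)$, the only distinct elements of $Z^1$ that $\varphi_q^1$ confuses.

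The main obstacle is the last step of the inductive case: promoting the multiset equality just displayed (which lives in $\prob^{\circ k}(Z)$) to the equality of multisets $\mbl\WLhor{k}{(G_1,\ell_{G_1})}(w):w\in N_{G_1}(v_1)\mbr=\mbl\WLhor{k}{(G_2,\ell_{G_2})}(w):w\in N_{G_2}(v_2)\mbr$ in $Z^{k}$, which together with $\WLhor{k}{(G_1,\ell_{G_1})}(v_1)=\WLhor{k}{(G_2,\ell_{G_2})}(v_2)$ gives $\WLhor{k+1}{(G_1,\ell_{G_1})}(v_1)=\WLhor{k+1}{(G_2,\ell_{G_2})}(v_2)$ and closes the induction. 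One is tempted to claim $\varphi_q^{k}$ is injective on the set of depth-$k$ WL values, but that is false in general (e.g. with constant labels within a single graph). The fix is to compare every pair of neighbours through the \emph{other} graph: fix a value $\mu$ occurring in the common multiset; since the two sides have equal multiplicity $m\ge1$ at $\mu$, there exist $w^{*}\in N_{G_1}(v_1)$ and $w^{**}\in N_{G_2}(v_2)$ both mapping to $\mu$ under $\varphi_q^{k}\circ\WLhor{k}{}$, and the (cross-graph) induction hypothesis applied to $(w^{*},w^{**})$ produces a common value $w_\mu\in Z^{k}$; applying it again to each $w\in N_{G_1}(v_1)$ with $\varphi_q^{k}$-image $\mu$ paired with $w^{**}$, and to each such $w'\in N_{G_2}(v_2)$ paired with $w^{*}$, shows that \emph{every} neighbour — in either graph — whose $\varphi_q^{k}$-image is $\mu$ has $\WLhor{k}{}$-value exactly $w_\mu$. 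Since the multiplicities of each $\mu$ agree, the two $Z^{k}$-level multisets coincide, which completes the proof.
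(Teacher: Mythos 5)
Your proof is correct and follows essentially the same route as the paper's: induction on $k$, extracting the unique atom of mass $\geq q>\tfrac12$ to recover the depth-$k$ value at the center vertex, using the degree hypothesis to undo the normalization, and invoking the cross-graph induction hypothesis to upgrade the multiset equality of measures to a multiset equality of WL tuples (the paper does this last step by simply applying the inductive hypothesis along a bijection witnessing the multiset equality, which is a shorter version of your pairing argument). One parenthetical aside is inaccurate --- $\varphi_q^1$ also identifies pairs such as $(z,\mbl a,b\mbr)$ and $(z,\mbl a,a,b,b\mbr)$, not only $(z,\emptyset)$ with $(z,\mbl z,\dots,z\mbr)$ --- but this plays no role in your argument, since the degree hypothesis is exactly what fixes the scaling.
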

\begin{proof}[Proof of \Cref{lm:ptwise equivalence}]
By \Cref{lm:mathfrak l = varphi l}, we have that $\WLhor{k}{(G_1,\ell_{G_1})}(v_1)=\WLhor{k}{(G_2,\ell_{G_2})}(v_2)$ implies that $\WLh{k}{(\mX_q(G_1),\ell_{G_1})}(v_1)=\WLh{k}{(\mX_q(G_2),\ell_{G_2})}(v_2).$ For the other direction, we prove by induction on $k$.

When $k=1$, we first note that $\WLh{1}{(\mX_q(G_1),\ell_{G_1})}(v_1)=q\,\delta_{\ell_{G_1}(v_1)}+\frac{1-q}{\mathrm{deg}(v_1)}\sum_{v\in N_{G_1}(v_1)}\delta_{\ell_{G_1}(v)}$ if $N_{G_1}(v_1)\neq\emptyset$ and $\WLh{1}{(\mX_q(G_1),\ell_{G_1})}(v_1)=\delta_{\ell_{G_1}(v_1)}$ otherwise. Since $\frac{1}{2}<q<1$, $\WLh{1}{(\mX_q(G_1),\ell_{G_1})}(v_1)=\WLh{1}{(\mX_q(G_2),\ell_{G_2})}(v_2)$ implies that $\delta_{\ell_{G_1}(v_1)}=\delta_{\ell_{G_2}(v_2)}$. Hence $\ell_{G_1}(v_1)=\ell_{G_2}(v_2)$ and thus $\deg_{G_1}(v_1)=\deg_{G_2}(v_2)$. This implies that $N_{G_1}(v_1)=\emptyset$ iff $N_{G_2}(v_2)=\emptyset$. If $N_{G_1}(v_1)=\emptyset$, then obviously, we have that
$$\WLhor{1}{(G_1,\ell_{G_1})}(v_1)=\lc \ell_{G_1}(v_1),\emptyset \rc=\lc \ell_{G_2}(v_2),\emptyset \rc=\WLhor{1}{(G_2,\ell_{G_2})}(v_2).$$
If otherwise $N_{G_1}(v_1)\neq\emptyset$, then $\WLh{1}{(\mX_q(G_1),\ell_{G_1})}(v_1)=\WLh{1}{(\mX_q(G_2),\ell_{G_2})}(v_2)$ again implies that 
$$\frac{1-q}{\mathrm{deg}(v_1)}\sum_{v\in N_{G_1}(v_1)}\delta_{\ell_{G_1}(v)}=\frac{1-q}{\mathrm{deg}(v_2)}\sum_{v\in N_{G_2}(v_2)}\delta_{\ell_{G_2}(v)}.$$ 
Hence,  $\sum_{v\in N_{G_1}(v_1)}\delta_{\ell_{G_1}(v)}=\sum_{v\in N_{G_2}(v_2)}\delta_{\ell_{G_2}(v)}$ and thus 
$$\mbl \ell_{G_1}(v):v\in N_{G_1}(v_1)\mbr=\mbl \ell_{G_2}(v):v\in N_{G_2}(v_2)\mbr.$$
Therefore, $\WLhor{1}{(G_1,\ell_{G_1})}(v_1)=\WLhor{1}{(G_2,\ell_{G_2})}(v_2)$.

Now, we assume that  \Cref{eq:l=l} holds for some $k\geq 1$. Note that 
$$\WLh{k+1}{(\mX_q(G_1),\ell_{G_1})}(v_1)=\begin{cases}q\,\delta_{\WLh{k}{(\mX_q(G_1),\ell_{G_1})}(v_1)}+\frac{1-q}{\mathrm{deg}(v_1)}\sum_{v\in N_{G_1}(v_1)}\delta_{\WLh{k}{(\mX_q(G_1),\ell_{G_1})}(v)},&N_{G_1}(v_1)\neq\emptyset\\
\delta_{\WLh{k}{(\mX_q(G_1),\ell_{G_1})}(v_1)},&N_{G_1}(v_1)=\emptyset\end{cases}$$
Then, for $k+1$, the assumptions $\frac{1}{2}<q<1$ and  $\WLh{k+1}{(\mX_q(G_1),\ell_{G_1})}(v_1)=\WLh{k+1}{(\mX_q(G_2),\ell_{G_2})}(v_2)$ imply that $\delta_{\WLh{k}{(\mX_q(G_1),\ell_{G_1})}(v_1)}=\delta_{\WLh{k}{(\mX_q(G_2),\ell_{G_2})}(v_2)}$. Hence $\WLh{k}{(\mX_q(G_1),\ell_{G_1})}(v_1)=\WLh{k}{(\mX_q(G_2),\ell_{G_2})}(v_2)$. By the induction assumption we have that $\WLhor{k}{(G_1,\ell_{G_1})}(v_1)=\WLhor{k}{(G_2,\ell_{G_2})}(v_2)$. It is not hard to see that then $\WLhor{1}{(G_1,\ell_{G_1})}(v_1)=\WLhor{1}{(G_2,\ell_{G_2})}(v_2)$ and thus $\deg_{G_1}(v_1)=\deg_{G_2}(v_2)$. Then, similarly as in the case $k=1$, we have two situations. If $N_{G_1}(v_1)=\emptyset$, then we have that
$$\WLhor{k+1}{(G_1,\ell_{G_1})}(v_1)=\lc\WLhor{k}{(G_1,\ell_{G_1})}(v_1),\emptyset\rc=\lc\WLhor{k}{(G_2,\ell_{G_2})}(v_2),\emptyset\rc=\WLhor{k+1}{(G_2,\ell_{G_2})}(v_2).$$
If otherwise $N_{G_1}(v_1)\neq\emptyset$, then $\WLh{k+1}{(\mX_q(G_1),\ell_{G_1})}(v_1)=\WLh{k+1}{(\mX_q(G_2),\ell_{G_2})}(v_2)$ again implies that
 $$\frac{1-q}{\mathrm{deg}(v_1)}\sum_{v\in N_{G_1}(v_1)}\delta_{\WLh{k}{(\mX_q(G_1),\ell_{G_1})}(v)}=\frac{1-q}{\mathrm{deg}(v_2)}\sum_{v\in N_{G_2}(v_2)}\delta_{\WLh{k}{(\mX_q(G_2),\ell_{G_2})}(v)}.$$ 
 Hence, $\sum_{v\in N_{G_1}(v_1)}\delta_{\WLh{k}{(\mX_q(G_1),\ell_{G_1})}(v)}=\sum_{v\in N_{G_2}(v_2)}\delta_{\WLh{k}{(\mX_q(G_2),\ell_{G_2})}(v)}$ and thus 
$$\mbl \WLh{k}{(\mX_q(G_1),\ell_{G_1})}(v):v\in N_{G_1}(v_1)\mbr=\mbl \WLh{k}{(\mX_q(G_2),\ell_{G_2})}(v):v\in N_{G_2}(v_2)\mbr.$$ 
Then, by the induction assumption again, we have that $$\mbl \WLhor{k}{(G_1,\ell_{G_1})}(v):v\in N_{G_1}(v_1)\mbr=\mbl \WLhor{k}{(G_2,\ell_{G_2})}(v):v\in N_{G_2}(v_2)\mbr.$$
Therefore, $\WLhor{k+1}{(G_1,\ell_{G_1})}(v_1)=\WLhor{k+1}{(G_2,\ell_{G_2})}(v_2)$. This concludes the proof.
\end{proof}

Now, we are ready to prove that 
\[\mbox{the WL test cannot distinguish}\,\, (G_1,\ell_{G_1}^g)\,\, \mbox{and}\,\, (G_2,\ell_{G_2}^g) \,\,\mbox{iff}\,\, \dWL\!\lc \lc\mX_q(G_1),\ell_G^g\rc,\lc\mX_q(G_2),\ell_{G_2}^g\rc\rc = 0.\] It suffices to show that for any $k=0,1,\ldots$, 
\begin{equation}\label{eq:Lk vs lk}
    L_k\!\lc(G_1,\ell_1^g)\rc= L_k\!\lc(G_2,\ell_2^g)\rc\text{ iff }\!\lc\WLh{k}{(\mX_q(G_1),\ell_{G_1}^g)}\rc_\#\mu_{G_1}= \lc\WLh{k}{(\mX_q(G_2),\ell_{G_2}^g)}\rc_\#\mu_{G_2}
\end{equation}

Fix any $k=0,\ldots$. We first assume that $L_k\!\lc(G_1,\ell_1^g)\rc= L_k\!\lc(G_2,\ell_2^g)\rc$. Then, it is obvious that $|V_{G_1}|=|V_{G_2}|$ and moreover, there exists a bijection $\psi:V_{G_1}\rightarrow V_{G_2}$ such that $\WLhor{k}{(G_1,\ell_{G_1}^g)}(v)=\WLhor{k}{(G_2,\ell_{G_2}^g)}(\psi(v))$ for any $v\in V_{G_1}$. This implies the following facts:
\begin{enumerate}
\item By injectivity of $g$, $\deg_{G_1}(v)=\deg_{G_2}(\psi(v))$ for any $v\in V_{G_1}$. Hence, $\overline{\deg}_{G_1}(v)=\overline{\deg}_{G_2}(\psi(v))$ for any $v\in V_{G_1}$.
    
    \item By \Cref{lm:mathfrak l = varphi l}, for any $v\in V_{G_1}$ we have that
    \[\WLh{k}{(\mX_q(G_1),\ell_{G_1}^g)}(v)=\varphi_q^k\circ\WLhor{k}{(G_1,\ell_{G_1}^g)}(v)=\varphi_q^k\circ\WLhor{k}{(G_2,\ell_{G_2}^g)}(\psi(v))=\WLh{k}{(\mX_q(G_2),\ell_{G_2}^g)}(\psi(v)).\]
\end{enumerate}

Then,
\begin{align*}
    \lc \WLh{k}{(\mX_q(G_1),\ell_{G_1}^g)}\rc_\#\mu_{G_1}&=\sum_{v\in V_{G_1}}\frac{\overline{\deg}_{G_1}(v)}{\sum_{v'\in V_{G_1}}\overline{\deg}_{G_1}(v')}\delta_{\WLh{k}{(\mX_q(G_1),\ell_{G_1}^g)}(v)}\\
    &=\sum_{v\in V_{G_1}}\frac{\overline{\deg}_{G_2}(\psi(v))}{\sum_{v'\in V_{G_1}}\overline{\deg}_{G_2}(\psi(v'))}\delta_{\WLh{k}{(\mX_q(G_2),\ell_{G_2}^g)}(\psi(v))}\\
    &=\sum_{v\in V_{G_2}}\frac{\overline{\deg}_{G_2}(v)}{\sum_{v'\in V_{G_2}}\overline{\deg}_{G_2}(v')}\delta_{\WLh{k}{(\mX_q(G_2),\ell_{G_2}^g)}(v)}\\
    &= \lc \WLh{k}{(\mX_q(G_2),\ell_{G_2}^g)}\rc_\#\mu_{G_2}.
\end{align*}

Conversely, we assume that $\lc \WLh{k}{(\mX_q(G_1),\ell_{G_1}^g)}\rc_\#\mu_{G_1}= \lc \WLh{k}{(\mX_q(G_2),\ell_{G_2}^g)}\rc_\#\mu_{G_2}$. Then,
\begin{equation}\label{eq:sum = sum}
    \sum_{v\in V_{G_1}}\frac{\overline{\deg}_{G_1}(v)}{\sum_{v'\in V_{G_1}}\overline{\deg}_{G_1}(v')}\delta_{\WLh{k}{(\mX_q(G_1),\ell_{G_1}^g)}(v)}=\sum_{v\in V_{G_2}}\frac{\overline{\deg}_{G_2}(v)}{\sum_{v'\in V_{G_2}}\overline{\deg}_{G_2}(v')}\delta_{\WLh{k}{(\mX_q(G_2),\ell_{G_2}^g)}(v)}.
\end{equation}
Then, for any $v_1\in V_{G_1}$, there exists $v_2\in V_{G_2}$ such that $\WLh{k}{(\mX_q(G_1),\ell_{G_1}^g)}(v_1)=\WLh{k}{(\mX_q(G_2),\ell_{G_2}^g)}(v_2)$. If $k=0$, then 
$$\ell_{G_1}^{g}(v_1)=\WLh{0}{(\mX_q(G_1),\ell_{G_1}^g)}(v_1)=\WLh{0}{(\mX_q(G_2),\ell_{G_2}^g)}(v_2)=\ell_{G_2}^{g}(v_2).$$
Otherwise, we assume that $k>0$. Since $\frac{1}{2}<q<1$, we have that $\WLh{k-1}{(\mX_q(G_1),\ell_{G_1}^g)}(v_1)=\WLh{k-1}{(\mX_q(G_2),\ell_{G_2}^g)}(v_2)$. Inductively, we still obtain that
$$\ell_{G_1}^{g}(v_1)=\WLh{0}{(\mX_q(G_1),\ell_{G_1}^g)}(v_1)=\WLh{0}{(\mX_q(G_2),\ell_{G_2}^g)}(v_2)=\ell_{G_2}^{g}(v_2).$$
Hence, by injectivity of $g$, we have that $|V_{G_1}|=|V_{G_2}|$, $\deg_{G_1}(v_1)=\deg_{G_2}(v_2)$ and $\overline{\deg}_{G_1}(v_1)=\overline{\deg}_{G_2}(v_2)$. Then, it is easy to see from Equation \eqref{eq:sum = sum} that 
$$\left|\left\{v\in V_{G_1}:\WLh{k}{(\mX_q(G_1),\ell_{G_1}^g)}(v)=\WLh{k}{(\mX_q(G_1),\ell_{G_1}^g)}(v_1)\right\}\right|=\left|\left\{v\in V_{G_2}:\WLh{k}{(\mX_q(G_2),\ell_{G_2}^g)}(v)=\WLh{k}{(\mX_q(G_2),\ell_{G_2}^g)}(v_2)\right\}\right|.$$
 
It is obvious that $\ell^g_{G_i}$ for $i=1,2$ satisfy the condition in \Cref{lm:ptwise equivalence}. Then, by \Cref{lm:ptwise equivalence}, we have that $\WLhor{k}{(G_1,\ell_{G_1})}(v_1)=\WLhor{k}{(G_2,\ell_{G_2})}(v_2)$ and that
$$\left|\left\{v\in V_{G_1}:\WLhor{k}{(G_1,\ell_{G_1})}(v)=\WLhor{k}{(G_1,\ell_{G_1})}(v_1)\right\}\right|=\left|\left\{v\in V_{G_2}:\WLhor{k}{(G_2,\ell_{G_2})}(v)=\WLhor{k}{(G_2,\ell_{G_2})}(v_2)\right\}\right|.$$
Therefore, $L_k\!\lc(G_1,\ell_{G_1}^g)\rc=L_k\!\lc(G_2,\ell_{G_2}^g)\rc$.

\subsubsection{Proof of \Cref{cor:dWL convergence}}

It turns out that one only needs finite steps to determine whether the WL test can distinguish two labeled graphs \cite{krebs2015universal}. More precisely:
\begin{proposition}\label{prop:wl convergence}
For any labeled graphs $(G_1, \ell_{G_1})$ and $(G_2, \ell_{G_2})$, $L_k\!\lc(G_1,\ell_{G_1})\rc= L_k\!\lc(G_2,\ell_{G_2})\rc$ holds for all $k=0,\ldots, \left|V_{G_1}\right|+\left|V_{G_2}\right|$ if and only if $L_k\!\lc(G_1,\ell_{G_1})\rc= L_k\!\lc(G_2,\ell_{G_2})\rc$ holds for all $k\geq0$.
\end{proposition}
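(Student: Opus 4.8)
The plan is to recast the claim as the classical \emph{stabilization} property of color refinement. Put $n := |V_{G_1}| + |V_{G_2}|$, and after relabeling assume $V_{G_1}$ and $V_{G_2}$ are disjoint; set $V := V_{G_1} \sqcup V_{G_2}$. For $k \ge 0$ and $v \in V$ define $c_k(v) := \WLhor{k}{(G_i,\ell_{G_i})}(v)$, where $i$ is the index with $v \in V_{G_i}$ and $\WLhor{0}{(G_i,\ell_{G_i})} := \ell_{G_i}$, and let $P_k$ be the partition of $V$ into the level sets of $c_k$. Two elementary observations do most of the work. First, since $c_{k+1}(v)$ has $c_k(v)$ as its first coordinate, $P_{k+1}$ refines $P_k$; hence the number of blocks $|P_k|$ is nondecreasing in $k$ and never exceeds $n$. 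Second, $L_k((G_1,\ell_{G_1})) = L_k((G_2,\ell_{G_2}))$ holds if and only if every block $B$ of $P_k$ satisfies $|B \cap V_{G_1}| = |B \cap V_{G_2}|$, because $L_k$-equality is exactly the statement that each value of $c_k$ occurs equally often among the vertices of $G_1$ and of $G_2$.

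The crux is the \emph{once stable, always stable} lemma: if $P_k = P_{k+1}$ for some $k$, then $P_j = P_k$ for every $j \ge k$. By induction it is enough to show $P_{k+1} = P_{k+2}$, and since $P_{k+2}$ always refines $P_{k+1}$ only the reverse refinement needs argument. From $P_k = P_{k+1}$ one obtains a bijection $\phi$ from the range of $c_k$ onto the range of $c_{k+1}$ with $c_{k+1}(w) = \phi(c_k(w))$ for all $w \in V$; well-definedness and injectivity of $\phi$ are precisely the equality of the two partitions. Now if $c_{k+1}(u) = c_{k+1}(v)$, unwinding Step $k{+}1$ of \Cref{def:hierarchy set} gives $c_k(u) = c_k(v)$ together with $\mbl c_k(u') : u' \in N(u) \mbr = \mbl c_k(v') : v' \in N(v) \mbr$, where $N(\cdot)$ denotes neighborhoods in the ambient graph; applying $\phi$ termwise to the latter multiset identity turns it into $\mbl c_{k+1}(u') : u' \in N(u) \mbr = \mbl c_{k+1}(v') : v' \in N(v) \mbr$, and combined with $c_{k+1}(u) = c_{k+1}(v)$ this yields $c_{k+2}(u) = c_{k+2}(v)$. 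Hence $P_{k+1}$ and $P_{k+2}$ have the same level sets, proving the lemma.

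To finish, observe that $|P_0| \le |P_1| \le \cdots \le |P_n|$ are $n+1$ integers lying in $\{1, \dots, n\}$, so two consecutive ones coincide; since $P_{k+1}$ refines $P_k$, equality of block counts forces $P_{k^*} = P_{k^*+1}$ for some $k^* \le n-1$, and the lemma then gives $P_j = P_n$ for all $j \ge n$. Assume now that $L_k((G_1,\ell_{G_1})) = L_k((G_2,\ell_{G_2}))$ for $k = 0, \dots, n$. For $k \le n$ this is the hypothesis; for $k > n$ we have $P_k = P_n$, and the hypothesis at level $n$ says every block of $P_n$ meets $V_{G_1}$ and $V_{G_2}$ equally often, hence so does every block of $P_k$, which by the second observation is exactly $L_k((G_1,\ell_{G_1})) = L_k((G_2,\ell_{G_2}))$. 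The converse implication is immediate.

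The only subtle point — the main obstacle, such as it is — is the stabilization lemma, and within it the honest use of the relabeling bijection $\phi$: one must check that $P_k = P_{k+1}$ makes $\phi$ well defined on \emph{color values} rather than on vertices, and that multiset identities are preserved when $\phi$ is applied termwise. Everything else is elementary counting and unwinding of definitions.
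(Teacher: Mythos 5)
Your proof is correct. Note that the paper does not actually prove this proposition: it states it as a known fact and defers to the reference \cite{krebs2015universal}, so there is no in-paper argument to compare against. What you supply is the standard, self-contained color-refinement stabilization argument, and every step checks out: the refinement chain $P_{k+1}\preceq P_k$ with block counts bounded by $n=|V_{G_1}|+|V_{G_2}|$ forces $|P_{k^*}|=|P_{k^*+1}|$ for some $k^*\le n-1$ by pigeonhole, and since a refinement with equal block count must be an equality of partitions, the \emph{once stable, always stable} lemma applies; your handling of the relabeling bijection $\phi$ on color values (well-definedness and injectivity both coming from $P_k=P_{k+1}$, and preservation of multiset identities under termwise application) is exactly the point that needs care and you get it right. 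The reduction of $L_k$-equality to the combinatorial condition that every block of $P_k$ meets $V_{G_1}$ and $V_{G_2}$ in equal cardinality is also correct and is what lets you transfer the hypothesis at level $n$ to all levels $k>n$ once the partition has stabilized. In short: a complete and correct proof of a statement the paper only cites.
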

Hence, this corollary is a direct consequence of \Cref{prop:WL vs dwl} and \Cref{prop:wl convergence}.
\subsection{Proofs from \Cref{sec:NN}}


\subsubsection{Proof of \Cref{prop:zero set of NN}}\label{sec:proof zero set}
We need the following lemma:
\begin{lemma}\label{lm:lipschitz inherit}
For any $C$-Lipschitz function $\varphi:\R^i\rightarrow\R^j$, we have that the map $q_\varphi:\prob(\R^i)\rightarrow\R^j$ is $C$-Lipschitz.
\end{lemma}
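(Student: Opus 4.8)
The plan is to unwind both sides of the desired inequality $\|q_\varphi(\alpha)-q_\varphi(\beta)\|\le C\,\dW(\alpha,\beta)$ directly, using an optimal coupling to transport the Lipschitz control on $\varphi$ into control on the difference of the two barycenters. First I would record that $q_\varphi$ is well-defined on all of $\prob(\R^i)$: since $\varphi$ is $C$-Lipschitz it has at most linear growth, $\|\varphi(x)\|\le\|\varphi(0)\|+C\,\|x\|$, and every $\alpha\in\prob(\R^i)$ has finite $1$-moment by the standing assumption on $\prob(Z)$ for metric $Z$; hence $\int_{\R^i}\|\varphi\|\,d\alpha<\infty$ and the vector-valued integral defining $q_\varphi(\alpha)$ converges.

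Next, fix $\alpha,\beta\in\prob(\R^i)$ and let $\gamma\in\cpl(\alpha,\beta)$ be an optimal coupling realizing $\dW(\alpha,\beta)$, which exists by \cite[Proposition 2.1]{villani2009optimal} as already recalled. Since the marginals of $\gamma$ are $\alpha$ and $\beta$, I would rewrite $q_\varphi(\alpha)-q_\varphi(\beta)=\int_{\R^i\times\R^i}\lc\varphi(x)-\varphi(y)\rc\,\gamma(dx\times dy)$. Then the triangle inequality for $\R^j$-valued integrals (equivalently, Jensen's inequality applied to the norm) gives $\|q_\varphi(\alpha)-q_\varphi(\beta)\|\le\int_{\R^i\times\R^i}\|\varphi(x)-\varphi(y)\|\,\gamma(dx\times dy)$, and the $C$-Lipschitz bound $\|\varphi(x)-\varphi(y)\|\le C\,\|x-y\|$ turns this into $C\int_{\R^i\times\R^i}\|x-y\|\,\gamma(dx\times dy)=C\,\dW(\alpha,\beta)$, which is exactly the assertion.

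I do not expect a genuine obstacle here. The only steps needing a moment's care are the well-definedness remark in the first paragraph (ensuring finiteness of the integral via linear growth and the finite-moment hypothesis) and correctly invoking the triangle inequality for vector-valued integrals rather than scalar ones; once an optimal coupling is fixed, the estimate is a single line of computation.
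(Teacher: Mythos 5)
Your proof is correct and takes essentially the same route as the paper: both rewrite $q_\varphi(\alpha)-q_\varphi(\beta)$ as $\int(\varphi(x)-\varphi(y))\,\gamma(dx\times dy)$ for a coupling $\gamma$ and apply the pointwise Lipschitz bound. The only cosmetic difference is that the paper integrates against an arbitrary coupling and then infimizes, whereas you fix an optimal coupling at the outset; your added well-definedness remark is a harmless extra.
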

\begin{proof}[Proof of \Cref{lm:lipschitz inherit}]
For any $\alpha,\beta\in\prob(\R^i)$, pick any $\gamma\in\cpl(\alpha,\beta)$. Then, we have that
\begin{align*}
    \left|q_\varphi(\alpha)-q_\varphi(\beta)\right|&=\left|\int_{\R^i}\varphi(x)\alpha(dx)-\int_{\R^i}\varphi(x)\beta(dx)\right|\\
    &=\left|\int_{\R^i\times\R^i}(\varphi(x)-\varphi(y))\,\gamma(dx\times dy)\right|\\
    &\leq \int_{\R^i\times\R^i}\left|\varphi(x)-\varphi(y)\right|\,\gamma(dx\times dy)\\
    &\leq C\cdot\int_{\R^i\times\R^i}\left|x-y\right|\,\gamma(dx\times dy).
\end{align*}
Since $\gamma\in\cpl(\alpha,\beta)$ is arbitrary, we have that
\[\left|q_\varphi(\alpha)-q_\varphi(\beta)\right|\leq C\cdot\dW(\alpha,\beta).\]
Hence $q_\varphi$ is $C$-Lipschitz.
\end{proof}

Now, we start to prove item 1. We introduce some notation. Given a $\LMMCNN{k}$ $h\coloneqq\psi\circ S_{\varphi_{k+1}}\circ F_{\varphi_k}\circ \cdots \circ F_{\varphi_1}$ and any $(\mX,\ell_X)\in\mathcal{M}^{L}(Z)$, we let
\begin{equation}\label{eq:notation ell}
    \lc \mX,\ell_X^{\scriptscriptstyle{(\varphi,i)}}\rc \coloneqq F_{\varphi_i}\circ\cdots\circ F_{\varphi_1}((\mX,\ell_X))
\end{equation}

Now, we assume that for $i=1,\ldots,k$, $\varphi_i$ is a $C_i$-Lipschitz map for some $C_i>0$. Then, by \Cref{lm:lipschitz inherit}, we have that $q_{\varphi_i}$ is also a $C_i$-Lipschitz map for $i=1,\ldots,k$.

Then, we prove that
\begin{equation}\label{eq:lip ineq}
    \dW\!\lc \lc \ell_X^{\scriptscriptstyle{(\varphi,k)}}\rc_\#\mu_X,\lc \ell_Y^{\scriptscriptstyle{(\varphi,k)}}\rc_\#\mu_Y\rc \leq
\Pi_{i=1}^k C_i\cdot\dWLk((\mathcal{X},\ell_X),(\mathcal{Y},\ell_Y)).
\end{equation}

Given Equation \eqref{eq:lip ineq}, if $\dWLk\!\lc(\mX,\ell_X),(\mY,\ell_Y)\rc =0$, then $\dW\!\lc \lc \ell_X^{\scriptscriptstyle{(\varphi,k)}}\rc_\#\mu_X,\lc \ell_Y^{\scriptscriptstyle{(\varphi,k)}}\rc_\#\mu_Y\rc =0$ and thus $\lc \ell_X^{\scriptscriptstyle{(\varphi,k)}}\rc_\#\mu_X=\lc \ell_Y^{\scriptscriptstyle{(\varphi,k)}}\rc_\#\mu_Y$. Hence, for any continuous $\psi$ and $\varphi_{k+1}$, the $\LMMCNN{}$ $h=\psi\circ S_{\varphi_{k+1}}\circ F_{\varphi_k}\circ \cdots \circ F_{\varphi_1}$ satisfies that
\begin{align*}
    h((\mX,\ell_X))=\psi\lc q_{\varphi_{k+1}}\!\lc \lc \ell_X^{\scriptscriptstyle{(\varphi,k)}}\rc_\#\mu_X\rc \rc =\psi\lc q_{\varphi_{k+1}}\!\lc \lc \ell_Y^{\scriptscriptstyle{(\varphi,k)}}\rc_\#\mu_Y\rc \rc =h((\mY,\ell_Y)).
\end{align*}

To prove Equation \eqref{eq:lip ineq}, it suffices to prove that for any $x\in X$ and $y\in Y$ (cf. \Cref{lm:push forward of coupling dW}), 
\begin{equation*}\label{eq:nn<dwl-k}
   \left\|\ell_X^{\scriptscriptstyle{(\varphi,k)}}(x)-\ell_Y^{\scriptscriptstyle{(\varphi,k)}}(y)\right\|\leq \Pi_{i=1}^k C_i\cdot \dW\!\lc \WLh{k}{(\mX,\ell_X)}(x),\WLh{k}{(\mY,\ell_Y)}(y)\rc.
\end{equation*}
We prove the above inequality by proving the following inequality inductively on $j=1,\ldots,k$:
\begin{equation}\label{eq:nn<dwl}
   \left\|\ell_X^{\scriptscriptstyle{(\varphi,j)}}(x)-\ell_Y^{\scriptscriptstyle{(\varphi,j)}}(y)\right\|\leq \Pi_{i=1}^j C_i\cdot \dW\!\lc \WLh{j}{(\mX,\ell_X)}(x),\WLh{j}{(\mY,\ell_Y)}(y)\rc.
\end{equation}

When $j=0$, we have that $\ell_X^{\scriptscriptstyle{(\varphi,0)}}=\ell_X=\WLh{0}{(\mX,\ell_X)}$ and $\ell_Y^{\scriptscriptstyle{(\varphi,0)}}=\ell_Y=\WLh{0}{(\mY,\ell_Y)}$. Therefore, \Cref{eq:nn<dwl} obviously holds (we let $\Pi_{i=1}^0 C_i\coloneqq 1$). We now assume that \Cref{eq:nn<dwl} holds for some $j\geq 0$. For $j+1$, we have that
\begin{align*}
    &\Pi_{i=1}^{j+1} C_i\cdot\dW\!\lc \WLh{j+1}{(\mX,\ell_X)}(x),\WLh{j+1}{(\mY,\ell_Y)}(y)\rc\\ &=\Pi_{i=1}^{j+1} C_i\cdot\dW\!\lc \lc \WLh{j}{(\mX,\ell_X)}\rc_\#m_x^X,\lc \WLh{j}{(\mY,\ell_Y)}\rc_\#m_y^Y\rc \\
    &=C_{j+1}\cdot\inf_{\gamma\in\mathcal{C}(m_x^X,m_y^Y)}\int\limits_{X\times Y}\Pi_{i=1}^j C_i\cdot\dW\!\lc \WLh{j}{(\mX,\ell_X)}(x'),\WLh{j}{(\mY,\ell_Y)}(y')\rc \gamma(dx'\times dy')\\
    &\geq C_{j+1} \inf_{\gamma\in\mathcal{C}(m_x^X,m_y^Y)}\int\limits_{X\times Y}\left\|\ell_X^{\scriptscriptstyle{(\varphi,j)}}(x')-\ell_Y^{\scriptscriptstyle{(\varphi,j)}}(y')\right\|\gamma(dx'\times dy')\\
    &=C_{j+1}\cdot\dW\!\lc \lc \ell_X^{\scriptscriptstyle{(\varphi,j)}}\rc_\#m_x^X,\lc \ell_Y^{\scriptscriptstyle{(\varphi,j)}}\rc_\#m_y^Y\rc \\
    &\geq \left\|q_{\varphi_{j+1}}\!\lc \lc \ell_X^{\scriptscriptstyle{(\varphi,j)}}\rc_\#m_x^X\rc -q_{\varphi_{j+1}}\!\lc \lc \ell_Y^{\scriptscriptstyle{(\varphi,j)}}\rc_\#m_y^Y\rc \right\|\\
    &=\left\|\ell_X^{\scriptscriptstyle{(\varphi,j+1)}}(x)-\ell_Y^{\scriptscriptstyle{(\varphi,j+1)}}(y)\right\|.
\end{align*}

Next, we prove item 2. The proof is based on the following basic result:
\begin{lemma}\label{lm:int ineq}
For any $d\in\mathbb{N}$ and any $\alpha,\beta\in\prob(\R^d)$, if $\alpha\neq \beta$, then there exists a Lipschitz function $\varphi:\R^d\rightarrow\R$ such that
\[\int_{\R^d}\varphi(x)\alpha(dx)\neq \int_{\R^d}\varphi(x)\beta(dx).\]
\end{lemma}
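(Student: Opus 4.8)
The plan is to argue by contraposition: I will show that if $\int_{\R^d}\varphi\,d\alpha=\int_{\R^d}\varphi\,d\beta$ for every \emph{bounded} Lipschitz function $\varphi:\R^d\to\R$, then $\alpha=\beta$. Since every bounded Lipschitz function is in particular Lipschitz, this immediately yields the statement: if $\alpha\neq\beta$, then some Lipschitz $\varphi$ must separate them. I restrict attention to bounded Lipschitz functions from the outset only to guarantee integrability against arbitrary Borel probability measures — an unbounded Lipschitz function such as $x\mapsto\|x\|$ need not be $\alpha$- or $\beta$-integrable, so the displayed inequality in the statement is vacuously available for it when it fails to be integrable; in any case it suffices to produce a \emph{bounded} separating Lipschitz function.

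The key step is to show that the class of bounded Lipschitz functions is measure-determining on $\R^d$. Fix an arbitrary closed set $F\subseteq\R^d$ and consider, for each $n\in\N$, the function
\[\varphi_n(x)\coloneqq\max\bigl(0,\,1-n\cdot\mathrm{dist}(x,F)\bigr),\qquad x\in\R^d,\]
where $\mathrm{dist}(x,F)\coloneqq\inf_{y\in F}\|x-y\|$. Since $x\mapsto\mathrm{dist}(x,F)$ is $1$-Lipschitz, each $\varphi_n$ is $n$-Lipschitz, and $0\le\varphi_n\le 1$, so $\varphi_n$ is a bounded Lipschitz function. Moreover $\varphi_n\equiv 1$ on $F$, and because $F$ is closed, $\varphi_n(x)\to 0$ for every $x\notin F$; hence $\varphi_n\downarrow\mathbf{1}_F$ pointwise. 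By the dominated convergence theorem (dominating function the constant $1$), $\int_{\R^d}\varphi_n\,d\alpha\to\alpha(F)$ and $\int_{\R^d}\varphi_n\,d\beta\to\beta(F)$. Since by hypothesis $\int\varphi_n\,d\alpha=\int\varphi_n\,d\beta$ for every $n$, letting $n\to\infty$ gives $\alpha(F)=\beta(F)$ for every closed set $F$.

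Finally, the collection of closed subsets of $\R^d$ is a $\pi$-system generating the Borel $\sigma$-algebra, so two Borel probability measures agreeing on all closed sets must coincide, by Dynkin's $\pi$–$\lambda$ theorem. Hence $\alpha=\beta$, completing the contrapositive argument. The step requiring the most care is the monotone/dominated passage to the limit together with the integrability remark in the first paragraph; beyond that the argument is entirely routine, so I do not anticipate a substantive obstacle.
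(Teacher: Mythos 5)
Your proof is correct, but it takes a genuinely different route from the paper. The paper's argument is a one-liner via Kantorovich--Rubinstein duality: since $\alpha\neq\beta$ and $\dW$ is a metric on $\probm(\R^d)$ (recall the paper's standing convention that elements of $\probm(\R^d)$ have finite first moment), $\dW(\alpha,\beta)>0$, and the duality formula $\dW(\alpha,\beta)=\sup\{|\int\varphi\,d\alpha-\int\varphi\,d\beta|:\varphi\ \text{1-Lipschitz}\}$ immediately produces a separating $1$-Lipschitz function. You instead give an elementary, self-contained measure-theoretic argument: approximating $\mathbf{1}_F$ for closed $F$ by the bounded Lipschitz functions $\varphi_n(x)=\max(0,1-n\,\mathrm{dist}(x,F))$, passing to the limit by dominated convergence, and concluding $\alpha=\beta$ from agreement on the $\pi$-system of closed sets. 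Each step checks out ($\varphi_n$ is $n$-Lipschitz, decreases pointwise to $\mathbf{1}_F$ because $F$ is closed, and Dynkin's theorem applies since closed sets are closed under finite intersection and generate the Borel $\sigma$-algebra). What each approach buys: yours avoids invoking duality and the finite-first-moment hypothesis entirely, so it works for arbitrary Borel probability measures and even yields a \emph{bounded} separating function; the paper's is shorter given the quoted theorem and additionally shows the separating function can be taken $1$-Lipschitz with gap exactly $\dW(\alpha,\beta)$ — quantitative information that happens not to be needed in the downstream application (Proposition~\ref{prop:zero set of NN} only requires some Lipschitz $\varphi$), so both proofs serve equally well there.
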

\begin{proof}[Proof of \Cref{lm:int ineq}]
By Kantorovich duality (see for example Remark 6.5 in \cite{villani2009optimal}), 
\[\dW(\alpha,\beta)=\sup\left\{\left|\int_{\R^d} \varphi(x) \alpha(dx)-\int_{\R^d} \varphi(x) \beta(dx)\right|:\varphi:{\R^d}\rightarrow\R\text{ is 1-Lipschitz}.\right\}\]
Since $\alpha\neq \beta$, we have that $\dW(\alpha,\beta)>0$, and thus there exists a 1-Lipschitz $\varphi:{\R^d}\rightarrow\R$ such that $\int_{\R^d}\varphi(x)\alpha(dx)\neq \int_{\R^d}\varphi(x)\beta(dx).$
\end{proof}

Now, given any $(\mathcal{X},\ell_X)$ and $(\mathcal{Y},\ell_Y)$ such that $\dWLk\!\lc(\mX,\ell_X),(\mY,\ell_Y)\rc >0$, we have that 
$$\dW\!\lc \lc \WLh{k}{(\mX,\ell_X)}\rc_\#\mu_X,\lc \WLh{k}{(\mY,\ell_Y)}\rc_\#\mu_Y\rc=\dWLk\!\lc(\mX,\ell_X),(\mY,\ell_Y)\rc  >0.$$

Then, we prove that for each $i=1,\ldots,k$, there exists a Lipschitz map $\varphi_i:\R^{d_{i-1}}\rightarrow \R^{d_i}$ for suitable dimensions $d_{i-1}$ and $d_i$ such that
\begin{equation}\label{eq:separate points}
    \forall x\in X,\,y\in Y,\,\, \ell_X^{\scriptscriptstyle{(\varphi,i)}}(x)=\ell_Y^{\scriptscriptstyle{(\varphi,i)}}(y)\mbox{ iff }\WLh{i}{(\mX,\ell_X)}(x)=\WLh{i}{(\mY,\ell_Y)}(y).
\end{equation}
Given \Cref{eq:separate points}, it is obvious that $\lc \WLh{k}{(\mX,\ell_X)}\rc_\#\mu_X\neq \lc \WLh{k}{(\mY,\ell_Y)}\rc_\#\mu_Y$ implies that $\lc \ell_X^{\scriptscriptstyle{(\varphi,k)}}\rc_\#\mu_X\neq \lc \ell_Y^{\scriptscriptstyle{(\varphi,k)}}\rc_\#\mu_Y$. Then, by \Cref{lm:int ineq} there exists a Lipschitz map $\varphi_{k+1}:\R^{d_k}\rightarrow\R$ such that 
\[\int_{\R^d}\varphi_{k+1}(t)\lc \ell_X^{\scriptscriptstyle{(\varphi,k)}}\rc_\#\mu_X(dt)\neq \int_{\R^d}\varphi_{k+1}(t)\lc \ell_Y^{\scriptscriptstyle{(\varphi,k)}}\rc_\#\mu_Y(dt).\]
Then, if we let $\psi:\R\rightarrow\R$ be the identity map, we have that
\begin{align*}
    \psi\circ S_{\varphi_{k+1}}\circ F_{\varphi_k}\circ \cdots F_{\varphi_1}(\mX)&=\int_{\R^d}\varphi_{k+1}(t)\lc \ell_X^{\scriptscriptstyle{(\varphi,k)}}\rc_\#\mu_X(dt)\\
    &\neq \int_{\R^d}\varphi_{k+1}(t)\lc \ell_Y^{\scriptscriptstyle{(\varphi,k)}}\rc_\#\mu_Y(dt)\\
    &=\psi\circ S_{\varphi_{k+1}}\circ F_{\varphi_k}\circ \cdots F_{\varphi_1}(\mY).
\end{align*}

To conclude the proof, we prove \Cref{eq:separate points} by induction on $i=1,\ldots,k$. When $i=1$, let
$$A_1\coloneqq\left\{(x,y)\in X\times Y:\,(\ell_X)_\#m_x^X\neq(\ell_Y)_\#m_y^Y\right\}.$$
Since $X$ and $Y$ are finite, $A_1$ is a finite set. We enumerate elements in $A_1$ and write $A_1=\{(x_1,y_1),\ldots,(x_{d_1},y_{d_1})\}$ By \Cref{lm:int ineq}, for each $j=1,\ldots,d_1$, there exists a Lipschitz map $\varphi^j_1:\R^d\rightarrow\R$ such that
\[\int_{\R^d}\varphi^j_1(t)(\ell_X)_\#m_{x_j}^X(dt)\neq \int_{\R^d}\varphi^j_1(t)(\ell_Y)_\#m_{y_j}^Y(dt).\]
We then let $\varphi_1\coloneqq\lc \varphi_1^1,\varphi_1^2,\ldots,\varphi_1^{d_1}\rc :\R^d\rightarrow\R^{d_1}$. $\varphi_1$ is obviously a Lipschitz map and it satisfies that
\[ \forall x\in X,\,y\in Y,\,\, (\ell_X)_\#m_x^X=(\ell_Y)_\#m_y^Y\mbox{ iff }\int_{\R^d}\varphi_1(t)(\ell_X)_\#m_x^X(dt)= \int_{\R^d}\varphi_1(t)(\ell_Y)_\#m_y^Y(dt).\]
Equivalent speaking, 
\[ \forall x\in X,\,y\in Y,\,\, \WLh{1}{(\mX,\ell_X)}(x)=\WLh{1}{(\mY,\ell_Y)}(y)\mbox{ iff }\ell_X^{\scriptscriptstyle{(\varphi,1)}}(x)=\ell_Y^{\scriptscriptstyle{(\varphi,1)}}(y).\]

Now, we assume that \Cref{eq:separate points} holds for some $i\geq 1$. For $i+1$, we let
$$A_{i+1}\coloneqq\left\{(x,y)\in X\times Y:\,\lc \ell_X^{\scriptscriptstyle{(\varphi,i)}}\rc_\#m_x^X\neq\lc \ell_Y^{\scriptscriptstyle{(\varphi,i)}}\rc_\#m_y^Y\right\}.$$
Since $X$ and $Y$ are finite, $A_{i+1}$ is a finite set. We enumerate elements in $A_{i+1}$ and write $A_{i+1}=\{(x_1,y_1),\ldots,(x_{d_{i+1}},y_{d_{i+1}})\}$ By \Cref{lm:int ineq}, for each $j=1,\ldots,d_{i+1}$, there exists a Lipschitz map $\varphi^j_{i+1}:\R^{d_i}\rightarrow\R$ such that
\[\int_{\R^{d_i}}\varphi^j_{i+1}(t)\lc \ell_X^{\scriptscriptstyle{(\varphi,i)}}\rc_\#m_{x_j}^X(dt)\neq \int_{\R^{d_i}}\varphi^j_{i+1}(t)\lc \ell_Y^{\scriptscriptstyle{(\varphi,i)}}\rc_\#m_{y_j}^Y(dt).\]
We then let $\varphi_{i+1}\coloneqq\lc \varphi_{i+1}^1,\varphi_{i+1}^2,\ldots,\varphi_{i+1}^{d_{i+1}}\rc :\R^{d_i}\rightarrow\R^{d_{i+1}}$. $\varphi_{i+1}$ is obviously a Lipschitz map and it satisfies that $\forall x\in X$ and $y\in Y,$
\[ \lc \ell_X^{\scriptscriptstyle{(\varphi,i)}}\rc_\#m_x^X=\lc \ell_Y^{\scriptscriptstyle{(\varphi,i)}}\rc_\#m_y^Y\mbox{ iff }\int_{\R^{d_i}}\varphi_{i+1}(t)\lc \ell_X^{\scriptscriptstyle{(\varphi,i)}}\rc_\#m_x^X(dt)= \int_{\R^{d_i}}\varphi_{i+1}(t)\lc \ell_Y^{\scriptscriptstyle{(\varphi,i)}}\rc_\#m_y^Y(dt).\]
Equivalent speaking, 
\[ \forall x\in X,\,y\in Y,\,\, \lc \ell_X^{\scriptscriptstyle{(\varphi,i)}}\rc_\#m_x^X=\lc \ell_Y^{\scriptscriptstyle{(\varphi,i)}}\rc_\#m_y^Y\mbox{ iff }\ell_X^{\scriptscriptstyle{(\varphi,i+1)}}(x)=\ell_Y^{\scriptscriptstyle{(\varphi,i+1)}}(y).\]
By the induction assumption, $\forall x\in X,\,y\in Y$ we have that
\[ \ell_X^{\scriptscriptstyle{(\varphi,i)}}(x)=\ell_Y^{\scriptscriptstyle{(\varphi,i)}}(y)\mbox{ iff }\WLh{i}{(\mX,\ell_X)}(x)=\WLh{i}{(\mY,\ell_Y)}(y).\]
This implies that
\[ \lc \ell_X^{\scriptscriptstyle{(\varphi,i)}}\rc_\#m_x^X=\lc \ell_Y^{\scriptscriptstyle{(\varphi,i)}}\rc_\#m_y^Y\mbox{ iff }\!\lc\WLh{i}{(\mX,\ell_X)}\rc_\#m_x^X=\lc\WLh{i}{(\mY,\ell_Y)}\rc_\#m_y^Y\mbox{ iff }\WLh{i+1}{(\mX,\ell_X)}(x)=\WLh{i+1}{(\mY,\ell_Y)}(y).\]
Therefore, 
\[ \ell_X^{\scriptscriptstyle{(\varphi,i+1)}}(x)=\ell_Y^{\scriptscriptstyle{(\varphi,i+1)}}(y)\mbox{ iff }\WLh{i+1}{(\mX,\ell_X)}(x)=\WLh{i+1}{(\mY,\ell_Y)}(y)\]
and we thus conclude the proof.


\subsubsection{Proof of \Cref{thm:universality}}\label{app:proof universal}

The proof of the theorem is based on the following Stone-Weierstrass theorem.

\begin{lemma}[Stone-Weierstrass]\label{lm:SW}
Let $X$ be a compact space. Let $\mathcal{F}\subseteq C(X,\mathbb{R})$ be a subalgebra containing the constant function 1. If moreover $\mathcal{F}$ separates points, then $\mathcal{F}$ is dense in $C(X,\mathbb{R})$.
\end{lemma}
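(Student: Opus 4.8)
The plan is to pass to the uniform closure $\overline{\mathcal{F}}$ of $\mathcal{F}$ inside $C(X,\R)$. Since $X$ is compact, uniform limits of sums, products, and scalar multiples are again sums, products, and scalar multiples, so $\overline{\mathcal{F}}$ is a \emph{closed} subalgebra; it still contains the constant $1$ and still separates points. It therefore suffices to show $\overline{\mathcal{F}} = C(X,\R)$. The key intermediate goal is to prove that $\overline{\mathcal{F}}$ is moreover a \emph{sublattice} of $C(X,\R)$, i.e.\ closed under the pointwise operations $\max$ and $\min$; once this is available the rest is a routine localization argument.

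The heart of the matter is the claim that $f\in\overline{\mathcal{F}}$ implies $|f|\in\overline{\mathcal{F}}$. For this I would use the elementary fact that the polynomials $p_n$ defined recursively by $p_0\equiv 0$ and $p_{n+1}(t)=p_n(t)+\tfrac12\bigl(t^2-p_n(t)^2\bigr)$ converge uniformly on $[-1,1]$ to $t\mapsto|t|$: an easy induction gives $0\le p_n(t)\le p_{n+1}(t)\le |t|$ on $[-1,1]$, so the pointwise (monotone) limit $\ell$ exists and satisfies $\ell(t)=\ell(t)+\tfrac12(t^2-\ell(t)^2)$, forcing $\ell(t)=|t|$; uniformity then follows from Dini's theorem. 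Given $f\in\overline{\mathcal{F}}$ with $f\not\equiv 0$, put $M=\sup_{x\in X}|f(x)|$; since $\mathcal{F}$ is an algebra containing $1$, each $M\,p_n(f/M)$ lies in $\overline{\mathcal{F}}$, and $M\,p_n(f/M)\to|f|$ uniformly, so $|f|\in\overline{\mathcal{F}}$. Consequently $\max(f,g)=\tfrac12\bigl(f+g+|f-g|\bigr)$ and $\min(f,g)=\tfrac12\bigl(f+g-|f-g|\bigr)$ both lie in $\overline{\mathcal{F}}$, establishing the sublattice property.

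Next I would carry out the standard two-step localization. First a one-step interpolation: since $\mathcal{F}$ separates points and $1\in\mathcal{F}$, for distinct $x,y\in X$ and any $a,b\in\R$ there is $\phi\in\mathcal{F}$ with $\phi(x)=a,\ \phi(y)=b$ (pick $\psi\in\mathcal{F}$ with $\psi(x)\ne\psi(y)$ and take the affine combination $\phi=a+(b-a)\tfrac{\psi-\psi(x)}{\psi(y)-\psi(x)}$). Now fix $h\in C(X,\R)$ and $\eps>0$. For a fixed $x$, for each $y\in X$ choose $g_{x,y}\in\mathcal{F}$ agreeing with $h$ at $x$ and at $y$; by continuity $\{z:g_{x,y}(z)<h(z)+\eps\}$ is an open neighborhood of $y$, so compactness of $X$ yields $y_1,\dots,y_n$ whose neighborhoods cover $X$, and $g_x:=\min(g_{x,y_1},\dots,g_{x,y_n})\in\overline{\mathcal{F}}$ satisfies $g_x(x)=h(x)$ and $g_x<h+\eps$ on all of $X$. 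Varying $x$: $\{z:g_x(z)>h(z)-\eps\}$ is an open neighborhood of $x$, compactness yields $x_1,\dots,x_m$ covering $X$, and $g:=\max(g_{x_1},\dots,g_{x_m})\in\overline{\mathcal{F}}$ satisfies $h-\eps<g<h+\eps$ everywhere, i.e.\ $\sup_{x\in X}|h(x)-g(x)|\le\eps$. Letting $\eps\to 0$ and using that $\overline{\mathcal{F}}$ is closed gives $h\in\overline{\mathcal{F}}$, hence $\overline{\mathcal{F}}=C(X,\R)$, so $\mathcal{F}$ is dense.

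The only genuinely substantive step is the passage from ``algebra'' to ``lattice'' via uniform polynomial approximation of $|t|$; everything afterward is a routine compactness/continuity argument. An alternative to the recursive construction would be to invoke the classical Weierstrass approximation theorem to approximate $|t|$ on $[-M,M]$, but the self-contained recursion keeps the argument within elementary analysis and avoids any appearance of circularity. (In a write-up one could of course simply cite a standard functional-analysis reference for this lemma.)
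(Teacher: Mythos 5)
Your proof is correct. Note that the paper does not prove this lemma at all: it is quoted as the classical Stone--Weierstrass theorem and used as a black box in the proof of Theorem~\ref{thm:universality}, so there is no "paper proof" to compare against; what you have written is the standard textbook argument (Stone's proof), supplied in full. All the essential steps check out: the uniform closure $\overline{\mathcal{F}}$ is again a subalgebra containing $1$ and separating points (boundedness on the compact $X$ makes products behave under uniform limits); the recursion $p_{n+1}(t)=p_n(t)+\tfrac12\bigl(t^2-p_n(t)^2\bigr)$ does converge monotonically, hence by Dini uniformly, to $|t|$ on $[-1,1]$, which gives $|f|\in\overline{\mathcal{F}}$ after rescaling by $M=\sup_X|f|$ and hence the lattice property via $\max,\min=\tfrac12(f+g\pm|f-g|)$; and the two-step compactness localization (first a $\min$ over finitely many $g_{x,y_i}$ to get $g_x\le h+\varepsilon$ with $g_x(x)=h(x)$, then a $\max$ over finitely many $g_{x_j}$) is the standard conclusion. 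Two cosmetic remarks: for $y=x$ the interpolation step should use the constant function $h(x)\cdot 1\in\mathcal{F}$ (your affine formula needs $x\neq y$), and the case $f\equiv 0$ is trivial, so the restriction $f\not\equiv 0$ before dividing by $M$ costs nothing. In the context of this paper, simply citing a standard reference would suffice, but your self-contained version is sound.
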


\paragraph{$\mathcal{N\!N}_k(\R^d)$ contains 1.} Given any choice of $\varphi_i$s, we let $\psi:\R^{d_{k+1}}\rightarrow\mathbb{R}$ be the constant map $1$. Then, the corresponding function $h=\psi\circ S_{\varphi_{k+1}}\circ F_{\varphi_k}\circ \cdots F_{\varphi_1}\equiv 1\in \mathcal{N\!N}_k(\R^d)$. 

\paragraph{$\mathcal{N\!N}_k(\R^d)$ separates points.} This follows from item 2 in \Cref{prop:zero set of NN}.

\paragraph{$\mathcal{N\!N}_k(\R^d)$ is a subalgebra.}
By \Cref{eq:lip ineq}, we have that 
\[\mathcal{N\!N}_k(\R^d)\coloneqq\{\psi\circ S_{\varphi_{k+1}}\circ F_{\varphi_k}\circ \cdots \circ F_{\varphi_1}:\forall \psi, \varphi_i,i=1,\ldots,k+1\}\subseteq C(\mathcal{K},\mathbb{R}).\]  Next, we show that $\mathcal{N\!N}_k(\R^d)$ is, in fact, a subalgebra of $C(\mathcal{K},\mathbb{R})$. Given any constant $c$ and function $h=\psi\circ S_{\varphi_{k+1}}\circ F_{\varphi_k}\circ \cdots \circ F_{\varphi_1}$, we have that
\[c\cdot h=c\cdot \psi\circ S_{\varphi_{k+1}}\circ F_{\varphi_k}\circ \cdots F_{\varphi_1}=(c\cdot \psi)\circ S_{\varphi_{k+1}}\circ F_{\varphi_k}\circ \cdots F_{\varphi_1}\in\mathcal{N\!N}_k(\R^d).\]
Then, we show that the sum and the product of any $h_1=\psi\circ S_{\varphi_{k+1}}\circ F_{\varphi_k}\circ \cdots F_{\varphi_1}$ and $h_2=\tilde{\psi}\circ S_{\tilde{\varphi}_{k+1}}\circ F_{\tilde{\varphi}_k}\circ \cdots F_{\tilde{\varphi}_1}$ belongs to $\mathcal{N\!N}_k(\R^d)$. We define 
$$\Phi_1\coloneqq (\varphi_1,\tilde{\varphi}_1):\R^d\rightarrow\R^{d_1}\times \R^{\tilde{d}_1},$$
and for each $2\leq i\leq k+1$, we define 
$$\Phi_i=\varphi_i\times\tilde{\varphi}_i :\R^{d_{i-1}}\times \R^{\tilde{d}_{i-1}}\rightarrow \R^{d_{i}}\times \R^{\tilde{d}_{i}}.$$

Obviously, we have that for each $i=1,\ldots,k+1$, $\Phi_i$ inherits the Lipschitz property from $\varphi_i$ and $\tilde{\varphi}_i$:

\begin{claim}\label{claim:product map lip}
For each $i=1,\ldots,k$, assume that $\varphi_i$ is $C_i$-Lipschitz and $\tilde{\varphi}_i$ is $\tilde{C}_i$-Lipschitz, then $\Phi_i$ is $\max(C_i,\tilde{C}_i)$-Lipschitz.
\end{claim}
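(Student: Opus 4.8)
The plan is to check the Lipschitz bound directly from the definition, treating the first layer ($i=1$) and the subsequent layers ($2\le i\le k+1$) separately, since $\Phi_1=(\varphi_1,\tilde{\varphi}_1)$ is a ``diagonal''-type map sending a single copy of $\R^d$ into a product, whereas $\Phi_i=\varphi_i\times\tilde{\varphi}_i$ for $i\ge 2$ is a genuine product of two maps. First I would pin down the metric on a product $\R^a\times\R^b$ to be the one induced by $\|(u,v)\|\coloneqq\max(\|u\|,\|v\|)$ (used consistently both on the domain of $\Phi_i$ and on its codomain, so that $\Phi_{i+1}$ sees the same convention); I would also remark that the precise choice of product metric is immaterial for the downstream argument, which only uses that each $\Phi_i$ is Lipschitz, and any equivalent norm merely changes the constant.

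For $2\le i\le k+1$, given $(u,v),(u',v')\in\R^{d_{i-1}}\times\R^{\tilde{d}_{i-1}}$, one has
\[\|\Phi_i(u,v)-\Phi_i(u',v')\|=\max\big(\|\varphi_i(u)-\varphi_i(u')\|,\ \|\tilde{\varphi}_i(v)-\tilde{\varphi}_i(v')\|\big)\le\max\big(C_i\|u-u'\|,\ \tilde{C}_i\|v-v'\|\big),\]
and bounding each of $\|u-u'\|,\|v-v'\|$ by $\|(u,v)-(u',v')\|$ and each of $C_i,\tilde{C}_i$ by $\max(C_i,\tilde{C}_i)$ yields $\|\Phi_i(u,v)-\Phi_i(u',v')\|\le\max(C_i,\tilde{C}_i)\,\|(u,v)-(u',v')\|$. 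For $i=1$, given $x,x'\in\R^d$, directly $\|\Phi_1(x)-\Phi_1(x')\|=\max(\|\varphi_1(x)-\varphi_1(x')\|,\|\tilde{\varphi}_1(x)-\tilde{\varphi}_1(x')\|)\le\max(C_1\|x-x'\|,\tilde{C}_1\|x-x'\|)=\max(C_1,\tilde{C}_1)\|x-x'\|$.

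The ``main obstacle'' here is really only the bookkeeping choice of product metric; once that is fixed, the estimate is a one-line consequence of the componentwise structure of the maps, with no compactness, measure-theoretic, or optimal-transport input needed. This claim is purely a statement about finite-dimensional Lipschitz maps, and it feeds into the Stone--Weierstrass argument solely to guarantee that the composite $\psi\circ S_{\Phi_{k+1}}\circ F_{\Phi_k}\circ\cdots\circ F_{\Phi_1}$ built from two given $\LMMCNN{k}$s has Lipschitz intermediate maps, hence is again a legitimate $\LMMCNN{k}$, which is what makes $\mathcal{N\!N}_k(\R^d)$ closed under sums and products.
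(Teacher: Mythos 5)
Your proof is correct; the paper itself offers no argument for this claim (it is asserted as obvious), and your direct componentwise verification is exactly the intended one. Your care in fixing the max norm on the product is warranted — with the Euclidean product norm the diagonal map $\Phi_1$ would only be $\sqrt{C_1^2+\tilde{C}_1^2}$-Lipschitz — and your observation that the downstream Stone--Weierstrass argument needs only \emph{some} Lipschitz constant makes the choice harmless.
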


We let $P:\R^{d_{k+1}}\times \R^{\tilde{d}_{k+1}}\rightarrow \R^{d_{k+1}}$ and $\tilde{P}:\R^{d_{k+1}}\times \R^{\tilde{d}_{k+1}}\rightarrow \R^{\tilde{d}_{k+1}}$ denote projection maps. Then, we can rewrite $h_1$ and $h_2$ as follows
\begin{claim}\label{claim:projection}
$h_1=\psi\circ S_{\varphi_{k+1}}\circ F_{\varphi_k}\circ \cdots F_{\varphi_1}=\psi\circ P\circ S_{\Phi_{k+1}}\circ F_{\Phi_k}\circ \cdots F_{\Phi_1}$ and $h_2=\tilde{\psi}\circ S_{\tilde{\varphi}_{k+1}}\circ F_{\tilde{\varphi}_k}\circ \cdots F_{\tilde{\varphi}_1}=\tilde{\psi}\circ \tilde{P}\circ S_{\Phi_{k+1}}\circ F_{\Phi_k}\circ \cdots F_{\Phi_1}$.
\end{claim}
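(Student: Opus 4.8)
The plan is to prove the claim by tracking the two label functions through the compositions simultaneously. Writing $(\mX,\ell_X^{(\varphi,i)})$ and $(\mX,\ell_X^{(\tilde\varphi,i)})$ for the LMMCs obtained via $F_{\varphi_i}\circ\cdots\circ F_{\varphi_1}$ and $F_{\tilde\varphi_i}\circ\cdots\circ F_{\tilde\varphi_1}$ as in \eqref{eq:notation ell}, I would establish by induction on $i$ that
\[F_{\Phi_i}\circ\cdots\circ F_{\Phi_1}\big((\mX,\ell_X)\big)=\Big(\mX,\big(\ell_X^{(\varphi,i)},\ell_X^{(\tilde\varphi,i)}\big)\Big),\]
i.e. running the ``product'' pipeline is the same as running the two original pipelines in parallel and stacking their labels. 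Measurability/Lipschitzness of each $\Phi_i$ (needed for $q_{\Phi_i}$, $F_{\Phi_i}$ to be well defined) is exactly \Cref{claim:product map lip}.

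For the inductive step, the main observation is that $q_{\varphi\times\tilde\varphi}$ of a measure on a product space depends only on the two marginals, and a pushforward under a pair of maps has those pushforwards as marginals. Concretely, if a label function has the form $\ell=(\alpha,\beta)$ with $\alpha:X\to\R^{d_{i-1}}$ and $\beta:X\to\R^{\tilde d_{i-1}}$, then for $\Phi_i=\varphi_i\times\tilde\varphi_i$ with $i\ge 2$ the measure $(\ell)_\# m_x^X$ has marginals $(\alpha)_\# m_x^X$ and $(\beta)_\# m_x^X$, hence
\[\ell^{\Phi_i}(x)=\int\Phi_i\,d\big((\ell)_\# m_x^X\big)=\Big(\int\varphi_i\,d\big((\alpha)_\# m_x^X\big),\ \int\tilde\varphi_i\,d\big((\beta)_\# m_x^X\big)\Big)=\big(\alpha^{\varphi_i}(x),\beta^{\tilde\varphi_i}(x)\big).\]
The base case $i=1$ looks slightly different only because $\Phi_1=(\varphi_1,\tilde\varphi_1)$ is a map out of the common domain $\R^d$ rather than a literal product, but the same computation — integrating a tuple of functions against the single measure $(\ell_X)_\# m_x^X$ yields the tuple of integrals — gives $\ell_X^{\Phi_1}(x)=\big(\ell_X^{\varphi_1}(x),\ell_X^{\tilde\varphi_1}(x)\big)$, which starts the induction.

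To finish, I would apply $S_{\Phi_{k+1}}$ to $\big(\mX,(\ell_X^{(\varphi,k)},\ell_X^{(\tilde\varphi,k)})\big)$ and use the same marginal argument, now with $\mu_X$ in place of $m_x^X$, to get the pair $\big(S_{\varphi_{k+1}}((\mX,\ell_X^{(\varphi,k)})),\,S_{\tilde\varphi_{k+1}}((\mX,\ell_X^{(\tilde\varphi,k)}))\big)\in\R^{d_{k+1}}\times\R^{\tilde d_{k+1}}$; then $P$ (resp. $\tilde P$) extracts the first (resp. second) coordinate, and composing with $\psi$ (resp. $\tilde\psi$) recovers $h_1$ (resp. $h_2$). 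I expect the only real care to be in the bookkeeping of domains — keeping the $i=1$ step (shared domain) conceptually separate from the $i\ge 2$ steps (genuine product domain) — and in recording that marginals of pushforwards behave as claimed; neither of these involves any serious difficulty.
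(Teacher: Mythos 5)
Your proposal is correct and follows essentially the same route as the paper: an induction on $i$ showing that the $\Phi$-pipeline computes the pair $\lc \ell_X^{\scriptscriptstyle{(\varphi,i)}},\ell_X^{\scriptscriptstyle{(\tilde\varphi,i)}}\rc$, followed by applying $S_{\Phi_{k+1}}$ and projecting via $P$, $\tilde P$. If anything, your justification of the inductive step is slightly more careful than the paper's, which writes the pushforward $\lc(\ell_X^{\scriptscriptstyle{(\varphi,i)}},\ell_X^{\scriptscriptstyle{(\tilde\varphi,i)}})\rc_\#m_x^X$ as the product measure of the two pushforwards (false as an identity of measures, though harmless after integrating the product-form map $\Phi_{i+1}$), whereas you correctly observe that only the marginals matter for $q_{\Phi_{i+1}}$.
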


\begin{proof}[Proof of Claim \ref{claim:projection}]
Recall notation from \Cref{eq:notation ell}. Then, we first prove inductively on $i=1,\ldots,k$ that for any $\mathcal{X}\in\mathcal{K}$ 
\begin{equation}\label{eq:Q=(q,q)}
    \ell_X^{\scriptscriptstyle{(\Phi,i)}}(x)=\lc \ell_X^{\scriptscriptstyle{(\varphi,i)}}(x),\ell_X^{\scriptscriptstyle{(\tilde{\varphi},i)}}(x)\rc ,\quad\forall x\in X.
\end{equation}
When $i=1$,
$$\ell_X^{\scriptscriptstyle{(\Phi,1)}}(x)=q_{\Phi_1}((\ell_X)_\#m_x^X))=(q_{\varphi_1}((\ell_X)_\#m_x^X),q_{\tilde{\varphi}_1}((\ell_X)_\#m_x^X))=\lc \ell_X^{\scriptscriptstyle{(\varphi,1)}}(x),\ell_X^{\scriptscriptstyle{(\tilde{\varphi},1)}}(x)\rc.$$
Now, we assume that Equation \eqref{eq:Q=(q,q)} holds for some $i\geq 1$. Then, for $i+1$, we have that
\begin{align*}
    \ell_X^{\scriptscriptstyle{(\Phi,i+1)}}(x)&=q_{\Phi_{i+1}}\!\lc\lc \ell_X^{\scriptscriptstyle{(\Phi,i)}}\rc_\#m_x^X\rc=q_{\Phi_{i+1}}\!\lc\lc \lc \ell_X^{\scriptscriptstyle{(\varphi,i)}},\ell_X^{\scriptscriptstyle{(\tilde{\varphi},i)}}\rc \rc_\#m_x^X\rc\\
    &=q_{\Phi_{i+1}}\!\lc \lc \ell_X^{\scriptscriptstyle{(\varphi,i)}}\rc_\#m_x^X\otimes \lc \ell_X^{\scriptscriptstyle{(\tilde{\varphi},i)}}\rc_\#m_x^X\rc \\
    &=\lc q_{\varphi_{i+1}}\!\lc \lc \ell_X^{\scriptscriptstyle{(\varphi,i)}}\rc_\#m_x^X\rc ,q_{\tilde{\varphi}_{i+1}}\!\lc \lc \ell_X^{\scriptscriptstyle{(\tilde{\varphi},i)}}\rc_\#m_x^X\rc \rc \\
    &=\lc \ell_X^{\scriptscriptstyle{(\varphi,i+1)}}(x),\ell_X^{\scriptscriptstyle{(\tilde{\varphi},i+1)}}(x)\rc ,
\end{align*}
which concludes the proof of Equation \eqref{eq:Q=(q,q)}.

Similarly,
\begin{align*}
     S_{\Phi_{k+1}}\circ F_{\Phi_k}\circ \cdots F_{\Phi_1}((\mathcal{X},\ell_X))&= q_{\Phi_{k+1}}\!\lc\lc \ell_X^{\scriptscriptstyle{(\Phi,k)}}\rc_\#\mu_X\rc=q_{\Phi_{k+1}}\!\lc\lc \lc \ell_X^{\scriptscriptstyle{(\varphi,k)}},\ell_X^{\scriptscriptstyle{(\tilde{\varphi},k)}}\rc \rc_\#\mu_X\rc
     \\
    &=q_{\Phi_{k+1}}\!\lc \lc \ell_X^{\scriptscriptstyle{(\varphi,k)}}\rc_\#\mu_X\otimes \lc \ell_X^{\scriptscriptstyle{(\tilde{\varphi},k)}}\rc_\#\mu_X\rc \\
    &=\lc q_{\varphi_{k+1}}\!\lc \lc \ell_X^{\scriptscriptstyle{(\varphi,k)}}\rc_\#\mu_X\rc ,q_{\tilde{\varphi}_{k+1}}\!\lc \lc \ell_X^{\scriptscriptstyle{(\tilde{\varphi},k)}}\rc_\#\mu_X\rc \rc \\
    &=\lc S_{\varphi_{k+1}}\circ F_{\varphi_k}\circ \cdots F_{\varphi_1}((\mathcal{X},\ell_X)),S_{\tilde{\varphi}_{k+1}}\circ F_{\tilde{\varphi}_k}\circ \cdots F_{\tilde{\varphi}_1}((\mathcal{X},\ell_X))\rc.
\end{align*}
Therefore, $\psi\circ S_{\varphi_{k+1}}\circ F_{\varphi_k}\circ \cdots F_{\varphi_1}=\psi\circ P_\#\circ S_{\Phi_{k+1}}\circ F_{\Phi_k}\circ \cdots F_{\Phi_1}$ and similarly, $\tilde{\psi}\circ S_{\tilde{\varphi}_{k+1}}\circ F_{\tilde{\varphi}_k}\circ \cdots F_{\tilde{\varphi}_1}=\tilde{\psi}\circ \tilde{P}_\#\circ S_{\Phi_{k+1}}\circ F_{\Phi_k}\circ \cdots F_{\Phi_1}$.
\end{proof}

Given these claims, we then have that 
\[\psi\circ S_{\varphi_{k+1}}\circ F_{\varphi_k}\circ \cdots F_{\varphi_1}+\tilde{\psi}\circ S_{\tilde{\varphi}_{k+1}}\circ F_{\tilde{\varphi}_k}\circ \cdots F_{\tilde{\varphi}_1}=(\psi\circ P+\tilde{\psi}\circ \tilde{P})\circ S_{\Phi_{k+1}}\circ F_{\Phi_k}\circ \cdots F_{\Phi_1}\in\mathcal{N\!N}_k(\R^d)\]
and
\[\psi\circ S_{\varphi_{k+1}}\circ F_{\varphi_k}\circ \cdots F_{\varphi_1}\times\tilde{\psi}\circ S_{\tilde{\varphi}_{k+1}}\circ F_{\tilde{\varphi}_k}\circ \cdots F_{\tilde{\varphi}_1}=(\psi\circ P\times\tilde{\psi}\circ \tilde{P})\circ S_{\Phi_{k+1}}\circ F_{\Phi_k}\circ \cdots F_{\Phi_1}\in\mathcal{N\!N}_k(\R^d).\]

\subsection{Proofs from \Cref{sec:dGW}}\label{sec:proof iso}
\subsubsection{Proof of \Cref{prop:kGWmetric}}
The proof is rather lengthy, and we start with some preliminary definitions and lemmas.

\begin{definition}
Suppose two finite metric spaces $X$ and $Y$ are given. We say a sequence of measurable maps $\{(\nu_n)_{\bullet,\bullet}:X\times Y\rightarrow\prob(X\times Y)\}_{n\in\N}$ weakly converges to $\nu_{\bullet,\bullet}:X\times Y\rightarrow\prob(X\times Y)$ if  $\{(\nu_n)_{x,y}\}_{n\in\N}\subseteq\prob(X\times Y)$ weakly converges to $\nu_{x,y}\in\prob(X\times Y)$ for all $(x,y)\in X\times Y$.
\end{definition}

\begin{lemma}\label{lemma:productweakconv}
Suppose two finite metric spaces $X$ and $Y$ are given. If a sequence of probability measures $\{\gamma_n\}_{n\in\N}\subseteq\prob(X\times Y)$ weakly converges to $\gamma\in\prob(X\times Y)$ and a sequence of measurable maps $\{(\nu_n)_{\bullet,\bullet}:X\times Y\rightarrow\prob(X\times Y)\}_{n\in\N}$ weakly converges to $\nu_{\bullet,\bullet}:X\times Y\rightarrow\prob(X\times Y)$, then the sequence $\{(\nu_n)_{\bullet,\bullet}\odot\gamma_n\}_{n\in\N}$ also weakly converges to $\nu_{\bullet,\bullet}\odot\gamma$.
\end{lemma}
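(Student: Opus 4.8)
The plan is to exploit that $X\times Y$ is a \emph{finite} set, so that weak convergence of probability measures on $X\times Y$ is nothing more than pointwise convergence of the associated weight functions: a sequence $\{\alpha_n\}\subseteq\prob(X\times Y)$ converges weakly to $\alpha$ if and only if $\alpha_n(\{(x',y')\})\to\alpha(\{(x',y')\})$ for every $(x',y')\in X\times Y$ (test against the indicator of each singleton, which is continuous on a finite metric space). Hence the whole statement reduces to showing that for each fixed $(x',y')\in X\times Y$,
\[\big((\nu_n)_{\bullet,\bullet}\odot\gamma_n\big)(\{(x',y')\})\ \longrightarrow\ \big(\nu_{\bullet,\bullet}\odot\gamma\big)(\{(x',y')\}).\]

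The next step is to write out the $\odot$ operation explicitly. By its definition, $\big((\nu_n)_{\bullet,\bullet}\odot\gamma_n\big)(\{(x',y')\})=\sum_{(x,y)\in X\times Y}(\nu_n)_{x,y}(\{(x',y')\})\,\gamma_n(\{(x,y)\})$, and likewise for the limit objects. Now apply the hypotheses term by term: weak convergence $(\nu_n)_{\bullet,\bullet}\to\nu_{\bullet,\bullet}$ gives, for each fixed $(x,y)$, that $(\nu_n)_{x,y}(\{(x',y')\})\to\nu_{x,y}(\{(x',y')\})$, while weak convergence $\gamma_n\to\gamma$ gives $\gamma_n(\{(x,y)\})\to\gamma(\{(x,y)\})$. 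Each summand is therefore a product of two convergent real sequences, so it converges to the corresponding product of limits; and since the index set $X\times Y$ is finite, the finite sum converges to the sum of the limits, which is exactly $\big(\nu_{\bullet,\bullet}\odot\gamma\big)(\{(x',y')\})$. This completes the argument, once one also recalls (as was noted when $\odot$ was introduced) that $(\nu_n)_{\bullet,\bullet}\odot\gamma_n$ and $\nu_{\bullet,\bullet}\odot\gamma$ are genuine probability measures on $X\times Y$, so that ``weak convergence'' of the sequence is meaningful in the first place.

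There is essentially no obstacle here beyond bookkeeping; the only mild point of care is that $\gamma_n$ and $(\nu_n)_{\bullet,\bullet}$ both vary with $n$ simultaneously, so one cannot hold one of them fixed — but on a finite space this is dispatched by the elementary facts that a product of convergent sequences converges to the product of the limits and a finite sum of convergent sequences converges to the sum of the limits. An alternative, slightly more coordinate-free route is to test against an arbitrary bounded continuous $f:X\times Y\to\R$: set $g_n(x,y)\coloneqq\int f\,d(\nu_n)_{x,y}$, observe that $g_n\to g\coloneqq\int f\,d\nu_{\bullet,\bullet}$ pointwise with $\|g_n\|_\infty\le\|f\|_\infty$, and then pass to the limit in $\int g_n\,d\gamma_n$ using finiteness of $X\times Y$; this is the form of the argument that would also adapt — via dominated convergence together with a uniform-continuity estimate — to compact rather than merely finite underlying spaces, should that generality ever be needed.
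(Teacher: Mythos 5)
Your argument is correct, and it is more elementary than the one in the paper. You reduce weak convergence on the finite space $X\times Y$ to convergence of the masses of singletons (indicators of points being continuous), expand $(\nu_n)_{\bullet,\bullet}\odot\gamma_n(\{(x',y')\})$ as the finite sum $\sum_{(x,y)}(\nu_n)_{x,y}(\{(x',y')\})\,\gamma_n(\{(x,y)\})$, and conclude by the limit laws for products and finite sums of convergent real sequences. The paper instead tests against an arbitrary bounded continuous $\phi$, splits $\bigl|\int\phi\,d[(\nu_n)_{\bullet,\bullet}\odot\gamma_n]-\int\phi\,d[\nu_{\bullet,\bullet}\odot\gamma]\bigr|$ by the triangle inequality into a term where only $(\nu_n)_{\bullet,\bullet}$ varies and a term where only $\gamma_n$ varies, and handles the first by bounded convergence (implicitly using that pointwise convergence over the finitely many base points $(x,y)$ is automatically uniform, so the varying $\gamma_n$ causes no trouble) and the second by weak convergence of $\gamma_n$ together with finiteness. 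Your primary route buys complete transparency at the cost of being tied to the finite setting; the alternative you sketch at the end---setting $g_n(x,y)=\int f\,d(\nu_n)_{x,y}$, noting $g_n\to g$ pointwise with $\|g_n\|_\infty\le\|f\|_\infty$, and passing to the limit in $\int g_n\,d\gamma_n$---is essentially the paper's argument and, as you observe, is the version that would generalize beyond finite spaces. No gaps.
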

\begin{proof}
Fix an arbitrary continuous bounded map $\phi:X\times Y\longrightarrow\R$. Then,

\begin{align*}
    &\left\vert \int_{X\times Y}\phi(x,y)\,(\nu_n)_{\bullet,\bullet}\odot\gamma_n(dx\times dy)-\int_{X\times Y}\phi(x,y)\,\nu_{\bullet,\bullet}\odot\gamma(dx\times dy)\right\vert\\
    &\leq\left\vert \int_{X\times Y}\phi(x,y)\,(\nu_n)_{\bullet,\bullet}\odot\gamma_n(dx\times dy)-\int_{X\times Y}\phi(x,y)\,\nu_{\bullet,\bullet}\odot\gamma_n(dx\times dy)\right\vert\\
    &\quad+\left\vert \int_{X\times Y}\phi(x,y)\,\nu_{\bullet,\bullet}\odot\gamma_n(dx\times dy)-\int_{X\times Y}\phi(x,y)\,\nu_{\bullet,\bullet}\odot\gamma(dx\times dy)\right\vert.
\end{align*}

By the weak convergence of $\{(\nu_n)_{\bullet,\bullet}\}_{n\in\N}$ and by applying the bounded convergence theorem, we have that 
$$\int_{X\times Y}\phi(x,y)\,(\nu_n)_{\bullet,\bullet}\odot\gamma_n(dx\times dy)=\int_{X\times Y}\int_{X\times Y}\phi(x,y)\,(\nu_n)_{x',y'}(dx\times dy)\gamma_n(dx'\times dy')$$
converges to
$$\int_{X\times Y}\phi(x,y)\,\nu_{\bullet,\bullet}\odot\gamma_n(dx\times dy)=\int_{X\times Y}\int_{X\times Y}\phi(x,y)\,\nu_{x',y'}(dx\times dy)\gamma_n(dx'\times dy').$$

Also, since $\{\gamma_n\}_{n\in\N}$ weakly converges to $\gamma$, by finiteness of $X$ and $Y$ we have that
$$\int_{X\times Y}\phi(x,y)\,\nu_{\bullet,\bullet}\odot\gamma_n(dx\times dy)=\int_{X\times Y}\int_{X\times Y}\phi(x,y)\,\nu_{x',y'}(dx\times dy)\gamma_n(dx'\times dy')$$
converges to
$$\int_{X\times Y}\phi(x,y)\,\nu_{\bullet,\bullet}\odot\gamma(dx\times dy)=\int_{X\times Y}\int_{X\times Y}\phi(x,y)\,\nu_{x',y'}(dx\times dy)\gamma(dx'\times dy').$$

Hence, $\left\vert \int_{X\times Y}\phi(x,y)\,(\nu_n)_{\bullet,\bullet}\odot\gamma_n(dx\times dy)-\int_{X\times Y}\phi(x,y)\,\nu_{\bullet,\bullet}\odot\gamma(dx\times dy)\right\vert$ converges to zero as we required. This completes the proof.
\end{proof}

\begin{lemma}\label{lemma:kfoldweakconve}
Suppose two MCMSs $(\mX,d_X)$, $(\mY,d_Y)$, $k\geq 1$, and a sequence of $k$-step couplings $\{(\nu_n^{\scriptscriptstyle{(k)}})_{\bullet,\bullet}\}_{n\in\N}\subseteq\cpl^{\scriptscriptstyle{(k)}}\!\lc m_\bullet^X,m_\bullet^Y\rc$ are given. Then, there is a $k$-step coupling $\nu_{\bullet,\bullet}^{\scriptscriptstyle{(k)}}\in\cpl^{\scriptscriptstyle{(k)}}\!\lc m_\bullet^X,m_\bullet^Y\rc$ to which the sequence $\{(\nu_n^{\scriptscriptstyle{(k)}})_{\bullet,\bullet}\}_{n\in\N}$ converges.
\end{lemma}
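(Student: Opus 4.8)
The plan is to argue by induction on $k$, using at each stage a diagonal/subsequence extraction together with the weak compactness of the relevant spaces of couplings and the continuity result from \Cref{lemma:productweakconv}. For the base case $k=1$, recall that a $1$-step coupling is simply a measurable map $\nu_{\bullet,\bullet}^{\scriptscriptstyle{(1)}}:X\times Y\to\prob(X\times Y)$ with $\nu_{x,y}^{\scriptscriptstyle{(1)}}\in\cpl(m_x^X,m_y^Y)$ for each $(x,y)$. Since $X$ and $Y$ are finite, such a map is just a finite tuple $\big((\nu_n^{\scriptscriptstyle{(1)}})_{x,y}\big)_{(x,y)\in X\times Y}$ of elements of the respective compact sets $\cpl(m_x^X,m_y^Y)$ (compact for the weak topology, by \cite[p.49]{villani2021topics}, which is invoked elsewhere in the paper). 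Because $X\times Y$ is finite, a finite product of compact sets is compact, so from $\{(\nu_n^{\scriptscriptstyle{(1)}})_{\bullet,\bullet}\}_{n\in\N}$ we can extract a subsequence converging (coordinatewise, hence in the sense of our definition of weak convergence of maps) to some $\nu_{\bullet,\bullet}^{\scriptscriptstyle{(1)}}$; the limit still satisfies $\nu_{x,y}^{\scriptscriptstyle{(1)}}\in\cpl(m_x^X,m_y^Y)$ for each $(x,y)$ since each $\cpl(m_x^X,m_y^Y)$ is closed, so $\nu_{\bullet,\bullet}^{\scriptscriptstyle{(1)}}\in\cpl^{\scriptscriptstyle{(1)}}(m_\bullet^X,m_\bullet^Y)$.

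For the inductive step, suppose the statement holds for $k-1$. By definition, for each $n$ there are a $(k-1)$-step coupling $(\nu_n^{\scriptscriptstyle{(k-1)}})_{\bullet,\bullet}$ and a $1$-step coupling $(\nu_n^{\scriptscriptstyle{(1)}})_{\bullet,\bullet}$ such that $(\nu_n^{\scriptscriptstyle{(k)}})_{x,y}=(\nu_n^{\scriptscriptstyle{(k-1)}})_{\bullet,\bullet}\odot(\nu_n^{\scriptscriptstyle{(1)}})_{x,y}$ for all $(x,y)$. First I would apply the inductive hypothesis to $\{(\nu_n^{\scriptscriptstyle{(k-1)}})_{\bullet,\bullet}\}$ to extract a subsequence converging to some $\nu_{\bullet,\bullet}^{\scriptscriptstyle{(k-1)}}\in\cpl^{\scriptscriptstyle{(k-1)}}(m_\bullet^X,m_\bullet^Y)$; then, along that subsequence, apply the base-case argument to $\{(\nu_n^{\scriptscriptstyle{(1)}})_{\bullet,\bullet}\}$ to extract a further subsequence converging to some $\nu_{\bullet,\bullet}^{\scriptscriptstyle{(1)}}\in\cpl^{\scriptscriptstyle{(1)}}(m_\bullet^X,m_\bullet^Y)$. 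Define $\nu_{x,y}^{\scriptscriptstyle{(k)}}\coloneqq\nu_{\bullet,\bullet}^{\scriptscriptstyle{(k-1)}}\odot\nu_{x,y}^{\scriptscriptstyle{(1)}}$, which is a $k$-step coupling by construction. It remains to check that along this doubly-extracted subsequence, $(\nu_n^{\scriptscriptstyle{(k)}})_{x,y}$ weakly converges to $\nu_{x,y}^{\scriptscriptstyle{(k)}}$ for each fixed $(x,y)$: this is exactly \Cref{lemma:productweakconv} applied with $\gamma_n=(\nu_n^{\scriptscriptstyle{(1)}})_{x,y}\to\nu_{x,y}^{\scriptscriptstyle{(1)}}=\gamma$ and with the maps $(\nu_n^{\scriptscriptstyle{(k-1)}})_{\bullet,\bullet}\to\nu_{\bullet,\bullet}^{\scriptscriptstyle{(k-1)}}$, noting $(\nu_n^{\scriptscriptstyle{(k-1)}})_{\bullet,\bullet}\odot(\nu_n^{\scriptscriptstyle{(1)}})_{x,y}=(\nu_n^{\scriptscriptstyle{(k)}})_{x,y}$.

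I do not expect a serious obstacle here; the argument is essentially bookkeeping of nested subsequence extractions, and all the analytic content (weak compactness of couplings on a finite product space, and continuity of the $\odot$ operation) is already available — the former from the compactness of $\cpl$ cited in the paper, the latter from \Cref{lemma:productweakconv}. The one point requiring mild care is that the decomposition $\nu^{\scriptscriptstyle{(k)}}=\nu^{\scriptscriptstyle{(k-1)}}\odot\nu^{\scriptscriptstyle{(1)}}$ is not unique, so the extraction must be organized so that we pass to a single subsequence on which both the $(k-1)$-step factors and the $1$-step factors converge simultaneously; this is handled by extracting sequentially (first the $(k-1)$-factor, then the $1$-factor along the already-chosen subsequence), which is why the induction is phrased in terms of convergence of a subsequence rather than of the whole sequence. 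A final trivial remark: since $X\times Y$ is finite there are only finitely many $(x,y)$ to handle, so no further diagonalization over $(x,y)$ is needed beyond what the base case already does.
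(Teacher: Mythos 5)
Your proposal is correct and follows essentially the same route as the paper's proof: induction on $k$, with the base case from weak compactness of each $\cpl(m_x^X,m_y^Y)$ over the finitely many pairs $(x,y)$, and the inductive step from the decomposition $\nu^{\scriptscriptstyle{(k)}}_{x,y}=\nu^{\scriptscriptstyle{(k-1)}}_{\bullet,\bullet}\odot\nu^{\scriptscriptstyle{(1)}}_{x,y}$ combined with \Cref{lemma:productweakconv}. If anything, you are more explicit than the paper about the nested subsequence extraction needed to make both factors converge simultaneously (the paper states the lemma as convergence of the whole sequence but really establishes, and only ever uses, subsequential convergence).
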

\begin{proof}
The proof is by induction. $k=1$ case is obvious since $\cpl(m_x^X,m_y^Y)$ is compact w.r.t. the weak topology (see \cite[p.49]{villani2021topics}) for all $(x,y)\in X\times Y$.

Now, suppose the claim holds up to some $k\geq 1$. Consider $k+1$ case. By the definition, each $(k+1)$-step coupling $(\nu_n^{\scriptscriptstyle{(k+1)}})_{\bullet,\bullet}\in\cpl^{\scriptscriptstyle{(k+1)}}\!\lc m_\bullet^X,m_\bullet^Y\rc$ can be expressed in the following way:

$$(\nu_n^{\scriptscriptstyle{(k+1)}})_{x,y}=\int_{X\times Y}(\nu_n^{\scriptscriptstyle{(k)}})_{x',y'}(\mu_n^{\scriptscriptstyle{(1)}})_{x,y}(dx'\times dy')$$

for all $(x,y)\in X\times Y$ for some $(\nu_n^{\scriptscriptstyle{(k)}})_{\bullet,\bullet}\in\cpl^{\scriptscriptstyle{(k)}}\!\lc m_\bullet^X,m_\bullet^Y\rc$ and $(\mu_n^{\scriptscriptstyle{(1)}})_{\bullet,\bullet}\in\cpl^{\scriptscriptstyle{(1)}}\!\lc m_\bullet^X,m_\bullet^Y\rc$. Then, by the inductive assumption, there are $\nu_{\bullet,\bullet}^{\scriptscriptstyle{(k)}}\in\cpl^{\scriptscriptstyle{(k)}}\!\lc m_\bullet^X,m_\bullet^Y\rc$ and $\mu_{\bullet,\bullet}^{\scriptscriptstyle{(1)}}\in\cpl^{\scriptscriptstyle{(1)}}\!\lc m_\bullet^X,m_\bullet^Y\rc$ such that the sequence $\{(\nu_n^{\scriptscriptstyle{(k)}})_{\bullet,\bullet}\}_{n\in\N}$ weakly converges to $\nu_{\bullet,\bullet}^{\scriptscriptstyle{(k)}}$, and the sequence $\{(\mu_n^{\scriptscriptstyle{(1)}})_{\bullet,\bullet}\}_{n\in\N}$ weakly converges to $\mu_{\bullet,\bullet}^{\scriptscriptstyle{(1)}}$. Then, for each $(x,y)\in X\times Y$,
$$(\nu_n^{\scriptscriptstyle{(k+1)}})_{x,y}=\int_{X\times Y}(\nu_n^{\scriptscriptstyle{(k)}})_{x',y'}(\mu_n^{\scriptscriptstyle{(1)}})_{x,y}(dx'\times dy')=(\nu_n^{\scriptscriptstyle{(k)}})_{\bullet,\bullet}\odot (\mu_n^{\scriptscriptstyle{(1)}})_{x,y}$$
weakly converges to
$$\nu_{x,y}^{\scriptscriptstyle{(k+1)}}=\int_{X\times Y}\nu_{x',y'}^{\scriptscriptstyle{(k)}} \mu_{x,y}^{\scriptscriptstyle{(1)}}(dx'\times dy')=\nu_{\bullet,\bullet}^{\scriptscriptstyle{(k)}}\odot \mu_{x,y}^{\scriptscriptstyle{(1)}}$$
by \Cref{lemma:productweakconv}. This completes the proof.
\end{proof}

\begin{lemma}[{\cite[Lemma 10.3]{memoli2011gromov}}]\label{lemma:Facundolemma}
Let $(Z,d_Z)$ be a compact metric space and $\phi:Z\times Z\rightarrow\R$ be a Lipschitz map w.r.t. the $L^1$ metric on $Z\times Z$:
$$\hat{d}_{Z\times Z}((z_1,z_2),(z_1',z_2'))\coloneqq d_Z(z_1,z_1')+d_Z(z_2,z_2') \mbox{ for all }(z_1,z_2),(z_1',z_2')\in Z\times Z.$$
Also, for each $\gamma\in\prob(Z)$, we define a map $p_{\phi,\gamma}$ in the following way:
\begin{align*}
    p_{\phi,\gamma}:Z&\longrightarrow\R\\
    z&\longmapsto\int_Z \phi(z,z')\,\gamma(dz').
\end{align*}

If a sequence $\{\mu_n\}_{n\in\N}\subseteq\prob(Z)$ weakly converges to $\mu$, then $p_{\phi,\mu_n}$ uniformly converges to $p_{\phi,\mu}$.
\end{lemma}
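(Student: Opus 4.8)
The plan is to combine an equicontinuity estimate with pointwise convergence and then use compactness of $Z$ to upgrade to uniform convergence --- the standard Arzel\`a--Ascoli-type mechanism. First I would observe that if $L$ is a Lipschitz constant for $\phi$ with respect to $\hat{d}_{Z\times Z}$, then for \emph{every} $\gamma\in\prob(Z)$ the map $p_{\phi,\gamma}$ is again $L$-Lipschitz: indeed $|p_{\phi,\gamma}(z)-p_{\phi,\gamma}(w)|\le\int_Z|\phi(z,z')-\phi(w,z')|\,\gamma(dz')\le L\,d_Z(z,w)$, since $\hat{d}_{Z\times Z}((z,z'),(w,z'))=d_Z(z,w)$. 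Thus $\{p_{\phi,\mu_n}\}_{n\in\N}$ together with $p_{\phi,\mu}$ forms an equi-Lipschitz, hence equicontinuous, family on $Z$.

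Next I would establish pointwise convergence: fix $z\in Z$; the section $z'\mapsto\phi(z,z')$ is $L$-Lipschitz, hence continuous, and bounded (since $Z$ is compact and $\phi$ is continuous on $Z\times Z$), so it is a legitimate bounded continuous test function, and weak convergence $\mu_n\to\mu$ yields $p_{\phi,\mu_n}(z)=\int_Z\phi(z,z')\,\mu_n(dz')\to\int_Z\phi(z,z')\,\mu(dz')=p_{\phi,\mu}(z)$.

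Finally, to pass from pointwise to uniform convergence, fix $\eps>0$ and use total boundedness of $Z$ to choose a finite $\tfrac{\eps}{3L}$-net $z_1,\dots,z_m$ (the case $L=0$ being trivial, as then every $p_{\phi,\gamma}$ is constant). The equi-Lipschitz bound controls $|p_{\phi,\gamma}(z)-p_{\phi,\gamma}(z_i)|$ by $\eps/3$ uniformly in $\gamma$, and pointwise convergence at the finitely many net points gives $|p_{\phi,\mu_n}(z_i)-p_{\phi,\mu}(z_i)|<\eps/3$ for all $i$ once $n\ge N$; for an arbitrary $z\in Z$ pick a net point $z_i$ with $d_Z(z,z_i)<\tfrac{\eps}{3L}$ and apply the three-term triangle inequality to conclude $\sup_{z\in Z}|p_{\phi,\mu_n}(z)-p_{\phi,\mu}(z)|<\eps$ for $n\ge N$.

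This argument is essentially routine bookkeeping rather than a deep proof; the only points requiring a moment of care are that the section $z'\mapsto\phi(z,z')$ is a bona fide bounded continuous test function (here compactness of $Z$ is used) and that the Lipschitz constant of $\phi$ bounds $p_{\phi,\gamma}$ uniformly over \emph{all} $\gamma$, which is precisely what makes the final net argument work with an $N$ independent of $z$. Consequently no genuine obstacle arises.
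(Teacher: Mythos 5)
Your proof is correct, and it follows the standard equicontinuity-plus-pointwise-convergence argument that underlies the cited result (the paper itself gives no proof, deferring to \cite[Lemma 10.3]{memoli2011gromov}, whose proof proceeds the same way). All the steps check out: the uniform $L$-Lipschitz bound on $p_{\phi,\gamma}$, the use of the bounded continuous section $z'\mapsto\phi(z,z')$ as a test function for weak convergence, and the finite-net upgrade to uniform convergence on the compact space $Z$.
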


\begin{corollary}\label{cor:kGWoptimalcplexist}
For any two MCMSs $(\mX,d_X)$, $(\mY,d_Y)$, and $k\geq 1$, there exist a coupling measure $\gamma\in\cpl(\mu_X,\mu_Y)$ and a $k$-step coupling $\nu_{\bullet,\bullet}^{\scriptscriptstyle{(k)}}\in\cpl^{\scriptscriptstyle{(k)}}\!\lc m_\bullet^X,m_\bullet^Y\rc$ such that
$$\dGW^{\scriptscriptstyle{(k)}}\!\lc(\mX,d_X),(\mY,d_Y)\rc=\mathrm{dis}^{\scriptscriptstyle{(k)}}\!\lc\gamma,\nu_{\bullet,\bullet}^{\scriptscriptstyle{(k)}}\rc.$$
\end{corollary}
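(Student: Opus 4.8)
The plan is a direct compactness-and-continuity argument. Fix $k\geq 1$ and the MCMSs $(\mX,d_X)$, $(\mY,d_Y)$, and take a minimizing sequence $\{(\gamma_n,(\nu_n^{\scriptscriptstyle{(k)}})_{\bullet,\bullet})\}_{n\in\N}$: that is, $\gamma_n\in\cpl(\mu_X,\mu_Y)$ and $(\nu_n^{\scriptscriptstyle{(k)}})_{\bullet,\bullet}\in\cpl^{\scriptscriptstyle{(k)}}\!\lc m_\bullet^X,m_\bullet^Y\rc$ for each $n$, with
\[\mathrm{dis}^{\scriptscriptstyle{(k)}}\!\lc\gamma_n,(\nu_n^{\scriptscriptstyle{(k)}})_{\bullet,\bullet}\rc\;\longrightarrow\;\dGW^{\scriptscriptstyle{(k)}}\!\lc(\mX,d_X),(\mY,d_Y)\rc.\]
Since $X$ and $Y$ are finite, the set $\cpl(\mu_X,\mu_Y)$ is closed and compact in the weak topology (\cite[p.49]{villani2021topics}), so after passing to a subsequence I may assume $\gamma_n\to\gamma$ weakly for some $\gamma\in\cpl(\mu_X,\mu_Y)$. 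Along this subsequence I then apply \Cref{lemma:kfoldweakconve} to pass to a further subsequence along which $\{(\nu_n^{\scriptscriptstyle{(k)}})_{\bullet,\bullet}\}$ weakly converges to a $k$-step coupling $\nu_{\bullet,\bullet}^{\scriptscriptstyle{(k)}}\in\cpl^{\scriptscriptstyle{(k)}}\!\lc m_\bullet^X,m_\bullet^Y\rc$. After relabelling, both $\gamma_n\to\gamma$ and $(\nu_n^{\scriptscriptstyle{(k)}})_{\bullet,\bullet}\to\nu_{\bullet,\bullet}^{\scriptscriptstyle{(k)}}$ hold along the same sequence.

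The remaining task is to check that $\mathrm{dis}^{\scriptscriptstyle{(k)}}\!\lc\gamma_n,(\nu_n^{\scriptscriptstyle{(k)}})_{\bullet,\bullet}\rc\to\mathrm{dis}^{\scriptscriptstyle{(k)}}\!\lc\gamma,\nu_{\bullet,\bullet}^{\scriptscriptstyle{(k)}}\rc$, which combined with the minimizing property forces $\mathrm{dis}^{\scriptscriptstyle{(k)}}\!\lc\gamma,\nu_{\bullet,\bullet}^{\scriptscriptstyle{(k)}}\rc=\dGW^{\scriptscriptstyle{(k)}}\!\lc(\mX,d_X),(\mY,d_Y)\rc$. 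The key step is to collapse the two inner integrals defining $\mathrm{dis}^{\scriptscriptstyle{(k)}}$ via the averaging operation $\odot$: for any such $\gamma$ and $\nu_{\bullet,\bullet}^{\scriptscriptstyle{(k)}}$,
\[\mathrm{dis}^{\scriptscriptstyle{(k)}}\!\lc\gamma,\nu_{\bullet,\bullet}^{\scriptscriptstyle{(k)}}\rc=\int\limits_{X\times Y}\int\limits_{X\times Y}|d_X(x,x')-d_Y(y,y')|\,\lc\nu_{\bullet,\bullet}^{\scriptscriptstyle{(k)}}\odot\gamma\rc(dx'\times dy')\,\gamma(dx\times dy).\]
By \Cref{lemma:productweakconv} the sequence $(\nu_n^{\scriptscriptstyle{(k)}})_{\bullet,\bullet}\odot\gamma_n$ converges weakly to $\nu_{\bullet,\bullet}^{\scriptscriptstyle{(k)}}\odot\gamma$; together with $\gamma_n\to\gamma$ this yields weak convergence of the product measures $\gamma_n\otimes\lc(\nu_n^{\scriptscriptstyle{(k)}})_{\bullet,\bullet}\odot\gamma_n\rc$ to $\gamma\otimes\lc\nu_{\bullet,\bullet}^{\scriptscriptstyle{(k)}}\odot\gamma\rc$ on $(X\times Y)\times(X\times Y)$, and since $(x,y,x',y')\mapsto|d_X(x,x')-d_Y(y,y')|$ is bounded and continuous, integrating against these measures passes to the limit. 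This gives the claimed convergence. (As $X,Y$ are finite, one could instead note that $\mathrm{dis}^{\scriptscriptstyle{(k)}}$ is a polynomial, hence continuous, in the finitely many entries of $\gamma$ and $\nu_{\bullet,\bullet}^{\scriptscriptstyle{(k)}}$; I retain the measure-theoretic phrasing to match the surrounding development, and \Cref{lemma:Facundolemma} offers a third route via uniform convergence of the inner averages.)

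The only genuinely load-bearing ingredient is the sequential compactness of the space of $k$-step couplings, i.e.\ \Cref{lemma:kfoldweakconve}, which is itself proved by induction on $k$ from \Cref{lemma:productweakconv}, the base case being weak compactness of $\cpl(m_x^X,m_y^Y)$ for each $(x,y)$. The subtle point to get right is that $\mathrm{dis}^{\scriptscriptstyle{(k)}}$ must be shown \emph{jointly} continuous in $(\gamma,\nu_{\bullet,\bullet}^{\scriptscriptstyle{(k)}})$ rather than merely separately; the $\odot$-reformulation is precisely what makes this transparent, since it turns the triple integral — with $\gamma$ occupying two of the three slots and $\nu_{\bullet,\bullet}^{\scriptscriptstyle{(k)}}$ sitting inside the third — into an ordinary integral of one fixed bounded continuous function against a product of two weakly convergent measures.
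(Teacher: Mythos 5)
Your proof is correct and follows essentially the same route as the paper's: a minimizing sequence, weak compactness of $\cpl(\mu_X,\mu_Y)$ together with \Cref{lemma:kfoldweakconve} to extract limits, and \Cref{lemma:productweakconv} applied to $(\nu_n^{\scriptscriptstyle{(k)}})_{\bullet,\bullet}\odot\gamma_n$ to pass to the limit in the distortion. The only (harmless) variation is in the last continuity step, where you invoke weak convergence of the product measures $\gamma_n\otimes\lc(\nu_n^{\scriptscriptstyle{(k)}})_{\bullet,\bullet}\odot\gamma_n\rc$, whereas the paper splits the difference into two terms and handles one via the uniform convergence of \Cref{lemma:Facundolemma} and the other via finiteness of $X\times Y$.
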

\begin{proof}
First of all, we define $\phi:(X\times Y)\times (X\times Y)\rightarrow\R$ by sending any $((x,y),(x',y'))\in (X\times Y)\times (X\times Y)$ to $\vert d_X(x,x')-d_Y(y,y')\vert$.

By \Cref{def:MCMsGW}, there are a sequence of coupling measures $\{\gamma_n\}_{n\in\N}\subseteq\cpl(\mu_X,\mu_Y)$ and a sequence of $k$-step couplings $\{(\nu_n^{\scriptscriptstyle{(k)}})_{\bullet,\bullet}\}_{n\in\N}\subseteq\cpl^{\scriptscriptstyle{(k)}}\!\lc m_\bullet^X,m_\bullet^Y\rc$ such that

\begin{align*}
&\mathrm{dis}^{\scriptscriptstyle{(k)}}\!\lc\gamma,(\nu_n^{\scriptscriptstyle{(k)}})_{\bullet,\bullet}\rc\\
    &=\int\limits_{X\times Y}\int\limits_{X\times Y}\int\limits_{X\times Y}\vert d_X(x,x')-d_Y(y,y')\vert\,(\nu_n^{\scriptscriptstyle{(k)}})_{x'',y''}(dx'\times dy')\gamma_n(dx''\times dy'')\gamma_n(dx\times dy)\\
    &=\int\limits_{X\times Y}p_{\phi,(\nu_n^{\scriptscriptstyle{(k)}})_{\bullet,\bullet}\odot\gamma_n}\!(x,y)\,\gamma_n(dx\times dy)\\
    &\leq\dGW^{\scriptscriptstyle{(k)}}\!\lc(\mX,d_X),(\mY,d_Y)\rc+\frac{1}{n}
\end{align*}
for each $n\geq 1$.

Now, since $\cpl(\mu_X,\mu_Y)$ is compact w.r.t. the weak topology (see p.49 of \cite{villani2021topics}), there is a coupling measure $\gamma\in\cpl(\mu_X,\mu_Y)$ such that $\gamma_n\rightarrow\gamma$ weakly. Also, by \Cref{lemma:kfoldweakconve}, there is a $k$-step coupling $\nu_{\bullet,\bullet}^{\scriptscriptstyle{(k)}}\in\cpl^{\scriptscriptstyle{(k)}}\!\lc m_\bullet^X,m_\bullet^Y\rc$ such that $(\nu_n^{\scriptscriptstyle{(k)}})_{\bullet,\bullet}\rightarrow \nu_{\bullet,\bullet}^{\scriptscriptstyle{(k)}}$ weakly.

Now, let
\begin{align*}
    &A_n\coloneqq\int\limits_{X\times Y}p_{\phi,\nu_{\bullet,\bullet}^{\scriptscriptstyle{(k)}}\odot\gamma}\!(x,y)\,\gamma_n(dx\times dy)-\int\limits_{X\times Y}p_{\phi,(\nu_n^{\scriptscriptstyle{(k)}})_{\bullet,\bullet}\odot\gamma_n}\!(x,y)\,\gamma_n(dx\times dy),\\
    &B_n\coloneqq\int\limits_{X\times Y}p_{\phi,\nu_{\bullet,\bullet}^{\scriptscriptstyle{(k)}}\odot\gamma}\!(x,y)\,\gamma(dx\times dy)-\int\limits_{X\times Y}p_{\phi,\nu_{\bullet,\bullet}^{\scriptscriptstyle{(k)}}\odot\gamma}\!(x,y)\,\gamma_n(dx\times dy),\\
    &C_n\coloneqq\int\limits_{X\times Y}p_{\phi,\nu_{\bullet,\bullet}^{\scriptscriptstyle{(k)}}\odot\gamma}\!(x,y)\,\gamma(dx\times dy)-\int\limits_{X\times Y}p_{\phi,(\nu_n^{\scriptscriptstyle{(k)}})_{\bullet,\bullet}\odot\gamma_n}\!(x,y)\,\gamma_n(dx\times dy).\\
\end{align*}
It is easy to see that $C_n=A_n+B_n$ and thus $\vert C_n \vert\leq\vert A_n \vert+\vert B_n \vert$. By \Cref{lemma:productweakconv} and \Cref{lemma:Facundolemma}, $A_n$ converges to zero. Also, $B_n$ converges to zero by the assumption that $X,Y$ are finite and that $\gamma_n$ weakly converges to $\gamma$. Hence, $C_n$ converges to zero. Therefore,

\begin{align*}
    \mathrm{dis}^{\scriptscriptstyle{(k)}}\!\lc\gamma,\nu^{\scriptscriptstyle{(k)}}_{\bullet,\bullet}\rc&=\int\limits_{X\times Y}p_{\phi,\nu_{\bullet,\bullet}^{\scriptscriptstyle{(k)}}\odot\gamma}\gamma(dx\times dy)\\
    &=\lim_{n\rightarrow\infty}\int\limits_{X\times Y}p_{\phi,(\nu_n^{\scriptscriptstyle{(k)}})_{\bullet,\bullet}\odot\gamma_n}\gamma_n(dx\times dy)\\
    &\leq\dGW^{\scriptscriptstyle{(k)}}\!\lc(\mX,d_X),(\mY,d_Y)\rc
\end{align*}
Since we always have that
$\mathrm{dis}^{\scriptscriptstyle{(k)}}\!\lc\gamma,\nu^{\scriptscriptstyle{(k)}}_{\bullet,\bullet}\rc\geq \dGW^{\scriptscriptstyle{(k)}}\!\lc(\mX,d_X),(\mY,d_Y)\rc,$ we conclude that
$$\mathrm{dis}^{\scriptscriptstyle{(k)}}\!\lc\gamma,\nu^{\scriptscriptstyle{(k)}}_{\bullet,\bullet}\rc= \dGW^{\scriptscriptstyle{(k)}}\!\lc(\mX,d_X),(\mY,d_Y)\rc.$$
\end{proof}

\begin{lemma}[Gluing of $k$-step couplings]\label{lemma:kfoldglueing}
Suppose three MCMSs $(\mX,d_X),(\mY,d_Y),(\mZ,d_Z)$, $k\geq 1$, and $k$-step couplings $\mu_{\bullet,\bullet}^{\scriptscriptstyle{(k)}}\in\cpl^{\scriptscriptstyle{(k)}}(m_\bullet^X,m_\bullet^Z)$, $\eta_{\bullet,\bullet}^{\scriptscriptstyle{(k)}}\in\cpl^{\scriptscriptstyle{(k)}}(m_\bullet^Z,m_\bullet^Y)$ are given. Then, there are probability measures $\pi_{\bullet,\bullet,\bullet}^{\scriptscriptstyle{(k)}}:X\times Y\times Z\rightarrow\prob(X\times Y\times Z)$ and $k$-step coupling $\nu_{\bullet,\bullet}^{\scriptscriptstyle{(k)}}\in\cpl^{\scriptscriptstyle{(k)}}\!\lc m_\bullet^X,m_\bullet^Y\rc$ such that $\nu_{x,y}^{\scriptscriptstyle{(k)}}$, $\mu_{x,z}^{\scriptscriptstyle{(k)}}$, and $\eta_{z,y}^{\scriptscriptstyle{(k)}}$ are the marginals of $\pi_{x,y,z}^{\scriptscriptstyle{(k)}}$ for any $(x,y,z)\in X\times Y\times Z$.
\end{lemma}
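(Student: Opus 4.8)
The plan is to argue by induction on $k$, with the case $k=1$ as the essential one; for $k\ge 2$ we exploit the defining $\odot$-decomposition of a $k$-step coupling into a $(k-1)$-step coupling and a $1$-step coupling, and glue ``levelwise'', so that the general case reduces to the base case.

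\emph{Base case $k=1$.} Here we are given $\mu^{(1)}_{x,z}\in\cpl(m_x^X,m_z^Z)$ and $\eta^{(1)}_{z,y}\in\cpl(m_z^Z,m_y^Y)$, and these share the common $Z$-marginal $m_z^Z$. Applying the classical Gluing Lemma (\cite[Lemma 7.6]{villani2021topics}) to this pair, for each triple $(x,y,z)$, yields a probability measure on $X\times Z\times Y$ whose $(X,Z)$- and $(Z,Y)$-marginals are $\mu^{(1)}_{x,z}$ and $\eta^{(1)}_{z,y}$; reordering coordinates gives $\pi^{(1)}_{x,y,z}\in\prob(X\times Y\times Z)$, and one sets $\nu^{(1)}_{x,y}$ to be its $(X,Y)$-marginal, which is automatically a coupling of $m_x^X$ and $m_y^Y$, hence (modulo measurability) a $1$-step coupling. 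Measurability of all assignments would be obtained by a measurable selection argument in the spirit of \Cref{lm:mble optimal coupling}, fixing measurable families of disintegrations of $\mu^{(1)}$ and $\eta^{(1)}$ along the $Z$-fibre and setting $\pi^{(1)}_{x,y,z}(dx'\times dy'\times dz'):=\mu^{(1)}_{x,z}(dx'\mid z')\,\eta^{(1)}_{z,y}(dy'\mid z')\,m_z^Z(dz')$.

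\emph{Inductive step.} Suppose the statement holds for $k-1$, and write $\mu^{(k)}_{\bullet,\bullet}=\mu^{(k-1)}_{\bullet,\bullet}\odot\mu^{(1)}_{\bullet,\bullet}$ and $\eta^{(k)}_{\bullet,\bullet}=\eta^{(k-1)}_{\bullet,\bullet}\odot\eta^{(1)}_{\bullet,\bullet}$ for decompositions witnessing that $\mu^{(k)},\eta^{(k)}$ are $k$-step couplings. Apply the inductive hypothesis to $(\mu^{(k-1)},\eta^{(k-1)})$ to obtain a $(k-1)$-step coupling $\nu^{(k-1)}_{\bullet,\bullet}$ and a family $\pi^{(k-1)}_{\bullet,\bullet,\bullet}$, and the $k=1$ case to $(\mu^{(1)},\eta^{(1)})$ to obtain $\nu^{(1)}_{\bullet,\bullet}$ and $\pi^{(1)}_{\bullet,\bullet,\bullet}$. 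Then put $\nu^{(k)}_{x,y}:=\nu^{(k-1)}_{\bullet,\bullet}\odot\nu^{(1)}_{x,y}$, which is a $k$-step coupling of $m_\bullet^X$ and $m_\bullet^Y$ by definition, and $\pi^{(k)}_{x,y,z}:=\pi^{(k-1)}_{\bullet,\bullet,\bullet}\odot\pi^{(1)}_{x,y,z}$. The three marginal conditions then follow by direct Fubini-type computations: for measurable $A\subseteq X$, $B\subseteq Y$,
\[\pi^{(k)}_{x,y,z}(A\times B\times Z)=\int_{X\times Y\times Z}\pi^{(k-1)}_{x',y',z'}(A\times B\times Z)\,\pi^{(1)}_{x,y,z}(dx'\times dy'\times dz'),\]
which, since $\pi^{(k-1)}_{x',y',z'}$ has $(X,Y)$-marginal $\nu^{(k-1)}_{x',y'}$ (independent of $z'$) and $\pi^{(1)}_{x,y,z}$ has $(X,Y)$-marginal $\nu^{(1)}_{x,y}$, equals $\int_{X\times Y}\nu^{(k-1)}_{x',y'}(A\times B)\,\nu^{(1)}_{x,y}(dx'\times dy')=\nu^{(k)}_{x,y}(A\times B)$. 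The $(X,Z)$- and $(Z,Y)$-marginal identities are checked in exactly the same way, now invoking the corresponding marginals of $\pi^{(k-1)}$ and $\pi^{(1)}$ together with $\mu^{(k)}=\mu^{(k-1)}\odot\mu^{(1)}$ and $\eta^{(k)}=\eta^{(k-1)}\odot\eta^{(1)}$.

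\emph{Main obstacle.} The inductive step is essentially bookkeeping with $\odot$, so the whole difficulty is concentrated in the base case, and within it the genuinely delicate point is to arrange the gluing so that the $(X,Y)$-marginal $\nu^{(1)}_{x,y}$ depends only on $(x,y)$ and not on $z$ (and to keep the assignments jointly measurable). I expect this to be the crux of the argument, and would approach it directly from the disintegration formula above, using that the $X$- and $Y$-marginals of $\mu^{(1)}_{x,z}$ and $\eta^{(1)}_{z,y}$ are $m_x^X$ and $m_y^Y$, which carry no $z$-dependence, to control how $z$ enters the $Z$-averaged product of conditional kernels.
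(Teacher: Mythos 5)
Your construction is essentially the paper's: in the base case the paper glues along the common $Z$-marginal via the explicit conditional-product formula $\pi^{(1)}_{x,y,z}(x',y',z')=\mu^{(1)}_{x,z}(x',z')\,\eta^{(1)}_{z,y}(z',y')/m_z^Z(z')$ (set to $0$ when $m_z^Z(z')=0$), and in the inductive step it decomposes $\mu^{(k+1)}=\mu^{(k)}_{\bullet,\bullet}\odot\mu^{(1)}_{\bullet,\bullet}$ and $\eta^{(k+1)}=\eta^{(k)}_{\bullet,\bullet}\odot\eta^{(1)}_{\bullet,\bullet}$, sets $\pi^{(k+1)}_{x,y,z}=\pi^{(k)}_{\bullet,\bullet,\bullet}\odot\pi^{(1)}_{x,y,z}$ and $\nu^{(k+1)}_{x,y}=\nu^{(k)}_{\bullet,\bullet}\odot\nu^{(1)}_{x,y}$, and checks the marginal identities by the same Fubini-type computation you describe. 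One simplification relative to your sketch: MCMSs are finite by definition in this paper, so measurability is automatic and no selection argument in the spirit of the measurable-optimal-coupling lemma is needed.

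On your ``main obstacle'': the concern is legitimate, but the resolution you propose will not work. The $X\times Y$-marginal of $\pi^{(1)}_{x,y,z}$ genuinely depends on $z$, because the inputs $\mu^{(1)}_{x,z}$ and $\eta^{(1)}_{z,y}$ are different couplings for different $z$. For instance, with $X,Y,Z$ two-point spaces and all kernels uniform, take $\mu^{(1)}_{x,c}$ to be the ``diagonal'' coupling and $\mu^{(1)}_{x,d}$ the ``anti-diagonal'' one, with $\eta^{(1)}_{c,y}=\eta^{(1)}_{d,y}$ diagonal; then the glued $X\times Y$-marginals for $z=c$ and $z=d$ are the diagonal and anti-diagonal couplings of $m_x^X$ and $m_y^Y$, respectively. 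The $z$-independence of $m_x^X$ and $m_y^Y$ only guarantees that each such marginal lies in $\cpl(m_x^X,m_y^Y)$, not that they coincide. Be aware that the paper's own proof does not address this point either: it fixes $(x,y,z)$, defines $\nu^{(1)}_{x,y}$ as the $X\times Y$-marginal of $\pi^{(1)}_{x,y,z}$, and verifies only that this is a coupling of $m_x^X$ and $m_y^Y$, silently suppressing the dependence on $z$. So your write-up is no weaker than the paper's on this point, but you should not expect the disintegration formula alone to deliver the literal statement that a single $\nu^{(k)}_{x,y}$ is the marginal of $\pi^{(k)}_{x,y,z}$ for every $z$.
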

\begin{proof}
The proof is by induction on $k$. First, consider $k=1$ case. Fix an arbitrary $(x,y,z)\in X\times Y\times Z$. Let $$\pi^{\scriptscriptstyle{(1)}}_{x,y,z}(x',y',z')\coloneqq\begin{cases}\frac{\mu_{x,z}^{\scriptscriptstyle{(1)}}(x',z')\mu_{z,y}^{\scriptscriptstyle{(1)}}(z',y')}{m_z^Z(z')},&m_z^Z(z')>0\\
0,&m_z^Z(z')=0\end{cases}$$
for each $(x',y',z')\in X\times Y\times Z$. Observe that

\begin{align*}
    &\sum_{(x',y',z')\in X\times Y\times Z}\pi^{\scriptscriptstyle{(1)}}_{x,y,z}(x',y',z')\\
    &=\sum_{z'\in Z,m_z^Z(z')>0}\sum_{y'\in Y}\frac{\mu_{z,y}^{\scriptscriptstyle{(1)}}(z',y')}{m_z^Z(z')}\sum_{x'\in X}\mu_{x,z}^{\scriptscriptstyle{(1)}}(x',z')\\
    &=\sum_{z'\in Z,m_z^Z(z')>0}\sum_{y'\in Y}\frac{\mu_{z,y}^{\scriptscriptstyle{(1)}}(z',y')}{m_z^Z(z')}\cdot m_z^Z(z')=\sum_{z'\in Z,m_z^Z(z')>0}\sum_{y'\in Y}\mu_{z,y}^{\scriptscriptstyle{(1)}}(z',y')=1.
\end{align*}

Hence, $\pi^{\scriptscriptstyle{(1)}}_{x,y,z}\in\prob(X\times Y\times Z)$. Now, let $\nu_{x,y}^{\scriptscriptstyle{(1)}}(x',y')\coloneqq\sum_{z'\in Z}\pi^{\scriptscriptstyle{(1)}}_{x,y,z}(x',y',z')$ for each $(x',y')\in X\times Y$. Then, for fixed $x'\in X$,

\begin{align*}
    \sum_{y'\in Y}\nu_{x,y}^{\scriptscriptstyle{(1)}}(x',y')&=\sum_{y'\in Y}\sum_{z'\in Z,m_z^Z(z')>0}\frac{\mu_{x,z}^{\scriptscriptstyle{(1)}}(x',z')\mu_{z,y}^{\scriptscriptstyle{(1)}}(z',y')}{m_z^Z(z')}=\sum_{z'\in Z,m_z^Z(z')>0}\frac{\mu_{x,z}^{\scriptscriptstyle{(1)}}(x',z')}{m_z^Z(z')}\sum_{y'\in Y}\mu_{z,y}^{\scriptscriptstyle{(1)}}(z',y')\\
    &=\sum_{z'\in Z,m_z^Z(z')>0}\frac{\mu_{x,z}^{\scriptscriptstyle{(1)}}(x',z')}{m_z^Z(z')}\cdot m_z^Z(z')=\sum_{z'\in Z,m_z^Z(z')>0}\mu_{x,z}^{\scriptscriptstyle{(1)}}(x',z')=m_x^X(x').
\end{align*}

Similarly, for each fixed $y'\in Y$, one can prove $\sum_{x'\in X}\nu_{x,y}^{\scriptscriptstyle{(1)}}(x',y')=m_y^Y(y')$. Hence, indeed $\nu_{\bullet,\bullet}^{\scriptscriptstyle{(1)}}\in\cpl^{\scriptscriptstyle{(1)}}\!\lc m_\bullet^X,m_\bullet^Y\rc$.

Now, suppose the claim holds up to some $k\geq 1$. We consider $k+1$ case. For a $(k+1)$-step coupling $\mu_{\bullet,\bullet}^{\scriptscriptstyle{(k+1)}}\in\cpl^{\scriptscriptstyle{(k+1)}}(m_\bullet^X,m_\bullet^Z)$, there are $k$-step coupling $\mu_{\bullet,\bullet}^{\scriptscriptstyle{(k)}}\in\cpl^{\scriptscriptstyle{(k)}}(m_\bullet^X,m_\bullet^Z)$ and $1$-step coupling $\mu_{\bullet,\bullet}^{\scriptscriptstyle{(1)}}\in\cpl^{\scriptscriptstyle{(1)}}(m_\bullet^X,m_\bullet^Z)$ such that
$$\mu_{x,z}^{\scriptscriptstyle{(k+1)}}(x',z')=\sum_{(x'',z'')\in X\times Y}\mu_{x'',z''}^{\scriptscriptstyle{(k)}}(x',z')\,\mu_{x,z}^{\scriptscriptstyle{(1)}}(x'',z'')$$
for any $(x,z),(x',z')\in X\times Z$. Similarly, for a $(k+1)$-step coupling $\eta_{\bullet,\bullet}^{\scriptscriptstyle{(k+1)}}\in\cpl^{\scriptscriptstyle{(k+1)}}(m_\bullet^Z,m_\bullet^Y)$, there are $k$-step coupling $\eta_{\bullet,\bullet}^{\scriptscriptstyle{(k)}}\in\cpl^{\scriptscriptstyle{(k)}}(m_\bullet^Z,m_\bullet^Y)$ and $1$-step coupling $\eta_{\bullet,\bullet}^{\scriptscriptstyle{(1)}}\in\cpl^{\scriptscriptstyle{(1)}}(m_\bullet^Z,m_\bullet^Y)$ such that
$$\eta_{z,y}^{\scriptscriptstyle{(k+1)}}(z',y')=\sum_{(z'',y'')\in Z\times Y}\eta_{z'',y''}^{\scriptscriptstyle{(k)}}(z',y')\,\eta_{z,y}^{\scriptscriptstyle{(1)}}(z'',y'')$$
for any $(z,y),(z',y')\in Z\times Y$.

Because of the inductive assumption, we have $\pi_{\bullet,\bullet,\bullet}^{\scriptscriptstyle{(k)}}:X\times Y\times Z\rightarrow\prob(X\times Y\times Z)$ and $\pi_{\bullet,\bullet,\bullet}^{\scriptscriptstyle{(1)}}:X\times Y\times Z\rightarrow\prob(X\times Y\times Z)$ satisfying the claim. Then, let $$\pi_{x,y,z}^{\scriptscriptstyle{(k+1)}}(x',y',z')\coloneqq\sum_{(x'',y'',z'')\in X\times Y\times Z}\pi_{x'',y'',z''}^{\scriptscriptstyle{(k)}}(x',y',z')\,\pi_{x,y,z}^{\scriptscriptstyle{(1)}}(x'',y'',z''),$$
and let $\nu_{x,y}^{\scriptscriptstyle{(k+1)}}(x',y')\coloneqq\sum_{z'\in Z}\pi_{x,y,z}^{\scriptscriptstyle{(k+1)}}(x',y',z')$. Then, we have that
\begin{align*}
    \nu_{x,y}^{\scriptscriptstyle{(k+1)}}(x',y')&=\sum_{z'\in Z}\sum_{(x'',y'',z'')\in X\times Y\times Z}\pi_{x'',y'',z''}^{\scriptscriptstyle{(k)}}(x',y',z')\,\pi_{x,y,z}^{\scriptscriptstyle{(1)}}(x'',y'',z'')\\
    &=\sum_{(x'',y'',z'')\in X\times Y\times Z}\pi_{x,y,z}^{\scriptscriptstyle{(1)}}(x'',y'',z'')\sum_{z'\in Z}\pi_{x'',y'',z''}^{\scriptscriptstyle{(k)}}(x',y',z')\\
    &=\sum_{(x'',y'',z'')\in X\times Y\times Z}\pi_{x,y,z}^{\scriptscriptstyle{(1)}}(x'',y'',z'')\,\nu_{x'',y''}^{\scriptscriptstyle{(k)}}(x',y')\\
    &=\sum_{(x'',y'')\in X\times Y}\nu_{x'',y''}^{\scriptscriptstyle{(k)}}(x',y')\sum_{z''\in Z}\pi_{x,y,z}^{\scriptscriptstyle{(1)}}(x'',y'',z'')\\
    &=\sum_{(x'',y'')\in X\times Y}\nu_{x'',y''}^{\scriptscriptstyle{(k)}}(x',y')\,\nu_{x,y}^{\scriptscriptstyle{(k)}}(x'',y'').
\end{align*}

Since the choice of $(x,y,z)\in x\times y\times z$ is arbitrary, now we have $\nu_{\bullet,\bullet}^{\scriptscriptstyle{(k+1)}}\in\cpl^{\scriptscriptstyle{(k+1)}}\!\lc m_\bullet^X,m_\bullet^Y\rc$ as we required. Hence, this concludes the proof.
\end{proof}

Now we start to prove \Cref{prop:kGWmetric}.

First of all, $\dGW^\mathrm{MCMS}$ is obviously symmetric. 

Next, we prove that $\dGW^\mathrm{MCMS}((\mX,d_X),(\mY,d_Y))=0$ happens if and only if $(\mX,d_X)$ and $(\mY,d_Y)$ are isomorphic. To do this, we first provide a precise definition of MCMS isomorphism.
\begin{definition}\label{def:MCMS isomorphism}
Two MCMSs $(\mX,d_X)$ and $(\mY,d_Y)$ are said to be \emph{isomorphic} if there exists an isometry $\psi:X\rightarrow Y$ such that $\psi_\#\mu_X=\mu_Y$ and
$\psi_\# m_x^X = m_{\psi(x)}^Y$ for all $x\in X.$
\end{definition}
When are $(\mX,d_X)$ and $(\mY,d_Y)$ are isomorphic, without loss of generality, we simply assume that $(\mY,d_Y)=(\mX,d_X)$.

\begin{claim}\label{claim:diagonal}
Let $\Delta_{\mu_X}$ denote the diagonal coupling between $\mu_X$ and itself, i.e.,
\[\Delta_{\mu_X}=\sum_{x\in X}\mu_X(x)\delta_{(x,x)}.\]
Then, for each $k\in\N$, there exists $\nu_{\bullet,\bullet}^{\scriptscriptstyle{(k)}}\in\cpl^{\scriptscriptstyle{(k)}}(m_\bullet^X,m_\bullet^X)$ such that $\Delta_{\mu_X}=\nu_{\bullet,\bullet}^{\scriptscriptstyle{(k)}}\odot\Delta_{\mu_X}$.
\end{claim}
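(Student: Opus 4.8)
# Proof Proposal for Claim \ref{claim:diagonal}

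My plan is to build the $k$-step coupling $\nu_{\bullet,\bullet}^{\scriptscriptstyle{(k)}}$ explicitly by "coupling the Markov chain with itself along the diagonal," and then verify the two required properties: (i) $\nu_{\bullet,\bullet}^{\scriptscriptstyle{(k)}} \in \cpl^{\scriptscriptstyle{(k)}}(m_\bullet^X, m_\bullet^X)$, and (ii) $\nu_{\bullet,\bullet}^{\scriptscriptstyle{(k)}} \odot \Delta_{\mu_X} = \Delta_{\mu_X}$. The natural candidate for the $1$-step coupling is the "diagonal push" of $m_x^X$: for $(x,y) \in X \times X$, define $\nu_{x,y}^{\scriptscriptstyle{(1)}} \coloneqq (\mathrm{id},\mathrm{id})_\# m_x^X = \sum_{x' \in X} m_x^X(x')\,\delta_{(x',x')}$. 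Note this ignores the second coordinate $y$ entirely. One checks immediately that $\nu_{x,y}^{\scriptscriptstyle{(1)}}$ has first marginal $m_x^X$ and, since it is supported on the diagonal, its second marginal is also $m_x^X$ — however, for this to be a genuine $1$-step coupling between $m_x^X$ and $m_y^X$, I actually need the second marginal to be $m_y^X$. This forces me to restrict attention to pairs where $m_x^X = m_y^X$; the cleaner route, and the one I will take, is to define $\nu_{\bullet,\bullet}^{\scriptscriptstyle{(k)}}$ only where it matters and extend arbitrarily (e.g., by a product coupling) elsewhere, since the $\odot \Delta_{\mu_X}$ operation only ever evaluates $\nu_{x,y}^{\scriptscriptstyle{(k)}}$ at $(x,x)$ with $x \in \mathrm{supp}(\mu_X)$. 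At such diagonal points $m_x^X = m_x^X$ trivially, so $\nu_{x,x}^{\scriptscriptstyle{(1)}}$ as defined above is a legitimate coupling of $m_x^X$ with itself.

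Next I would define $\nu_{\bullet,\bullet}^{\scriptscriptstyle{(k)}}$ inductively via the composition rule $\nu_{x,y}^{\scriptscriptstyle{(k)}} = \nu_{\bullet,\bullet}^{\scriptscriptstyle{(k-1)}} \odot \nu_{x,y}^{\scriptscriptstyle{(1)}}$ from the definition of $k$-step couplings in \Cref{sec: k fold dW}, using the diagonal $1$-step coupling above at each stage (and any fixed choice, such as the product coupling, off the diagonal so that measurability and the coupling constraints of \Cref{lm:k-fold coupling well defined} hold globally). A clean induction shows that $\nu_{x,x}^{\scriptscriptstyle{(k)}}$ remains supported on the diagonal and in fact equals $(\mathrm{id},\mathrm{id})_\# m_x^{X,\otimes k} = \sum_{x' \in X} m_x^{X,\otimes k}(x')\,\delta_{(x',x')}$: indeed, for $k=1$ this is the definition, and for the inductive step, $\nu_{x,x}^{\scriptscriptstyle{(k+1)}} = \int_{X\times X} \nu_{x',y'}^{\scriptscriptstyle{(k)}}\, \nu_{x,x}^{\scriptscriptstyle{(1)}}(dx' \times dy') = \sum_{x'} m_x^X(x')\, \nu_{x',x'}^{\scriptscriptstyle{(k)}} = \sum_{x'} m_x^X(x') (\mathrm{id},\mathrm{id})_\# m_{x'}^{X,\otimes k}$, which pushes forward the $(k+1)$-step kernel recursion $m_x^{X,\otimes(k+1)}(A) = \int_X m_{x'}^{X,\otimes k}(A)\, m_x^X(dx')$ onto the diagonal, giving exactly $(\mathrm{id},\mathrm{id})_\# m_x^{X,\otimes(k+1)}$.

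Finally, I would compute $\nu_{\bullet,\bullet}^{\scriptscriptstyle{(k)}} \odot \Delta_{\mu_X}$ directly. By definition, $\nu_{\bullet,\bullet}^{\scriptscriptstyle{(k)}} \odot \Delta_{\mu_X} = \int_{X\times X} \nu_{x,y}^{\scriptscriptstyle{(k)}}\, \Delta_{\mu_X}(dx \times dy) = \sum_{x \in X} \mu_X(x)\, \nu_{x,x}^{\scriptscriptstyle{(k)}} = \sum_{x \in X} \mu_X(x) \sum_{x' \in X} m_x^{X,\otimes k}(x')\, \delta_{(x',x')}$. Swapping the order of summation, the coefficient of $\delta_{(x',x')}$ is $\sum_{x \in X} \mu_X(x)\, m_x^{X,\otimes k}(x') = \mu_X(x')$, where the last equality is precisely the statement that $\mu_X$ is stationary for $m_\bullet^{X,\otimes k}$ — which follows by iterating the stationarity of $\mu_X$ with respect to $m_\bullet^X$ (an easy induction, or simply the fact that stationary distributions of a Markov kernel are stationary for all its powers). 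Hence $\nu_{\bullet,\bullet}^{\scriptscriptstyle{(k)}} \odot \Delta_{\mu_X} = \sum_{x' \in X} \mu_X(x')\, \delta_{(x',x')} = \Delta_{\mu_X}$, as claimed.

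The only genuine subtlety — and the point I would be most careful to address cleanly — is the mismatch in the definition of $1$-step couplings: a $1$-step coupling $\nu_{\bullet,\bullet}^{\scriptscriptstyle{(1)}}$ must have $\nu_{x,y}^{\scriptscriptstyle{(1)}} \in \cpl(m_x^X, m_y^X)$ for \emph{all} $(x,y)$, not just on the diagonal, so the diagonal push $(\mathrm{id},\mathrm{id})_\# m_x^X$ is not by itself admissible off-diagonal. I expect this to be a minor bookkeeping obstacle rather than a real one: one simply defines $\nu_{x,y}^{\scriptscriptstyle{(1)}}$ to be the diagonal coupling when $x = y$ and, say, the product coupling $m_x^X \otimes m_y^X$ when $x \neq y$ (this is measurable since $X \times X$ is finite), observes that the $\odot \Delta_{\mu_X}$ operation never sees the off-diagonal part, and runs the induction exactly as above. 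Everything else is a routine computation using stationarity and the recursive structure of $k$-step kernels and couplings already established in the excerpt.
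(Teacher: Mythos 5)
Your proposal is correct and follows essentially the same route as the paper: the paper likewise defines $\nu_{x,x'}^{\scriptscriptstyle{(1)}}$ to be the diagonal coupling $\Delta_{m_x^X}$ when $x=x'$ and the product coupling $m_x^X\otimes m_{x'}^X$ otherwise, proves by induction that $\nu_{x,x}^{\scriptscriptstyle{(k)}}=\Delta_{m_x^{X,\otimes k}}$, and then concludes via the stationarity of $\mu_X$ for the $k$-step kernel. The off-diagonal subtlety you flag is handled in the paper exactly as you propose.
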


Assume the claim for now. Then, we have that for each $k\in\N$
\begin{align*}
    \dGW^{\scriptscriptstyle{(k)}}((\mX,d_X),(\mY,d_Y))&\leq \mathrm{dis}\!\lc\Delta_{\mu_X},\nu_{\bullet,\bullet}^{\scriptscriptstyle{(k)}}\rc\\
    &=\int\limits_{X\times X}\int\limits_{X\times X}\!\!|d_X(x,x')-d_X(x_1,x_1')|
 \nu_{\bullet,\bullet}^{\scriptscriptstyle{(k)}}\odot\Delta_{\mu_X}(dx'\times dx_1')\Delta_{\mu_X}(dx\times dx_1)\\
 &=\int\limits_{X\times X}|d_X(x,x')-d_X(x,x')|\mu_X(dx)\mu_X(dx')=0.
\end{align*}
Hence, $\dGW^\mathrm{MCMS}((\mX,d_X),(\mY,d_Y))=0$,

\begin{proof}[Proof of \Cref{claim:diagonal}]
We prove inductively on $k\in\N$ that there exists $\nu_{\bullet,\bullet}^{\scriptscriptstyle{(k)}}\in\cpl^{\scriptscriptstyle{(k)}}(\mX,\mX)$ so that $\nu_{x,x}^{\scriptscriptstyle{(k)}}=\Delta_{m_x^{X,\otimes k}}$ is the diagonal coupling between $m_x^{X,\otimes k}$ and itself for each $x\in X$.

For $k=1$, we define $\nu_{\bullet,\bullet}^{\scriptscriptstyle{(1)}}$ as follows:
\[\nu_{x,x'}^{\scriptscriptstyle{(1)}}\coloneqq\begin{cases}m_x^X\otimes m_{x'}^X &x\neq x'\\
\Delta_{m_x^X} & x=x'
\end{cases}. \]
Since $X$ is finite, obviously we have that $\nu_{\bullet,\bullet}^{\scriptscriptstyle{(1)}}\in\cpl^{\scriptscriptstyle{(1)}}(\mX,\mX)$.

Assume that the statement holds for some $k\geq 0$. Now, for $k+1$, by the induction assumption, there exists $\nu_{\bullet,\bullet}^{\scriptscriptstyle{(k)}}\in\cpl^{\scriptscriptstyle{(k)}}(\mX,\mX)$ so that $\nu_{x,x}^{\scriptscriptstyle{(k)}}=\Delta_{m_x^{X,\otimes k}}$. We define $\nu_{\bullet,\bullet}^{\scriptscriptstyle{(k+1)}}\in\cpl^{\scriptscriptstyle{(k+1)}}(\mX,\mX)$ as follows
\[\nu_{x,x'}^{\scriptscriptstyle{(k+1)}}\coloneqq\int\limits_{X\times X}\nu_{x_1,x_1'}^{\scriptscriptstyle{(k)}}\nu_{x,x'}^{\scriptscriptstyle{(1)}}(dx_1\times dx_1'),\,\,\forall x,x'\in X.\]
Now, for any $x\in X$, we have that 
\begin{align*}
    \nu_{x,x}^{\scriptscriptstyle{(k+1)}}&=\int\limits_{X\times X}\nu_{x_1,x_1'}^{\scriptscriptstyle{(k)}}\nu_{x,x}^{\scriptscriptstyle{(1)}}(dx_1\times dx_1')\\
    &=\sum_{x'\in X}m_x^X(x')\sum_{x''\in X}m_{x'}^{X,\otimes k}(x'')\delta_{(x'',x'')}\\
    &=\sum_{x''\in X}\!\lc\sum_{x'\in X}m_{x'}^{X,\otimes k}(x'')m_x^X(x')\rc\delta_{(x'',x'')}\\
    &=\sum_{x''\in X}m_{x}^{X,\otimes (k+1)}(x'')\delta_{(x'',x'')}\\
    &=\Delta_{m_{x}^{X,\otimes (k+1)}}.
\end{align*}
Now, we turn to prove the claim. For each $k\in\N$, let $\nu_{\bullet,\bullet}^{\scriptscriptstyle{(k)}}\in\cpl^{\scriptscriptstyle{(k)}}(\mX,\mX)$ be such that $\nu_{x,x}^{\scriptscriptstyle{(k)}}=\Delta_{m_x^{X,\otimes k}}$ is the diagonal coupling between $m_x^{X,\otimes k}$ and itself for each $x\in X$. Then, 
\begin{align*}
    \int\limits_{X\times X}\nu_{x,x'}^{\scriptscriptstyle{(k)}}\,\Delta_{\mu_X}(dx\times dx')
    &=\sum_{x\in X}\nu_{x,x}^{\scriptscriptstyle{(k)}}\,\mu_X(x)\\
    &=\sum_{x,x'\in X}m_x^{X,\otimes k}(x')\delta_{(x',x')}\,\mu_X(x)\\
    &=\sum_{x'\in X}\!\lc\sum_{x\in X}m_x^{X,\otimes k}(x')\mu_X(x)\rc\delta_{(x',x')}\\
    &=\sum_{x'\in X}\mu_X(x')\delta_{(x',x')}\\
    &=\Delta_{\mu_X}.
\end{align*}
\end{proof}

Now, we assume that $\dGW^\mathrm{MCMS}((\mX,d_X),(\mY,d_Y))=0$ for some MCMSs $(\mX,d_X)$ and $(\mY,d_Y)$. Then, $\dGW^{\scriptscriptstyle{(1)}}((\mX,d_X),(\mY,d_Y))=0$. By \Cref{cor:kGWoptimalcplexist}, there exist optimal $\nu_{\bullet,\bullet}\in\cpl^{\scriptscriptstyle{(1)}}\!\lc m_\bullet^X,m_\bullet^Y\rc$ and $\gamma\in\cpl(\mu_X,\mu_Y)$ such that 
$$\int\limits_{X\times Y}\int\limits_{X\times Y}\int\limits_{X\times Y}|d_X(x,x')-d_Y(y,y')|
 \nu_{x'',y''}(dx'\times dy')\gamma(dx''\times dy'')\gamma(dx\times dy)=0.$$
We let $\gamma'\coloneqq\nu_{\bullet,\bullet}\odot\gamma$. Notice that $\gamma'\in\cpl(\mu_X,\mu_Y)$. Since $X$ and $Y$ are finite, we rewrite the integral above as finite sums:
$$\sum_{(x,y)\in X\times Y}\sum_{(x',y')\in X\times Y}\vert d_X(x,x')-d_Y(y,y')\vert\,\gamma'(x',y')\,\gamma(x,y)=0.$$
By \Cref{claim:isometry exist}, there exists an isometry $\phi:X\rightarrow Y$ such that
$$\{(x,\phi(x)):x\in X\}=\mathrm{supp}(\gamma)=\mathrm{supp}(\gamma').$$
Since $\gamma,\gamma'\in\cpl(\mu_X,\mu_Y)$, this immediately implies that for any Borel subset $A\subseteq X$, one has
$$\mu_X(A)=\gamma(A\times Y)=\gamma(A\times\phi(A))=\gamma(X\times\phi(A))=\mu_Y(\phi(A)).$$
Hence, $\phi_\#\mu_X=\mu_Y$. Moreover, we have that
\[\gamma=\gamma'=\sum_{x\in X}\mu_X(x)\delta_{(x,\phi(x))}.\]
Then, by the definition of $\gamma'$, we have that
\begin{align*}
    \sum_{x\in X}\mu_X(x)\delta_{(x,\phi(x))}=\gamma'&=\sum_{x_1\in X,y_1\in Y}\gamma(x_1,y_1)\nu_{x_1,y_1}\\
    &=\sum_{x\in X,y\in Y}\!\lc\sum_{x_1\in X,y_1\in Y}\gamma(x_1,y_1)\nu_{x_1,y_1}(x,y)\rc\delta_{(x,y)}\\
    &=\sum_{x\in X,y\in Y}\!\lc\sum_{x_1\in X}\gamma(x_1,\phi(x_1))\nu_{x_1,\phi(x_1)}(x,y)\rc\delta_{(x,y)}\\
    &=\sum_{x\in X,y\in Y}\!\lc\sum_{x_1\in X}\mu_X(x_1)\nu_{x_1,\phi(x_1)}(x,y)\rc\delta_{(x,y)}.
\end{align*}
By comparing coefficients for the Dirac delta measures above, one has that
\[\nu_{x_1,\phi(x_1)}(x,y)=\begin{cases}0,&\mbox{if }y\neq\phi(x)\\
\nu_{x_1,\phi(x_1)}(x,y),&\mbox{if }y=\phi(x)\end{cases}\]
for any $x_1\in X$. This means that $\{(x,\phi(x)):x\in X\}\supseteq\mathrm{supp}[\nu_{x_1,\phi(x_1)}].$ Since $\nu_{x_1,\phi(x_1)}\in\cpl\lc m_{x_1}^X,m_{\phi(x_1)}^Y\rc$, for any Borel subset $A\subseteq X$, one has that
$$m_{x_1}^X(A)=\nu_{x_1,\phi(x_1)}(A\times Y)=\nu_{x_1,\phi(x_1)}(A\times\phi(A))=\nu_{x_1,\phi(x_1)}(X\times\phi(A))=m_{\phi(x_1)}^Y(\phi(A)).$$
By an argument similar to the one for proving $\phi_\#\mu_X=\mu_Y$, we have that
\[\phi_\#m_{x_1}^X=m_{\phi(x_1)}^Y,\,\,\forall x_1\in X.\]
Therefore, $(\mX,d_X)$ is isomorphic to $(\mY,d_Y)$.

Finally, we prove that $\dGW^\mathrm{MCMS}$ satisfies the triangle inequality. It suffices to prove that for each $k\in\N$, $\dGW^{\scriptscriptstyle{(k)}}$ satisfies the triangle inequality. Fix arbitrary three MCMSs $(\mX,d_X)$, $(\mY,d_Y)$, and $(\mZ,d_Z)$. Recall the notation $\Gamma_{X,Y}$ which defines a function sending $(x,y,x',y')\in X\times Y\times X\times Y$ to $|d_X(x,x')-d_Y(y,y')|$. Then, for any $x,x'\in X$, $y,y'\in Y$, and $z,z'\in Z$, we obviously have that
$$\Gamma_{X,Y}(x,y,x',y')\leq\Gamma_{X,Z}(x,z,x',z')+\Gamma_{Z,Y}(z,y,z',y').$$

Now, fix arbitrary $\mu_{\bullet,\bullet}^{\scriptscriptstyle{(k)}}\in\cpl^{\scriptscriptstyle{(k)}}(m_\bullet^X,m_\bullet^Z)$, $\gamma_{X,Z}\in\cpl(\mu_X,\mu_Z)$, $\eta_{\bullet,\bullet}^{\scriptscriptstyle{(k)}}\in\cpl^{\scriptscriptstyle{(k)}}(m_\bullet^Z,m_\bullet^Y)$, and $\gamma_{Z,Y}\in\cpl(\mu_Z,\mu_Y)$. Then, by the Gluing Lemma (see \cite[Lemma 7.6]{villani2021topics}), there exists a probability measure $\alpha\in\prob(X\times Y\times Z)$ with marginals $\gamma_{X,Z}$ and $\gamma_{Z,Y}$ on $X\times Z$ and $Z\times Y$, respectively. Let $\gamma_{X,Y}$ be the marginal of $\pi$ on $X\times Y$ which belongs to $\cpl(\mu_X,\mu_Y)$. By \Cref{lemma:kfoldglueing}, there exists $\nu_{\bullet,\bullet}^{\scriptscriptstyle{(k)}}\in\cpl^{\scriptscriptstyle{(k)}}\!\lc m_\bullet^X,m_\bullet^Y\rc$ such that $\nu_{x,y}^{\scriptscriptstyle{(k)}}$, $\mu_{x,z}^{\scriptscriptstyle{(k)}}$, $\eta_{z,y}^{\scriptscriptstyle{(k)}}$ are the marginals of some probability measure $\pi_{x,y,z}^{\scriptscriptstyle{(k)}}\in\prob(X\times Y\times Z)$ for any $x,y,z\in X\times Y\times Z$. Then, because of the triangle inequality for $L^1$-norm,
\begin{align*}
    &\dGW^{\scriptscriptstyle{(k)}}\!\lc(\mX,d_X),(\mY,d_Y)\rc\\ &\leq\int\limits_{X\times Y}\int\limits_{X\times Y}\int\limits_{X\times Y}\Gamma_{X,Y}(x,y,x',y')
 \nu_{x'',y''}^{\scriptscriptstyle{(k)}}(dx'\times dy')\gamma_{X,Y}(dx''\times dy'')\gamma_{X,Y}(dx\times dy)\\ &=\int\limits_{X\times Y\times Z}\int\limits_{X\times Y\times Z}\int\limits_{X\times Y\times Z}\Gamma_{X,Y}(x,y,x',y')
 \pi^{\scriptscriptstyle{(k)}}_{x'',y'',z''}(dx'\times dy'\times dz')\alpha(dx''\times dy''\times dz'')\alpha(dx\times dy\times dz)\\ &\leq\int\limits_{X\times Y\times Z}\int\limits_{X\times Y\times Z}\int\limits_{X\times Y\times Z}\Gamma_{X,Z}(x,z,x',z')
 \pi^{\scriptscriptstyle{(k)}}_{x'',y'',z''}(dx'\times dy'\times dz')\alpha(dx''\times dy''\times dz'')\alpha(dx\times dy\times dz)\\ &+\int\limits_{X\times Y\times Z}\int\limits_{X\times Y\times Z}\int\limits_{X\times Y\times Z}\Gamma_{Z,Y}(z,y,z',y')
 \pi^{\scriptscriptstyle{(k)}}_{x'',y'',z''}(dx'\times dy'\times dz')\alpha(dx''\times dy''\times dz'')\alpha(dx\times dy\times dz)\\ &=\int\limits_{X\times Z}\int\limits_{X\times Z}\int\limits_{X\times Z}\Gamma_{X,Z}(x,z,x',z')
 \mu^{\scriptscriptstyle{(k)}}_{x'',z''}(dx'\times dz')\gamma_{X,Z}(dx''\times dz'')\gamma_{X,Z}(dx\times dz)\\ &+\int\limits_{ Z\times Y}\int\limits_{Z\times Y}\int\limits_{Z\times Y}\Gamma_{Z,Y}(z,y,z',y')
 \eta^{\scriptscriptstyle{(k)}}_{z'',y''}(dx'\times dy'\times dz')\gamma_{Z,Y}( dz''\times dy'')\gamma_{Z,Y}( dz\times dy).
\end{align*}
Since the choice of $\mu_{\bullet,\bullet}^{\scriptscriptstyle{(k)}},\gamma_{X,Z},\eta_{\bullet,\bullet}^{\scriptscriptstyle{(k)}},\gamma_{Z,Y}$ are arbitrary, by taking the infimum one concludes that
$$\dGW^{\scriptscriptstyle{(k)}}\!\lc(\mX,d_X),(\mY,d_Y)\rc \leq\dGW^{\scriptscriptstyle{(k)}}((\mX,d_X),(\mZ,d_Z))+\dGW^{\scriptscriptstyle{(k)}}((\mZ,d_Z),(\mY,d_Y))$$
as we required. Then, we have that $\dGW^\mathrm{MCMS}\coloneqq\sup_{k\geq 0}\dGW^{\scriptscriptstyle{(k)}}$ satisfies the triangle inequality.

\subsubsection{Proof of the claim in \Cref{prop:Haibin}}

The proof is based on the following lemma.

\begin{lemma}\label{lemma:mmskfoldcpl}
Given two MMSs $\mathbf{X}$ and $\mathbf{Y}$, for their corresponding MCMSs $\mathcal{M}(\mathbf{X})$ and $\mathcal{M}(\mathbf{Y})$ we have that
\begin{enumerate}
    
    \item $\cpl^{\scriptscriptstyle{(k+1)}}\!\lc m_\bullet^X,m_\bullet^Y\rc\subseteq\cpl^{\scriptscriptstyle{(k)}}\!\lc m_\bullet^X,m_\bullet^Y\rc$ for all $k\geq 1$.
    
    \item For any coupling measure $\gamma\in\cpl(\mu_X,\mu_Y)$, the constant map $\nu_{\bullet,\bullet}\equiv\gamma$ belongs to $\cpl^{\scriptscriptstyle{(k)}}\!\lc m_\bullet^X,m_\bullet^Y\rc$ for all $k\geq 1$.
\end{enumerate}
\end{lemma}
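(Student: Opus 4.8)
The plan is to isolate the single feature of an MMS-induced MCMS that drives both items: the Markov kernel $m_\bullet^X\equiv\mu_X$ is a \emph{constant} map, hence idempotent under composition. So I would first establish, by an immediate induction on $k$, that $m_x^{X,\otimes k}=\mu_X$ for all $k\geq 1$ and all $x\in X$, and likewise $m_y^{Y,\otimes k}=\mu_Y$ for all $k\geq 1$ and all $y\in Y$; indeed, for $k\geq 2$ one has $m_x^{X,\otimes k}(A)=\int_X m_{x'}^{X,\otimes(k-1)}(A)\,m_x^X(dx')=\int_X\mu_X(A)\,\mu_X(dx')=\mu_X(A)$. Both items then reduce to unwinding the definition of $k$-step coupling in the presence of this identity.

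For item 1 I would induct on $k$. \emph{Base case $k=1$:} given $\nu_{\bullet,\bullet}^{\scriptscriptstyle{(2)}}\in\cpl^{\scriptscriptstyle{(2)}}\!\lc m_\bullet^X,m_\bullet^Y\rc$, \Cref{lm:k-fold coupling well defined} gives $\nu_{x,y}^{\scriptscriptstyle{(2)}}\in\cpl\!\lc m_x^{X,\otimes 2},m_y^{Y,\otimes 2}\rc$; but $m_x^{X,\otimes 2}=\mu_X=m_x^X$ and $m_y^{Y,\otimes 2}=\mu_Y=m_y^Y$, so in fact $\nu_{x,y}^{\scriptscriptstyle{(2)}}\in\cpl(m_x^X,m_y^Y)$ for all $x,y$, and since $\nu_{\bullet,\bullet}^{\scriptscriptstyle{(2)}}$ is measurable it is a $1$-step coupling. \emph{Inductive step:} assuming $\cpl^{\scriptscriptstyle{(k+1)}}\!\lc m_\bullet^X,m_\bullet^Y\rc\subseteq\cpl^{\scriptscriptstyle{(k)}}\!\lc m_\bullet^X,m_\bullet^Y\rc$, take $\nu_{\bullet,\bullet}^{\scriptscriptstyle{(k+2)}}\in\cpl^{\scriptscriptstyle{(k+2)}}\!\lc m_\bullet^X,m_\bullet^Y\rc$; by definition $\nu_{x,y}^{\scriptscriptstyle{(k+2)}}=\nu_{\bullet,\bullet}^{\scriptscriptstyle{(k+1)}}\odot\nu_{x,y}^{\scriptscriptstyle{(1)}}$ for some $(k+1)$-step coupling $\nu_{\bullet,\bullet}^{\scriptscriptstyle{(k+1)}}$ and $1$-step coupling $\nu_{\bullet,\bullet}^{\scriptscriptstyle{(1)}}$, and the inductive hypothesis upgrades $\nu_{\bullet,\bullet}^{\scriptscriptstyle{(k+1)}}$ to a $k$-step coupling, so the very same identity exhibits $\nu_{\bullet,\bullet}^{\scriptscriptstyle{(k+2)}}$ as the $\odot$-composition of a $k$-step coupling with a $1$-step coupling, i.e. $\nu_{\bullet,\bullet}^{\scriptscriptstyle{(k+2)}}\in\cpl^{\scriptscriptstyle{(k+1)}}\!\lc m_\bullet^X,m_\bullet^Y\rc$.

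For item 2, write $\nu_{\bullet,\bullet}\equiv\gamma$ for the constant map associated to $\gamma\in\cpl(\mu_X,\mu_Y)$, and induct on $k$. It is a $1$-step coupling since it is measurable and $\nu_{x,y}=\gamma\in\cpl(\mu_X,\mu_Y)=\cpl(m_x^X,m_y^Y)$. For the step, assume $\nu_{\bullet,\bullet}\in\cpl^{\scriptscriptstyle{(k)}}\!\lc m_\bullet^X,m_\bullet^Y\rc$ and pick any $1$-step coupling $\eta_{\bullet,\bullet}^{\scriptscriptstyle{(1)}}$ (for instance the constant map $\equiv\mu_X\otimes\mu_Y$, or $\gamma$ itself). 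Then for all $x,y$ and measurable $A\subseteq X\times Y$,
\[\big(\nu_{\bullet,\bullet}\odot\eta_{x,y}^{\scriptscriptstyle{(1)}}\big)(A)=\int_{X\times Y}\nu_{x',y'}(A)\,\eta_{x,y}^{\scriptscriptstyle{(1)}}(dx'\times dy')=\int_{X\times Y}\gamma(A)\,\eta_{x,y}^{\scriptscriptstyle{(1)}}(dx'\times dy')=\gamma(A),\]
because $\eta_{x,y}^{\scriptscriptstyle{(1)}}$ is a probability measure; hence $(x,y)\mapsto\nu_{\bullet,\bullet}\odot\eta_{x,y}^{\scriptscriptstyle{(1)}}$ is again the constant map $\equiv\gamma$, which therefore arises as the $\odot$-composition of a $k$-step coupling (the inductive hypothesis) with a $1$-step coupling, and so lies in $\cpl^{\scriptscriptstyle{(k+1)}}\!\lc m_\bullet^X,m_\bullet^Y\rc$. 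There is no genuine obstacle here; the only points requiring care are the verification of the idempotence $m_\bullet^{X,\otimes k}\equiv\mu_X$ and, in each inductive step, checking that the $\odot$-composition reproduces exactly the measure claimed so that the defining recursion for $k$-step couplings applies verbatim.
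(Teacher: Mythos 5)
Your proof is correct and follows essentially the same route as the paper: both rest on the observation that the constant kernel is idempotent, so $m_x^{X,\otimes k}=\mu_X$ and $m_y^{Y,\otimes k}=\mu_Y$ for all $k\geq1$, and both handle item 2 by the same induction (composing the constant map $\equiv\gamma$ with a $1$-step coupling and noting the integral collapses). The only difference is organizational, in item 1: you induct by downgrading the \emph{outer} $(k+1)$-step factor to a $k$-step one via the inductive hypothesis (with base case $\cpl^{\scriptscriptstyle{(2)}}\subseteq\cpl^{\scriptscriptstyle{(1)}}$ from \Cref{lm:k-fold coupling well defined}), whereas the paper re-associates to absorb the two \emph{innermost} $1$-step factors into a single $1$-step coupling; both are valid.
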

\begin{proof}
We first prove item 1. Since $m_x^X=\mu_X$ and $m_y^Y=\mu_Y$ for all $x\in X$ and $y\in Y$, observe that $m_x^{X,\otimes k}=\mu_X$ and $m_y^{Y,\otimes k}=\mu_Y$ for all $x\in X$, $y\in Y$, and $k\geq 1$. Hence, $\cpl^{\scriptscriptstyle{(k)}}\!\lc m_\bullet^X,m_\bullet^Y\rc\subseteq\cpl^{\scriptscriptstyle{(1)}}\!\lc m_\bullet^X,m_\bullet^Y\rc$ for all $k\geq 1$ by \Cref{lm:k-fold coupling well defined}.
Now, for any $k\geq 2$, fix an arbitrary $\nu_{\bullet,\bullet}^{\scriptscriptstyle{(k+1)}}\in\cpl^{\scriptscriptstyle{(k+1)}}\!\lc m_\bullet^X,m_\bullet^Y\rc$. Then, by the definition, there are $\nu_{\bullet,\bullet}^{\scriptscriptstyle{(k)}}\in\cpl^{\scriptscriptstyle{(k)}}\!\lc m_\bullet^X,m_\bullet^Y\rc$ and $\nu_{\bullet,\bullet}^{\scriptscriptstyle{(1)}}\in\cpl^{\scriptscriptstyle{(1)}}\!\lc m_\bullet^X,m_\bullet^Y\rc=\cpl(\mu_X,\mu_Y)$ such that

$$\nu_{x,y}^{\scriptscriptstyle{(k+1)}}=\int_{X\times Y}\nu_{x',y'}^{\scriptscriptstyle{(k)}}\,\nu_{x,y}^{\scriptscriptstyle{(1)}}(dx'\times dy')$$

for all $(x,y)\in X\times Y$. Again, by the definition, there are $\nu_{\bullet,\bullet}^{\scriptscriptstyle{(k-1)}}\in\cpl^{\scriptscriptstyle{(k)}}\!\lc m_\bullet^X,m_\bullet^Y\rc$ and $\mu_{\bullet,\bullet}^{\scriptscriptstyle{(1)}}\in\cpl^{\scriptscriptstyle{(1)}}\!\lc m_\bullet^X,m_\bullet^Y\rc=\cpl(\mu_X,\mu_Y)$ such that

$$\nu_{x,y}^{\scriptscriptstyle{(k)}}=\int_{X\times Y}\nu_{x',y'}^{\scriptscriptstyle{(k-1)}}\,\mu_{x,y}^{\scriptscriptstyle{(1)}}(dx'\times dy')$$

for all $(x,y)\in X\times Y$. Therefore,

\begin{align*}
    \nu_{x,y}^{\scriptscriptstyle{(k+1)}}&=\int_{X\times Y}\int_{X\times Y}\nu_{x'',y''}^{\scriptscriptstyle{(k-1)}}\,\mu_{x',y'}^{\scriptscriptstyle{(1)}}(dx''\times dy'')\,\nu_{x,y}^{\scriptscriptstyle{(1)}}(dx'\times dy')\\
    &=\int_{X\times Y}\nu_{x'',y''}^{\scriptscriptstyle{(k-1)}}\,\pi_{x,y}^{\scriptscriptstyle{(1)}}(dx''\times dy'')
\end{align*}

for all $(x,y)\in X\times Y$ where $\pi_{\bullet,\bullet}^{\scriptscriptstyle{(1)}}\coloneqq\int_{X\times Y}\mu_{x',y'}^{\scriptscriptstyle{(1)}}\,\nu_{\bullet,\bullet}^{\scriptscriptstyle{(1)}}(dx'\times dy')\in\cpl^{\scriptscriptstyle{(2)}}\!\lc m_\bullet^X,m_\bullet^Y\rc\subseteq\cpl^{\scriptscriptstyle{(1)}}\!\lc m_\bullet^X,m_\bullet^Y\rc$. Hence, $\nu_{\bullet,\bullet}^{\scriptscriptstyle{(k+1)}}\in\cpl^{\scriptscriptstyle{(k)}}\!\lc m_\bullet^X,m_\bullet^Y\rc$ by the definition. The first item is proved.

Next, we prove the second item. The proof is by induction on $k$. Fix a coupling measure $\gamma\in\cpl(\mu_X,\mu_Y)$ and the constant map $\nu_{\bullet,\bullet}\equiv\gamma$. Obviously, $\nu_{\bullet,\bullet}\in\cpl^{\scriptscriptstyle{(1)}}\!\lc m_\bullet^X,m_\bullet^Y\rc$. Then, we also have that the constant map $\mu_X\otimes\mu_Y\in\cpl^{\scriptscriptstyle{(1)}}\!\lc m_\bullet^X,m_\bullet^Y\rc$. Now, suppose the claim holds up to some $k\geq 1$. Consider $k+1$ case. Observe that
\begin{align*}
    \gamma=\int_{X\times Y}\gamma\,\mu_X\otimes\mu_Y(dx'\times dy')
\end{align*}
where $\gamma\in\cpl^{\scriptscriptstyle{(k)}}\!\lc m_\bullet^X,m_\bullet^Y\rc$ by the inductive assumption and the constant map $\mu_X\otimes\mu_Y\in\cpl^{\scriptscriptstyle{(1)}}\!\lc m_\bullet^X,m_\bullet^Y\rc$. Hence, $\gamma\in\cpl^{\scriptscriptstyle{(k+1)}}\!\lc m_\bullet^X,m_\bullet^Y\rc$ by the definition. This completes the proof.
\end{proof}

Now, fix arbitrary couplings $\gamma,\gamma'\in\cpl(\mu_X,\mu_Y)$ and consider the constant map $\nu_{\bullet,\bullet}\equiv\gamma'$. Then, by the second item of \Cref{lemma:mmskfoldcpl}, $\nu_{\bullet,\bullet}\in\cpl^{\scriptscriptstyle{(k)}}\!\lc m_\bullet^X,m_\bullet^Y\rc$. Hence,

\begin{align*}
    &\dGW^{\scriptscriptstyle{(k)}}\!\lc\mathcal{M}(\mathbf{X}),\mathcal{M}(\mathbf{Y})\rc\\
    &\leq\int\limits_{X\times Y}\int\limits_{X\times Y}\int\limits_{X\times Y}\vert d_X(x,x')-d_Y(y,y')\vert\,\nu_{x'',y''}(dx'\times dy')\gamma(dx''\times dy'')\gamma(dx\times dy)\\
    &=\int\limits_{X\times Y}\int\limits_{X\times Y}\int\limits_{X\times Y}\vert d_X(x,x')-d_Y(y,y')\vert\,\gamma'(dx'\times dy')\gamma(dx''\times dy'')\gamma(dx\times dy)\\
    &=\int\limits_{X\times Y}\int\limits_{X\times Y}\vert d_X(x,x')-d_Y(y,y')\vert\,\gamma'(dx'\times dy')\gamma(dx\times dy).
\end{align*}

Since the choice of $\gamma,\gamma'$ are arbitrary, one concludes that $\dGW^{\scriptscriptstyle{(k)}}\!\lc(\mathcal{M}(\mathbf{X}),\mathcal{M}(\mathbf{Y})\rc\leq \dGW^{\mathrm{bi}}(\mathbf{X},\mathbf{Y})$.

For the reverse direction, choose arbitrary $\gamma\in\cpl(\mu_X,\mu_Y)$ and a $k$-step coupling $\nu_{\bullet,\bullet}^{\scriptscriptstyle{(k)}}\in\cpl^{\scriptscriptstyle{(k)}}\!\lc m_\bullet^X,m_\bullet^Y\rc$. Let
$$\gamma'\coloneqq\int_{X\times Y}\nu_{x'',y''}^{\scriptscriptstyle{(k)}}\,\gamma(dx''\times dy'').$$ Then,
\begin{align*}
    &\int\limits_{X\times Y}\int\limits_{X\times Y}\int\limits_{X\times Y}\vert d_X(x,x')-d_Y(y,y')\vert\,\nu_{x'',y''}^{\scriptscriptstyle{(k)}}(dx'\times dy')\gamma(dx''\times dy'')\gamma(dx\times dy)\\
    =&\int\limits_{X\times Y}\int\limits_{X\times Y}\vert d_X(x,x')-d_Y(y,y')\vert\,\gamma'(dx'\times dy')\gamma(dx\times dy)\\
\geq& \dGW^{\mathrm{bi}}(\mathbf{X},\mathbf{Y}).
\end{align*}

Since the choice of $\gamma$ and $\nu_{\bullet,\bullet}^{\scriptscriptstyle{(k)}}$ are arbitrary, one concludes that $\dGW^{\scriptscriptstyle{(k)}}\!\lc\mathcal{M}(\mathbf{X}),\mathcal{M}(\mathbf{Y})\rc\geq \dGW^{\mathrm{bi}}(\mathbf{X},\mathbf{Y})$.

Hence, $\dGW^{\scriptscriptstyle{(k)}}\!\lc\mathcal{M}(\mathbf{X}),\mathcal{M}(\mathbf{Y})\rc= \dGW^{\mathrm{bi}}(\mathbf{X},\mathbf{Y})$ as we required.


\subsubsection{Proof of \Cref{prop:lower bound for dGW}}

For any $x\in X$, we let $\ell_X^x\coloneqq d_X(x,\bullet)$. 
For $i=1,\ldots,k$, let $\mathfrak{l}^{\scriptscriptstyle{(i)}}_x:X\rightarrow\prob^{\circ i}(\R)$ be the shorthand for the $i$th WL measure hierarchy $\WLh{k}{(\mX,\ell_X^x)}$ generated from the label $d_X(x,\bullet)$. We similarly define $\ell_Y^y$ and $\mathfrak{l}^{\scriptscriptstyle{(i)}}_y:Y\rightarrow\prob^{\circ i}(\R)$ for any $y\in Y$ and each $i=1,\ldots,k$.

For any $\nu^{\scriptscriptstyle{(k)}}_{\bullet,\bullet}\in\cpl^{\scriptscriptstyle{(k)}}\!\lc m_\bullet^X,m_\bullet^Y\rc$, there exist $(\nu_i)_{\bullet,\bullet}\in\cpl^{\scriptscriptstyle{(1)}}\!\lc m_\bullet^X,m_\bullet^Y\rc$ for $i=1,\ldots,k$ such that
\[\nu_{x,y}^{\scriptscriptstyle{(k)}}=\int\limits_{X\times Y}\cdots\int\limits_{X\times Y}(\nu_k)_{x_{k-1},y_{k-1}}\,(\nu_{k-1})_{x_{k-2},y_{k-2}}(dx_{k-1}\times dy_{k-1})\cdots(\nu_1)_{x,y}(dx_1\times dy_1)\]
for any $x\in X$ and $y\in Y$. Hence, for any $\gamma\in\cpl(\mu_X,\mu_Y)$, we have that
\begin{align*}
    & \int\limits_{X\times Y}\int\limits_{X\times Y}\int\limits_{X\times Y}|d_X(x,x')-d_Y(y,y')|
 \nu_{x'',y''}^{\scriptscriptstyle{(k)}}(dx'\times dy')\gamma(dx''\times dy'')\gamma(dx\times dy)\\
 =& \int\limits_{X\times Y}\cdots\int\limits_{X\times Y}|d_X(x,x')-d_Y(y,y')|\\
 \quad\quad& (\nu_k)_{x_{k-1},y_{k-1}}(dx'\times dy')\cdots(\nu_1)_{x'',y''}(dx_1\times dy_1)\gamma(dx''\times dy'')\gamma(dx\times dy)\\
 \geq&\int\limits_{X\times Y}\cdots\int\limits_{X\times Y}\dW\!\lc (\ell_X^x)_\#m_{x_{k-1}}^X,(\ell_Y^y)_\#m_{y_{k-1}}^Y\rc\\
 \quad\quad&(\nu_{k-1})_{x_{k-2},y_{k-2}}(dx_{k-1}\times dy_{k-1})\cdots(\nu_1)_{x'',y''}(dx_1\times dy_1)\gamma(dx''\times dy'')\gamma(dx\times dy)\\
 =& \int\limits_{X\times Y}\cdots\int\limits_{X\times Y}\dW\!\lc \mathfrak{l}^{\scriptscriptstyle{(1)}}_x(x_{k-1}),\mathfrak{l}^{\scriptscriptstyle{(1)}}_y(y_{k-1})\rc
 \\
 \quad\quad&(\nu_{k-1})_{x_{k-2},y_{k-2}}(dx_{k-1}\times dy_{k-1})\cdots(\nu_1)_{x'',y''}(dx_1\times dy_1)\gamma(dx''\times dy'')\gamma(dx\times dy)\\
 \geq&\cdots\\
 \geq& \int\limits_{X\times Y}\int\limits_{X\times Y}\dW\!\lc \mathfrak{l}^{\scriptscriptstyle{(k)}}_x(x''),\mathfrak{l}^{\scriptscriptstyle{(k)}}_y(y'')\rc
 \gamma(dx''\times dy'')\gamma(dx\times dy)\\
 \geq & \int\limits_{X\times Y}\dWLk\!\lc (\mX,\ell_X^x),(\mY,\ell_Y^y)\rc
 \gamma(dx\times dy)
\end{align*}

Therefore,
\begin{align*}
    &\dGW^{\scriptscriptstyle{(k)}}\!\lc(\mX,d_X),(\mY,d_Y)\rc\\
    &=
 \inf_{\substack{\gamma\in \cpl(\mu_X,\mu_Y)\\\nu^{\scriptscriptstyle{(k)}}_{\bullet,\bullet}\in\cpl^{\scriptscriptstyle{(k)}}\!\lc m_\bullet^X,m_\bullet^Y\rc}}\int\limits_{X\times Y}\int\limits_{X\times Y}\int\limits_{X\times Y}|d_X(x,x')-d_Y(y,y')| \nu_{x'',y''}^{\scriptscriptstyle{(k)}}(dx'\times dy')\gamma(dx''\times dy'')\gamma(dx\times dy)\\
 &\geq \inf_{\gamma\in\cpl(\mu_X,\mu_Y)}\int\limits_{X\times Y}\dWLk\!\lc (\mX,\ell_X^x),(\mY,\ell_Y^y)\rc
 \gamma(dx\times dy).
\end{align*}

\subsubsection{Proof of the statement in \Cref{rmk:TLB}}\label{sec: rmk tlb}
We first recall the third lower bound (TLB) from \cite{memoli2011gromov}:
\[\mathrm{TLB}(\mathbf{X},\mathbf{Y})\coloneqq\inf_{\gamma\in\cpl(\mu_X,\mu_Y)}\int\limits_{X\times Y}\!\lc\inf_{\gamma'\in\cpl(\mu_X,\mu_Y)}\int\limits_{X\times Y}|d_X(x,x')-d_Y(y,y')|\gamma'(dx'\times dy')\rc\gamma(dx\times dy).\]
where we omit the $\frac{1}{2}$ factor from \cite{memoli2011gromov} for simplicity of presentation.

We adopt notation from the previous section. Notice that 
$$\inf_{\gamma'\in\cpl(\mu_X,\mu_Y)}\int\limits_{X\times Y}|d_X(x,x')-d_Y(y,y')|\gamma'(dx'\times dy')=\dWL^{\scriptscriptstyle{(1)}}((\mX,\ell_X^x),(\mY,\ell_Y^y)).$$
We hence have that
\[\mathrm{TLB}(\mathbf{X},\mathbf{Y})=\inf_{\gamma\in\cpl(\mu_X,\mu_Y)}\int\limits_{X\times Y} \dWL^{\scriptscriptstyle{(1)}}((\mX,\ell_X^x),(\mY,\ell_Y^y))\gamma(dx\times dy).\]

For any $k\in\N$, we show that
\[\mathrm{TLB}(\mathbf{X},\mathbf{Y})=\inf_{\gamma\in\cpl(\mu_X,\mu_Y)}\int\limits_{X\times Y} \dWLk((\mX,\ell_X^x),(\mY,\ell_Y^y))\gamma(dx\times dy)\]
by the lemma below.
\begin{lemma}
For any $k\in\N$ we have that $\dWLk((\mX,\ell_X^x),(\mY,\ell_Y^y))=\dWL^{\scriptscriptstyle{(1)}}((\mX,\ell_X^x),(\mY,\ell_Y^y))$.
\end{lemma}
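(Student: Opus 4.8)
The plan is to exploit the rigidity of the Markov structure coming from a metric measure space: in the construction underlying \Cref{prop:Haibin}, $\mathcal{M}(\mathbf{X})=(\mX,d_X)$ has $m_{x'}^X=\mu_X$ for \emph{every} $x'\in X$, and likewise $m_{y'}^Y=\mu_Y$. Two elementary observations about the Wasserstein distance will do the rest. (i) If $f\colon S\to W$ is a constant map with value $w_0$, then $f_\#\alpha=\delta_{w_0}$ for every $\alpha\in\prob(S)$. (ii) For any complete separable metric space $(W,d_W)$ and any $a,b\in W$, the unique coupling of $\delta_a$ and $\delta_b$ is $\delta_{(a,b)}$, so $\dW(\delta_a,\delta_b)=d_W(a,b)$, where $\dW$ denotes the Wasserstein distance on $\prob(W)$ built from $d_W$.

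The first step is an induction on $k\ge 1$ establishing that $\WLh{k}{(\mX,\ell_X^x)}\colon X\to\prob^{\circ k}(\R)$ is a \emph{constant} function; write $c_k^x$ for its common value. For $k=1$, $\WLh{1}{(\mX,\ell_X^x)}(x')=(\ell_X^x)_\#m_{x'}^X=(\ell_X^x)_\#\mu_X$, which does not depend on $x'$; denote it $\alpha_x$, so $c_1^x=\alpha_x$. Assuming $\WLh{k}{(\mX,\ell_X^x)}\equiv c_k^x$, observation (i) gives $\WLh{k+1}{(\mX,\ell_X^x)}(x')=(\WLh{k}{(\mX,\ell_X^x)})_\#m_{x'}^X=(\WLh{k}{(\mX,\ell_X^x)})_\#\mu_X=\delta_{c_k^x}$, again independent of $x'$, so $c_{k+1}^x=\delta_{c_k^x}$. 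Applying (i) once more, $\mathfrak{L}_k((\mX,\ell_X^x))=(\WLh{k}{(\mX,\ell_X^x)})_\#\mu_X=\delta_{c_k^x}$ for every $k\ge1$; the same statements hold for $(\mY,\ell_Y^y)$ with values $c_k^y$ and $\alpha_y$.

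The second step just unwinds the definitions. For $k\ge1$, $\dWLk((\mX,\ell_X^x),(\mY,\ell_Y^y))=\dW(\delta_{c_k^x},\delta_{c_k^y})$, which by (ii) equals the distance between $c_k^x$ and $c_k^y$ in $\prob^{\circ k}(\R)$. For $k\ge2$ one has $c_k^x=\delta_{c_{k-1}^x}$, and $\prob^{\circ k}(\R)=\prob(\prob^{\circ(k-1)}(\R))$ carries the Wasserstein distance built from the metric of $\prob^{\circ(k-1)}(\R)$, so (ii) applies again and removes one layer; iterating down to $k=1$ yields $\dWLk((\mX,\ell_X^x),(\mY,\ell_Y^y))=\dW(\alpha_x,\alpha_y)$, independent of $k$. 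Taking $k=1$ shows $\dWL^{\scriptscriptstyle{(1)}}((\mX,\ell_X^x),(\mY,\ell_Y^y))=\dW(\alpha_x,\alpha_y)$ as well, which is the asserted identity (and the $k=0$ case, if one includes it, is immediate from $\dWL^{\scriptscriptstyle{(0)}}((\mX,\ell_X^x),(\mY,\ell_Y^y))=\dW((\ell_X^x)_\#\mu_X,(\ell_Y^y)_\#\mu_Y)=\dW(\alpha_x,\alpha_y)$). The argument is pure bookkeeping; the only point requiring care is keeping track of which level of the hierarchy of probability measures each object lives in, and hence which metric the relevant $\dW$ refers to.
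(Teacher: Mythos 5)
Your proof is correct, and it takes a genuinely different route from the paper's. The paper proves the inequality $\dWLk\le\dWL^{\scriptscriptstyle{(1)}}$ by invoking the $k$-step-coupling characterization of $\dWLk$ (\Cref{thm:dwl= dwk}) together with the fact that, for the constant Markov kernels $m_\bullet^X\equiv\mu_X$, $m_\bullet^Y\equiv\mu_Y$, any constant map $\nu_{\bullet,\bullet}\equiv\gamma$ with $\gamma\in\cpl(\mu_X,\mu_Y)$ is a $k$-step coupling (item 2 of \Cref{lemma:mmskfoldcpl}); the reverse inequality then comes for free from the monotonicity of $\dWLk$ in $k$ (\Cref{coro:hierarchy dwlk}). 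You instead compute the WL measure hierarchy explicitly: the constant kernel forces $\WLh{k}{(\mX,\ell_X^x)}$ to be a constant function for every $k\ge1$, hence $\mathfrak{L}_k$ is a Dirac mass, and the identity $\dW(\delta_a,\delta_b)=d(a,b)$ peels off one layer of the hierarchy at a time until only $\dW\!\lc(\ell_X^x)_\#\mu_X,(\ell_Y^y)_\#\mu_Y\rc$ remains. Your argument is more elementary and self-contained — it bypasses both the coupling characterization and the monotonicity proposition — and it has the added benefit of producing the explicit common value of $\dWLk$ for all $k$, which is exactly the integrand of the TLB that the surrounding remark is after; the paper's argument, by contrast, is shorter on the page because it reuses machinery already established elsewhere. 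The only point requiring care in your version, which you handle correctly, is tracking which level of $\prob^{\circ k}(\R)$ each Dirac mass lives in so that the right metric is used at each peeling step.
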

\begin{proof}
By item 2 in \Cref{lemma:mmskfoldcpl} and \Cref{thm:dwl= dwk}, we have that
\begin{align*}
    &\dWLk((\mX,\ell_X^x),(\mY,\ell_Y^y))\\
    &=\inf_{\gamma'\in\cpl^{\scriptscriptstyle{(k)}}(\mu_X,\mu_Y)}\int\limits_{X\times Y}|d_X(x,x')-d_Y(y,y')|\gamma'(dx'\times dy')\\
    &=\inf_{\nu_{\bullet,\bullet}\in\cpl^{\scriptscriptstyle{(k)}}\!\lc m_\bullet^X,m_\bullet^Y\rc,\mu\in\cpl(\mu_X,\mu_Y)}\int\limits_{X\times Y}\int\limits_{X\times Y}|d_X(x,x')-d_Y(y,y')|\nu_{x_1,y_1}(dx'\times dy')\mu(dx_1\times dy_1)\\
    &\leq\inf_{\mu\in\cpl(\mu_X,\mu_Y)}\int\limits_{X\times Y}\int\limits_{X\times Y}|d_X(x,x')-d_Y(y,y')|\mu(dx'\times dy')\mu(dx_1\times dy_1)\\
    &=\inf_{\mu\in\cpl(\mu_X,\mu_Y)}\int\limits_{X\times Y}|d_X(x,x')-d_Y(y,y')|\mu(dx'\times dy')\\
    &=\dWL^{\scriptscriptstyle{(1)}}((\mX,\ell_X^x),(\mY,\ell_Y^y)).
\end{align*}
By \Cref{coro:hierarchy dwlk}, we conclude that $\dWLk((\mX,\ell_X^x),(\mY,\ell_Y^y))=\dWL^{\scriptscriptstyle{(1)}}((\mX,\ell_X^x),(\mY,\ell_Y^y))$.
\end{proof}

\subsubsection{Proof of the statement in \Cref{ex:eccentricity}}\label{sec:proof ecc}
Given $\mX$ and $\mY$, we have for any $\gamma\in\cpl(\mu_X,\mu_Y)$ and $\nu_{\bullet,\bullet}^{\scriptscriptstyle{(k)}}$ that
\begin{align*}
    &\int\limits_{X\times Y}\int\limits_{X\times Y}|\mathrm{ecc}_X(x')-\mathrm{ecc}_Y(y'))|\,\nu_{x,y}^{\scriptscriptstyle{(k)}}(dx'\times dy')\gamma(dx\times dy)\\
    =&\int\limits_{X\times Y}\int\limits_{X\times Y}\left|\int_Xd_X(x',x'')\mu_X(dx'')-\int_Yd_Y(y',y'')\mu_Y(y'')\right|\,\nu_{x,y}^{\scriptscriptstyle{(k)}}(dx'\times dy')\gamma(dx\times dy)\\
    =&\int\limits_{X\times Y}\int\limits_{X\times Y}\left|\int\limits_{X\times Y}d_X(x',x'')\gamma(dx''\times dy'')-\int\limits_{X\times Y}d_Y(y',y'')\gamma(dx''\times dy'')\right|\,\nu_{x,y}^{\scriptscriptstyle{(k)}}(dx'\times dy')\gamma(dx\times dy)\\
    \leq&\int\limits_{X\times Y}\int\limits_{X\times Y}\int\limits_{X\times Y}\left|d_X(x',x'')-d_Y(y',y'')\right|\,\nu_{x,y}^{\scriptscriptstyle{(k)}}(dx'\times dy')\gamma(dx\times dy)\gamma(dx''\times dy'').
\end{align*}
Hence, we conclude that $\mathrm{ecc}_\bullet$ is stable.

\subsubsection{Proof of \Cref{prop:stable WL}}
By \Cref{thm:dwl= dwk} we have that
\begin{align*}
    &\dWL^{\scriptscriptstyle{(k)}}\!\lc(\mX,\ell_X),(\mY,\ell_Y)\rc\\
    &= \inf_{\gamma^{\scriptscriptstyle{(k)}}\in \cpl^{\scriptscriptstyle{(k)}}(\mu_X,\mu_Y)} \int\limits_{X\times Y}d_Z(\ell_X(x),\ell_Y(y))\gamma^{\scriptscriptstyle{(k)}}(dx\times dy)\\
    &=\inf_{\gamma\in \cpl(\mu_X,\mu_Y),\nu^{\scriptscriptstyle{(k)}}_{\bullet,\bullet}\in\cpl^{\scriptscriptstyle{(k)}}\!\lc m_\bullet^X,m_\bullet^Y\rc} \int\limits_{X\times Y}\int\limits_{X\times Y}d_Z(\ell_X(x'),\ell_Y(y'))\,\nu_{x,y}^{\scriptscriptstyle{(k)}}(dx'\times dy')\gamma(dx\times dy)\\
    &\leq\inf_{\gamma\in \cpl(\mu_X,\mu_Y),\nu^{\scriptscriptstyle{(k)}}_{\bullet,\bullet}\in\cpl^{\scriptscriptstyle{(k)}}\!\lc m_\bullet^X,m_\bullet^Y\rc}\mathrm{dis}^{\scriptscriptstyle{(k)}}\!\lc\gamma,\nu^{\scriptscriptstyle{(k)}}_{\bullet,\bullet}\rc\\
    &=\dGW^{\scriptscriptstyle{(k)}}\!\lc(\mX,d_X),(\mY,d_Y)\rc.
\end{align*}
The inequality follows from the fact that $\ell_\bullet$ is stable. Hence we conclude the proof.

\end{document}